\newtheorem{theorem}{Theorem}
\newtheorem{lemma}[theorem]{Lemma}
\newtheorem{corollary}[theorem]{Corollary}
\theoremstyle{definition}
\newtheorem{definition}{Definition}
\newcommand{\la}{\langle}
\newcommand{\ra}{\rangle}
\newcommand{\mZ}{\mat{Z}}
\newcommand{\sigstarl}{\sigma^\star_1}
\newcommand{\sigstarr}{\sigma^\star_r}
\newcommand{\Snew}{\S^{\text{opt}}}
\newcommand{\Sopt}{\S^{\text{opt}}}
\newcommand{\cXstar}{\mathcal{X}^\star}
\date{}
\newcommand{\vect}[1]{\ensuremath{\mathbf{#1}}}
\newcommand{\mat}[1]{\ensuremath{\mathbf{#1}}}
\newcommand{\grad}{\nabla}
\newcommand{\hess}{\nabla^2}
\newcommand{\argmin}{\mathop{\rm argmin}}
\newcommand{\argmax}{\mathop{\rm argmax}}
\newcommand{\norm}[1]{\|{#1}\|}
\newcommand{\fnorm}[1]{\|{#1}\|_{\text{F}}}
\newcommand{\tr}{\text{tr}}
\newcommand{\trans}{^{\top}}
\newcommand{\poly}{\text{poly}}
\newcommand{\proj}{\mathcal{P}}
\newcommand{\N}{\mat{N}}
\newcommand{\R}{\mathbb{R}}
\newcommand{\E}{\mathbb{E}}
\newcommand{\A}{\mat{A}}
\newcommand{\B}{\mat{B}}
\renewcommand{\S}{\mat{S}}
\newcommand{\M}{\mat{M}}
\newcommand{\I}{\mat{I}}
\newcommand{\D}{\mat{D}}
\newcommand{\T}{\mat{T}}
\newcommand{\U}{\mat{U}}
\newcommand{\V}{\mat{V}}
\newcommand{\W}{\mat{W}}
\newcommand{\X}{\mat{X}}
\newcommand{\Y}{\mat{Y}}
\newcommand{\mR}{\mat{R}}
\newcommand{\e}{\vect{e}}
\renewcommand{\u}{\vect{u}}
\renewcommand{\v}{\vect{v}}
\newcommand{\x}{\vect{x}}
\newcommand{\y}{\vect{y}}
\newcommand{\z}{\vect{z}}
\renewcommand{\H}{\mathcal{H}}
\newcommand{\G}{\mathcal{G}}
\newcommand{\cn}{\kappa}
\newcommand{\nn}{\nonumber}
\begin{document}

\title{\textbf{No Spurious Local Minima in Nonconvex Low Rank Problems: A Unified Geometric Analysis}}
\author{
Rong Ge\footnote{Duke University. Email: rongge@cs.duke.edu} \and
Chi Jin\footnote{University of California, Berkeley. Email: chijin@cs.berkeley.edu} \and 
Yi Zheng\footnote{Duke University. Email: sheng.zheng@duke.edu} 
}

\maketitle

\begin{abstract} 

In this paper we develop a new framework that captures the common landscape underlying the common non-convex low-rank matrix problems including matrix sensing, matrix completion and robust PCA. In particular, we show for all above problems (including asymmetric cases): 1) all local minima are also globally optimal; 2) no high-order saddle points exists. These results explain why simple algorithms such as stochastic gradient descent have global converge, and efficiently optimize these non-convex objective functions in practice. Our framework connects and simplifies the existing analyses on optimization landscapes for matrix sensing and symmetric matrix completion. The framework naturally leads to new results for asymmetric matrix completion and robust PCA.
\end{abstract} 

%!TEX root = main.tex

\section{Introduction}

Non-convex optimization is one of the most powerful tools in machine learning. Many popular approaches, from traditional ones such as matrix factorization \citep{hotelling1933analysis} to modern deep learning \citep{bengio2009learning} rely on optimizing non-convex functions. In practice, these functions are optimized using simple algorithms such as alternating minimization or gradient descent. Why such simple algorithms work is still a mystery for many important problems.

One way to understand the success of non-convex optimization is to study the optimization landscape: for the objective function, where are the possible locations of global optima, local optima and saddle points. Recently, a line of works showed that several natural problems including tensor decomposition \citep{ge2015escaping}, dictionary learning \citep{sun2015complete1}, matrix sensing \citep{bhojanapalli2016global,park2016non} and matrix completion \citep{ge2016matrix} have well-behaved optimization landscape: all local optima are also globally optimal. Combined with recent results (e.g. \citet{ge2015escaping, carmon2016accelerated, agarwal2016finding, jin2017escape}) that are guaranteed to find a local minimum for many non-convex functions, such problems can be efficiently solved by basic optimization algorithms such as stochastic gradient descent.

In this paper we focus on optimization problems that look for low rank matrices using partial or corrupted observations. Such problems are studied extensively \citep{fazel2002matrix,rennie2005fast,candes2009exact} and has many applications in recommendation systems \citep{koren2009bellkor}, see survey by \citet{davenport2016overview}. 
These optimization problems can be formalized as follows: % The objective functions for these problems can be written as
\begin{align}
\min_{\M\in \R^{d_1\times d_2}} &\quad f(\M), \label{eq:convexobj}\\
s.t. & \quad \mbox{rank}(\M) = r. \nonumber
\end{align}
Here $\M$ is an $d_1\times d_2$ matrix and $f$ is a convex 
% {\em quadratic} 
function of $\M$. The non-convexity of this problem stems from the low rank constraint. 
Several interesting problems, such as matrix sensing \citep{recht2010guaranteed}, matrix completion \citep{candes2009exact} and robust PCA \citep{candes2011robust} can all be framed as optimization problems of this form(see Section~\ref{sec:problems}). 

In practice, \citet{burer2003nonlinear} heuristic is often used \--- replace $\M$ with an explicit low rank representation $\M = \U\V^\top$, where $\U\in \R^{d_1\times r}$ and $\V\in\R^{d_2\times r}$. The new optimization problem becomes
\begin{equation}
\min_{\U\in \R^{d_1\times r},\V\in\R^{d_2\times r}} f(\U\V^\top) + Q(\U,\V). \label{eq:asymmetricobj}
\end{equation}
Here $Q(\U,\V)$ is a (optional) regularizer. 
Despite the objective being non-convex, for all the problems mentioned above, simple iterative updates from random or even arbitrary initial point find the optimal solution in practice. 
It is then natural to ask: {\bf Can we characterize the similarities between the optimization landscape of these problems?} We show this is indeed possible:
% In particular, we show:
% In particular, we aim to extract the common properties that allow us to show for many problems in the form of Equation \eqref{eq:asymmetricobj}, all local minima are also globally optimal. Indeed, we show
% Even though the function $f(\M)$ is convex on $\M$, the new objective $f(\U\V^\top)$ is no longer convex in both $\U,\V$.
%
%
%However, existing techniques are often tailored to specific problems. Even for very similar problems such as matrix sensing and matrix completion (see Definitions in Section~\ref{sec:prelim}), the techniques are quite different. This greatly limits our understanding and makes it harder to generalize these results to new problems, objectives or regularizers. It is then natural to ask: {\bf Do similar problems have similar optimization landscape?}

%In this paper, we take a first step in trying to answer this question affirmatively.

%\subsection{Our Results}

% We give a framework that can capture the optimization landscape of all these problems:

\begin{theorem}[informal] The objective function of matrix sensing, matrix completion and robust PCA have similar optimization landscape. In particular, for all these problems, 1) all local minima are also globally optimal; 2) any saddle point has at least one strictly negative eigenvalue in its Hessian.
\end{theorem}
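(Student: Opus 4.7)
The plan is to put all three problems into the single form $\min_{\U,\V} g(\U,\V) := f(\U\V\trans) + Q(\U,\V)$ and isolate the purely geometric content of the argument. The unifying statement I would aim for is that at any first-order critical point $(\U,\V)$ that is not globally optimal, a specific ``direction toward the truth'' serves as a direction of strict negative curvature for $\Hess g$. Once proved, this yields both claims simultaneously: every local minimum is globally optimal, and every saddle carries a strictly negative Hessian eigenvalue.

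Concretely, let $\M^\star = \U^\star\V^{\star\top}$ be the ground truth and stack the variables as $\W = \left(\begin{smallmatrix}\U\\\V\end{smallmatrix}\right)$, $\W^\star = \left(\begin{smallmatrix}\U^\star\\\V^\star\end{smallmatrix}\right)$. I would choose the rotation $\mR \in O(r)$ minimizing $\fnorm{\W - \W^\star\mR}$ and take the test direction $\Delta := \W - \W^\star\mR$, with blocks $\Delta_\U, \Delta_\V$. Expanding via the chain rule gives
\[
\Hess g(\U,\V)[\Delta,\Delta] = \Hess f(\M)[\Delta_\U\V\trans + \U\Delta_\V\trans,\; \Delta_\U\V\trans + \U\Delta_\V\trans] + 2\la \grad f(\M),\, \Delta_\U\Delta_\V\trans\ra + \Hess Q(\U,\V)[\Delta,\Delta].
\]
The algebraic identity $\Delta_\U\V\trans + \U\Delta_\V\trans = (\M - \M^\star) + \Delta_\U\Delta_\V\trans$ (which uses $\mR\mR\trans = \I$) then lets me rewrite the first term purely in terms of the matrix-level gap $\M-\M^\star$ plus a quadratically small correction. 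Invoking the first-order conditions $\grad f(\M)\V + \grad_\U Q = 0$ and $\grad f(\M)\trans \U + \grad_\V Q = 0$, together with the Procrustes alignment produced by the optimal $\mR$, turns the cross term $\la \grad f(\M), \Delta_\U\Delta_\V\trans\ra$ into an expression controlled by $\fnorm{\M - \M^\star}^2$. Given a restricted strong-convexity / smoothness property for $f$ on low-rank matrices, the whole quadratic form is then bounded above by $-c\,\fnorm{\M - \M^\star}^2$, which is strictly negative unless $\M = \M^\star$.

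The work then consists of instantiating this template for each application. For matrix sensing the restricted property is just RIP. For matrix completion and robust PCA the objective does not enjoy RIP globally, so one must localize: pick $Q$ to enforce an incoherence or balancing constraint \textemdash\ e.g.\ $Q(\U,\V) = \tfrac{1}{8}\fnorm{\U\trans\U - \V\trans\V}^2$ for the asymmetric case, plus a smooth row-norm penalty for completion and a sparsity-compatible decomposition penalty for robust PCA \textemdash\ and then prove both that every critical point lies in the region where the restricted property holds and that $Q$ itself contributes non-negatively to $\Hess g[\Delta,\Delta]$. I expect this localization step to be the main obstacle: one must show that critical points cannot escape the incoherent/balanced region and that the added regularizer does not create spurious stationary points of its own, especially in the asymmetric case where the rescaling symmetry $(\U,\V)\mapsto(\U\A,\V\A^{-\top})$ is strictly larger than $O(r)$ and must be cut down to $O(r)$ by $Q$ before the Procrustes alignment defining $\Delta$ is even well posed.
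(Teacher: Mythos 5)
Your proposal correctly identifies the paper's central devices: the Procrustes-aligned direction $\Delta = \W - \W^\star\mR$, the algebraic identity $\Delta_\U\V\trans + \U\Delta_\V\trans = (\M-\M^\star) + \Delta_\U\Delta_\V\trans$, and the plan to use first-order stationarity to cancel the gradient contribution so that the second-order test along a single direction certifies the landscape. That is indeed the skeleton of the paper's Main Lemma (Lemma~\ref{lem:main}) and its asymmetric counterpart (Lemma~\ref{lem:asymmetricmain}). However, there are two substantive gaps and one sign error that would keep your argument from closing.

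First, in the asymmetric case you treat the balancing term $\tfrac{1}{8}\|\U\trans\U - \V\trans\V\|_F^2$ as a generic piece of $Q$ that merely ``cuts down'' the extra invariance. The paper does something structurally sharper: it lifts the balancing term's Hessian to an operator $\G$ acting on the symmetrized $\N = \W\W\trans$, folds it into the main Hessian as $\H = 4\H_1 + \G$, and then proves (Lemma~\ref{lem:sym_to_asym}) that $\H$ is norm-preserving on $\N$ whenever $\H_0$ is norm-preserving on the off-diagonal block. This is what produces the bound $\Delta : \hess g : \Delta \lesssim -\|\N - \N^\star\|_F^2 \lesssim -\sigstarr\|\Delta\|_F^2$. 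If the balancing term is left inside $Q$, your RIP-type estimate controls only $\|\M-\M^\star\|_F^2$, which by itself does not bound $\|\Delta\|_F$ because of the rescaling $(\U,\V)\mapsto(\U\A,\V\A^{-\top})$ --- exactly the ambiguity you flag at the end but do not resolve.

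Second, asking that ``$Q$ itself contributes non-negatively to $\Hess g[\Delta,\Delta]$'' points the wrong way: to make the overall quadratic form negative you need the regularizer's net contribution to be small, not non-negative. After the first-order conditions are substituted, the quantity that actually appears is $\Delta:\hess Q:\Delta - 4\la\grad Q,\Delta\ra$, and the paper's work (Lemma~\ref{lem:reg_mc} and its asymmetric version) is to show this combination is at most $O(\sigstarr)\|\Delta\|_F^2$. The cancellation between the Hessian of $Q$ and the $-4\la\grad Q,\Delta\ra$ piece is essential: the row-norm penalty for completion has a large non-negative Hessian on its own, so the non-negativity of $\hess Q$ is, if anything, the obstruction rather than the help.

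Third, for robust PCA the objective $f(\U,\V) = \min_{\S\in\mathcal{S}_{\gamma\alpha}} \tfrac12\|\U\V\trans + \S - \M_o\|_F^2$ is not twice-differentiable in $(\U,\V)$ because of the inner minimization, so $\hess g$ literally does not exist at many points; a ``sparsity-compatible decomposition penalty'' does not fix this. The paper instead introduces a pseudo strict-saddle notion (Definition~\ref{def:pseudo_strict_saddle}): at each point it passes to the smooth majorizer obtained by freezing $\S$ at its argmin, applies the matrix-factorization landscape analysis (Lemma~\ref{lem:mf_strictsaddle}) to that majorizer, and then shows (Lemma~\ref{lem:mf_to_rpca}) that proximity to the majorizer's rank-$r$ optimum forces proximity to the true $\U^\star$. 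Without these three ingredients, the robust-PCA instance of your template does not go through.
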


More precise theorem statements appear in Section~\ref{sec:problems}. Note that there were several cases (matrix sensing \citep{bhojanapalli2016global,park2016non}, symmetric matrix completion \citep{ge2016matrix}) where similar results on the optimization landscape were known. However the techniques in previous works are tailored to the specific problems and hard to generalize. Our framework captures and simplifies all these previous results, and also gives new results on asymmetric matrix completion and robust PCA. %Our proofs are also much shorter and easier to generalize compared to previous works.

The key observation in our analysis is that for matrix sensing, matrix completion, and robust PCA (when fixing sparse estimate),
% the connection between Objectives  \eqref{eq:convexobj} and \eqref{eq:asymmetricobj}. In particular, recall the 
function $f$ (in Equation \eqref{eq:convexobj}) is a quadratic function over the matrix $\M$. Hence the Hessian $\H$ of $f$ with respect to $\M$ is a constant. 
% This Hessian is a $d_1d_2\times d_1d_2$ matrix that can be interpreted as a quadratic form on matrix $\M$. 
More importantly, the Hessian $\H$ in all above problems has similar properties (that it approximately preserves norm, similar to the RIP properties used in matrix sensing \citep{recht2010guaranteed}), which allows their optimization landscapes to be characterized in a unified way.
Specifically, our framework gives principled way of defining a {\em direction of improvement} for all points that are not globally optimal.

% Informally, think of $\mathcal{B}$ as a subset of ``nice'' rank-$r$ matrices (e.g.incoherent matrices, see Definition~\ref{def:incoherence}) which contains the optimal solution $\M^\star$ of  \eqref{eq:convexobj}. We proceed in the following three steps to characterize the optimization landscape: %If a problem satisfy the following three conditions, all the local optima of \eqref{eq:asymmetricobj} satisfy $\U\V^\top = M^\star$. 
% \begin{enumerate}\itemsep=0pt
% \item[(i)] The regularizer makes sure every stationary point of \eqref{eq:asymmetricobj} is in the ``nice'' set $\mathcal{B}$.
% \item[(ii)] For any matrix $\M\in \mathcal{B}$, the Hessian operator $\H$ roughly preserves the norm of $\M-\M^\star$.
% \item[(iii)] The regularizer does not have large effect on gradient and Hessian for matrices within $\mathcal{B}$. 
% \end{enumerate}
% If all three steps can be formalized appropriately, we show all the local optima of \eqref{eq:asymmetricobj} satisfy $\U\V^\top = \M^\star$. See Section~\ref{sec:symmetric} for more precise explanations for these three steps.

Another crucial property of our framework is the interaction between the regularizer and the Hessian $\H$. Intuitively, the regularizer makes sure the solution is in a nice region $\mathcal{B}$ (e.g. set of incoherent matrices for matrix completion), and only within $\mathcal{B}$ the Hessian has the norm preserving property. On the other hand, regularizer should not be too large to severely distort the landscape. This interaction is crucial for matrix completion, and is also very useful in handling noise and perturbations. 
In Section~\ref{sec:symmetric}, we discuss ideas required to apply this framework to matrix sensing, matrix completion and robust PCA.

Using this framework, we also give a way to {\em reduce} asymmetric matrix problems to symmetric PSD problems (where the desired matrix is of the form $\U\U^\top$). See Section~\ref{sec:asymmetric} for more details.

In addition to the results of no spurious local minima, our framework also implies that any saddle point has at least one strictly negative eigenvalue in its Hessian. Formally, we proved all above problems satisfy a robust version of this claim --- strict saddle property (see Definition~\ref{def:strict_saddle}), which is one of crucial sufficient conditions to admit efficient optimization algorithms, and thus following corollary (see Section \ref{sec:runtime} for more details).
% For functions that satisfy a strict saddle property (see Definition~\ref{def:strict_saddle}), \citet{ge2015escaping} showed stochastic gradient descent can find a local minimum in polynomial time. For all the problems our framework also implies strict-saddle property, thus as a corollary we have
\begin{corollary}[informal]\label{cor:runtime}
For matrix sensing, matrix completion and robust PCA, simple local search algorithms can find the desired low rank matrix $\U\V^\top = \M^\star$ from an arbitrary starting point in polynomial time with high probability.
\end{corollary}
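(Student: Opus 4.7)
The plan is to reduce the corollary to two ingredients: (i) the geometric characterization established in the main theorem, and (ii) off-the-shelf convergence guarantees for local search on strict-saddle functions. The main theorem already tells us that every local minimum of each objective is globally optimal and that every saddle point has a strictly negative Hessian eigenvalue. To turn this into a runtime statement, I would first upgrade the qualitative statement into the quantitative \emph{strict saddle property} of Definition~\ref{def:strict_saddle}, namely: at every point $(\U,\V)$, either the gradient has large norm, or the Hessian has an eigenvalue bounded below $-\gamma$ for some $\gamma > 0$, or the point is within distance $\epsilon$ of the global optimum set. This is exactly what the unified geometric argument in Section~\ref{sec:symmetric} produces: for any non-optimal point, it exhibits a direction of improvement whose magnitude can be quantified in terms of the distance to $\cXstar$, the RIP-like constant of $\H$, and the regularizer parameters.

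Second, I would invoke a standard saddle-escaping optimizer, for example perturbed gradient descent \citep{jin2017escape} or the noisy SGD of \citet{ge2015escaping}, which converges to an $\epsilon$-approximate second-order stationary point of any bounded, $\ell$-smooth, $\rho$-Hessian-Lipschitz function in $\poly(d,1/\epsilon)$ iterations with high probability. To apply this black box I need to verify regularity of the Burer--Monteiro objective $f(\U\V^\top) + Q(\U,\V)$ for each of the three problems: boundedness of iterates (guaranteed by the regularizer $Q$, which keeps the trajectory inside the ``nice region'' $\mathcal{B}$), Lipschitz gradient and Lipschitz Hessian (both follow because $f$ is quadratic in $\M$ and $\U\V^\top$ is a polynomial of bounded-norm factors, and $Q$ is a smooth polynomial penalty). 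Combining with the strict-saddle property of step one, the algorithm reaches a point whose gradient is small and whose Hessian is approximately PSD, which by the contrapositive of strict saddle must lie close to $\cXstar$.

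Third, I would close the gap from ``$\epsilon$-approximate second-order stationary'' to ``recovers $\M^\star$ exactly (or to any desired accuracy)''. This is where the distance-based form of the strict saddle property pays off: picking $\epsilon$ small enough as a polynomial in the problem parameters forces the output to be within the target tolerance of $\cXstar$, and the global optimality of local minima from the main theorem ensures that $\U\V^\top = \M^\star$ is recovered. For robust PCA the argument has one extra layer, because the algorithm alternates between updating the low-rank factors $(\U,\V)$ and a sparse estimate $\S$; here I would appeal to the fact that, with $\S$ fixed, the landscape in $(\U,\V)$ satisfies the strict saddle property exactly as above, and the update of $\S$ is a closed-form thresholding step, so the analysis reduces to the previous case plus a contraction argument on $\S$.

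The main obstacle I expect is the interplay between the regularizer $Q$ and the strict saddle quantification, especially for matrix completion and robust PCA: the geometric argument only gives a negative curvature or descent direction when the iterate is incoherent, so I must show that the local search algorithm, started arbitrarily, either stays in $\mathcal{B}$ throughout or is pushed back into $\mathcal{B}$ by $Q$ before any saddle-escape step is needed. A secondary concern is making the failure probability explicit: the RIP-like property of $\H$ holds with high probability over the random measurements/observations, and this randomness must be decoupled from the randomness of the optimizer so that the union bound yields a single high-probability guarantee.
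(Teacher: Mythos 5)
Your overall strategy matches the paper's: upgrade the geometric theorems to the quantitative strict-saddle property, feed that into a black-box saddle-avoiding optimizer after verifying smoothness and Hessian-Lipschitzness, and use the distance part of the strict-saddle definition to conclude proximity to $\cXstar$. Two of the supporting claims are imprecise in ways worth fixing.

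First, you attribute boundedness of the iterates to the regularizer $Q$ keeping the trajectory in $\mathcal{B}$. That is not how the paper argues it, and for matrix sensing it cannot be: there $Q(\U,\V)=\frac{1}{8}\|\U^\top\U-\V^\top\V\|_F^2$ vanishes whenever $\U=\V$, no matter how large $\|\U\|_F$ is, so it exerts no control on the Frobenius norm. The paper instead invokes an explicit (unproved, ``mostly calculations'') lemma that the saddle-avoiding algorithms themselves stay in a ball of radius $2R$ given initialization of radius $R$, relying on the fact that when the factors are very large the gradient points back toward the origin. This matters because the polynomial smoothness/Hessian-Lipschitz constants $(\ell,\rho)$ are only polynomial in $R$, so boundedness has to be established before the black-box convergence rate applies. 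Conflating ``stays incoherent'' with ``stays Frobenius-bounded'' is a real gap; they are separate conditions and the paper treats them separately.

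Second, for robust PCA your description of both the $\S$-step and the reason alternation works diverges from the paper. The projection onto $\mathcal{S}_{\gamma\alpha}$ is not an entrywise closed-form thresholding; because of the per-row and per-column sparsity budget it is computed via a max-flow/bipartite-matching formulation. More importantly, the paper does not use a ``contraction argument on $\S$''. It instead uses the pseudo strict-saddle property (Definition~\ref{def:pseudo_strict_saddle}): for the current $(\U,\V)$ one fixes $\S = \S_{\U,\V}$ and passes to the smooth surrogate $g(\cdot\,;\S)$, which \emph{upper bounds} $f$ and agrees with $f$ at the current point, so a negative-curvature or gradient step on $g$ strictly decreases $f$, and re-optimizing $\S$ can only decrease $f$ further. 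This is a pure descent argument; it never asserts that $\S$ contracts toward $\S^\star$, and a contraction claim would require justification you have not sketched. Replacing the contraction step with the surrogate/descent argument and the thresholding step with the max-flow projection would bring your proof in line with what actually closes the argument.
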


For simplicity, we present most results in the noiseless setting, but our results can also be generalized to handle noise. As an example, we show how to do this for matrix sensing in Section~\ref{sec:noise}.

\subsection{Related Works}
The landscape of low rank matrix problems have recently received a lot of attention. \citet{ge2016matrix} showed symmetric matrix completion has no spurious local minimum. At the same time, \citet{bhojanapalli2016global} proved similar result for symmetric matrix sensing. \citet{park2016non} extended the matrix sensing result to asymmetric case. All of these works guarantee global convergence to the correct solution.

% \jccomment{rewrite this paragraph, unclear what's the difference between first group and second group of paper.} 
There has been a lot of work on the local convergence analysis for various algorithms and problems. For matrix sensing or matrix completion, the works
\citep{keshavan2010matrix,keshavan2010matrixnoisy, hardt2014fast,hardt2014understanding,jain2013low,chen2015fast,sun2015guaranteed, zhao2015nonconvex,zheng2016convergence,tu2015low} showed that given a good enough initialization, many simple local search algorithms, including gradient descent and alternating least squares, succeed. Particularly, several works (e.g. \citet{sun2015guaranteed, zheng2016convergence}) accomplished this by showing a geometric property which is very similar to strong convexity holds in the neighborhood of optimal solution. For robust PCA, there are also many analysis for local convergence \citep{lin2010augmented,netrapalli2014non,yi2016fast,zhang2017nonconvex}.

Several works also try to unify the analysis for similar problems. \citet{bhojanapalli2015dropping} gave a framework for local analysis for these low rank problems. \citet{belkin2014basis} showed a framework of learning basis functions, which generalizes tensor decompositions. Their techniques imply the optimization landscape for all such problems are very similar. For problems looking for a symmetric PSD matrix, 
\citet{li2016nonconvex} showed for objective similar to \eqref{eq:asymmetricobj} (but in the symmetric setting), restricted smoothness/strong convexity on the function $f$ suffices for local analysis. However, their framework does not address the interaction between regularizer and the function $f$, hence cannot be directly applied to problems such as matrix completion or robust PCA.

% \jccomment{cite our new paper, maybe move this to runtime section} In terms of algorithmic implications, recent works \citep{carmon2016accelerated,agarwal2016finding} gave faster algorithms that can find second order stationary points. Combined with our result on optimization landscape, these results can potentially lead to tighter running time analysis and faster algorithms for low rank matrix problems.

\paragraph{Organization} We will first introduce notations and basic optimality conditions in Section~\ref{sec:prelim}. Then Section~\ref{sec:problems} introduces the problems and our results. For simplicity, we present our framework for the symmetric case in Section~\ref{sec:symmetric}, and briefly discuss how to reduce asymmetric problem to symmetric problem in Section~\ref{sec:asymmetric}. We discuss how our geometric result implies efficient algorithms in Section~\ref{sec:runtime}. We then show how our geometric results imply fast runtime of popular local search algorithms in Section~\ref{sec:runtime}. For clean presentation, many proofs are deferred to appendix .

%framework for local analysis, learning basis functions

%!TEX root = main.tex

\section{Preliminaries}
\label{sec:prelim}
%\rnote{definition of problems is appearing too late. need to rethink how to write this part.}
In this section we introduce notations and basic optimality conditions. %We then discuss optimality conditions, which are crucial properties of gradient and Hessian at local minima. Finally we introduce the three example problem \--- matrix sensing, matrix completion and robust PCA. %For each problem we show how it can fit in our framework, and give precise Theorems for our guarantees. Finally, we compute the gradient and Hessian, and interpret the first and second order optimality conditions for the framework.

\subsection{Notations}% and Basic Framework}

We use bold letters for matrices and vectors. For a vector $\v$ we use $\|\v\|$ to denote its $\ell_2$ norm. For a matrix $\M$ we use $\|\M\|$ to denote its spectral norm, and $\|\M\|_F$ to denote its Frobenius norm. For vectors we use $\la \u,\v \ra$ to denote inner-product, and for matrices we use $\la \M,\N\ra = \sum_{i,j} \M_{ij}\N_{ij}$ to denote the trace of $\M\N\trans$. 
We will always use $\M^\star$ to denote the optimal low rank solution. Further, we use $\sigstarl$ to denote its largest singular value, $\sigstarr$ to denote its $r$-th singular value and $\cn^\star = \sigstarl/\sigstarr$ be the condition number.

We use $\nabla f$ to denote the gradient and $\nabla^2 f$ to denote its Hessian. Since function $f$ can often be applied to both $\M$ (as in \eqref{eq:convexobj}) and $\U,\V$ (as in \eqref{eq:asymmetricobj}), we use $\nabla f(\M)$ to denote gradient with respect to $\M$ and $\nabla f(\U,\V)$ to denote gradient with respect to $\U,\V$. Similar notation is used for Hessian.
The Hessian $\nabla^2 f(\M)$ is a crucial object in our framework. It can be interpreted as a linear operator on matrices. This linear operator can be viewed as a $d_1d_2\times d_1d_2$ matrix (or $\binom{d+1}{2} \times \binom{d+1}{2}$ matrix in the symmetric case) that applies to the vectorized version of matrices. We use the notation $\M:\mathcal{H}:\N$ to denote the quadratic form $\la \M, \mathcal{H}(\N)\ra$. Similarly, the Hessian of objective \eqref{eq:asymmetricobj} is a linear operator on a pair of matrices $\U,\V$, which we usually denote as $\nabla^2 f(\U,\V)$.

\subsection{Optimality Conditions}

\paragraph{Local Optimality}
Suppose we are optimizing a function $f(\x)$ with no constraints on $\x$. In order for a point $\x$ to be a local minimum, it must satisfy the first and second order necessary conditions. That is, we must have $\nabla f(\x) = 0$ and $\nabla^2 f(\x) \succeq 0$. 

\begin{definition}[Optimality Condition]\label{def:optimality}
Suppose $\x$ is a \textbf{local minimum} of $f(\x)$, then we have
$$\nabla f(\x) = 0, \quad \nabla^2 f(\x)\succeq 0.$$
\end{definition}

%In later proofs, we will try to show as long as $\x$ is not a global minimum, it must violate one of these two conditions. 
% \jccomment{Say much more, and change logic}
Intuitively, if one of these conditions is violated, then it is possible to find a direction that decreases the function value. \cite{ge2015escaping} characterized the following {\em strict-saddle} property, which is a quantitative version of the optimality conditions, and  can lead to efficient algorithms to find local minima.

\begin{definition}\label{def:strict_saddle}
We say function $f(\cdot)$ is $(\theta, \gamma, \zeta)$-\textbf{strict saddle}. That is, for any $\x$, at least one of followings holds:
\begin{enumerate}\itemsep=0pt
\item $\norm{\grad f(\x)} \ge \theta$.
\item $\lambda_{\min}(\hess f(\x)) \le -\gamma$.
\item $\x$ is $\zeta$-close to $\cXstar$ \--- the set of local minima.
\end{enumerate}
\end{definition}

Intuitively, this definition says for any point $\x$, it either violates one of the optimality conditions significantly (first two cases), or is close to a local minima. Note that $\zeta$ and $\theta$ are often closely related. For a function with strict-saddle property, it is possible to efficiently find a point near a local minimum.

\paragraph{Local vs. Global} However, of course finding a local minimum is not sufficient in many case. In this paper we are also going to prove that all local minima are also globally optimal, and they correspond to the desired solutions.

%!TEX root = main.tex

\section{Low Rank Problems and Our Results}
\label{sec:problems}

In this section we introduce matrix sensing, matrix completion and robust PCA. For each problem we give the results obtained by our framework. The proof ideas are illustrated later in Sections~\ref{sec:symmetric} and \ref{sec:asymmetric}.

\subsection{Matrix Sensing}

Matrix sensing \citep{recht2010guaranteed} is a generalization of compressed sensing \citep{candes2006stable}. In the matrix sensing problem, there is an unknown low rank matrix $\M^\star \in \R^{d_1\times d_2}$. We make linear observations on this matrix: let $\A_1, \A_2,..., \A_m \in \R^{d_1\times d_2}$ be $m$ sensing matrices, the algorithm is given $\{\A_i\}$'s and the corresponding $b_i = \la \A_i, \M^\star\ra$. The goal is now to find the unknown matrix $\M^\star$. In order to find $\M^\star$, we need to solve the following nonconvex optimization problem
\begin{align*}
\min_{\M\in \R^{d_1\times d_2},\mbox{rank}(\M) = r} &\quad f(\M) = \frac{1}{2m}\sum_{i=1}^m (\la \M,\A_i\ra -b_i)^2.
\end{align*}

We can transform this constraint problem to an unconstraint problem by expressing $\M$ as $\M= \U\V^\top$ where $\U\in \R^{d_1\times r}$ and $\V\in \R^{d_2\times r}$. We also need an additional regularizer (common for all asymmetric problems):
\begin{equation}\label{eq:sensing-asymmetric}
\min_{\U,\V} \quad \frac{1}{2m}\sum_{i=1}^m (\la \U\V^\top,\A_i\ra -b_i)^2 + \frac{1}{8} \|\U^\top \U - \V^\top \V\|_F^2.
\end{equation}

The regularizer has been widely used in previous works \citep{zheng2016convergence, park2016non}. In Section~\ref{sec:asymmetric} we show how this regularizer can be viewed as a way to deal with the additional invariants in asymmetric case, and reduce the asymmetric case to the symmetric case.
A crucial concept in standard sensing literature is Restrict Isometry Property (RIP), which is defined as follows:
% , while maintaining a crucial RIP property:

%Clearly, the objective function is a quadratic function in $\M$. Let $\mathcal{H}$ be the Hessian of $f(\M)$, this operator satisfies
%$$
%\M:\mathcal{H}:\N = \frac{1}{t}\sum_{i=1}^t \la \A_i,\M\ra \la\A_i,\N\ra.
%$$
%
%
%For matrix sensing problems, the sensing matrices are often carefully chosen to satisfy the following RIP-condition\citep{}:

\begin{definition}A group of sensing matrices $\{\A_1,..,\A_m\}$ satisfies the $(r,\delta)$-RIP condition, if for every matrix $\M$ of rank at most $r$, 
$$
(1-\delta)\|\M\|_F^2\le \frac{1}{m} \sum_{i=1}^m \la \A_i, \M\ra^2 \le (1+\delta)\|\M\|_F^2.
$$\label{def:rip}
\end{definition} 

Intuitively, RIP says operator $\frac{1}{m}\sum_{i=1}^m \la \A_i, \cdot\ra^2$ approximately perserve norms for all low rank matrices.
When the sensing matrices are chosen to be i.i.d. matrices with independent Gaussian entries, if $m \ge c (d_1+d_2)r$ for large enough constant $c$, the sensing matrices satisfy the $(2r,\frac{1}{20})$-RIP condition \citep{candes2011tight}. Using our framework we can show:

\begin{theorem}\label{thm:sensing-main}
When measurements $\{\A_i\}$ satisfy $(2r, \frac{1}{20})$-RIP, for matrix sensing objective \eqref{eq:sensing-asymmetric} we have 1) all local minima satisfy $\U\V^\top = \M^\star$ 2) the function is $(\epsilon, \Omega(\sigstarr), O(\frac{\epsilon}{\sigstarr}))$-strict saddle.
\end{theorem}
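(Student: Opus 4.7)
The plan is to follow the symmetric-to-asymmetric reduction outlined in Section~\ref{sec:asymmetric}. I stack the factors as $\W = \begin{bmatrix} \U \\ \V \end{bmatrix} \in \R^{(d_1+d_2) \times r}$, and similarly $\W^\star = \begin{bmatrix} \U^\star \\ \V^\star \end{bmatrix}$ for a balanced ground-truth factorization satisfying $(\U^\star)^\top \U^\star = (\V^\star)^\top \V^\star$. The quartic regularizer $\tfrac{1}{8}\|\U^\top \U - \V^\top \V\|_F^2$ is precisely the penalty that lifts the asymmetric objective \eqref{eq:sensing-asymmetric} to a problem in $\W$ that is symmetric up to the orthogonal-rotation invariance $\W \mapsto \W \mR$ for $\mR \in \R^{r \times r}$ orthogonal. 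In this lifted view, the natural direction of improvement at a candidate point $\W$ is $\Delta = \W - \W^\star \mR^\ast$, where $\mR^\ast$ minimizes $\|\W - \W^\star \mR\|_F$ over orthogonal $\mR$; by the Procrustes optimality of $\mR^\ast$, the cross product $(\W^\star \mR^\ast)^\top \W$ is symmetric PSD, which is what makes undesirable cross terms cancel later.

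The key computation is to evaluate $\Delta : \nabla^2 g(\W) : \Delta$, where $g$ denotes the full objective in \eqref{eq:sensing-asymmetric}. Because $f$ is quadratic, its Hessian with respect to $\M$ is the constant linear operator $\mathcal{H}(\mat{Z}) = \tfrac{1}{m}\sum_i \la \A_i, \mat{Z}\ra \A_i$, and the $(2r, \tfrac{1}{20})$-RIP hypothesis is exactly the statement that $\mathcal{H}$ acts as the identity up to a $(1 \pm \tfrac{1}{20})$ factor on all matrices of rank at most $2r$. Expanding the quadratic form in $\Delta$ produces a dominant negative contribution of the form $-\la \mathcal{H}(\U \V^\top - \M^\star),\, \U \V^\top - \M^\star \ra$ coming from rank-$2r$ perturbations such as $\U \Delta_V^\top + \Delta_U \V^\top$ (where $\Delta_U, \Delta_V$ denote the top/bottom blocks of $\Delta$), a $\la \grad g(\W), \Delta \ra$ term that can be absorbed when the gradient is small, and a non-negative contribution from the regularizer that penalizes the component of $\Delta$ breaking balancedness. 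Replacing each $\mathcal{H}$-weighted inner product by its Frobenius-norm counterpart, up to the $\tfrac{1}{20}$ RIP slack, reduces the argument to a purely algebraic inequality over $\W^\star$ and $\Delta$, essentially the symmetric matrix-sensing calculation of \citet{bhojanapalli2016global}.

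This yields the three-way dichotomy of Definition~\ref{def:strict_saddle}: either (i) $\|\grad g(\W)\|_F \ge \epsilon$, or (ii) the Hessian quadratic form along $\Delta$ is at most $-\Omega(\sigstarr)\|\Delta\|_F^2$, so $\lambda_{\min}(\hess g(\W)) \le -\Omega(\sigstarr)$, or (iii) $\|\Delta\|_F = O(\epsilon/\sigstarr)$, which by the choice of $\mR^\ast$ places $(\U, \V)$ within the required distance of an optimal factorization. Setting $\epsilon = 0$ and applying this to any local minimum, where $\grad g = 0$ and $\hess g \succeq 0$, forces $\Delta = 0$, hence $\W = \W^\star \mR^\ast$ and in particular $\U \V^\top = \M^\star$, giving part (1) of the theorem; the quantitative version of the dichotomy is exactly part (2).

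The main obstacle I anticipate is the algebraic bookkeeping in the Hessian quadratic form. The quartic nature of both the sensing loss in $(\U, \V)$ and of the regularizer generates many cross terms, and the proof requires these to reorganize so that the regularizer's positive curvature along unbalanced directions of $\Delta$ exactly compensates for the fact that the sensing loss sees only $\U\V^\top$ and is blind to the block-asymmetry of $\W$. The symmetry of $(\W^\star \mR^\ast)^\top \W$ is what causes the remaining cross terms to combine into expressions controllable by $\sigstarr\|\Delta\|_F^2$; one then needs the $\tfrac{1}{20}$ RIP slack to be small enough that, after carrying it through the entire expansion, the net coefficient in front of $\|\Delta\|_F^2$ remains at most $-\Omega(\sigstarr)$. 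Once this cancellation is executed cleanly, both conclusions of Theorem~\ref{thm:sensing-main} follow simultaneously.
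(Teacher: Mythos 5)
Your high-level plan matches the paper exactly: stack $\U,\V$ into $\W$, align with the Procrustes rotation to define $\Delta = \W - \W^\star \mR^\ast$, and evaluate the Hessian quadratic form $\Delta : \hess g(\W) : \Delta$. The conclusion in both directions (set $\epsilon=0$ for part 1, keep $\epsilon$ for part 2) is also right, and your observation that symmetry of $(\W^\star\mR^\ast)^\top\W$ drives the cancellation is the correct Procrustes fact used throughout the paper's proofs.

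The place your outline is off, and would not survive being made rigorous as stated, is the treatment of the balance regularizer. You propose to decompose the quadratic form into a sensing part, a gradient part, and ``a non-negative contribution from the regularizer,'' implicitly hoping the sensing part alone supplies the $-\Omega(\sigstarr)\|\Delta\|_F^2$ bound. Two things go wrong. First, a non-negative regularizer contribution would work \emph{against} you: you need the total $\Delta : \hess g(\W) : \Delta$ to be $\le -\Omega(\sigstarr)\|\Delta\|_F^2$, so any positive regularizer term must be overcome, not dropped. Second, and more fundamentally, the sensing Hessian $\H_0$ alone cannot supply the needed negativity: after lifting, $\H_0$ acts only on the off-diagonal blocks of $\N = \W\W^\top$, so it is RIP on $\M = \U\V^\top$ but says nothing about the diagonal blocks $\U\U^\top, \V\V^\top$ of $\N - \N^\star$. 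If $\Delta$ points in an unbalanced direction (large diagonal deviation, small off-diagonal change), the sensing contribution can be tiny while $\|\Delta\|_F$ is large.

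The paper's resolution (Lemma~\ref{lem:asymmetricmain} together with Lemma~\ref{lem:sym_to_asym}) is not to treat the regularizer as a separate term to be bounded, but to fold its Hessian $\G$ into the effective lifted Hessian $\H = 4\H_1 + \G$. The operator $\G$ sees exactly the diagonal blocks of $\N$ that $4\H_1$ misses, and Lemma~\ref{lem:sym_to_asym} shows the combination is $(2r, 2\delta)$-RIP on the full $\N$ whenever $\H_0$ is $(2r,\delta)$-RIP on $\M$. With that structural identity in hand, $Q(\W)=0$ in the asymmetric main lemma, and the rest of the argument is \emph{verbatim} the symmetric sensing calculation: $\Delta\Delta^\top : \H : \Delta\Delta^\top - 3(\N-\N^\star):\H:(\N-\N^\star) \le (1+2\delta)\|\Delta\Delta^\top\|_F^2 - 3(1-2\delta)\|\N-\N^\star\|_F^2 \le -\Omega(\sigstarr)\|\Delta\|_F^2$ via Lemma~\ref{lem:bound}. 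In short, your ``main obstacle'' is precisely what Lemmas~\ref{lem:asymmetricmain} and~\ref{lem:sym_to_asym} are designed to eliminate; reorganizing your expansion around the single operator $\H = 4\H_1 + \G$, rather than around a sensing-versus-regularizer split, is the missing step.
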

This in particular says 1) no spurious local minima existsl; 2) whenever at some point $(\U, \V)$ so that the gradient is small and the Hessian does not have significant negative eigenvalue, then the distance to global optimal (see Definition~\ref{def:delta} and Definition~\ref{def:asymmetricquantities}) is guaranteed to be small. Such a point can be found efficiently (see Section~\ref{sec:runtime}).

\subsection{Matrix Completion}

Matrix completion is a popular technique in recommendation systems and collaborative filtering \citep{koren2009bellkor,rennie2005fast}. In this problem, again we have an unknown low rank matrix $\M^\star$. We observe each entry of the matrix $\M^\star$ independently with probability $p$. Let $\Omega \subset [d_1]\times [d_2]$ be a set of observed entries. For any matrix $\M$, we use $\M_\Omega$ to denote the matrix whose entries outside of $\Omega$ are set to 0. That is, $[\M_\Omega]_{i,j} = \M_{i,j}$ if $(i,j)\in \Omega$, and $[\M_\Omega]_{i,j} = 0$ otherwise. We further use $\|\M\|_\Omega$ to denote $\|\M_\Omega\|_F$. Matrix completion can be viewed as a special case of matrix sensing, where the sensing matrices only have one nonzero entry. However such matrices do not satisfy the RIP condition.

In order to solve matrix completion, we try to optimize the following:
\begin{align*}
\min_{\M\in \R^{d_1\times d_2},\mbox{rank}(\M)=r} &\quad \frac{1}{2p}\|\M - \M^\star\|_\Omega^2.
\end{align*}
%Again, the objective function is a quadratic function on $\M$ whose optimal solution is $\M^\star$. The Hessian operator is equal to:
%$$
%\M:\mathcal{H}:\N = \frac{1}{p} \la P_{\Omega}\M, P_{\Omega}\N \ra.
%$$

A well-known problem in matrix completion is that when the true matrix $\M^\star$ is very sparse, then we are very likely to observe only $0$ entries, and has no chance to learn the other entries of $\M^\star$. To avoid this case, previous works have assumed following {\em incoherence} condition:
% different versions of {\em incoherence} conditions. For this paper, we use the following standard definition:

\begin{definition}\label{def:incoherence}
A rank $r$ matrix $\M\in \R^{d_1\times d_2}$ is $\mu$-incoherent, if for the rank-$r$ SVD $\X \D\Y\trans$ of $\M$, we have for all $i\in[d_1],j\in[d_2]$% where $\U^\star\in \R^{n\times r},\V^\star\in \R^{m\times r}$ and $(\U^\star)^\top \U^\star = (\V^\star)^\top \V^\star$, for any row $i$ of $\U^\star,\V^\star$, we have
$$
\|\e_i^\top \X\| \le \sqrt{\mu r/d_2}, \quad \|\e_j^\top \Y\| \le \sqrt{\mu r/d_1}.
$$
\end{definition}
We assume the unknown optimal low rank matrix $\M^\star$ is $\mu$-incoherent.
%Note that the definition does not depend on the particular decomposition we choose. This is because any other decomposition $\M^\star = \U\V^\top$ that satisfies the constraints must have $\U^\top \U = (\U^\star)^\top (\U^\star)$ (and similarly for $\V$) so they have the same row norms.

In the non-convex program, we try to make sure the decomposition $\U\V^\top$ is also incoherent by adding a regularizer 
\begin{align*}Q(\U,\V) &= \lambda_1 \sum_{i=1}^{d_1} (\norm{\e_i\trans \U} - \alpha_1)^4_{+}+\lambda_2 \sum_{j=1}^{d_2} (\norm{\e_j\trans \V} - \alpha_2)^4_{+}.
\end{align*}
Here $\lambda_1,\lambda_2, \alpha_1,\alpha_2$ are parameters that we choose later, $(x)_+ = \max\{x,0\}$. Using this regularizer, we can now transform the objective function to the unconstraint form
\begin{align}
\min_{\U,\V} &\quad \frac{1}{2p}\|\U\V^\top - \M^\star\|_\Omega^2 %\nonumber
 + \frac{1}{8}\|\U^\top \U - \V^\top \V\|_F^2 + Q(\U,\V).
\label{eq:completion-asymmetric}
\end{align}

Using the framework, we can show following:

\begin{theorem}\label{thm:main_mc_asym}
Let $d = \max\{d_1, d_2\}$, when sample rate $p \ge \Omega(\frac{\mu^4 r^6 (\cn^\star)^6 \log d}{\min\{d_1, d_2\}})$, choose $\alpha_1^2 = \Theta(\frac{\mu r\sigstarl}{d_1}), \alpha_2^2 = \Theta(\frac{\mu r\sigstarl}{d_2})$ and $\lambda_1 = \Theta(\frac{d_1}{\mu r \cn^\star}), \lambda_2 = \Theta(\frac{d_2}{\mu r \cn^\star})$. With probability at least $1-1/\poly(d)$, for Objective Function \eqref{eq:completion-asymmetric} we have
1) all local minima satisfy $\U\V^\top = \M^\star$ 2) The objective is $(\epsilon, \Omega(\sigstarr), O(\frac{\epsilon}{\sigstarr}))$-strict saddle for polynomially small $\epsilon$.
\end{theorem}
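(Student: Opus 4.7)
The plan is to prove Theorem~\ref{thm:main_mc_asym} by combining the symmetric framework of Section~\ref{sec:symmetric} with the asymmetric-to-symmetric reduction of Section~\ref{sec:asymmetric}. First I would stack the two factors as $\W \in \R^{(d_1+d_2)\times r}$, whose top $d_1$ rows are $\U$ and bottom $d_2$ rows are $\V$, and lift $\M^\star$ to the rank-$r$ PSD matrix $\N^\star = \W^\star (\W^\star)^\top$, where $\W^\star$ is the analogous stacking of $\X\D^{1/2}$ and $\Y\D^{1/2}$ for the SVD $\X\D\Y^\top$ of $\M^\star$. The gauge regularizer $\tfrac{1}{8}\fnorm{\U^\top\U - \V^\top\V}^2$ is designed precisely to cancel the extra asymmetric invariance, so that (as formalized in Section~\ref{sec:asymmetric}) the landscape of \eqref{eq:completion-asymmetric} over $(\U,\V)$ is equivalent to the landscape of a symmetric low-rank problem over $\W$. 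In particular, it suffices to verify the strict-saddle property for the induced symmetric objective.

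For the symmetric problem, the framework picks the direction of improvement $\Delta = \W - \W^\star \mR$, where $\mR$ is the orthogonal matrix that best aligns $\W$ with $\W^\star$. The core goal is to show that for every non-optimal $\W$, an appropriate linear combination of $\Delta : \Hess f(\W) : \Delta$ and $\la \grad f(\W), \Delta\ra$ is at most $-\Omega(\sigstarr)\, \fnorm{\Delta}^2$, which yields both the absence of spurious local minima and the quantitative strict-saddle guarantee. Writing $f$ as the data-fitting term plus $Q$, the Hessian of the data-fitting term is exactly the linear operator $\tfrac{1}{p}\proj_\Omega$ viewed as a map on matrices, so the computation is formally identical to the matrix-sensing argument of Section~\ref{sec:symmetric}, with the crucial difference that $\tfrac{1}{p}\proj_\Omega$ does \emph{not} obey RIP on all low-rank matrices. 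Instead, one establishes an RIP-analogue that holds only on rank-$O(r)$ matrices that are $\mu$-incoherent in the sense that their SVD factors have $\ell_{2,\infty}$ norm at most $O(\sqrt{\mu r \sigstarl / d_i})$; a matrix Bernstein bound, uniform over an $\epsilon$-net of this restricted set, supplies the required concentration whenever $p$ meets the stated lower bound.

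The regularizer $Q$, with the prescribed $\alpha_i^2 = \Theta(\mu r \sigstarl / d_i)$ and $\lambda_i = \Theta(d_i / (\mu r \cn^\star))$, is tuned so that every approximate stationary point is incoherent in exactly this sense. Concretely, whenever some row of $\U$ or $\V$ exceeds the threshold $\alpha_i$, the quartic penalty produces a gradient in that row whose magnitude overwhelms the data-fitting gradient, forcing it back into the incoherent region. At the same time, the $\lambda_i$ are small enough that the contribution $\Delta : \Hess Q(\W) : \Delta$ is bounded by $O(\sigstarr) \fnorm{\Delta}^2$, so $Q$ cannot mask the negative curvature generated by the data-fitting term. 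Combining the two contributions along $\Delta$ then yields the required inequality, from which both the no-spurious-local-minima claim and the $(\epsilon, \Omega(\sigstarr), O(\epsilon/\sigstarr))$ strict-saddle conclusion follow after translating back via the reduction of Section~\ref{sec:asymmetric}.

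The hardest step is exactly this two-sided balance between $Q$ and the data-fitting term. On one side, $Q$ must be aggressive enough that the first-order condition at an approximate stationary point forces every row of $(\U,\V)$ into the incoherent region where the RIP-analogue is valid; on the other side, $Q$ must be mild enough that its own Hessian along $\Delta$ does not dominate and destroy the $-\Omega(\sigstarr)\fnorm{\Delta}^2$ curvature from the data-fitting term. The polynomial dependence on $\mu, r$, and $\cn^\star$ in the sample rate and in the $\lambda_i$'s is dictated by this tension, and verifying it requires splitting into the regime $\fnorm{\Delta} \lesssim \fnorm{\W^\star}$, where a strong-convexity-style bound is available, and the regime $\fnorm{\Delta} \gtrsim \fnorm{\W^\star}$, where a more delicate cubic-in-$\Delta$ argument combined with the RIP-analogue applied to $\W\W^\top - \N^\star$ is needed.
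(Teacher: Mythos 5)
Your proposal follows the same overall roadmap as the paper: stack $(\U,\V)$ into $\W$, use the gauge regularizer to reduce to the symmetric PSD setting (Lemma~\ref{lem:asymmetricmain}), take $\Delta = \W - \W^\star\mR$ as the single improvement direction, let the row-norm regularizer enforce incoherence of approximate stationary points (Lemma~\ref{lem:incoherence_asym}), show the Hessian contribution is $\le -\Omega(\sigstarr)\fnorm{\Delta}^2$ (Lemma~\ref{lem:step2_mc_asym}), and bound the regularizer's own Hessian term (Lemma~\ref{lem:reg_mc_asym}). So structurally the proposal matches.

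One point deserves sharpening. You describe the concentration step as ``an RIP-analogue that holds on rank-$O(r)$ matrices whose SVD factors have bounded $\ell_{2,\infty}$ norm, established by a matrix Bernstein bound over an $\epsilon$-net.'' There is in fact no single such multiplicative RIP on all incoherent low-rank matrices at the stated sample rate, and the paper does \emph{not} prove one. The obstruction is the matrix $\Delta\Delta^\top$: although $\Delta_\U,\Delta_\V$ inherit bounded rows from $\U,\V,\U^\star,\V^\star$, the $\epsilon$-net argument only yields an \emph{additive} error of order $\frac{dr\log d}{p}\|\cdot\|_\infty^2$ (Lemma~\ref{lem:global_mc}), which is not small relative to $\fnorm{\Delta\Delta^\top}^2$ when $\Delta$ is small. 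This is precisely why the paper splits on $\fnorm{\Delta}^2\lessgtr \sigstarr/40$ and invokes three distinct concentration facts: the tangent-space isometry (Lemma~\ref{lem:T_mc}) for terms of the form $\W^\star\Delta^\top$, a graph-spectral bound (Lemmas~\ref{lem:graph_mc}, \ref{lem:Delta_mc}) for $\Delta_\U\Delta_\V^\top$ in the small-$\Delta$ regime, and the additive $\epsilon$-net bound for the large-$\Delta$ regime. Your two-case split is the right move, but to make the argument go through you would need to replace the monolithic ``RIP-analogue'' with this case-dependent stitching of different concentration inequalities; otherwise the small-$\Delta$ case has a gap.

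Aside from that, your framing of the tension governing the choice of $\lambda_i$ and $\alpha_i$, and the role of the $\frac{1}{8}\fnorm{\U^\top\U-\V^\top\V}^2$ term via $\H = 4\H_1 + \G$ preserving norms on the stacked $\N$, is consistent with the paper's proof.
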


\subsection{Robust PCA}

Robust PCA \citep{candes2011robust} is a generalization to the standard Principled Component Analysis. In Robust PCA, we are given an observation matrix $\M_o$, which is an true underlying matrix $\M^\star$ corrupted by a sparse noise $\S^\star$ ($\M_o = \M^\star + \S^\star$). In some sense the goal is to decompose the matrix $\M$ into these two components. There are many models on how many entries can be perturbed, and how they are distributed. In this paper we work in the setting where $\M^\star$ is $\mu$-incoherent, and the rows/columns of $\S^\star$ can have at most $\alpha$-fraction non-zero entries.

In order to express robust PCA as an optimization problem, we need constraints on both $\M$ and $\S$:
\begin{align}
\min &~~ \frac{1}{2}\|\M + \S - \M_o\|_F^2.\\
s.t. &~~ \mbox{rank}(\M) \le r, \S\mbox{ is sparse}.\nonumber
\end{align}

There can be several ways to specify the sparsity of $\S$.
% the most popular ones including $\ell_1$ norm and $\ell_0$ constraints on rows/columns. 
In this paper we restrict attention to the following set:
\begin{align*}
&\mathcal{S}_{\alpha} = \left\{ \S \in \R^{d_1\times d_2} ~|~ \S \text{~has at most $\alpha$-fraction non-zero entries each column/row, and~} \norm{\S}_\infty \le 2\frac{\mu r \sigstarl}{\sqrt{d_1d_2}}\right\}.
\end{align*}

%Here $\sigma_1^*$ is the largest singular value of $\M^\star$. 
Assuming the true sparse matrix $\S^\star$ is in $\mathcal{S}_{\alpha}$.
Note that the infinite norm requirement on $\S^\star$ is without loss of generality, because by incoherence $\M^\star$ cannot have entries with absolute value more than $\frac{\mu r \sigstarl}{\sqrt{d_1d_2}}$. Any entry larger than that is obviously in the support of $\S^\star$ and can be truncated.

In objective function, we allow $\S$ to be $\gamma$ times denser (in $\mathcal{S}_{\gamma\alpha}$) where $\gamma$ is a parameter we choose later. Now the constraint optimization problem can be tranformed to the unconstraint problem
\begin{align}
\min_{\U,\V}  ~~ & f(\U,\V) + \frac{1}{8}\|\U^\top \U - \V^\top \V\|_F^2,\label{eq:rpca-nonconvex}\\
&f(\U,\V): = \min_{\S\in\mathcal{S}_{\gamma\alpha}}\frac{1}{2}\|\U\V^\top + \S - \M_o\|_F^2.\nonumber
\end{align}

Of course, we can also think of this as a joint minimization problem of $\U,\V, \S$. However we choose to present it this way in order to allow extension of the strict-saddle condition. Since $f(\U, \V)$ is not twice-differetiable w.r.t $\U, \V$, it does not admit Hessian matrix, so we use the following generalized version of strict-saddle

\begin{definition}\label{def:pseudo_strict_saddle}
We say function $f(\cdot)$ is $(\theta, \gamma, \zeta)$-\textbf{pseudo strict saddle} if for any $\x$, at least one of followings holds:
\begin{enumerate}
\item $\norm{\grad f(\x)} \ge \theta$.
\item $\exists g_{\x}(\cdot)$ so that $\forall \y, g_{\x}(\y) \ge f(\y)$; $g_{\x}(\x) = f(\x)$;
$\lambda_{\min}(\hess g_{\x}(\x)) \le -\gamma$.
\item $\x$ is $\zeta$-close to $\cXstar$ \--- the set of local minima.
\end{enumerate}
\end{definition}

Note that in this definition, the upperbound in 2 can be viewed as similar to the idea of subgradient. For functions with non-differentiable points, subgradient is defined so that it still offers a lowerbound for the function. In our case this is very similar \--- although Hessian is not defined, we can use a smooth function that upperbounds the current function (upper-bound is required for minimization). In the case of robust PCA the upperbound is obtained by a fixed $\S$. Using this formalization we can prove

\begin{theorem}\label{thm:main_rpca_asym}
There is an absolute constant $c>0$, if $\gamma > c$, and $\gamma\alpha\cdot\mu r\cdot (\cn^\star)^5 \le \frac{1}{c}$ holds, for objective function Eq.\eqref{eq:rpca-nonconvex} we have 1) all local minima satisfies $\U\V\trans = \M^\star$; 2) objective function is 
$(\epsilon, \Omega(\sigstarr), O(\frac{\epsilon \sqrt{\cn^\star}}{\sigstarr}))$-pseudo strict saddle for polynomially small $\epsilon$.% \le 0.01\sqrt{\sigstarl}\sigstarr \min\{1, (\cn^\star)^2\sqrt{\frac{\mu r}{d}}\}$.
\end{theorem}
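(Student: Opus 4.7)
The plan is to fit Robust PCA into the unified framework by (i) reducing the asymmetric problem to a symmetric one via the regularizer $\frac{1}{8}\|\U^\top \U - \V^\top \V\|_F^2$ as in Section~\ref{sec:asymmetric}, and (ii) handling the inner minimization over $\S$ by producing, at each point $(\U,\V)$, a smooth quadratic upperbound $g_{(\U,\V)}$ that witnesses the pseudo-strict-saddle property. Concretely, for a given $(\U,\V)$ let $\S(\U,\V) \in \argmin_{\S \in \mathcal{S}_{\gamma\alpha}} \tfrac{1}{2}\|\U\V^\top + \S - \M_o\|_F^2$, obtained by hard-thresholding $\M_o - \U\V^\top$ row/column-wise and clipping by the $\infty$-norm cap. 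Define
\[
g_{(\U,\V)}(\tilde\U, \tilde\V) = \tfrac{1}{2}\|\tilde\U\tilde\V^\top + \S(\U,\V) - \M_o\|_F^2 + \tfrac{1}{8}\|\tilde\U^\top \tilde\U - \tilde\V^\top \tilde\V\|_F^2.
\]
Then $g_{(\U,\V)}$ agrees with the objective at $(\U,\V)$, upperbounds it everywhere, and is $C^2$ in $(\tilde\U,\tilde\V)$. This reduces the analysis to a matrix-sensing-type landscape in which the effective ``observation'' reveals $\M^\star + (\S^\star - \S(\U,\V))$ and the sensing Hessian is simply the identity operator on matrices, which trivially satisfies the RIP-type property used in the framework.

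Next I would carry out the RPCA analogue of the ``regularizer keeps iterates in a nice region'' step. Since the sensing Hessian is the identity, the only distortion comes from the residual sparse noise $\S^\star - \S(\U,\V)$. Using (a) $\S^\star \in \mathcal{S}_\alpha$ and $\S(\U,\V) \in \mathcal{S}_{\gamma\alpha}$, so their difference has at most $(1+\gamma)\alpha$-fraction non-zero entries per row and column, and (b) any rank-$r$ $\mu$-incoherent matrix $\N$ satisfies the entrywise bound $\|\N\|_\infty \le \mu r \|\N\|/\sqrt{d_1 d_2}$, a standard sparse-vs.-low-rank argument yields
\[
|\la \S^\star - \S(\U,\V),\, \N \ra| \le O(\sqrt{\gamma\alpha\,\mu r})\,\|\N\|_F \cdot \bigl(\|\U\V^\top - \M^\star\|_F + \|\N\|_F\bigr)
\]
for any rank-$r$ incoherent $\N$. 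Under the hypothesis $\gamma\alpha \cdot \mu r \cdot (\cn^\star)^5 \le 1/c$, this sparse-noise contribution will be strictly smaller than the $\Omega(\sigstarr)$ curvature that the clean symmetric analysis provides.

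I would then construct a candidate descent direction exactly as in the symmetric framework of Section~\ref{sec:symmetric}: align $\U,\V$ to $\U^\star,\V^\star$ (using the balance enforced by the $\|\U^\top\U - \V^\top\V\|_F^2$ regularizer after the asymmetric-to-symmetric reduction) and let $\Delta$ be the resulting discrepancy. Computing $2\la \nabla g_{(\U,\V)}(\U,\V), \Delta\ra - \Delta : \nabla^2 g_{(\U,\V)}(\U,\V) : \Delta$ on the quadratic $g$ reproduces the clean lower bound $\Omega(\sigstarr)\|\Delta\|_F^2$ from the matrix-sensing case; the sparse-noise bounds from the previous step absorb the cross terms into a constant fraction of this quantity, provided $(\U,\V)$ is not already $O(\epsilon\sqrt{\cn^\star}/\sigstarr)$-close to $\cXstar$. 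This simultaneously yields (1) any $(\U,\V)$ with $\nabla f = 0$ and $\nabla^2 g_{(\U,\V)} \succeq 0$ must satisfy $\U\V^\top = \M^\star$, and (2) the $(\epsilon, \Omega(\sigstarr), O(\epsilon\sqrt{\cn^\star}/\sigstarr))$-pseudo strict saddle property, since at least one of $\|\nabla f\|$ or $-\lambda_{\min}(\nabla^2 g_{(\U,\V)})$ must then be large.

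The main obstacle is controlling the cross term involving $\S^\star - \S(\U,\V)$: away from the optimum the support of $\S(\U,\V)$ need not contain that of $\S^\star$, so differences can appear on a set of up to $(1+\gamma)\alpha d$ entries per row and column with magnitudes comparable to $\|\U\V^\top - \M^\star\|_\infty$. The crucial quantitative step is to show that such ``spread but bounded-sparsity'' matrices have inner product with any incoherent rank-$r$ matrix bounded by a factor that shrinks with $\gamma\alpha\mu r$; this is exactly what forces the scaling $\gamma\alpha \cdot \mu r \cdot (\cn^\star)^5 \le 1/c$ and produces the extra $\sqrt{\cn^\star}$ factor in the closeness parameter relative to Theorems~\ref{thm:sensing-main} and \ref{thm:main_mc_asym}. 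Everything else \--- the quadratic-form manipulation, the alignment between $(\U,\V)$ and $(\U^\star,\V^\star)$, and the asymmetric-to-symmetric reduction \--- mirrors Sections~\ref{sec:symmetric}--\ref{sec:asymmetric}.
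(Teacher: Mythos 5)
Your high-level plan is on the right track: the pseudo-strict-saddle witness $g_{(\U,\V)}$ you construct (freezing $\S$ at its minimizer) is exactly the paper's $f_\W$, and the role of the sparse-vs-low-rank inequalities and the $\gamma\alpha\,\mu r\,(\cn^\star)^5$ hypothesis is correctly identified. However, there is a genuine gap in the central step. You align $\Delta$ directly to $\W^\star$ and then try to absorb the cross term $\la \S^\star - \S(\U,\V), \cdot\,\ra$ using a sparse-vs-low-rank bound that requires the rank-$r$ matrix it is paired with ($\U\V^\top - \M^\star$) to be $\mu$-incoherent. But nothing in the RPCA objective controls the row norms of the current iterate $(\U,\V)$ \--- there is no incoherence regularizer as in matrix completion, and away from the optimum, $\U\V^\top$ can have arbitrarily concentrated rows. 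So your key inequality $|\la \S^\star - \S(\U,\V), \N\ra| \le O(\sqrt{\gamma\alpha\mu r})\|\N\|_F(\cdots)$ is simply not available for $\N = \U\V^\top - \M^\star$, and the single-pass quadratic-form computation does not close. A second, related issue: $\W^\star$ is \emph{not} a critical point of the frozen-$\S$ objective $g_{(\U,\V)}$ (whose global minimizer factors the best rank-$r$ approximation $\M^\dag$ of $\M^\star + \S^\star - \S(\U,\V)$, not $\M^\star$), so measuring $\Delta$ against $\W^\star$ does not reproduce the clean $-\Omega(\sigstarr)\|\Delta\|_F^2$ curvature.

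The paper resolves exactly this by splitting the argument into two stages. First (Lemma~\ref{lem:mf_strictsaddle_asym}), it applies the matrix-factorization landscape result to the frozen-$\S$ objective with $\Delta$ aligned to $\W^\dag$ \--- the factorization of $\M^\dag$ \--- and shows that either the Hessian of $g_{(\U,\V)}$ has a $-\Omega(\sigstarr)$ eigenvalue or $\W$ is $O(\epsilon/\sigstarr)$-close to $\W^\dag$. Crucially, Lemma~\ref{lem:incoherence_rpca_asym} shows $\M^\dag$ is $O(\mu r (\cn^\star)^4)$-incoherent, so once $\W$ is close to $\W^\dag$, the iterate $(\U,\V)$ inherits the incoherence you need. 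Second (Lemma~\ref{lem:mf_to_rpca_asym}), a self-bounding bootstrap closes the loop: one direction bounds $\fnorm{\N - \N^\star}$ by $\fnorm{\S_\W - \S^\star}$ plus an $O(\epsilon\sqrt{\sigstarl})$ term (via the perturbation bound Lemma~\ref{lem:claim1_rpca_asym}), while the other direction bounds $\fnorm{\S_\W - \S^\star}$ by a small constant times $\fnorm{\N - \N^\star}$ (via Lemmas~\ref{lem:Sopt_rpca_asym} and \ref{lem:entries_bound_rpca_asym}, which use the \emph{inherited} incoherence and the structural property of the optimal thresholding in Lemma~\ref{lem:maxflow_asym}). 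Combining the two yields $\fnorm{\Delta} \le O(\epsilon\sqrt{\cn^\star})$. Without this two-stage structure and the bootstrap, the cross-term absorption you propose has no incoherence to lean on, so the argument as written would not go through.
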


%!TEX root = main.tex

\section{Framework for Symmetric Positive Definite Problems}
\label{sec:symmetric}

In this section we describe our framework in the simpler setting where the desired matrix is positive semidefinite. In particular, suppose the true matrix $\M^\star$ we are looking for can be written as $\M^\star = \U^\star (\U^\star)^\top$ where $\U^\star\in \R^{d\times r}$. For objective functions that is quadratic over $\M$, we denote its Hessian as $\H$ and we can write the objective as
\begin{align}
\min_{\M\in \R^{d\times d}_{\text{sym}},\mbox{rank}(\M)=r} &\quad \frac{1}{2}(\M-\M^\star):\mathcal{H}:(\M-\M^\star), \label{eq:symmetricconvexobj}
\end{align}
We call this objective function $f(\M)$. Via Burer-Monteiro factorization, the corresponding unconstraint optimization problem, with regularization $Q$ can be written as
\begin{equation}
\min_{\U\in \R^{n \times r}} \frac{1}{2}(\U\U^\top - \M^\star):\mathcal{H}:(\U\U^\top - \M^\star) + Q(\U).\label{eq:symmetricnonconvexobj}
\end{equation}

In this section, we also denote $f(\U)$ as objective function with respect to parameter $\U$, abuse the notation of $f(\M)$ previously defined over $\M$.

\paragraph{Direction of Improvement}
The optimality condition (Definition~\ref{def:optimality}) implies if the gradient is non-zero, or if we can find a negative direction of the Hessian (that is a direction $\v$, so that $\v\trans \hess f(\x) \v <0$), then the point is not a local minimum. A common technique in characterizing the optimization landscape is therefore trying to explicitly find this negative direction. We call this the direction of improvement. Different works \citep{bhojanapalli2016global,ge2016matrix} have chosen very different directions of improvement.

In our framework, we show it suffices to choose a single direction $\Delta$ as the direction of improvement. Intuitively, this direction should bring us close to the true solution $\U^\star$ from the current point $\U$. Due to rotational symmetry ($\U$ and $\U\mR$ behave the same for the objective if $\mR$ is a rotation matrix), we need to carefully define the difference between $\U$ and $\U^\star$.

\begin{definition}\label{def:delta} Given matrices $\U,\U^\star \in \R^{d\times r}$, define their difference $\Delta = \U - \U^\star\mR$, where $\mR\in \R^{r\times r}$ is chosen as $
\mR = \argmin_{\mZ^\top \mZ = \mZ\mZ^\top = \I}\|\U-\U^\star\mZ\|_F^2.
$
\end{definition}

Note that this definition tries to ``align'' $\U$ and $\U^\star$ before taking their difference, and therefore is invariant under rotations. In particular, this definition has the nice property that as long as $\M = \U\U^\top$ is close to $\M^\star = \U^\star(\U^\star)^\top$, we have $\Delta$ is small (we defer the proof to Appendix):

\begin{lemma}\label{lem:bound}
Given matrices $\U,\U^\star \in \R^{d\times r}$, let $\M = \U\U^\top$ and $\M^\star = \U^\star(\U^\star)^\top$, and let $\Delta$ be defined as in Definition~\ref{def:delta}, then we have $\|\Delta\Delta^\top\|_F^2 \le 2\|\M - \M^\star\|_F^2$, and $\sigstarr\|\Delta\|_F^2\le \frac{1}{2(\sqrt{2}-1)}\|\M-\M^\star\|_F^2$.
\end{lemma}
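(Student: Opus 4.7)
The plan is to use the Procrustes-optimality of $\mR$ to extract a key structural symmetry, then prove each inequality: the first by a direct trace identity and the second by a symmetric splitting and case analysis. The crucial preliminary observation is that, with $\W := \U^\star\mR$, the $r\times r$ matrix $\U^\top\W$ is symmetric positive semidefinite; this is the standard first-order consequence of the orthogonal Procrustes problem defining $\mR$ (the optimal rotation is precisely the one that makes $\U^\top\U^\star\mR$ equal to the PSD polar factor of $\U^\top\U^\star$). In particular, both $\W^\top\Delta = \U^\top\W - \W^\top\W$ and $\U^\top\Delta = \U^\top\U - \U^\top\W$ are symmetric.

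For $\|\Delta\Delta^\top\|_F^2 \le 2\|\M-\M^\star\|_F^2$, I would set $P := \U^\top\U$, $Q := \W^\top\W$, $K := \U^\top\W \succeq 0$, and $Y := \Delta^\top\Delta = P+Q-2K \succeq 0$. By cyclic invariance of trace, $\|\M-\M^\star\|_F^2 = \tr(P^2)+\tr(Q^2)-2\tr(K^2)$ and $\|\Delta\Delta^\top\|_F^2 = \tr(Y^2)$. Expanding $\tr((P+Q-2K)^2)$ and collecting terms yields the identity
\[
2\|\M-\M^\star\|_F^2 - \|\Delta\Delta^\top\|_F^2 = \|P-Q\|_F^2 + 4\,\tr(KY),
\]
after which the bound follows because $\|P-Q\|_F^2 \ge 0$ trivially, and $\tr(KY) \ge 0$ as the trace of a product of two PSD matrices.

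For $\sigstarr\|\Delta\|_F^2 \le \tfrac{1}{2(\sqrt{2}-1)}\|\M-\M^\star\|_F^2$, I would first derive the symmetric splitting $\M-\M^\star = \tfrac{1}{2}(\S\Delta^\top + \Delta\S^\top)$ with $\S := \U+\W$, by averaging the two equivalent expansions $\M-\M^\star = \U\Delta^\top + \Delta\W^\top = \W\Delta^\top + \Delta\U^\top$. Since $\Delta^\top\S = \Delta^\top\U + \Delta^\top\W$ is a sum of two symmetric matrices (hence symmetric), a short expansion gives
\[
2\|\M-\M^\star\|_F^2 = \|\Delta^\top\S\|_F^2 + \tr(\S^\top\S\,Y) \ge \sigma_r(\S)^2\,\|\Delta\|_F^2.
\]
I would then split on $\|\Delta\|$ versus $\sigma_r(\U^\star) = \sqrt{\sigstarr}$, following Tu et al.~(2015). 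When $\|\Delta\| \le (2-\sqrt{2})\sqrt{\sigstarr}$, Weyl applied to $\S = 2\W + \Delta$ gives $\sigma_r(\S) \ge \sqrt{2}\sqrt{\sigstarr}$, which already yields $\|\M-\M^\star\|_F^2 \ge \sigstarr\|\Delta\|_F^2$, stronger than needed. Otherwise the case hypothesis $\|\Delta\|^2 > (2-\sqrt{2})^2\sigstarr$ is combined with the first bound (and with $\|\Delta\|^4 \le \|\Delta\Delta^\top\|_F^2$) to close the argument. The main obstacle is this second (large-$\|\Delta\|$) case: pushing through the sharp constant $\tfrac{1}{2(\sqrt{2}-1)}$ requires carefully combining the Part~1 estimate with the case hypothesis to avoid introducing a dimension-dependent factor (a naive Cauchy--Schwarz on the singular values of $\Delta$ would cost a factor of $r$). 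The remaining ingredients---the PSD structure of $\U^\top\W$, the trace identity, and the symmetric splitting---all reduce to routine algebra once the setup is in place.
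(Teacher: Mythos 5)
Your structural observation that $\U\trans\U^\star\mR$ is symmetric PSD is exactly the paper's Eq.~\eqref{eq:claim_PSD}, and your proof of the first inequality is correct and in fact a bit cleaner than the paper's: with $P=\U\trans\U$, $Q=(\U^\star\mR)\trans\U^\star\mR$, $K=\U\trans\U^\star\mR\succeq 0$, $Y=\Delta\trans\Delta$, one checks directly that $\|\M-\M^\star\|_F^2=\tr(P^2+Q^2-2K^2)$ and $\|\Delta\Delta\trans\|_F^2=\tr(Y^2)$ with $Y=P+Q-2K$, and your identity $2\|\M-\M^\star\|_F^2-\|\Delta\Delta\trans\|_F^2=\|P-Q\|_F^2+4\tr(KY)$ verifies and makes the nonnegativity of the slack transparent. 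The paper (Lemma~\ref{lem:aux_delta2}) gets the same conclusion from the same expansion but by completing a square $\bigl(\tfrac{1}{\sqrt{2}}\Delta\trans\Delta-\sqrt{2}\,\U\trans\Delta\bigr)^2$.

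For the second inequality, however, your proposal has a genuine gap: the case split at $\|\Delta\|\lessgtr(2-\sqrt{2})\sqrt{\sigstarr}$ cannot be closed with the ingredients you list in the large-$\|\Delta\|$ branch. In that branch you invoke Part~1 ($\|\Delta\Delta\trans\|_F^2\le 2\|\M-\M^\star\|_F^2$), the case hypothesis on $\sigma_1(\Delta)$, and $\sigma_1(\Delta)^4\le\|\Delta\Delta\trans\|_F^2$. Writing $\sigma_i=\sigma_i(\Delta)$, Part~1 yields $\|\M-\M^\star\|_F^2\ge\tfrac12\sum_i\sigma_i^4$, and to conclude you would need $\sum_i\sigma_i^4\ge 4(\sqrt{2}-1)\,\sigstarr\sum_i\sigma_i^2$. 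But the case hypothesis only constrains $\sigma_1$: take $\Delta$ rank one with $\sigma_1^2$ just above $(2-\sqrt{2})^2\sigstarr=(6-4\sqrt{2})\sigstarr$, and the left side is about $\tfrac{1}{2}(6-4\sqrt{2})\sigstarr\sum_i\sigma_i^2\approx 0.17\,\sigstarr\sum_i\sigma_i^2$, short of the needed $2(\sqrt{2}-1)\sigstarr\sum_i\sigma_i^2\approx 0.83\,\sigstarr\sum_i\sigma_i^2$ by roughly a factor of five. The mismatch is not a matter of care with the algebra: for the large branch to close via $\sigma_1^4$ you need the threshold $t$ in $\|\Delta\|>t\sqrt{\sigstarr}$ to satisfy $t^2\ge 4(\sqrt{2}-1)$, i.e.\ $t\ge 2\sqrt{\sqrt{2}-1}\approx 1.29$, while Weyl on $\S=\U+\U^\star\mR=2\U^\star\mR+\Delta$ in the small branch requires $(2-t)^2\ge 4(\sqrt{2}-1)$, i.e.\ $t\le 2-2\sqrt{\sqrt{2}-1}\approx 0.71$. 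No threshold satisfies both. (The issue is already visible in your displayed chain $2\|\M-\M^\star\|_F^2=\|\Delta\trans\S\|_F^2+\tr(\S\trans\S\,Y)\ge\sigma_r(\S)^2\|\Delta\|_F^2$: when $\|\Delta\|$ is large you cannot lower bound $\sigma_r(\S)$, and the dropped term $\|\Delta\trans\S\|_F^2=\|P-Q\|_F^2$ is precisely what must compensate.)

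The paper avoids a case split altogether. Lemma~\ref{lem:aux_deltalinear} takes the same expansion $\|\M-\M^\star\|_F^2=\tr\bigl(2\,\U^\star\mR\trans\U^\star\mR\,\Delta\trans\Delta+(\Delta\trans\Delta)^2+2(\U^\star\mR\trans\Delta)^2-4\,\U^\star\mR\trans\Delta\,\Delta\trans\Delta\bigr)$ but regroups it as
\begin{equation*}
(4-2\sqrt{2})\,\tr\bigl(\U^\star\mR\trans\U\,\Delta\trans\Delta\bigr)
\;+\;\tr\Bigl(\bigl(\Delta\trans\Delta-\sqrt{2}\,\U^\star\mR\trans\Delta\bigr)^2\Bigr)
\;+\;2(\sqrt{2}-1)\,\tr\bigl(\U^\star\mR\trans\U^\star\mR\,\Delta\trans\Delta\bigr),
\end{equation*}
where the first two summands are nonnegative (by PSD-ness of $\U^\star\mR\trans\U=K\trans$ and of $\Delta\trans\Delta$, and because the squared matrix is symmetric), and the third is $2(\sqrt{2}-1)\|\U^\star\mR\,\Delta\trans\|_F^2\ge 2(\sqrt{2}-1)\,\sigstarr\|\Delta\|_F^2$. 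You should replace your Case~2 sketch with this algebraic argument; equivalently, return to your own identity $2\|\M-\M^\star\|_F^2=\|P-Q\|_F^2+4\tr(KY)+\tr(Y^2)$ and prove the single inequality $\|P-Q\|_F^2+4\tr(KY)+\tr(Y^2)\ge 4(\sqrt{2}-1)\tr(QY)$ directly rather than dropping $\|P-Q\|_F^2$ and splitting on $\|\Delta\|$.
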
 

Now we can state the main Lemma:

\begin{lemma}[Main]\label{lem:main} For the objective \eqref{eq:symmetricnonconvexobj}, let $\Delta$ be defined as in Definition~\ref{def:delta} and $\M = \U\U\trans$. Then, for any $\U\in \R^{d\times r}$, we have
\begin{align}
\quad\Delta : \hess f(\U) :\Delta = &  \Delta\Delta\trans:\H:\Delta\Delta\trans - 3(\M - \M^\star):\H:(\M - \M^\star)\nonumber\\ 
&+ 4\la \grad f(\U), \Delta\ra
+ [\Delta : \hess Q(\U) : \Delta - 4 \la \grad Q(\U), \Delta \ra]\label{eq:main}
\end{align}\end{lemma}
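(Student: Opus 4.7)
The plan is to split the objective as $f = g + Q$ where $g(\U) := \tfrac{1}{2}(\U\U^\top - \M^\star):\H:(\U\U^\top - \M^\star)$. The $Q$-contribution to Equation~\eqref{eq:main} is tautological: substituting $\grad f = \grad g + \grad Q$ and $\hess f = \hess g + \hess Q$ on both sides shows that the $Q$-pieces cancel into exactly the bracketed term. So the entire content of the lemma reduces to proving the $Q$-free identity
\begin{equation*}
\Delta : \hess g(\U) : \Delta \;=\; \Delta\Delta^\top:\H:\Delta\Delta^\top \;-\; 3(\M - \M^\star):\H:(\M - \M^\star) \;+\; 4\la \grad g(\U), \Delta\ra.
\end{equation*}

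First I would read off the first and second directional derivatives of $g$ in direction $\Delta$ by expanding the one-variable polynomial $t \mapsto g(\U + t\Delta)$. Introduce the shorthand $\phi := \M - \M^\star$, $B := \U\Delta^\top + \Delta\U^\top$, and $C := \Delta\Delta^\top$, so that $(\U + t\Delta)(\U + t\Delta)^\top - \M^\star = \phi + tB + t^2 C$. Matching coefficients of $t$ and $t^2$ in the expansion $g(\U + t\Delta) = \tfrac{1}{2}(\phi + tB + t^2 C):\H:(\phi + tB + t^2 C)$, and using that $\H$ is symmetric as a bilinear form on matrices, yields $\la \grad g(\U), \Delta\ra = \phi:\H:B$ and $\Delta : \hess g(\U) : \Delta = 2\phi:\H:C + B:\H:B$.

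The second ingredient is the algebraic identity $B = \phi + C$, and this is the only point where Definition~\ref{def:delta} actually matters. Setting $\W := \U^\star\mR$, the rotational invariance $\mR\mR^\top = \I$ forces $\W\W^\top = \M^\star$ and $\Delta = \U - \W$. Expanding both $\phi + C = (\U\U^\top - \W\W^\top) + (\U-\W)(\U-\W)^\top$ and $B = \U\Delta^\top + \Delta\U^\top$ gives $2\U\U^\top - \U\W^\top - \W\U^\top$ in both cases. Substituting $B = \phi + C$ into the Hessian expression produces $\Delta : \hess g(\U) : \Delta = 2\phi:\H:C + (\phi+C):\H:(\phi+C) = \phi:\H:\phi + 4\phi:\H:C + C:\H:C$, while $4\la \grad g(\U), \Delta\ra = 4\phi:\H:\phi + 4\phi:\H:C$. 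Subtracting $3\phi:\H:\phi$ from the latter and adding $C:\H:C$ reproduces the Hessian expression, closing the identity.

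There is no real obstacle here beyond choosing the right substitution. The key observation is that the alignment by $\mR$ in Definition~\ref{def:delta} is precisely what makes $\W\W^\top = \M^\star$, which is in turn what makes the identity $B = \phi + C$ hold. Without this alignment, cross terms involving $\U^\top\W - \W^\top\U$ would survive and prevent a clean formula depending only on the quadratic form $\H$, the residual $\phi$, and the error outer product $C$; after this identity is in hand, everything else is just a Taylor expansion and a one-line algebraic rearrangement.
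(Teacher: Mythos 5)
Your proof is correct and follows essentially the same route as the paper: split $f = g + Q$, compute $\nabla g$ and $\nabla^2 g$ (you via the Taylor expansion of $t\mapsto g(\U+t\Delta)$, the paper by direct differentiation — these are the same computation), invoke the identity $\M - \M^\star + \Delta\Delta^\top = \U\Delta^\top + \Delta\U^\top$, and rearrange. One small remark: the closing paragraph slightly overclaims the role of Definition~\ref{def:delta} — the lemma uses only that $\mR$ is orthogonal, so that $\W\W^\top = \M^\star$; the specific minimizing choice of $\mR$ is not needed for Lemma~\ref{lem:main} (it is used later in Lemma~\ref{lem:bound} to get the symmetric-PSD property of $\U^\top\U^\star\mR$), and in particular no antisymmetric cross terms of the form $\U^\top\W - \W^\top\U$ would ever appear in $B$ versus $\phi+C$ — a non-orthogonal $\mR$ simply leaves a residual $\M^\star - \W\W^\top$.
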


To see why this lemma is useful, let us look at the simplest case where $Q(\U) = 0$ and $\mathcal{H}$ is identity. In this case, if gradient is zero, by Eq. \eqref{eq:main} 
\begin{align*}
&\Delta : \hess f(\U) :\Delta= \|\Delta\Delta\|_F^2 - 3\|\M - \M^\star\|_F^2
\end{align*}

By Lemma~\ref{lem:bound} this is no more than $-\|\M-\M^\star\|_F^2$. 
Therefore, all stationary point with $\M \ne \M^*$ must be saddle points, and we immediately conclude all local minimum satisfies $\U\U\trans = \M^\star$!

\paragraph{Interaction with Regularizer} For problems such as matrix completion, the Hessian $\H$ does not preserve the norm for all low rank matrices. In these cases we need to use additional regularizer. In particular, conceptually we need the following steps:

\begin{enumerate}
\item Show that the regularizer $Q$ ensures for any $\U$ such that $\nabla f(\U) = 0$, $\U \in \mathcal{B}$ for some set $\mathcal{B}$.
\item Show that whenever $\U\in \mathcal{B}$, the Hessian operator $\H$ behaves similarly as identity: for some $c>0$ we have: 
$
\Delta\Delta\trans:\H:\Delta\Delta\trans - 3(\M - \M^\star):\H:(\M - \M^\star)
< -c\|\Delta\|_F^2.
$
\item Show that the regularizer does not contribute a large positive term to $\Delta : \hess f(\U) :\Delta$. This means we show an upperbound for $
4\la \grad f(\U), \Delta\ra + [\Delta : \hess Q(\U) : \Delta - 4 \la \grad Q(\U), \Delta \ra]. 
$
\end{enumerate}

Interestingly, these steps are not just useful for handling regularizers. Any deviation to the original model (such as noise, or if the optimal matrix is not exactly low rank) can be viewed as an additional ``regularizer'' function $Q(\U)$ and argued in the same framework. See e.g. Section~\ref{sec:noise}.

\subsection{Matrix Sensing}

Matrix sensing is the ideal setting for this framework. For symmetric matrix sensing, the objective function is
\begin{equation}\label{eq:sensing-symmetric}
\min_{\U\in\R^{d\times r}} \frac{1}{2m}\sum_{i=1}^m (\la \A_i, \U\U^\top\ra - b_i)^2.
\end{equation}
Recall that matrices $\{\A_i:i=1,2,...,m\}$ are known sensing matrices, and $b_i = \la \A_i,\M^\star \ra$ is the result of $i$-th observation. The intended solution is the unknown low rank matrix $\M^\star = \U^\star(\U^\star)^\top$. For any low rank matrix $\M$, the Hessian operator satisfies
$$
\M:\H:\M = \sum_{i=1}^m \la \A_i, \M\ra^2.
$$
Therefore if the sensing matrices satisfy the RIP property (Definition~\ref{def:rip}), the Hessian operator is close to identity for all low rank matrices! In the symmetric case there is no regularizer, so the landscape for symmetric matrix sensing follows immediately from our main Lemma~\ref{lem:main}.

\begin{theorem}\label{thm:sensing-symmetric-main}
%If the sensing matrices satisfy $(4r,\delta)$-RIP property for $\delta < $. All local minima of Equation \eqref{eq:sensing-symmetric} satisfy $\U\U^\top = \M^\star$.
When measurement $\{\A_i\}$ satisfies $(2r, \frac{1}{10})$-RIP, for matrix sensing objective \eqref{eq:sensing-symmetric} we have 1) all local minima $\U$ satisfy $\U\U\trans = \M^\star$; 2) the function is $(\epsilon, \Omega(\sigstarr), O(\frac{\epsilon}{\sigstarr}))$-strict saddle.
\end{theorem}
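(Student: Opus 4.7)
The plan is to apply the Main Lemma~\ref{lem:main} directly, using $(2r, \tfrac{1}{10})$-RIP to control the two $\H$-quadratic forms that appear, and then use Lemma~\ref{lem:bound} to convert everything back to $\|\Delta\|_F^2$. Since the symmetric matrix sensing objective \eqref{eq:sensing-symmetric} has no regularizer, Lemma~\ref{lem:main} simplifies to
\begin{equation*}
\Delta : \hess f(\U) : \Delta \;=\; \Delta\Delta^\top : \H : \Delta\Delta^\top \;-\; 3(\M - \M^\star) : \H : (\M - \M^\star) \;+\; 4\la \grad f(\U), \Delta\ra,
\end{equation*}
where $\M = \U\U^\top$ and $\Delta = \U - \U^\star\mR$ as in Definition~\ref{def:delta}.

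The first step is to observe that $\Delta\Delta^\top$ has rank at most $r$ and $\M - \M^\star$ has rank at most $2r$, so the $(2r,\tfrac{1}{10})$-RIP property gives $\Delta\Delta^\top : \H : \Delta\Delta^\top \le \tfrac{11}{10}\|\Delta\Delta^\top\|_F^2$ and $(\M - \M^\star) : \H : (\M - \M^\star) \ge \tfrac{9}{10}\|\M - \M^\star\|_F^2$. Combined with the bound $\|\Delta\Delta^\top\|_F^2 \le 2\|\M - \M^\star\|_F^2$ from Lemma~\ref{lem:bound}, the first two terms together contribute at most $(\tfrac{22}{10} - \tfrac{27}{10})\|\M-\M^\star\|_F^2 = -\tfrac{1}{2}\|\M-\M^\star\|_F^2$. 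Applying the second part of Lemma~\ref{lem:bound}, this is further bounded by $-c_0\,\sigstarr\|\Delta\|_F^2$ for some absolute constant $c_0 > 0$. Therefore
\begin{equation*}
\Delta : \hess f(\U) : \Delta \;\le\; -c_0\,\sigstarr\|\Delta\|_F^2 \;+\; 4\la \grad f(\U), \Delta\ra.
\end{equation*}

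The first conclusion (no spurious local minima) now follows immediately: at any local minimum we have $\grad f(\U) = 0$ and $\hess f(\U) \succeq 0$, so the left-hand side is nonnegative while the right-hand side is $-c_0\sigstarr\|\Delta\|_F^2 \le 0$, forcing $\Delta = 0$, hence $\M = \M^\star$.

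For the strict-saddle conclusion, the plan is to contrapose: assume $\|\grad f(\U)\| < \epsilon$ and $\lambda_{\min}(\hess f(\U)) > -\tfrac{c_0}{2}\sigstarr$; we want to conclude $\|\Delta\|_F = O(\epsilon/\sigstarr)$. Plugging $\Delta$ into the Hessian quadratic form and using Cauchy--Schwarz on the gradient term yields
\begin{equation*}
-\tfrac{c_0}{2}\sigstarr\|\Delta\|_F^2 \;\le\; \Delta : \hess f(\U) : \Delta \;\le\; -c_0\sigstarr\|\Delta\|_F^2 + 4\epsilon\|\Delta\|_F,
\end{equation*}
which rearranges to $\tfrac{c_0}{2}\sigstarr\|\Delta\|_F \le 4\epsilon$, i.e.\ $\|\Delta\|_F \le O(\epsilon/\sigstarr)$, giving the desired closeness to $\cXstar$. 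The main obstacle is essentially the bookkeeping in the first step: one must verify that the RIP rank budget ($2r$) is respected by both $\Delta\Delta^\top$ and $\M-\M^\star$ and that the constant $\tfrac{1}{10}$ in the RIP hypothesis is tight enough to make the $\tfrac{22}{10} - \tfrac{27}{10}$ cancellation strictly negative; everything else is mechanical once Lemma~\ref{lem:main} and Lemma~\ref{lem:bound} are in hand.
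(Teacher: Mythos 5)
Your proposal is correct and follows the paper's proof essentially step for step: both apply Lemma~\ref{lem:main} with no regularizer, use $(2r,\tfrac{1}{10})$-RIP to bound $\Delta\Delta^\top:\H:\Delta\Delta^\top$ from above and $(\M-\M^\star):\H:(\M-\M^\star)$ from below, combine with $\|\Delta\Delta^\top\|_F^2 \le 2\|\M-\M^\star\|_F^2$ to arrive at $-\tfrac12\|\M-\M^\star\|_F^2$, convert to $-\Omega(\sigstarr)\|\Delta\|_F^2$ via the second part of Lemma~\ref{lem:bound}, and Cauchy--Schwarz the gradient term. The only cosmetic difference is that you derive the local-minima claim directly from the Hessian inequality rather than, as the paper does, taking $\epsilon=0$ in the strict-saddle bound; these are equivalent.
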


\begin{proof}
For point $\U$ with small gradient satisfying $\fnorm{\grad f(\U)} \le \epsilon$, by $(2r, \delta_{2r})$-RIP property:
\begin{align*}
\Delta : \hess f(\U) :\Delta
=&  \Delta\Delta\trans:\H:\Delta\Delta\trans - 3(\M - \M^\star):\H:(\M - \M^\star) +4\la \grad f(\U), \Delta\ra \\
\le & (1+\delta_{2r})\fnorm{\Delta\Delta\trans}^2 - 3(1-\delta_{2r})\fnorm{\M-\M^\star}^2
+ 4\epsilon\fnorm{\Delta}\\
\le &-(1-5\delta_{2r})\fnorm{\M-\M^\star}^2 + 4\epsilon\fnorm{\Delta} 
\\
\le &-0.4\sigstarr\fnorm{\Delta}^2+ 4\epsilon\fnorm{\Delta}
\end{align*}
The second last inequality is due to Lemma \ref{lem:bound} that $\fnorm{\Delta\Delta\trans}^2 \le 2 \fnorm{\M-\M^\star}^2$, and last inequality is due to $\delta_{2r} = \frac{1}{10}$ and second part of Lemma \ref{lem:bound}. This means if $\U$ is not close to $\U^\star$, that is, if $\fnorm{\Delta} \ge \frac{20\epsilon}{\sigstarr}$,
we have $\Delta : \hess f(\U) :\Delta \le -0.2\sigstarr \fnorm{\Delta}^2$. This proves 
$(\epsilon, 0.2 \sigstarr, \frac{20\epsilon}{\sigstarr})$-strict saddle property. Take $\epsilon =0$, we know all stationary points with $\fnorm{\Delta} \neq 0$ are saddle points. This means all local minima are global minima (satisfying $\U\U\trans = \M^\star$), which finishes the proof.
\end{proof}

\subsection{Matrix Completion}

For matrix completion, we need to ensure the incoherence condition (Definition~\ref{def:incoherence}). In order to do that, we add a regularizer $Q(\U)$ that penalize the objective function when some row of $\U$ is too large. We choose the same regularizer as \cite{ge2016matrix}: $Q(\U) = \lambda \sum_{i=1}^d (\|\U_i\|-\alpha)_+^4$. The objective is then
\begin{equation}
\min_{\U\in \R^{d\times r}} \frac{1}{2p}\|\M^\star - \U\U^\top\|_\Omega^2 + Q(\U).\label{eq:completion-symmetric}
\end{equation}

 Using our framework, we first need to show that the regularizer ensures all rows of $\U$ are small (step 1). 

\begin{lemma}\label{lem:incoherence}
There exists an absolute constant $c$, when sample rate $p \ge \Omega(\frac{\mu r}{d} \log d)$, $\alpha^2 = \Theta(\frac{\mu r\sigstarl}{d})$ and $\lambda = \Theta(\frac{d}{\mu r \cn^\star})$, we have for any points $\U$ with $\fnorm{\grad f (\U)} \le \epsilon$ for polynomially small $\epsilon$, with probability at least $1-1/\poly(d)$:
\begin{align*}
\max_i\norm{\e_{i}\trans \U}^2 \le O\left(\frac{(\mu r)^{1.5} \cn^\star \sigstarl}{d}\right)
\end{align*}
\end{lemma}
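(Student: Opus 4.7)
The strategy is to apply the first-order condition in a direction that isolates the row of $\U$ with largest norm. Let $i^\star = \argmax_i \norm{\e_i\trans \U}$ and set $\u := \U\trans \e_{i^\star}$. Pairing the stationarity inequality $\fnorm{\grad f(\U)} \le \epsilon$ against the rank-one direction $\e_{i^\star}\u\trans$ yields
\begin{equation*}
\Bigl\la \grad\bigl(\tfrac{1}{2p}\norm{\U\U\trans - \M^\star}_\Omega^2\bigr),\, \e_{i^\star}\u\trans\Bigr\ra + 4\lambda(\norm{\u}-\alpha)_+^3 \norm{\u} \le \epsilon \norm{\u},
\end{equation*}
where the second summand is the exact contribution of $\grad Q(\U)$, computed from the explicit form of the regularizer. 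It is nonnegative and grows like $\lambda \norm{\u}^4$ once $\norm{\u}\ge 2\alpha$. The plan therefore reduces to upper bounding the first summand by a polynomial in $\norm{\u}$ of strictly lower order than $\norm{\u}^4$, then solving for $\norm{\u}$.

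The first summand expands (up to an absolute constant) to $\frac{1}{p}\sum_{j:(i^\star,j)\in\Omega}(\M - \M^\star)_{i^\star, j}\M_{i^\star, j}$ with $\M = \U\U\trans$, and its expectation over the random support $\Omega$ equals $\la (\M - \M^\star)_{i^\star,:}, \M_{i^\star,:}\ra$. I would bound the expectation by Cauchy--Schwarz, feeding in two ingredients: (i) the $\mu$-incoherence of $\M^\star$, giving $\norm{\M^\star_{i^\star,:}} \le \sqrt{\mu r/d}\,\sigstarl$; and (ii) the trivial bound $\norm{\M_{i^\star,:}} \le \norm{\u}\norm{\U}$. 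A standard Bernstein-type concentration inequality for the sampling operator $\proj_\Omega$, uniform over all relevant $(i^\star, \u)$, then transfers the expectation bound to the empirical sum with error of strictly lower order; the sample rate $p \ge \Omega(\mu r \log d/d)$ is exactly what is needed.

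To make (ii) useful I would first establish the a priori spectral bound $\norm{\U}^2 \lesssim \cn^\star \sigstarl$, proved by pairing the global stationarity condition with the direction $\U$ itself: combining $\la \grad Q(\U), \U\ra \ge 0$ with the same concentration forces $\fnorm{\U\U\trans}^2$ to be controlled in terms of $\fnorm{\M^\star}^2$. Substituting this back into the row-wise inequality and plugging in $\alpha^2 = \Theta(\mu r \sigstarl/d)$ and $\lambda = \Theta(d/(\mu r \cn^\star))$, the inequality becomes, for every $\norm{\u}\ge 2\alpha$,
\begin{equation*}
\lambda \norm{\u}^4 \lesssim \norm{\u}^2 \cn^\star\sigstarl + \norm{\u}\sqrt{\mu r \cn^\star/d}\,\sigstarl^{3/2} + \epsilon\norm{\u},
\end{equation*}
which rearranges to the advertised bound $\norm{\u}^2 = O((\mu r)^{1.5}\cn^\star \sigstarl/d)$. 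The remaining case $\norm{\u}\le 2\alpha$ is immediate from the choice of $\alpha$.

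The main obstacle is the uniform concentration step: because $\U$ depends on the randomness of $\Omega$ through the stationarity condition, a single pointwise Bernstein bound is not enough. The standard workaround is an $\epsilon$-net argument over the direction $\u$ in the unit ball of $\R^r$ (with covering size $e^{O(r\log d)}$) combined with a union bound over the $d$ choices of $i^\star$, applying Bernstein's inequality to the bounded-variance scalar sum $\frac{1}{p}\sum_j g(\M_{i^\star,j}, \M^\star_{i^\star,j})$ for each fixed pair $(i^\star, \u)$. A secondary subtlety is the origin of the extra $\sqrt{\mu r}$ factor in the final bound beyond the natural incoherence scale $\mu r \sigstarl/d$: it traces back to substituting the loose a priori estimate $\norm{\U}^2 \le \cn^\star \sigstarl$ for the tighter but unavailable $\sigstarl$ when bounding $\norm{\M_{i^\star,:}}$.
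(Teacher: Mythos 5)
Your skeleton --- pair the first-order condition against the rank-one direction $\e_{i^\star}\e_{i^\star}\trans\U$ with $i^\star$ the heaviest row, extract a regularizer contribution of order $\lambda\norm{\u}^4$ once $\norm{\u}\ge 2\alpha$, and solve for $\norm{\u}$ --- is exactly the paper's. But the treatment of the data-fit term diverges, and the paper's treatment is precisely what sidesteps the two obstacles you flag at the end.

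The paper writes the data-fit contribution as
\begin{equation*}
\frac{1}{p}\la\e_{i^\star}\trans(\M-\M^\star)_\Omega,\,\e_{i^\star}\trans\M\ra \;=\; \frac{1}{p}\norm{\e_{i^\star}\trans\M_\Omega}^2 \;-\; \frac{1}{p}\la\e_{i^\star}\trans\M^\star_\Omega,\,\e_{i^\star}\trans\M_\Omega\ra ,
\end{equation*}
observes that the first piece is nonnegative (the projection onto $\Omega$ makes it a genuine squared norm) and simply discards it, then bounds the cross term by Cauchy--Schwarz in the $\Omega$-restricted inner product. The two factors are controlled by separate, almost elementary facts: (a) $\frac{1}{\sqrt{p}}\norm{\e_{i^\star}\trans\M^\star_\Omega}\le(1+o(1))\norm{\e_{i^\star}\trans\M^\star}\le O(\sqrt{\mu r/d}\,\sigstarl)$ --- this is concentration for a \emph{fixed} matrix $\M^\star$ (Lemma~\ref{lem:T_mc}), hence a single union bound over $d$ rows with no covering; and (b) the Chernoff row-count $|\Omega_{i^\star}|=O(pd)$ (Lemma~\ref{lem:row_mc}) gives the \emph{deterministic} estimate $\frac{1}{\sqrt{p}}\norm{\e_{i^\star}\trans\M_\Omega}\le O(\sqrt{d})\norm{\M}_\infty = O(\sqrt{d})\norm{\u}^2$, valid for every $\U$ with no spectral-norm input. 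Multiplying yields $\lambda\norm{\u}^4 \lesssim \sqrt{\mu r}\,\sigstarl\norm{\u}^2 + \epsilon\norm{\u}$, hence $\norm{\u}^2\lesssim\sqrt{\mu r}\,\sigstarl/\lambda=O((\mu r)^{1.5}\cn^\star\sigstarl/d)$.

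Your route has a genuine gap at the uniformity step, and it is not patched by the fix you suggest. Fixing $(i^\star,\u)$ does not determine the scalar sum $\frac{1}{p}\sum_{j\in\Omega_{i^\star}}(\M-\M^\star)_{i^\star,j}\M_{i^\star,j}$, because $\M_{i^\star,j}=\u\trans(\U\trans\e_j)$ involves every other row $\e_j\trans\U$ of $\U$; an $\epsilon$-net over the $r$-dimensional ball of directions $\u$ leaves a $dr$-dimensional family of possible summand vectors behind the scene, and Bernstein at sample rate $p\gtrsim\mu r\log d/d$ cannot survive a union bound of that size. The paper never concentrates this $\U$-dependent quantity at all --- (b) above is deterministic once the row counts are bounded, and that is the whole point of the decomposition. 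Secondarily, the a priori spectral bound $\norm{\U}^2\lesssim\cn^\star\sigstarl$ you want to feed into $\norm{\M_{i^\star,:}}\le\norm{\u}\norm{\U}$ is not proved in the paper, is not needed there, and is itself circular to establish by the method you sketch (you would need to control how far rows of $\U$ exceed $\alpha$, which is precisely this lemma). Finally, even granting both steps, your displayed inequality $\lambda\norm{\u}^4\lesssim\norm{\u}^2\cn^\star\sigstarl+\cdots$ yields $\norm{\u}^2\lesssim\mu r(\cn^\star)^2\sigstarl/d$ from the dominant term, which is a factor $\cn^\star/\sqrt{\mu r}$ off from the advertised $(\mu r)^{1.5}\cn^\star\sigstarl/d$, so the numerology does not actually close as stated.
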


This is a slightly stronger version of Lemma 4.7 in \cite{ge2016matrix}. Next we show under this regularizer, we can still select the direction $\Delta$, and the first part of Equation~\eqref{eq:main} is significantly negative when $\Delta$ is large (step 2):

\begin{lemma} \label{lem:step2_mc}
When sample rate $p \ge \Omega(\frac{\mu^3 r^4 (\cn^\star)^4 \log d}{d})$, by choosing $\alpha^2 = \Theta(\frac{\mu r\sigstarl}{d})$ and $\lambda = \Theta(\frac{d}{\mu r \cn^\star})$ with probability at least $1-1/\poly(d)$, for all $\U$ with $\fnorm{\grad f (\U)} \le \epsilon$ for polynomially small $\epsilon$ we have
\begin{equation*}
\Delta\Delta\trans:\H:\Delta\Delta\trans - 3(\M - \M^\star):\H:(\M - \M^\star) \le -0.3 \sigstarr \fnorm{\Delta}^2
\end{equation*}
% where $\M = \U\U\trans$ and $\Delta$ as in Definition \ref{def:delta}.
\end{lemma}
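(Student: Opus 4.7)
The plan is to reduce this matrix completion statement to the ideal identity-Hessian case handled for sensing, picking up an additive concentration error that the chosen sample rate is designed to absorb. First I would write the Hessian explicitly: for the completion objective, $\M:\H:\N = \frac{1}{p}\langle \M_\Omega, \N_\Omega\rangle$, so the left hand side equals $\frac{1}{p}\|\Delta\Delta^\top\|_\Omega^2 - \frac{3}{p}\|\M-\M^\star\|_\Omega^2$. If I could replace the $\Omega$-norms by Frobenius norms, then by Lemma~\ref{lem:bound} I would get $\|\Delta\Delta^\top\|_F^2 - 3\|\M-\M^\star\|_F^2 \le 2\|\M-\M^\star\|_F^2 - 3\|\M-\M^\star\|_F^2 \le -2(\sqrt{2}-1)\sigstarr\|\Delta\|_F^2 \approx -0.828\,\sigstarr\|\Delta\|_F^2$, i.e.\ almost a factor of three stronger than the required $-0.3\,\sigstarr\|\Delta\|_F^2$. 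So I have roughly a $0.5\,\sigstarr\|\Delta\|_F^2$ budget for concentration error.

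Next I would establish incoherence of all relevant matrices. Write $\U = \U^\star\mR + \Delta$ so that $\|\e_i^\top \Delta\| \le \|\e_i^\top \U\| + \|\e_i^\top \U^\star\|$; by Lemma~\ref{lem:incoherence} applied to $\U$ (which is valid since we assumed $\|\nabla f(\U)\|_F\le \epsilon$ polynomially small) together with the assumed $\mu$-incoherence of $\U^\star$, this gives $\max_i \|\e_i^\top \Delta\|^2 = O((\mu r)^{1.5}\cn^\star\sigstarl/d)$. Consequently both $\Delta\Delta^\top$ and $\M-\M^\star$ are matrices of rank at most $2r$ with $\ell_\infty$ mass controlled by these row norms.

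The core step is uniform concentration. I would invoke the standard matrix completion concentration result (in the spirit of \cite{candes2009exact}, \cite{keshavan2010matrix}, or the version already used in \cite{ge2016matrix}): for any symmetric matrix $\N$ of rank at most $2r$ with bounded row norms, with probability $1 - 1/\poly(d)$,
\begin{align*}
\left|\frac{1}{p}\|\N\|_\Omega^2 - \|\N\|_F^2\right| \le \delta \|\N\|_F^2 + (\text{lower-order}),
\end{align*}
uniformly in $\N$, whenever $p = \tlOmega(\mu_{\N} r/d)$ with $\mu_{\N}$ the incoherence parameter of $\N$. Applying this to $\N = \Delta\Delta^\top$ and $\N = \M-\M^\star$, and plugging in the row-norm bounds above, the required sample rate $p \ge \Omega(\mu^3 r^4 (\cn^\star)^4 \log d / d)$ is exactly what one needs to drive $\delta$ below a sufficiently small constant (say $\delta \le 1/40$).

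Finally I would combine: the concentration error contributes at most $\delta\|\Delta\Delta^\top\|_F^2 + 3\delta\|\M-\M^\star\|_F^2 \le (2\delta + 3\delta)\|\M-\M^\star\|_F^2$ by Lemma~\ref{lem:bound}, so the net bound is at most
\begin{align*}
-(1-5\delta)\|\M-\M^\star\|_F^2 \le -2(\sqrt{2}-1)(1-5\delta)\sigstarr\|\Delta\|_F^2 \le -0.3\,\sigstarr\|\Delta\|_F^2
\end{align*}
for $\delta$ small enough. The main obstacle is the \emph{uniform} control in the concentration step, since the statement must hold simultaneously for every $\U$ with small gradient, not a fixed one; the resolution is a standard $\epsilon$-net argument on the (finite-dimensional) set of rank-$2r$ matrices with bounded row norms, whose covering number is absorbed by a $\polylog(d)$ factor, which is why the sample-rate exponents on $r$ and $\cn^\star$ in the hypothesis are somewhat loose but harmless.
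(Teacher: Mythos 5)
Your overall reduction plan is sound, and the budget computation (ideal case gives roughly $-0.83\,\sigstarr\fnorm{\Delta}^2$, leaving room for concentration error) is correct. The gap is in the concentration step: you invoke a \emph{uniform multiplicative} bound of the form $\bigl|\tfrac{1}{p}\norm{\N}^2_\Omega - \fnorm{\N}^2\bigr| \le \delta \fnorm{\N}^2 + (\text{lower order})$ for all rank-$2r$ matrices $\N$ with bounded row norms, and this statement is false. The bound that actually holds (Lemma D.4, from \citet{ge2016matrix}) is \emph{additive}: the error is $O\bigl(\tfrac{dr\log d}{p}\norm{\N}_\infty^2 + \sqrt{\tfrac{dr\log d}{p}}\fnorm{\N}\norm{\N}_\infty\bigr)$, and whether this is ``lower order'' than $\delta\fnorm{\N}^2$ depends on the ratio $\fnorm{\N}/\norm{\N}_\infty$, which is \emph{not} controlled in your setting. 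Concretely, the row-norm bound from Lemma~\ref{lem:incoherence} gives an absolute ceiling on $\norm{\e_i^\top\Delta}$, but it does not prevent $\fnorm{\Delta}$ from being much smaller than that ceiling: a $\Delta$ with mass concentrated in a handful of rows at the ceiling and all other rows zero has tiny $\fnorm{\Delta}$ but worst-case $\norm{\Delta\Delta^\top}_\infty$, so the additive error for $\N = \Delta\Delta^\top$ swamps $\fnorm{\Delta\Delta^\top}^2$. (In the extreme, with constant $p<1$ there is positive probability that $\Omega$ misses a small nonzero support entirely, so $\norm{\N}_\Omega = 0$; no amount of sample rate yields uniform multiplicative concentration over \emph{all} such $\N$.) This is precisely the regime where $\Delta$ is small but not zero, and it is not a corner case you can push under an $\epsilon$-net ``lower order'' rug.

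The paper handles this by splitting into two cases on $\fnorm{\Delta}^2$ versus $\sigstarr/4$. In the large-$\Delta$ case, your argument works essentially as written: the additive error terms are bounded by fractions of $(\sigstarr)^2$ and $\sigstarr\fnorm{\Delta}^2$, and since $\fnorm{\Delta}^2 = \Omega(\sigstarr)$ both are $O(\sigstarr\fnorm{\Delta}^2)$ with a small constant. In the small-$\Delta$ case the paper never compares $\tfrac{1}{p}\norm{\Delta\Delta^\top}_\Omega^2$ to $\fnorm{\Delta\Delta^\top}^2$ at all. Instead it uses the tangent-space decomposition $\M-\M^\star = \U^\star\mR\Delta^\top + \Delta\mR^\top\U^{\star\top} + \Delta\Delta^\top$ to rewrite the whole quantity as $-12\bigl(\U^\star\Delta^\top:\H:\Delta\Delta^\top + \U^\star\Delta^\top:\H:\U^\star\Delta^\top\bigr)$, then applies the tangent-space concentration (Lemma~\ref{lem:T_mc}, which \emph{is} multiplicative because $\U^\star$ is $\mu$-incoherent) to the $\U^\star\Delta^\top$ terms, and a direct upper bound on $\tfrac{1}{p}\norm{\Delta\Delta^\top}_\Omega^2$ (Lemma~\ref{lem:Delta_mc}) to show that term is small in absolute magnitude relative to $\sigstarr\fnorm{\Delta}^2$ when $\fnorm{\Delta}^2 \le \sigstarr/4$. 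So the two-case structure is not just presentation --- it reflects a genuine change in which concentration inequality applies, and your proposal is missing the small-$\Delta$ argument.
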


This lemma follows from several standard concentration inequalities, and is made possible because of the incoherence bound we proved in the previous lemma. 

Finally we show the additional regularizer related term in Equation \eqref{eq:main} is bounded (step 3).

\begin{lemma}\label{lem:reg_mc}
By choosing $\alpha^2 = \Theta(\frac{\mu r\sigstarl}{d}) $ and $\lambda \alpha^2 \le  O(\sigstarr)$, we have:
\begin{equation*}
\frac{1}{4}[\Delta : \hess Q(\U) : \Delta - 4 \la \grad Q(\U), \Delta \ra] \le 0.1\sigstarr \fnorm{\Delta}^2 
\end{equation*}
\end{lemma}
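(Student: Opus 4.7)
The plan is to bound the regularizer's contribution row-by-row, exploiting that the $(\cdot-\alpha)_+^4$ barrier is tuned so $\|\e_i\trans \U^\star \mR\| \le \alpha$ for every $i$. Write $Q(\U) = \lambda \sum_i g_i(\U)$ with $g_i(\U) = (\|\u_i\| - \alpha)_+^4$ where $\u_i := \e_i\trans \U$; set $\u^\star_i := \e_i\trans (\U^\star \mR)$, $\delta_i := \e_i\trans \Delta$, $t_i := \|\u_i\|$, and $s_i := (t_i - \alpha)_+$. Since each $g_i$ depends on $\U$ only through $\u_i$, a direct chain-rule computation (noting that all derivatives of $(\cdot - \alpha)_+^4$ up to third order vanish at $\alpha$, so the formulas extend across the threshold) yields
$$E_i := \tfrac{1}{4}\big[\Delta : \hess g_i(\U) : \Delta - 4 \la \grad g_i(\U), \Delta\ra\big] = 3 s_i^2 A_i^2 + \tfrac{s_i^3}{t_i} B_i^2 - 4 s_i^3 A_i,$$
with $A_i := \la \u_i, \delta_i\ra/t_i$, $B_i^2 := \|\delta_i\|^2 - A_i^2$, and $E_i = 0$ when $t_i \le \alpha$. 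Because $\lambda \alpha^2 \le O(\sigstarr)$, it will suffice to prove $E_i \le O(\alpha^2 \|\delta_i\|^2)$ for each $i$ and then sum.

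The key structural inputs come from incoherence: the choice $\alpha^2 = \Theta(\mu r \sigstarl/d)$ guarantees $\|\u^\star_i\| \le \alpha$. Triangle inequality then gives the sandwich $s_i \le \|\delta_i\| \le s_i + 2\alpha$, while the identity $\la \u_i, \delta_i\ra = t_i^2 - \la \u_i, \u^\star_i\ra$ combined with Cauchy--Schwarz forces $A_i \ge t_i - \alpha = s_i$. Using $s_i/t_i \le 1$ and $B_i^2 = \|\delta_i\|^2 - A_i^2$, collect terms to obtain the upper bound $E_i \le s_i^2(2 A_i^2 - 4 s_i A_i + \|\delta_i\|^2)$. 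The bracket is a convex parabola in $A_i$ that is increasing on $A_i \ge s_i$, so its maximum on the feasible interval $A_i \in [s_i, \|\delta_i\|]$ is attained at $A_i = \|\delta_i\|$:
$$E_i \le s_i^2 \|\delta_i\| (3\|\delta_i\| - 4 s_i).$$

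Plugging $\|\delta_i\| \le s_i + 2\alpha$ into the bracket gives $3\|\delta_i\| - 4 s_i \le 6\alpha - s_i$. If $s_i \ge 6\alpha$, this is nonpositive and $E_i \le 0$; otherwise $s_i < 6\alpha$, $\|\delta_i\| < 8\alpha$, and $E_i \le 48 \alpha^2 s_i^2 \le 48 \alpha^2 \|\delta_i\|^2$. Summing over $i$ and multiplying by $\lambda$, $\lambda \sum_i E_i \le 48\lambda\alpha^2 \|\Delta\|_F^2 \le 0.1 \sigstarr \|\Delta\|_F^2$ once the implicit constant in $\lambda\alpha^2 \le O(\sigstarr)$ is taken small enough to absorb the 48.

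The main obstacle is pinpointing where the cancellation comes from. A naive bound of the form $E_i = O(s_i^2 \|\delta_i\|^2)$ is too weak, because the row norm $s_i$ is not globally bounded by $\alpha$ (only by $\|\delta_i\|$). The useful gain appears only after using $A_i \ge s_i$ to push the worst case onto $A_i = \|\delta_i\|$, which brings the crucial factor $3\|\delta_i\| - 4 s_i$ into play; combined with the triangle bound $\|\delta_i\| \le s_i + 2\alpha$, the potentially dangerous regime $s_i \gg \alpha$ becomes an outright sign flip, leaving only the harmless regime $s_i \le 6\alpha$ to control.
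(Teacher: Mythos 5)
Your proof is correct, and it is a cleaner algebraic execution of the same strategy as the paper's. Both proofs go row by row, both rely on the same structural inputs — that incoherence of $\M^\star$ gives $\|\e_i\trans\U^\star\mR\|\le\alpha$, and that this forces $\e_i\trans\U\Delta\trans\e_i$ to nearly equal $\|\e_i\trans\U\|^2$ once $\|\e_i\trans\U\|$ exceeds $\alpha$ — and both split into a regime where the contribution is nonpositive (row norm large) versus a regime where it is $O(\alpha^2\|\e_i\trans\Delta\|^2)$ (row norm within a constant factor of $\alpha$). Where the paper groups the three Hessian/gradient sums as $A_1+0.1A_3$ and $A_2+0.9A_3$ and bounds each by inspection, you instead reparametrize each row's contribution as $E_i = 3s_i^2A_i^2 + (s_i^3/t_i)B_i^2 - 4s_i^3A_i$, drop the $s_i/t_i\le 1$ factor, and observe that the resulting bracket $2A_i^2-4s_iA_i+\|\delta_i\|^2$ is a convex parabola with vertex at $A_i=s_i$, hence maximized over the feasible interval $[s_i,\|\delta_i\|]$ at the right endpoint. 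This extremal argument yields the compact per-row bound $E_i\le s_i^2\|\delta_i\|(3\|\delta_i\|-4s_i)$ and makes the sign flip at $s_i\ge 6\alpha$ transparent, whereas the paper handles the same tension by manually tuning the $0.1$/$0.9$ split and a separate $9\alpha$ threshold. The constant $48$ you obtain is also cleaner than the paper's $3\cdot 10^4$; both are absorbed by the $\lambda\alpha^2\le O(\sigstarr)$ hypothesis.

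One small point worth making explicit if you write this up: in the case $t_i\le\alpha$ (so $s_i=0$) the formula $E_i=0$ requires that the map $x\mapsto(x-\alpha)_+^4$ has well-defined first and second derivatives across $\alpha$, which you correctly note follows from the first three derivatives vanishing there; and when $t_i=0$ the quotient $s_i/t_i$ never actually appears because $s_i=0$ kills the term.
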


Combining these three lemmas, it is easy to see
\begin{theorem}\label{thm:main_mc}
When sample rate $p \ge \Omega(\frac{\mu^3 r^4 (\cn^\star)^4 \log d}{d})$, by choosing $\alpha^2 = \Theta(\frac{\mu r\sigstarl}{d})$ and $\lambda = \Theta(\frac{d}{\mu r \cn^\star})$.
Then with probability at least $1-1/\poly(d)$, for matrix completion objective \eqref{eq:completion-symmetric} we have
1) all local minima satisfy $\U\U^\top = \M^\star$ 2) the function is $(\epsilon, \Omega(\sigstarr),O( \frac{\epsilon}{\sigstarr}))$-strict saddle
for polynomially small $\epsilon$. 
% $\epsilon^2 \le \frac{(\mu r)^{2.5} \cn^\star \sigstarl{}^3}{d}$.
\end{theorem}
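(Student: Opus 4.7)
The plan is a direct application of the three-step framework from Section~\ref{sec:symmetric}, chaining Lemmas~\ref{lem:incoherence}, \ref{lem:step2_mc}, and \ref{lem:reg_mc} through the Main Lemma~\ref{lem:main}. I focus on an arbitrary point $\U$ satisfying $\fnorm{\grad f(\U)} \le \epsilon$ for polynomially small $\epsilon$, construct the candidate negative-curvature direction $\Delta$ from Definition~\ref{def:delta}, and show that $\Delta : \hess f(\U) : \Delta$ is strictly negative whenever $\U$ is not already close to the global optimum set.

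The first step is Lemma~\ref{lem:incoherence}: under the chosen parameters $\alpha^2 = \Theta(\mu r \sigstarl / d)$ and $\lambda = \Theta(d/(\mu r \cn^\star))$, the small-gradient assumption forces every row norm of $\U$ to satisfy $\|\e_i\trans \U\|^2 \le O((\mu r)^{1.5} \cn^\star \sigstarl / d)$. This places $\U$ inside the incoherent region on which the sampling operator $\H$ behaves almost like the identity on the relevant low-rank directions.

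Given this, I invoke Main Lemma~\ref{lem:main} to decompose
\begin{align*}
\Delta : \hess f(\U) : \Delta & = \bigl[\Delta\Delta\trans:\H:\Delta\Delta\trans - 3(\M - \M^\star):\H:(\M - \M^\star)\bigr] \\
& \quad + 4\la \grad f(\U), \Delta\ra + \bigl[\Delta : \hess Q(\U) : \Delta - 4 \la \grad Q(\U), \Delta \ra\bigr],
\end{align*}
and bound each bracketed term separately. Lemma~\ref{lem:step2_mc} controls the first by $-c_1 \sigstarr \fnorm{\Delta}^2$; Cauchy--Schwarz bounds the gradient term by $4\epsilon \fnorm{\Delta}$; Lemma~\ref{lem:reg_mc} controls the regularizer contribution by $c_2 \sigstarr \fnorm{\Delta}^2$. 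With the chosen constants, $c_1 > c_2$, so summing yields $\Delta : \hess f(\U) : \Delta \le -(c_1 - c_2) \sigstarr \fnorm{\Delta}^2 + 4 \epsilon \fnorm{\Delta}$.

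Both conclusions follow immediately from this inequality. Setting $\epsilon = 0$ shows every stationary point with $\fnorm{\Delta} > 0$ has strictly negative Hessian curvature along $\Delta$, so local minima must have $\fnorm{\Delta} = 0$, which by Lemma~\ref{lem:bound} forces $\U\U\trans = \M^\star$. For general small-gradient $\U$, either $\fnorm{\Delta} = O(\epsilon/\sigstarr)$ (and then $\U$ is $O(\epsilon/\sigstarr)$-close to $\cXstar$), or the bound implies $\lambda_{\min}(\hess f(\U)) \le -\Omega(\sigstarr)$, establishing the $(\epsilon, \Omega(\sigstarr), O(\epsilon/\sigstarr))$-strict saddle property. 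The main technical burden, deferred to the supporting lemmas, is the tension between the regularizer and the sampling Hessian: $\alpha$ and $\lambda$ must be large enough to enforce the incoherence required in Lemmas~\ref{lem:incoherence}--\ref{lem:step2_mc}, yet small enough that the $c_2$ produced by Lemma~\ref{lem:reg_mc} stays strictly below $c_1$. Once the three lemmas are in place, the assembly above is essentially mechanical.
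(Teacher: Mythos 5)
Your proposal is correct and follows essentially the same route as the paper's proof: invoke Lemma~\ref{lem:incoherence} to place $\U$ in the incoherent region, plug the direction $\Delta$ into Main Lemma~\ref{lem:main}, bound the Hessian term by Lemma~\ref{lem:step2_mc}, the gradient term by Cauchy--Schwarz, and the regularizer term by Lemma~\ref{lem:reg_mc}, then conclude from the combined inequality $\Delta:\hess f(\U):\Delta \le -\Omega(\sigstarr)\fnorm{\Delta}^2 + 4\epsilon\fnorm{\Delta}$. The only point worth being explicit about (and which the paper makes explicit) is the numerical slack: Lemma~\ref{lem:step2_mc} gives $-0.3\sigstarr\fnorm{\Delta}^2$ and Lemma~\ref{lem:reg_mc} gives $+0.1\sigstarr\fnorm{\Delta}^2$, so $c_1 - c_2 = 0.2$, yielding the concrete $(\epsilon, 0.1\sigstarr, 40\epsilon/\sigstarr)$-strict saddle constants.
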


Notice that our proof is different from \cite{ge2016matrix}, as we focus on the direction $\Delta$ for both first and second order conditions while they need to select different directions for the Hessian. The framework allowed us to get a simpler proof, generalize to asymmetric case and also improved the dependencies on rank.

\subsection{Robust PCA}

In the robust PCA problem, for any given matrix $\M$ the objective function try to find the optimal sparse perturbation $\S$. In the symmetric PSD case, recall we observe $\M_o = \M^\star+\S^\star$, we define the set of sparse matrices to be
\begin{align*}
\mathcal{S}_{\alpha} = \left\{ \S \in \R^{d\times d} ~|~ \S \text{~has at most $\alpha$-fraction non-zero entries each column/row, and~} \norm{\S}_\infty \le 2\frac{\mu r \sigstarl}{d}\right\}.
\end{align*}
Note the projection onto set $\mathcal{S}_{\alpha}$ be computed in polynomial time (using a max flow algorithm).

We assume $\S^\star\in \mathcal{S}_\alpha$, the objective can be written as
\begin{align}
\min_{\U} \quad & f(\U), \quad\quad\quad \text{where} \quad
f(\U): = \min_{\S\in\mathcal{S}_{\gamma\alpha}}\frac{1}{2}\|\U\U^\top + \S - \M_o\|_F^2.\label{eq:rpca-symmetric}
\end{align}
Here $\gamma$ is a slack parameter that we choose later. %, and  the set $\mathcal{S}_{\gamma\alpha}$ is a set of sparse matrices defined as

Note that now the objective function $f(\U)$ is not quadratic, so we cannot use the framework directly. However, if we fix $\S$, then $f_{\S}(\U) := \frac{1}{2}\|\U\U^\top + \S - \M_o\|_F^2$ is a quadratic function with Hessian equal to identity. We can still apply our framework to this function. In this case, since the Hessian is identity for all matrices, we can skip the first step. The problem becomes a matrix factorization problem:
\begin{equation}
\min_{\U\in \R^{d\times r}} \frac{1}{2}\|\A - \U\U^\top\|_F^2.
\end{equation}
The difference here is that the matrix $\A$ (which is $\M^\star+\S^\star -\S$) is not equal to $\M^\star$ and is in general not low rank. We can use the framework to analyze this problem (and treat the residue $\A-\M^\star$ as the ``regularizer'' $Q(\U)$).

\begin{lemma}\label{lem:mf_strictsaddle}
% Let matrix factorization objective to be ($\U \in \R^{d \times r}$:
% \begin{equation*}
% f(\U) = \fnorm{\U\U\trans - \A}^2
% \end{equation*}
% and $\lambda_r(\A)\ge 15\lambda_{r+1}(\A)$.
% then 1) all local minima satisfies $\U\U\trans = \proj_r(\A)$ (best rank-$r$ approximation), 
% 2) objective is $(\epsilon, 0.2 \sigstarr, \frac{20\epsilon}{\sigstarr})$-strict saddle.
Let $\A \in \R^{d\times d}$ be a symmetric PSD matrix, and matrix factorization objective to be:
\begin{equation*}
f(\U) = \fnorm{\U\U\trans - \A}^2
\end{equation*}
where $\sigma_r(\A)\ge 15\sigma_{r+1}(\A)$.
then 1) all local minima satisfies $\U\U\trans = \proj_r(\A)$ (best rank-$r$ approximation),
2) objective is $(\epsilon, \Omega(\sigstarr), O(\frac{\epsilon}{\sigstarr}))$-strict saddle.
\end{lemma}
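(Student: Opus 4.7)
Set $\M^\star := \proj_r(\A)$ and $\N := \A - \M^\star$. Since $\A$ is symmetric PSD, $\M^\star$ is PSD of rank at most $r$ and admits a factorization $\M^\star = \U^\star(\U^\star)\trans$; moreover $\M^\star$ and $\N$ have mutually orthogonal eigenspaces, which yields the crucial identity $\N\U^\star = 0$. The gap hypothesis $\sigma_r(\A)\ge 15\sigma_{r+1}(\A)$ gives $\norm{\N} = \sigma_{r+1}(\A)\le \sigstarr/15$, where $\sigstarr := \sigma_r(\A) = \sigma_r(\M^\star)$. Expand
\begin{equation*}
f(\U) = \fnorm{\U\U\trans - \M^\star}^2 - 2\la \U\U\trans - \M^\star, \N\ra + \fnorm{\N}^2,
\end{equation*}
which fits the template \eqref{eq:symmetricnonconvexobj} with Hessian operator $\H = 2\I$ acting on $\M = \U\U\trans$ and with ``regularizer'' $Q(\U) = -2\la \U\U\trans, \N\ra$ (the constant is irrelevant).

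Let $\Delta = \U - \U^\star\mR$ as in Definition~\ref{def:delta}, and suppose $\fnorm{\grad f(\U)} \le \epsilon$. Applying Lemma~\ref{lem:main} with $\H = 2\I$, the non-$Q$ portion of $\Delta : \hess f(\U):\Delta$ is $2\fnorm{\Delta\Delta\trans}^2 - 6\fnorm{\M-\M^\star}^2 + 4\la\grad f(\U),\Delta\ra$. Combining $\fnorm{\Delta\Delta\trans}^2\le 2\fnorm{\M-\M^\star}^2$ with $\sigstarr\fnorm{\Delta}^2 \le \frac{1}{2(\sqrt2-1)}\fnorm{\M-\M^\star}^2$ from Lemma~\ref{lem:bound} bounds this by $-4(\sqrt 2-1)\sigstarr\fnorm{\Delta}^2 + 4\epsilon\fnorm{\Delta}$. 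For the regularizer, a direct differentiation gives $\grad Q(\U) = -4\N\U$ and $\Delta:\hess Q(\U):\Delta = -4\la\Delta\Delta\trans,\N\ra$, so the bracketed term in Lemma~\ref{lem:main} equals $-4\la\Delta\Delta\trans,\N\ra + 16\la \N\U,\Delta\ra$.

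The critical step is controlling these two inner products. Using $\N\U^\star = 0$ we have $\N\U = \N(\U^\star\mR + \Delta) = \N\Delta$, hence $|\la \N\U,\Delta\ra| = |\la\N\Delta,\Delta\ra| \le \norm{\N}\fnorm{\Delta}^2$. Since $\Delta\Delta\trans$ is PSD of rank at most $r$, its nuclear norm equals $\tr(\Delta\Delta\trans) = \fnorm{\Delta}^2$, so $|\la\Delta\Delta\trans,\N\ra| \le \norm{\N}\fnorm{\Delta}^2$. Thus the regularizer contributes at most $20\norm{\N}\fnorm{\Delta}^2 \le \frac{4}{3}\sigstarr\fnorm{\Delta}^2$. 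Adding everything gives $\Delta:\hess f(\U):\Delta \le -c\sigstarr\fnorm{\Delta}^2 + 4\epsilon\fnorm{\Delta}$ for some absolute $c>0$ (numerically $c = 4(\sqrt 2-1) - \tfrac{4}{3} > 0$), which is exactly the $(\epsilon,\Omega(\sigstarr),O(\epsilon/\sigstarr))$-strict saddle conclusion. Taking $\epsilon = 0$ forces $\Delta = 0$ at every stationary point that is not a saddle, so every local minimum satisfies $\U\U\trans = \M^\star = \proj_r(\A)$.

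The main obstacle is squeezing the perturbation from $\N$ below the $4(\sqrt 2-1)\sigstarr$ margin produced by the ideal matrix-factorization part; this is precisely what motivates the assumption $\sigma_r(\A)\ge 15\sigma_{r+1}(\A)$. The identity $\N\U^\star = 0$ is indispensable here: without it, $\la\N\U,\Delta\ra$ would only scale as $\norm{\N}\fnorm{\U^\star}\fnorm{\Delta}$, which is linear in $\fnorm{\Delta}$ and could never be dominated by the quadratic negative term we need.
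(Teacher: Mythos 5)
Your proof is correct and follows essentially the same route as the paper: decompose $\A = \M^\star + \N$ with $\M^\star = \proj_r(\A)$, absorb $\N$ into the regularizer $Q$, invoke Lemma~\ref{lem:main}, and use the orthogonality $\N\U^\star = 0$ to reduce the regularizer contribution to a term of order $\norm{\N}\fnorm{\Delta}^2$ which the spectral-gap hypothesis makes small. The only cosmetic differences are the choice of normalization ($\H = 2\I$ vs.\ $\H = \I$ with a $\frac{1}{2}$ prefactor) and that you bound the two inner products separately (giving $20\norm{\N}\fnorm{\Delta}^2$) rather than first collapsing them to $12\la\U\U\trans,\N\ra$, but the slack is harmless.
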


% This serves as the second step in our framework. Again for step 3, we don't have a regularizer in Equation (\ref{eq:rpca-symmetric}), instead we choose an $\S$. 
To deal with the case $\S$ not fixed (but as minimizer of Eq.\eqref{eq:rpca-symmetric}), we let $\U^\dag (\U^\dag)^\top$ be the best rank $r$-approximation of $\M^\star+\S^\star -\S$. The next lemma shows when $\U$ is close to $\U^\dag$ up to some rotation, $\U$ will actually be already close to $\U^\star$ up to some rotation.

% \begin{lemma}\label{lem:mf_to_rpca}
% There is an absolute constant $c$, assume $\gamma > c$, and $\gamma\alpha\cdot\mu r\cdot (\cn^\star)^5 \le \frac{1}{c}$, 
% Let $\S = \argmin_{\S\in\mathcal{S}_{\gamma\alpha}}\frac{1}{2}\|\U\U^\top + \S - \M_o\|_F^2$ and $\U^\dag (\U^\dag)^\top$ be top $r$-SVD of $\M_o-\S$.
% as well as
% $\min_{\mR^\top \mR = \mR\mR^\top = \I}\fnorm{\U-\U^\dagger\mR} \le \epsilon$.
% % $\fnorm{\U - \U^\dagger} \le \epsilon$, 
% % Let $\Delta$ be the difference between $\U$ and $\U^\star$
% Let $\Delta$ be defined as in Definition \ref{def:delta}, then $\fnorm{\Delta} \le O(\epsilon\sqrt{\cn^\star})$ for polynomially small $\epsilon$.
% \end{lemma}
\begin{lemma}\label{lem:mf_to_rpca}
There is an absolute constant $c$, assume $\gamma > c$, and $\gamma\alpha\cdot\mu r\cdot (\cn^\star)^5 \le \frac{1}{c}$. Let $\U^\dag (\U^\dag)^\top$ be the best rank $r$-approximation of $\M^\star+\S^\star -\S$, where $\S$ is the minimizer as in Eq.\eqref{eq:rpca-symmetric}. Assume
$\min_{\mR^\top \mR = \mR\mR^\top = \I}\fnorm{\U-\U^\dagger\mR} \le \epsilon$.
Let $\Delta$ be defined as in Definition \ref{def:delta}, then $\fnorm{\Delta} \le O(\epsilon\sqrt{\cn^\star})$ for polynomially small $\epsilon$.
\end{lemma}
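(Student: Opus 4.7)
The plan is to bound $\fnorm{\U\U^\top - \M^\star}$ by $O(\epsilon\sqrt{\sigstarl})$ and then invoke Lemma~\ref{lem:bound}, whose conversion $\sigstarr\fnorm{\Delta}^2 \lesssim \fnorm{\M - \M^\star}^2$ accounts exactly for the $\sqrt{\cn^\star}$ factor in the conclusion. I would split
\[
\U\U^\top - \M^\star = \bigl(\U\U^\top - \U^\dag(\U^\dag)^\top\bigr) + \bigl(\U^\dag(\U^\dag)^\top - \M^\star\bigr).
\]
The first summand is controlled by $O(\epsilon\sqrt{\sigstarl})$ using the hypothesis $\fnorm{\U - \U^\dag\mR} \le \epsilon$, the identity $\A\A^\top - \B\B^\top = (\A-\B)\A^\top + \B(\A-\B)^\top$, and the spectral bounds $\|\U\|, \|\U^\dag\| = O(\sqrt{\sigstarl})$ (the latter because $\U^\dag(\U^\dag)^\top$ is the best rank-$r$ approximation of $\M^\star + (\S^\star - \S)$, whose top singular value is $\sigstarl + O(\|\S^\star - \S\|) = O(\sigstarl)$).

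For the second summand, since $\U^\dag(\U^\dag)^\top = \proj_r(\M^\star + (\S^\star - \S))$ while $\M^\star$ is itself rank-$r$, optimality of the truncated SVD gives $\fnorm{\U^\dag(\U^\dag)^\top - (\M^\star + \S^\star - \S)} \le \fnorm{\S - \S^\star}$, hence $\fnorm{\U^\dag(\U^\dag)^\top - \M^\star} \le 2\fnorm{\S - \S^\star}$. To bound $\fnorm{\S - \S^\star}$, I would exploit that $\S$ is the $\mathcal{S}_{\gamma\alpha}$-projection of $\S^\star - E$ (where $E := \U\U^\top - \M^\star$) and $\S^\star \in \mathcal{S}_{\gamma\alpha}$; projection optimality yields $\fnorm{\S - \S^\star}^2 \le 2\la \S^\star - \S, E\ra$, and since $\S^\star - \S$ is supported on a $2\gamma\alpha$-sparse set $\Omega$, Cauchy--Schwarz yields $\fnorm{\S - \S^\star} \le 2\fnorm{E_\Omega}$. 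So it suffices to show $\fnorm{E_\Omega} \le q\fnorm{E}$ for some $q<1/4$: combining then gives $\fnorm{E} \le O(\epsilon\sqrt{\sigstarl}) + 4q\fnorm{E}$, which rearranges to $\fnorm{E} = O(\epsilon\sqrt{\sigstarl})$, and Lemma~\ref{lem:bound} finishes.

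The main obstacle is establishing $\fnorm{E_\Omega} \le q\fnorm{E}$ with $q<1/4$, and this is where the hypothesis $\gamma\alpha\cdot\mu r\cdot(\cn^\star)^5 \le 1/c$ is fully used. My plan is first to prove that $\U^\dag$ (and hence $\U$, by $\epsilon$-closeness) is incoherent: the sparse-matrix spectral bound gives $\|\S^\star - \S\| \le O(\gamma\alpha\mu r\sigstarl) \ll \sigstarr$, so a Davis--Kahan-type $2,\infty$-norm perturbation argument applied to $\M^\star \mapsto \M^\star + (\S^\star - \S)$ transfers $\U^\star$'s incoherence to $\U^\dag$ (paying several factors of $\cn^\star$, which is precisely why the hypothesis carries $(\cn^\star)^5$). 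Writing $E = \Delta\U^\top + \U^\star\mR\Delta^\top$ and applying Cauchy--Schwarz entrywise with $\|\U_j\|, \|\U^\star_i\| \le O(\sqrt{\mu r\sigstarl/d})$ gives $|E_{ij}|^2 \le O(\mu r\sigstarl/d)(\|\Delta_i\|^2 + \|\Delta_j\|^2)$; summing over $(i,j)\in\Omega$ and using that $\Omega$ is $2\gamma\alpha d$-sparse per row/column yields $\fnorm{E_\Omega}^2 \le O(\gamma\alpha\mu r\sigstarl)\fnorm{\Delta}^2$, and combining with $\fnorm{\Delta}^2 \le O(\fnorm{E}^2/\sigstarr)$ from Lemma~\ref{lem:bound} delivers $\fnorm{E_\Omega} \le O(\sqrt{\gamma\alpha\mu r\cn^\star})\fnorm{E}$. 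The hypothesis then makes this coefficient strictly less than $1/4$, closing the loop.
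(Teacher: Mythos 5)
Your proof is correct and reaches the right conclusion; the overall bootstrap structure matches the paper's, but one key step is genuinely different and in fact simpler. The paper (via the asymmetric argument in Lemma~\ref{lem:mf_to_rpca_asym}) also sets up a self-referential chain: it combines the SVD-truncation bound $\fnorm{\M^\dag - \M^\star}\le 2\fnorm{\S-\S^\star}$ (Lemma~\ref{lem:claim1_rpca_asym}) with the $\epsilon$-closeness to $\U^\dag$, bounds the ``error on a sparse set'' via incoherence of $\U^\dag$ (Lemma~\ref{lem:incoherence_rpca_asym}) and an entrywise Cauchy--Schwarz plus row/column sparsity count (Lemma~\ref{lem:entries_bound_rpca_asym}), and closes the loop using Lemma~\ref{lem:bound}/\ref{lem:aux_deltalinear}. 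You do all of this too (your Davis--Kahan phrasing of the incoherence-transfer differs cosmetically from the paper's fixed-point argument but lands in the same place, and your stated $|E_{ij}|^2$ bound informally suppresses the extra $(\cn^\star)^4$ factor you acknowledge earlier). Where you diverge is in bounding $\fnorm{\S-\S^\star}$. The paper invokes Lemma~\ref{lem:Sopt_rpca_asym}, whose proof needs the exchange/max-flow argument of Lemma~\ref{lem:maxflow_asym} to handle entries of $\Omega^\star\setminus\Omega$, and pays for it with the residual term $\frac{8}{\gamma-1}\fnorm{E}^2$ --- precisely the reason the hypothesis insists $\gamma>c$. Your one-line replacement --- from $\fnorm{\S-\A}^2\le\fnorm{\S^\star-\A}^2$ (which uses only that $\S^\star\in\mathcal{S}_{\gamma\alpha}$ is feasible, not convexity of the set) derive $\fnorm{\S-\S^\star}^2\le 2\la\S^\star-\S,E\ra$, then Cauchy--Schwarz on the combined support --- yields the cleaner $\fnorm{\S-\S^\star}\le 2\fnorm{E_\Omega}$ with no residual $\fnorm{E}^2$ term. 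This bypasses Lemmas~\ref{lem:Sopt_rpca_asym} and~\ref{lem:maxflow_asym} entirely, and uses $\gamma$ only through the size of $\Omega$, i.e.\ only $\gamma\ge 1$ (so that $\S^\star$ is feasible) together with the smallness of $\gamma\alpha\mu r(\cn^\star)^5$. Your route is a real simplification of this step and appears to give a marginally stronger statement.
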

The proof of Lemma \ref{lem:mf_to_rpca} is inspired by \citet{yi2016fast} and uses the property of the optimally chosen sparse set $\S$.
Combining these two lemmas we get our main result:

\begin{theorem}\label{thm:rpca-symmetric-main}
There is an absolute constant $c$, if $\gamma > c$, and $\gamma\alpha\cdot\mu r\cdot (\cn^\star)^5 \le \frac{1}{c}$ holds, for objective function Eq.\eqref{eq:rpca-symmetric} we have 1) all local minima satisfies $\U\U\trans = \M^\star$; 2) objective function is 
$(\epsilon, \Omega(\sigstarr), O(\frac{\epsilon \sqrt{\cn^\star}}{\sigstarr}))$-pseudo strict saddle for polynomially small $\epsilon$.% \le 0.01\sqrt{\sigstarl}\sigstarr \min\{1, (\cn^\star)^2\sqrt{\frac{\mu r}{d}}\}$.
\end{theorem}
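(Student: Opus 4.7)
The plan is to lift Lemma~\ref{lem:mf_strictsaddle} (the strict-saddle property of pure matrix factorization) to the robust PCA objective~\eqref{eq:rpca-symmetric} by using the optimal sparse matrix $\S$ to build the upper-bounding function required by Definition~\ref{def:pseudo_strict_saddle}, and then translating closeness-in-factorization into closeness-to-$\U^\star$ via Lemma~\ref{lem:mf_to_rpca}. Concretely, for a candidate point $\U_0$, let $\S_0 \in \mathcal{S}_{\gamma\alpha}$ be the minimizer in~\eqref{eq:rpca-symmetric}, and define
\begin{equation*}
g_{\U_0}(\U) \;=\; \tfrac{1}{2}\|\U\U^\top + \S_0 - \M_o\|_F^2 \;=\; \tfrac{1}{2}\|\U\U^\top - \A_0\|_F^2, \qquad \A_0 := \M^\star + \S^\star - \S_0.
\end{equation*}
By construction $g_{\U_0}(\U) \ge f(\U)$ for all $\U$ (because $\S_0$ is feasible but generally suboptimal at other points), with equality at $\U_0$, which matches the two envelope conditions in Definition~\ref{def:pseudo_strict_saddle}. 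Moreover, $\grad g_{\U_0}(\U_0) = \grad f(\U_0)$ (the envelope theorem, trivial here since the inner problem is a projection).

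The next step is to check that $\A_0$ satisfies the spectral gap hypothesis of Lemma~\ref{lem:mf_strictsaddle}. Both $\S^\star$ and $\S_0$ live in $\mathcal{S}_{\gamma\alpha}$ and have entrywise magnitude at most $O(\mu r \sigstarl / d)$ with at most $\gamma\alpha$-fraction nonzeros per row and column, so $\|\S^\star - \S_0\| \le O(\gamma\alpha\,\mu r\,\sigstarl)$ by the standard ``sparse matrix $\ell_\infty$-to-spectral'' bound. Then Weyl's inequality gives $\sigma_r(\A_0) \ge \sigstarr - O(\gamma\alpha\mu r \sigstarl)$ and $\sigma_{r+1}(\A_0) \le O(\gamma\alpha\mu r\sigstarl)$; under the hypothesis $\gamma\alpha\mu r (\cn^\star)^5 \le 1/c$ with $c$ large enough, we get $\sigma_r(\A_0) \ge 15\sigma_{r+1}(\A_0)$. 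So Lemma~\ref{lem:mf_strictsaddle} applies to $g_{\U_0}$: its best rank-$r$ factorization $\U^\dagger (\U^\dagger)^\top = \proj_r(\A_0)$ exists, and for any $\U_0$ at least one of (i) $\|\grad g_{\U_0}(\U_0)\| \ge \epsilon$, (ii) $\lambda_{\min}(\hess g_{\U_0}(\U_0)) \le -\Omega(\sigma_r(\A_0))$, or (iii) $\min_{\mR}\fnorm{\U_0 - \U^\dagger \mR} \le O(\epsilon/\sigma_r(\A_0))$ holds.

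To match Definition~\ref{def:pseudo_strict_saddle} for $f$, in case (i) $\|\grad f(\U_0)\| \ge \epsilon$ directly; in case (ii) the existence of $g_{\U_0}$ with the negative-curvature property is exactly what the ``pseudo'' strict saddle demands; and in case (iii) we invoke Lemma~\ref{lem:mf_to_rpca} to convert closeness to the surrogate factor $\U^\dagger$ into closeness to the true factor $\U^\star$, paying a $\sqrt{\cn^\star}$ factor, which gives $\fnorm{\Delta} \le O(\epsilon\sqrt{\cn^\star}/\sigstarr)$ and hence the claimed radius. Since $\sigma_r(\A_0) = \Theta(\sigstarr)$ under our parameter choice, all three cases assemble into the $(\epsilon, \Omega(\sigstarr), O(\epsilon\sqrt{\cn^\star}/\sigstarr))$-pseudo strict saddle statement. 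Finally, sending $\epsilon \to 0$ forces any local minimum into case (iii) with $\fnorm{\Delta} = 0$, i.e.\ $\U\U^\top = \M^\star$.

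The main obstacle is the bookkeeping in case (iii), i.e.\ Lemma~\ref{lem:mf_to_rpca}: the surrogate $\U^\dagger$ depends on $\S_0$, which itself depends on $\U_0$, so one cannot naively identify $\U^\dagger$ with $\U^\star$. The argument must use the first-order optimality of $\S_0$ (it is the entrywise best thresholding of $\M_o - \U_0\U_0^\top$ on an optimally chosen sparse support) together with incoherence of $\M^\star$ to show that $\S_0$ is close to $\S^\star$ on the joint support, hence $\A_0$ is close to $\M^\star$ in spectral norm, hence $\U^\dagger$ is close to $\U^\star$ up to rotation; the $\sqrt{\cn^\star}$ factor comes from converting an operator-norm perturbation of $\A_0$ into a Frobenius-norm perturbation of its rank-$r$ factor using the Davis--Kahan/Wedin bound with the smallest-singular-value $\sigstarr$ in the denominator. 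Everything else in the proof is a direct composition of the two quoted lemmas with the envelope construction above.
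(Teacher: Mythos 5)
Your proof is correct and follows essentially the same route as the paper: both construct the surrogate $g_{\U_0}(\cdot)=\frac12\|\cdot\,-\A_0\|_F^2$ with $\A_0 = \M^\star+\S^\star-\S_0$ from the optimal sparse $\S_0$, verify the envelope conditions of Definition~\ref{def:pseudo_strict_saddle} (in the paper the surrogate is called $f_\U$), establish the spectral gap $\sigma_r(\A_0)\ge 15\sigma_{r+1}(\A_0)$ via Lemma~\ref{lem:spectral_infty_rpca_asym} and Weyl, invoke Lemma~\ref{lem:mf_strictsaddle}, and convert closeness-to-$\U^\dagger$ into closeness-to-$\U^\star$ via Lemma~\ref{lem:mf_to_rpca}. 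The only minor divergence is your heuristic description of where the $\sqrt{\cn^\star}$ factor comes from (Davis--Kahan/Wedin): the paper's Lemma~\ref{lem:mf_to_rpca_asym} instead obtains it through a Frobenius-norm bootstrap between $\fnorm{\N-\N^\star}$ and $\fnorm{\S_\U-\S^\star}$ using Lemma~\ref{lem:Sopt_rpca_asym} and the incoherence Lemma~\ref{lem:incoherence_rpca_asym}, with the $\sqrt{\cn^\star}$ arising as $\sqrt{\sigstarl}/\sqrt{\sigstarr}$ when passing from $\fnorm{\W-\W^\dagger}$ to $\fnorm{\N-\N^\dagger}$ and back to $\fnorm{\Delta}$; but since you invoke that lemma rather than re-prove it, this does not affect the correctness of the theorem's proof.
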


\section{Handling Asymmetric Matrices}
\label{sec:asymmetric}
In this section we show how to reduce problems on asymmetric matrices to problems on symmetric PSD matrices. 

Let $\M^\star = \U^\star \V^\star {}\trans$, and $\M = \U\V\trans$, and objective function:
\begin{align}
f(\U, \V) = 2(\M - \M^\star):\H_0:(\M - \M^\star) \nonumber + \frac{1}{2}\norm{\U\trans \U - \V\trans \V}_F^2 + Q_0(\U, \V) 
\end{align}
Note this is a scaled version of objectives introduced in Sec.\ref{sec:problems} (multiplied by $4$), and scaling will not change the property of local minima, global minima and saddle points.

We view the problem as if it is trying to find a $(d_1+d_2)\times r$ matrix, whose first $d_1$ rows are equal to $\U$, and last $d_2$ rows are equal to $\V$. 
\begin{definition}
\label{def:asymmetricquantities}
Suppose $\M^\star$ is the optimal solution, and its SVD is $\X^\star\D^\star \Y^\star{}\trans$. Let $\U^\star = \X^\star (\D^\star)^{\frac{1}{2}}$, $\V^\star = \Y^\star (\D^\star)^{\frac{1}{2}}$, $\M = \U\V^\top$ is the current point, we reduce the problem into a symmetric case using following notations.
\begin{equation}
\W = \begin{pmatrix}\U \\ \V\end{pmatrix}, 
\W^\star = \begin{pmatrix}\U^\star \\ \V^\star\end{pmatrix}, 
\N = \W\W\trans, 
\N^\star = \W^\star \W^\star {}\trans\label{eq:defwn}
\end{equation}
Further, $\Delta$ is defined to be the difference between $\W$ and $\W^\star$ up to rotation as in Definition~\ref{def:delta}.
\end{definition}
We will also transform the Hessian operators to operate on $(d_1+d_2)\times r$ matrices. In particular, define Hessian $\H_1, \G$ such that for all $\W$ we have:
\begin{align*}
 \N:\H_1:\N &= \M:\H_0:\M \\
 \N:\G:\N  &= \norm{\U\trans \U - \V\trans \V}_F^2
\end{align*}
Now, let $Q(\W) = Q(\U,\V)$, and we can rewrite the objective function $f(\W)$ as
\begin{equation}
\frac{1}{2}\left[(\N - \N^\star):4\H_1:(\N - \N^\star) + \N:\G:\N \right]+ Q(\W) \label{eq:a}
\end{equation}

We know $\H_0$ perserves the norm of low rank matrices $\M$. To reduce asymmetric problems to symmetric problem, intuitively, we also hope $\H_0$ to 
% satisfy the same property as before: for low rank matrices it should 
approximately preserve the norm of $\N$. However this is impossible as by definition, $\H_0$ only acts on $\M$, which is the {\em off-diagonal} blocks of $\N$. We can expect $\N:\H_0:\N$ to be close to the norm of $\U\V^\top$, but for all matrices $\U,\V$ with the same $\U\V^\top$, the matrix $\N$ can have very different norms. The easiest example is to consider $\U = \mbox{diag}(1/\epsilon,\epsilon)$ and $\V = \mbox{diag}(\epsilon,1/\epsilon)$: while $\U\V^\top = \I$ no matter what $\epsilon$ is, the norm of $\N$ is of order $1/\epsilon^2$ and can change drastically. The regularizer is exactly there to handle this case: the Hessian $\G$ of the regularizer will be related to the norm of the diagonal components, therefore allowing the full Hessian $\H = 4\H_1+\G$ to still be approximately identity. 

Now we can formalize the reduction as the following main Lemma:

\begin{lemma}\label{lem:asymmetricmain}
For the objective \eqref{eq:a}, let $\Delta, \N, \N^\star$ be defined as in Definition~\ref{def:asymmetricquantities}. Then, for any $\W\in \R^{(d_1 + d_2)\times r}$, we have
% Suppose $\U,\V$ satisfies the first order condition of Equation \eqref{eq:a}. We have
\begin{align}
\Delta : \hess f(\W) :\Delta 
\le &\Delta\Delta\trans: \H :\Delta\Delta\trans - 3(\N - \N^\star):\H:(\N - \N^\star)  \nn \\
&+ 4\la \grad f(\W), \Delta\ra + [\Delta : \hess Q(\W) : \Delta - 4 \la \grad Q(\W), \Delta \ra] \label{eq:main_asym}
\end{align}
where $\H = 4\H_1+\G$. Further, if $\H_0$ satisfies $\M:\H_0:\M \in (1\pm \delta)\|\M\|_F^2$ for some matrix $\M = \U\V^\top$, let $\W$ and $\N$ be defined as in \eqref{eq:defwn}, then $\N:\H:\N \in (1\pm 2\delta)\|\N\|_F^2$.
\end{lemma}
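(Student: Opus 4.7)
The plan is to reduce the asymmetric identity to an application of the symmetric main lemma (Lemma~\ref{lem:main}), and then to verify that the leftover term arising from the regularizer $\frac{1}{2}\N:\G:\N = \frac{1}{2}\|\U^\top\U - \V^\top\V\|_F^2$ contributes non-positively. The crucial bookkeeping fact, used twice, is that the SVD-based choice $\U^\star = \X^\star(\D^\star)^{1/2}$, $\V^\star = \Y^\star(\D^\star)^{1/2}$ in Definition~\ref{def:asymmetricquantities} gives $(\U^\star)^\top\U^\star = (\V^\star)^\top\V^\star = \D^\star$, so $\N^\star:\G:\N^\star = \|(\U^\star)^\top\U^\star - (\V^\star)^\top\V^\star\|_F^2 = 0$.

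First I would complete the square in $\N$, rewriting the quadratic part of \eqref{eq:a} as
\begin{align*}
\frac{1}{2}(\N-\N^\star):4\H_1:(\N-\N^\star) + \frac{1}{2}\N:\G:\N = \frac{1}{2}(\N-\N^\star):\H:(\N-\N^\star) + R(\W),
\end{align*}
where $\H = 4\H_1+\G$ and, using $\N^\star:\G:\N^\star=0$, the residual simplifies to $R(\W) = \N:\G:\N^\star = \|\U^\top\U^\star - \V^\top\V^\star\|_F^2$. The first summand on the right has precisely the form required by Lemma~\ref{lem:main}, now on $\W \in \R^{(d_1+d_2)\times r}$ in place of $\U$ with Hessian $\H$ and zero regularizer. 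Applying that lemma, then splitting $\hess f = \hess f_1 + \hess R + \hess Q$ and $\grad f_1 = \grad f - \grad R - \grad Q$, I obtain exactly the claimed identity \eqref{eq:main_asym} up to an extra additive term $[\Delta:\hess R(\W):\Delta - 4\la\grad R(\W),\Delta\ra]$ on the right-hand side.

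So the inequality reduces to proving that this extra term is non-positive. Since $R(\W) = \la\G(\N^\star),\W\W^\top\ra$ is quadratic in $\W$ with constant coefficient matrix $\G(\N^\star) \succeq 0$ (it is congruent to $\N^\star \succeq 0$ via the orthogonal sign matrix $\mathrm{diag}(\I_{d_1},-\I_{d_2})$), direct differentiation gives $\Delta:\hess R(\W):\Delta = 2\,\N^\star:\G:\Delta\Delta^\top$. For the first-order term I would use the identity $\W\Delta^\top + \Delta\W^\top = \Delta\Delta^\top + (\N-\N^\star)$, which follows from $\W = \W^\star\mR + \Delta$ and $\mR\mR^\top = \I$; combined once more with $\N^\star:\G:\N^\star = 0$, this yields $\la\grad R(\W),\Delta\ra = \N^\star:\G:\Delta\Delta^\top + \N^\star:\G:\N$. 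Substituting,
\begin{align*}
\Delta:\hess R(\W):\Delta - 4\la\grad R(\W),\Delta\ra = -2\,\N^\star:\G:\Delta\Delta^\top - 4\,\N^\star:\G:\N \le 0,
\end{align*}
where both summands are non-negative as inner products of the PSD matrix $\G(\N^\star)$ with the PSD matrices $\Delta\Delta^\top$ and $\N$. I expect the main obstacle to be the careful sign and alignment-rotation bookkeeping rather than anything conceptually deep.

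For the second claim I would compute directly: $\N:\H:\N = 4\,\M:\H_0:\M + \|\U^\top\U - \V^\top\V\|_F^2$, and the elementary block identities $\|\N\|_F^2 = \|\U\U^\top\|_F^2 + 2\|\M\|_F^2 + \|\V\V^\top\|_F^2$ together with $\|\U^\top\U - \V^\top\V\|_F^2 = \|\U\U^\top\|_F^2 - 2\|\M\|_F^2 + \|\V\V^\top\|_F^2$ combine to give $\N:\H:\N - \|\N\|_F^2 = 4(\M:\H_0:\M - \|\M\|_F^2)$. The hypothesis then yields $|\N:\H:\N - \|\N\|_F^2| \le 4\delta\|\M\|_F^2 \le 2\delta\|\N\|_F^2$, using $\|\N\|_F^2 \ge 2\|\M\|_F^2$.
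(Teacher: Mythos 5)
Your proof is correct, and it takes a genuinely different organizational route from the paper's. The paper's proof computes $\nabla f(\W)$ and $\nabla^2 f(\W)$ for the full objective from scratch and tracks the cross term $\N^\star:\G:\N$ through the algebra by hand, eventually arriving at the explicit identity
\begin{align*}
\Delta : \hess f(\W) :\Delta &= \Delta\Delta\trans: \H :\Delta\Delta\trans - 3(\N - \N^\star):\H:(\N - \N^\star) - 6\,\N^\star:\G:\N \\
&\quad + 4\la \grad f(\W), \Delta\ra + [\Delta : \hess Q(\W) : \Delta - 4 \la \grad Q(\W), \Delta \ra],
\end{align*}
and then drops the non-positive $-6\,\N^\star:\G:\N$ term. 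You instead complete the square to peel off the residual $R(\W) = \N^\star:\G:\N = \|\U\trans\U^\star - \V\trans\V^\star\|_F^2$, invoke the already-proved symmetric Lemma~\ref{lem:main} with $R+Q$ absorbed into the regularizer slot, and then show $[\Delta:\hess R:\Delta - 4\la\grad R,\Delta\ra] = -2\,\N^\star:\G:\Delta\Delta\trans - 4\,\N^\star:\G:\N \le 0$ by two separate PSD pairings. The two remainders coincide: using $\W^\star{}\trans\Sigma\N^\star\Sigma = 0$ (the balanced factorization $\U^\star{}\trans\U^\star = \V^\star{}\trans\V^\star$) one has $\N^\star:\G:\Delta\Delta\trans = \N^\star:\G:\N$, so $-2\,\N^\star:\G:\Delta\Delta\trans - 4\,\N^\star:\G:\N = -6\,\N^\star:\G:\N$. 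Both arguments rely on the same two ingredients ($\N^\star:\G:\N^\star = 0$ and PSDness of $\G(\N^\star) = \Sigma\N^\star\Sigma$), but your version is more modular: it reuses Lemma~\ref{lem:main} rather than re-deriving it, and it doesn't need the equality of the two remainder terms since each is discarded by sign alone. It also fits the paper's own framework philosophy of pushing every deviation from the ideal quadratic into a ``regularizer'' and controlling it via $[\Delta:\hess Q:\Delta - 4\la\grad Q,\Delta\ra]$. Your proof of the norm-preservation claim via the two block Frobenius identities is the same computation the paper isolates as Lemma~\ref{lem:sym_to_asym}.
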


Intuitively, this lemma shows the same direction of improvement works as before, and the regularizer is exactly what it requires to maintain the norm-preserving property of the Hessian.% As before, the result for matrix sensing follows immediately from this lemma. The reduction shows 

Below we prove Theorem \ref{thm:sensing-main}, which show for matrix sensing 1) all local minima satisfy $\M = \M^\star$; 2) strict saddle property is satisfied. Other proofs are deferred to appendix.

\begin{proof}[Proof of Theorem \ref{thm:sensing-main}]
In this case, $\M:\H_0:\M = \frac{1}{m}\sum_{i=1}^m \la \A_i, \M\ra^2$ and regularization $Q(\W) = 0$.
Since $\H_0$ is $(2r, 1/20)$-RIP, by Lemma \ref{lem:asymmetricmain}, we have
$\H =4\H_1+\G$ satisfying $(2r, 1/10)$-RIP.

Similar to the symmetric case, for point $\W$ with small gradient satisfying $\fnorm{\grad f(\W)} \le \epsilon$, by $(2r, 1/10)$-RIP property of $\H$ (let $\delta = 1/10$) we have
\begin{align*}
\Delta : \hess f(\U) :\Delta
=&  \Delta\Delta\trans:\H:\Delta\Delta\trans - 3(\N - \N^\star):\H:(\N - \N^\star) +4\la \grad f(\U), \Delta\ra \\
\le & (1+\delta)\fnorm{\Delta\Delta\trans}^2 - 3(1-\delta)\fnorm{\N-\N^\star}^2
+ 4\epsilon\fnorm{\Delta}\\
\le &-(1-5\delta)\fnorm{\N-\N^\star}^2 + 4\epsilon\fnorm{\Delta} 
\\
\le &-0.4\sigstarr\fnorm{\Delta}^2+ 4\epsilon\fnorm{\Delta}
\end{align*}
The second last inequality is due to Lemma \ref{lem:bound} that $\fnorm{\Delta\Delta\trans}^2 \le 2 \fnorm{\N-\N^\star}^2$, and last inequality is due to $\delta = \frac{1}{10}$ and second part of Lemma \ref{lem:bound}. This means if $\W$ is not close to $\W^\star$, that is, if $\fnorm{\Delta} \ge \frac{20\epsilon}{\sigstarr}$,
we have $\Delta : \hess f(\W) :\Delta \le -0.2\sigstarr \fnorm{\Delta}^2$. This proves 
$(\epsilon, 0.2 \sigstarr, \frac{20\epsilon}{\sigstarr})$-strict saddle property. Take $\epsilon =0$, we know all stationary points with $\fnorm{\Delta} \neq 0$ are saddle points. This means all local minima satisfy $\W\W\trans = \N^\star$, which in particular implies $\U\V\trans = \M^\star$ because $\M^\star$ is a submatrix of $\N^\star$.  %finishes the proof.
\end{proof}

% \begin{proof}
% As we will see, the proof is really the same as the proof of the symmetric case once we use Lemma~\ref{lem:asymmetricmain}: now the Hessian $\H$ preserves the norm within $1\pm\delta$ where $\delta = 2\delta_{2r} = 1/10$.
% %
% %For point $\W$ with small gradient ($\fnorm{\grad f(\W)} \le \epsilon$):
% \begin{align*}
% \Delta : \hess f(\W) :\Delta
% \le & \Delta\Delta\trans:\H:\Delta\Delta\trans - 3(\N - \N^\star):\H:(\N - \N^\star) +4\la \grad f(\U), \Delta\ra \\
% \le & (1+\delta)\fnorm{\Delta\Delta\trans}^2 - 3(1-\delta)\fnorm{\N-\N^\star}^2\\
% \le &-(1-5\delta)\fnorm{\N-\N^\star}^2
% \end{align*}
% Therefore if $\N$ satisfy the second order optimality condition we must have $\fnorm{\N-\N^\star} = 0$ and hence $\M = \M^\star$.
% \end{proof}

%!TEX root = main.tex

\section{Runtime}
\label{sec:runtime}
% \jccomment{Rong, please include following 5 points:}
% \begin{enumerate}
% \item robust PCA, algorithm is different; 
% \item for robust PCA, the projection can be done (through max flow) in polynomial time;
% \item Formal version of Corollary 2, every problem can be solved in polynomial time;
% \item Need additionally smooth, Hessian Lipschitz, which require algorithm will not have iterate with very large norm;
% \item We believe current result for matrix sensing + careful analysis of smooth/Hessian Lipschitz leads to dimension independent iteration complexity, leave tight runtime of other problems to future works.
% \end{enumerate}

In this section we give the precise statement of Corollary~\ref{cor:runtime}: the runtime of algorithms implied by the geometric properties we prove. 

In order to translate the geometric result into runtime guarantees, many algorithms require additional smoothness conditions. We say a function $f(\x)$ is $l$-smooth if for all $\x,\y$, 
$$
\|\nabla f(\y)-\nabla f(\x)\| \le l\|\y-\x\|.
$$
This is a standard assumption in optimization. In order to avoid saddle points, say a function $f(\x)$ is $\rho$-Hessian Lipschitz if for all $\x,\y$
$$
\|\hess f(\y)-\hess f(\x)\| \le \rho\|\y-\x\|.
$$
We call an optimization algorithm {\em saddle-avoiding} if the algorithm is able to find a point with small gradient and almost positive semidefinite Hessian. 

\begin{definition}\label{def:saddle-avoid}
A local search algorithm is called {\em saddle-avoiding}, if for a function $f:\R^d\to \R$ that is $l$-smooth and $\rho$-Lipschitz Hessian, given a point $\x$ such that either $\|\nabla f(\x)\| \ge \epsilon$ or $\lambda_{\min}(\nabla^2 f(\x)) \le -\sqrt{\rho \epsilon}$, can find a point $\y$ in $\mbox{poly}(1/\epsilon,d,l,\rho)$ iterations such that $f(\y) \le f(\x) - \delta$ where $\delta = \mbox{poly}(d,l,\rho,\epsilon)$. 
\end{definition}

As a immediate corollary, we know such algorithms can find a point $\x$ such that $\|\nabla f(\x)\|\le \epsilon$ and $\lambda_{\min}(\nabla^2 f(\x)) \ge -\sqrt{\rho \epsilon}$ in $\mbox{poly}(1/\epsilon,d,l,\rho)$ iterations.

Existing results show many algorithms are saddle-avoiding, including cubic regularization \citep{nesterov2006cubic}, stochastic gradient descent \citep{ge2015escaping}, trust-region algorithms \citep{sun2015nonconvex}. The most recent algorithms \citep{jin2017escape,carmon2016accelerated, agarwal2016finding} are more efficient: in particular the number of iterations only depend poly-logarithmic on dimension $d$. Now we are ready to formally state Corollary~\ref{cor:runtime}.

\begin{corollary}\label{cor:runtimeformal}
Let $R$ be the Frobenius norm of the initial points $\U_0, \V_0$, a saddle-avoiding local search algorithm can find a point $\epsilon$-close to global optimal for  matrix sensing \eqref{eq:sensing-symmetric}\eqref{eq:sensing-asymmetric}, matrix completion \eqref{eq:completion-symmetric}\eqref{eq:completion-asymmetric} in $\mbox{poly}(R, 1/\epsilon,d,\sigstarl,1/\sigstarr)$ iterations. For robust PCA \eqref{eq:rpca-symmetric}\eqref{eq:rpca-nonconvex}, alternating between a saddle-avoiding local search algorithm and computing optimal $\S\in\mathcal{S}_{\gamma\alpha}$ will find a point $\epsilon$-close to global optimal in $\mbox{poly}(R, 1/\epsilon,d,\sigstarl,1/\sigstarr)$ iterations.
\end{corollary}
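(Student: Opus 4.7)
The plan is to combine the (pseudo-)strict-saddle properties from Theorems~\ref{thm:sensing-main}, \ref{thm:main_mc_asym}, and \ref{thm:main_rpca_asym} with the convergence guarantee of a saddle-avoiding algorithm (Definition~\ref{def:saddle-avoid}). In one sentence: such an algorithm returns a point $(\U,\V)$ with $\fnorm{\grad f}\le \epsilon'$ and $\lambda_{\min}(\hess f)\ge -\sqrt{\rho\epsilon'}$ in $\poly(1/\epsilon',d,l,\rho)$ iterations, and the strict-saddle property then forces this point to be $O(\epsilon'/\sigstarr)$-close to the global optimum, provided $\sqrt{\rho\epsilon'}$ is strictly smaller than the $\Omega(\sigstarr)$ threshold. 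It then suffices to certify polynomial bounds on the smoothness constant $l$ and the Hessian Lipschitz constant $\rho$, and to extend the argument to the non-smooth robust PCA objective.

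The issue is that the objectives \eqref{eq:sensing-asymmetric}, \eqref{eq:completion-asymmetric}, and \eqref{eq:rpca-nonconvex} are degree-four polynomials in $(\U,\V)$, so $l,\rho$ are not uniform over $\R^{(d_1+d_2)\times r}$. The remedy is to restrict attention to the sublevel set $\mathcal{B}_0:=\{(\U,\V):f(\U,\V)\le f(\U_0,\V_0)\}$, which contains every iterate because saddle-avoiding algorithms are monotone. The combination of the quadratic fidelity term (which controls $\fnorm{\M}$ with $\M=\U\V\trans$), the asymmetric regularizer $\tfrac18\fnorm{\U\trans\U-\V\trans\V}^2$ (which penalizes gauge rescalings $\U\mapsto\U\mathbf{D},\V\mapsto\V\mathbf{D}^{-1}$), and for matrix completion the fourth-power penalty $Q(\U,\V)$, ensures $\mathcal{B}_0$ has radius $\poly(R,d,\sigstarl)$. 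Since $f$ is degree-four, on $\mathcal{B}_0$ we obtain $l,\rho=\poly(R,d,\sigstarl)$.

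Given these bounds, matrix sensing and matrix completion follow directly. Theorems~\ref{thm:sensing-main} and~\ref{thm:main_mc_asym} establish the $(\epsilon',\Omega(\sigstarr),O(\epsilon'/\sigstarr))$-strict-saddle property for polynomially small $\epsilon'$. Choose $\epsilon'=\Theta(\epsilon\sigstarr)$, small enough that $\sqrt{\rho\epsilon'}\ll\sigstarr$. By Definition~\ref{def:saddle-avoid} the algorithm returns a point satisfying both $\fnorm{\grad f}\le \epsilon'$ and $\lambda_{\min}(\hess f)\ge -\sqrt{\rho\epsilon'}$, which rules out cases~1 and~2 of Definition~\ref{def:strict_saddle}; the point is therefore $O(\epsilon'/\sigstarr)=O(\epsilon)$-close to the global optimum. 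The iteration count is $\poly(1/\epsilon',d,l,\rho)=\poly(R,1/\epsilon,d,\sigstarl,1/\sigstarr)$ as claimed.

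Robust PCA is the main obstacle, because $f(\U,\V)=\min_{\S\in\mathcal{S}_{\gamma\alpha}}\tfrac12\fnorm{\U\V\trans+\S-\M_o}^2$ is not twice differentiable. The plan is to alternate: at outer iteration $t$, (a) compute the optimal $\S_t\in\mathcal{S}_{\gamma\alpha}$ for the current $(\U_t,\V_t)$ in polynomial time, then (b) run the saddle-avoiding algorithm on the smooth surrogate $g_t(\U,\V):=\tfrac12\fnorm{\U\V\trans+\S_t-\M_o}^2+\tfrac18\fnorm{\U\trans\U-\V\trans\V}^2$ starting at $(\U_t,\V_t)$. By construction $g_t\ge f$ everywhere with equality at $(\U_t,\V_t)$, so every inner step decreases $f$ as well, keeping all iterates inside $\mathcal{B}_0$ and thus in a region where the smoothness bounds apply. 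Crucially, this $g_t$ is precisely the upper-bound $g_{\x}$ that the proof of Theorem~\ref{thm:main_rpca_asym} uses to witness case~2 of Definition~\ref{def:pseudo_strict_saddle}, so once the inner algorithm terminates the returned point simultaneously violates cases~1 and~2 of the pseudo-strict-saddle definition, and is therefore $O(\epsilon\sqrt{\cn^\star}/\sigstarr)$-close to a global optimum. Since $f\ge 0$ and each inner invocation either returns a certified near-optimum or decreases $f$ by the polynomial amount $\delta$ of Definition~\ref{def:saddle-avoid}, the total iteration count is $\poly(R,1/\epsilon,d,\sigstarl,1/\sigstarr)$. The delicate step is bookkeeping the constants so that the same $g_t$ drives both the algorithm and the certificate, making termination of the inner loop plug directly into Theorem~\ref{thm:main_rpca_asym}.
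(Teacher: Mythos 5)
Your proposal follows the same overall outline as the paper's proof sketch: combine the (pseudo-)strict-saddle theorems with the saddle-avoiding guarantee of Definition~\ref{def:saddle-avoid}, and for robust PCA alternate between the discrete $\S$-minimization and smooth local search on the surrogate $g_t$, observing that $g_t$ is exactly the upper-bound function witnessing case~2 of Definition~\ref{def:pseudo_strict_saddle}. That last observation, and the bookkeeping that the gradient of $g_t$ at $(\U_t,\V_t)$ agrees with the (sub)gradient of $f$, is the crux of the robust-PCA part and you have it right.

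The one substantive gap is your claim that ``every iterate stays in $\mathcal{B}_0$ because saddle-avoiding algorithms are monotone.'' Definition~\ref{def:saddle-avoid} does not assert monotonicity, and the canonical example the paper cites --- perturbed gradient descent of \citet{jin2017escape} --- is \emph{not} monotone: the injected random perturbation can temporarily increase the function value, so iterates can leave the sublevel set $\mathcal{B}_0$ before the next net decrease is certified. This matters precisely because the smoothness and Hessian-Lipschitz constants $l,\rho$ you need for Definition~\ref{def:saddle-avoid} are only controlled on a bounded region; if an iterate escapes, the guarantee you are invoking no longer applies at that iterate. The paper sidesteps this by stating (and attributing to calculation, not proving here) a separate lemma that for the specific algorithms under consideration, iterates started at radius $R_0\le R$ stay inside a $2R$-ball with high probability, using the fact that at large norm the gradient of these quartic objectives points inward. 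Your sublevel-set route can be repaired by the same kind of argument --- bounding the size of a single perturbation and the number of steps between certified decreases to show iterates stay in a slightly inflated set $\{f\le f(\U_0,\V_0)+O(1)\}$ --- but as written the monotonicity step is a genuine hole. Everything else, including your derivation that the fidelity term plus the $\fnorm{\U\trans\U-\V\trans\V}^2$ regularizer (plus $Q$ for completion) forces boundedness of the relevant region, is sound.
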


This corollary states the existence of simple local search algorithms which can efficiently optimizing non-convex objectives of matrix sensing, matrix completion and robust PCA in polynomial time. The proof essentially follows from the guarantees of the saddle-avoiding algorithm and the strict-saddle properties we prove. We will sketch the proof in Section~\ref{sec:runtimesketch}.

\paragraph{Towards faster convergence} 
For many low-rank matrices problems, in the neighborhood of local minima, objective function satisfies conditions similar to strong convexity \citep{zheng2016convergence, bhojanapalli2016global} (more precisely, the $(\alpha,\beta)$-regularity condition as Assumption A3.b in \citep{jin2017escape}).
\citet{jin2017escape} showed a principle way of how to combine these strong local structures with saddle-avoiding algorithm to give global linear convergence. Therefore, it is likely that some saddle-avoiding algorithms (such as perturbed gradient descent) can achieve linear convergence for these problems.%Therefore, 
% although  states number of iterations polynomial with repect to $\frac{1}{\epsilon}$, 
%we comment that finer local analysis of these problems can very possibly improve the $\epsilon$-dependence of iterations in Corollary \ref{cor:runtimeformal} from $\poly(\frac{1}{\epsilon})$ to $\log \frac{1}{\epsilon}$, that is, linear convergence.

% Note that our proof also implies locally the function satisfies conditions similar to strong convexity (more precisely, the $(\alpha,\beta)$-regularity condition as Assumption A3.b in \citep{jin2017escape}), therefore noisy gradient descent can achieve linear convergence in a local region.

% \textcolor{blue}{Rong: I'm not sure if we really want to emphasize this... We did not have a lemma that says we satisfy this assumption, although the proof does show it.}

%\jccomment{TODO: make followings changes}
%\begin{enumerate}
%\item Add local linear convergence explanation (also for RPCA, since we have linear rate in proof)
%\item The statement of Lemma 18 should depends on what algorithm we are running.
%\item Move proof sketch to appendix
%\item In Definition 8, $\epsilon^c$ may also have additional dependence on other problem parameter?
%\item Make the statement (gradient epsilon, Hessian sqrt of epsilon) more rigorous.
%\end{enumerate}

%!TEX root = main.tex

\section{Conclusions}
\label{sec:discussion}
In this paper we give a framework that explains the recent success in understanding optimization landscape for low rank matrix problems. Our framework connects and simplifies the existing proofs, and generalizes to new settings such as asymmetric matrix completion and robust PCA. The key observation is when the Hessian operator preserves the norm of certain matrices, one can use the same directions of improvement to prove similar optimization landscape. We show the regularizer $\frac{1}{4}\|\U^\top \U - \V^\top \V\|_F^2$ is exactly what it requires to maintain this norm preserving property in the asymmetric case.% in the framework is how to balance the property of the Hessian operator and the regularizers. 
Our analysis also allows the interaction between regularizer and Hessian to handle difficult settings such as.

For low rank matrix problems, there are generalizations such as weighted matrix factorization\citep{li2016recovery} and 1-bit matrix sensing\citep{davenport20141} where the Hessian operator may behave differently as the settings we can analyze. How to characterize the optimization landscape in these settings is still an open problem.

In order to get general ways of understanding optimization landscapes for more generally, there are still many open problems. In particular, how can we decide whether two problems are similar enough to share the same optimization landscape? A minimum requirement is that the non-convex problem should have the same {\em symmetry} structure \--- the set of equivalent global optimum should be the same. In this work, we show if the problems come from convex objective functions with similar Hessian properties, then they have the same optimization landscape. We hope this serves as a first step towards general tools for understanding optimization landscape for groups of problems.

%\jccomment{Rong, could you include discussion for non-constant Hessian, and related applications (one bit matrix sensing?)}

\clearpage

\bibliographystyle{plainnat} 
\bibliography{lowrank}

\begin{thebibliography}{43}
\providecommand{\natexlab}[1]{#1}
\providecommand{\url}[1]{\texttt{#1}}
\expandafter\ifx\csname urlstyle\endcsname\relax
  \providecommand{\doi}[1]{doi: #1}\else
  \providecommand{\doi}{doi: \begingroup \urlstyle{rm}\Url}\fi

\bibitem[Agarwal et~al.(2016)Agarwal, Allen-Zhu, Bullins, Hazan, and
  Ma]{agarwal2016finding}
Naman Agarwal, Zeyuan Allen-Zhu, Brian Bullins, Elad Hazan, and Tengyu Ma.
\newblock Finding approximate local minima for nonconvex optimization in linear
  time.
\newblock \emph{arXiv preprint arXiv:1611.01146}, 2016.

\bibitem[Belkin et~al.(2014)Belkin, Rademacher, and Voss]{belkin2014basis}
Mikhail Belkin, Luis Rademacher, and James Voss.
\newblock Basis learning as an algorithmic primitive.
\newblock \emph{arXiv preprint arXiv:1411.1420}, 2014.

\bibitem[Bengio(2009)]{bengio2009learning}
Yoshua Bengio.
\newblock Learning deep architectures for {AI}.
\newblock \emph{Foundations and trends{\textregistered} in Machine Learning},
  2\penalty0 (1):\penalty0 1--127, 2009.

\bibitem[Bhojanapalli et~al.(2015)Bhojanapalli, Kyrillidis, and
  Sanghavi]{bhojanapalli2015dropping}
Srinadh Bhojanapalli, Anastasios Kyrillidis, and Sujay Sanghavi.
\newblock Dropping convexity for faster semi-definite optimization.
\newblock \emph{arXiv:1509.03917}, 2015.

\bibitem[Bhojanapalli et~al.(2016)Bhojanapalli, Neyshabur, and
  Srebro]{bhojanapalli2016global}
Srinadh Bhojanapalli, Behnam Neyshabur, and Nathan Srebro.
\newblock Global optimality of local search for low rank matrix recovery.
\newblock \emph{arXiv preprint arXiv:1605.07221}, 2016.

\bibitem[Burer and Monteiro(2003)]{burer2003nonlinear}
Samuel Burer and Renato~DC Monteiro.
\newblock A nonlinear programming algorithm for solving semidefinite programs
  via low-rank factorization.
\newblock \emph{Mathematical Programming}, 95\penalty0 (2):\penalty0 329--357,
  2003.

\bibitem[Candes and Recht(2012)]{candes2012exact}
Emmanuel Candes and Benjamin Recht.
\newblock Exact matrix completion via convex optimization.
\newblock \emph{Communications of the ACM}, 55\penalty0 (6):\penalty0 111--119,
  2012.

\bibitem[Candes and Plan(2011)]{candes2011tight}
Emmanuel~J Candes and Yaniv Plan.
\newblock Tight oracle inequalities for low-rank matrix recovery from a minimal
  number of noisy random measurements.
\newblock \emph{IEEE Transactions on Information Theory}, 57\penalty0
  (4):\penalty0 2342--2359, 2011.

\bibitem[Cand{\`e}s and Recht(2009)]{candes2009exact}
Emmanuel~J Cand{\`e}s and Benjamin Recht.
\newblock Exact matrix completion via convex optimization.
\newblock \emph{Foundations of Computational mathematics}, 9\penalty0
  (6):\penalty0 717--772, 2009.

\bibitem[Candes et~al.(2006)Candes, Romberg, and Tao]{candes2006stable}
Emmanuel~J Candes, Justin~K Romberg, and Terence Tao.
\newblock Stable signal recovery from incomplete and inaccurate measurements.
\newblock \emph{Communications on pure and applied mathematics}, 59\penalty0
  (8):\penalty0 1207--1223, 2006.

\bibitem[Cand{\`e}s et~al.(2011)Cand{\`e}s, Li, Ma, and
  Wright]{candes2011robust}
Emmanuel~J Cand{\`e}s, Xiaodong Li, Yi~Ma, and John Wright.
\newblock Robust principal component analysis?
\newblock \emph{Journal of the ACM (JACM)}, 58\penalty0 (3):\penalty0 11, 2011.

\bibitem[Carmon et~al.(2016)Carmon, Duchi, Hinder, and
  Sidford]{carmon2016accelerated}
Yair Carmon, John~C Duchi, Oliver Hinder, and Aaron Sidford.
\newblock Accelerated methods for non-convex optimization.
\newblock \emph{arXiv preprint arXiv:1611.00756}, 2016.

\bibitem[Chen and Wainwright(2015)]{chen2015fast}
Yudong Chen and Martin~J Wainwright.
\newblock Fast low-rank estimation by projected gradient descent: General
  statistical and algorithmic guarantees.
\newblock \emph{arXiv preprint arXiv:1509.03025}, 2015.

\bibitem[Davenport and Romberg(2016)]{davenport2016overview}
Mark~A Davenport and Justin Romberg.
\newblock An overview of low-rank matrix recovery from incomplete observations.
\newblock \emph{IEEE Journal of Selected Topics in Signal Processing},
  10\penalty0 (4):\penalty0 608--622, 2016.

\bibitem[Davenport et~al.(2014)Davenport, Plan, van~den Berg, and
  Wootters]{davenport20141}
Mark~A Davenport, Yaniv Plan, Ewout van~den Berg, and Mary Wootters.
\newblock 1-bit matrix completion.
\newblock \emph{Information and Inference}, 3\penalty0 (3):\penalty0 189--223,
  2014.

\bibitem[Fazel(2002)]{fazel2002matrix}
Maryam Fazel.
\newblock \emph{Matrix rank minimization with applications}.
\newblock PhD thesis, PhD thesis, Stanford University, 2002.

\bibitem[Ge et~al.(2015)Ge, Huang, Jin, and Yuan]{ge2015escaping}
Rong Ge, Furong Huang, Chi Jin, and Yang Yuan.
\newblock Escaping from saddle points---online stochastic gradient for tensor
  decomposition.
\newblock \emph{arXiv:1503.02101}, 2015.

\bibitem[Ge et~al.(2016)Ge, Lee, and Ma]{ge2016matrix}
Rong Ge, Jason~D Lee, and Tengyu Ma.
\newblock Matrix completion has no spurious local minimum.
\newblock In \emph{Advances in Neural Information Processing Systems}, pages
  2973--2981, 2016.

\bibitem[Hardt(2014)]{hardt2014understanding}
Moritz Hardt.
\newblock Understanding alternating minimization for matrix completion.
\newblock In \emph{FOCS 2014}. IEEE, 2014.

\bibitem[Hardt and Wootters(2014)]{hardt2014fast}
Moritz Hardt and Mary Wootters.
\newblock Fast matrix completion without the condition number.
\newblock In \emph{COLT 2014}, pages 638--678, 2014.

\bibitem[Hotelling(1933)]{hotelling1933analysis}
Harold Hotelling.
\newblock Analysis of a complex of statistical variables into principal
  components.
\newblock \emph{Journal of educational psychology}, 24\penalty0 (6):\penalty0
  417, 1933.

\bibitem[Jain et~al.(2013)Jain, Netrapalli, and Sanghavi]{jain2013low}
Prateek Jain, Praneeth Netrapalli, and Sujay Sanghavi.
\newblock Low-rank matrix completion using alternating minimization.
\newblock In \emph{Proceedings of the forty-fifth annual ACM symposium on
  Theory of computing}, pages 665--674, 2013.

\bibitem[Jin et~al.(2017)Jin, Ge, Netrapalli, Kakade, and
  Jordan]{jin2017escape}
Chi Jin, Rong Ge, Praneeth Netrapalli, Sham~M Kakade, and Michael~I Jordan.
\newblock How to escape saddle points efficiently.
\newblock \emph{arXiv preprint arXiv:1703.00887}, 2017.

\bibitem[Keshavan et~al.(2010{\natexlab{a}})Keshavan, Montanari, and
  Oh]{keshavan2010matrix}
Raghunandan~H Keshavan, Andrea Montanari, and Sewoong Oh.
\newblock Matrix completion from a few entries.
\newblock \emph{Information Theory, IEEE Transactions on}, 56\penalty0
  (6):\penalty0 2980--2998, 2010{\natexlab{a}}.

\bibitem[Keshavan et~al.(2010{\natexlab{b}})Keshavan, Montanari, and
  Oh]{keshavan2010matrixnoisy}
Raghunandan~H Keshavan, Andrea Montanari, and Sewoong Oh.
\newblock Matrix completion from noisy entries.
\newblock \emph{The Journal of Machine Learning Research}, 11:\penalty0
  2057--2078, 2010{\natexlab{b}}.

\bibitem[Koren(2009)]{koren2009bellkor}
Yehuda Koren.
\newblock The bellkor solution to the netflix grand prize.
\newblock \emph{Netflix prize documentation}, 81, 2009.

\bibitem[Lata{\l}a(2005)]{latala2005some}
Rafa{\l} Lata{\l}a.
\newblock Some estimates of norms of random matrices.
\newblock \emph{Proceedings of the American Mathematical Society}, 133\penalty0
  (5):\penalty0 1273--1282, 2005.

\bibitem[Li and Tang(2016)]{li2016nonconvex}
Qiuwei Li and Gongguo Tang.
\newblock The nonconvex geometry of low-rank matrix optimizations with general
  objective functions.
\newblock \emph{arXiv preprint arXiv:1611.03060}, 2016.

\bibitem[Li et~al.(2016)Li, Liang, and Risteski]{li2016recovery}
Yuanzhi Li, Yingyu Liang, and Andrej Risteski.
\newblock Recovery guarantee of weighted low-rank approximation via alternating
  minimization.
\newblock \emph{arXiv preprint arXiv:1602.02262}, 2016.

\bibitem[Lin et~al.(2010)Lin, Chen, and Ma]{lin2010augmented}
Zhouchen Lin, Minming Chen, and Yi~Ma.
\newblock The augmented lagrange multiplier method for exact recovery of
  corrupted low-rank matrices.
\newblock \emph{arXiv preprint arXiv:1009.5055}, 2010.

\bibitem[Nesterov and Polyak(2006)]{nesterov2006cubic}
Yurii Nesterov and Boris~T Polyak.
\newblock Cubic regularization of newton method and its global performance.
\newblock \emph{Mathematical Programming}, 108\penalty0 (1):\penalty0 177--205,
  2006.

\bibitem[Netrapalli et~al.(2014)Netrapalli, Niranjan, Sanghavi, Anandkumar, and
  Jain]{netrapalli2014non}
Praneeth Netrapalli, UN~Niranjan, Sujay Sanghavi, Animashree Anandkumar, and
  Prateek Jain.
\newblock Non-convex robust pca.
\newblock In \emph{Advances in Neural Information Processing Systems}, pages
  1107--1115, 2014.

\bibitem[Park et~al.(2016)Park, Kyrillidis, Caramanis, and
  Sanghavi]{park2016non}
Dohyung Park, Anastasios Kyrillidis, Constantine Caramanis, and Sujay Sanghavi.
\newblock Non-square matrix sensing without spurious local minima via the
  burer-monteiro approach.
\newblock \emph{arXiv preprint arXiv:1609.03240}, 2016.

\bibitem[Recht et~al.(2010)Recht, Fazel, and Parrilo]{recht2010guaranteed}
Benjamin Recht, Maryam Fazel, and Pablo~A Parrilo.
\newblock Guaranteed minimum-rank solutions of linear matrix equations via
  nuclear norm minimization.
\newblock \emph{SIAM review}, 52\penalty0 (3):\penalty0 471--501, 2010.

\bibitem[Rennie and Srebro(2005)]{rennie2005fast}
Jasson~DM Rennie and Nathan Srebro.
\newblock Fast maximum margin matrix factorization for collaborative
  prediction.
\newblock In \emph{Proceedings of the 22nd international conference on Machine
  learning}, pages 713--719. ACM, 2005.

\bibitem[Sun et~al.(2015{\natexlab{a}})Sun, Qu, and Wright]{sun2015complete1}
Ju~Sun, Qing Qu, and John Wright.
\newblock Complete dictionary recovery over the sphere {I}: Overview and the
  geometric picture.
\newblock \emph{arXiv:1511.03607}, 2015{\natexlab{a}}.

\bibitem[Sun et~al.(2015{\natexlab{b}})Sun, Qu, and Wright]{sun2015nonconvex}
Ju~Sun, Qing Qu, and John Wright.
\newblock When are nonconvex problems not scary?
\newblock \emph{arXiv preprint arXiv:1510.06096}, 2015{\natexlab{b}}.

\bibitem[Sun and Luo(2015)]{sun2015guaranteed}
Ruoyu Sun and Zhi-Quan Luo.
\newblock Guaranteed matrix completion via nonconvex factorization.
\newblock In \emph{Foundations of Computer Science (FOCS), 2015 IEEE 56th
  Annual Symposium on}, pages 270--289. IEEE, 2015.

\bibitem[Tu et~al.(2015)Tu, Boczar, Soltanolkotabi, and Recht]{tu2015low}
Stephen Tu, Ross Boczar, Mahdi Soltanolkotabi, and Benjamin Recht.
\newblock Low-rank solutions of linear matrix equations via procrustes flow.
\newblock \emph{arXiv preprint arXiv:1507.03566}, 2015.

\bibitem[Yi et~al.(2016)Yi, Park, Chen, and Caramanis]{yi2016fast}
Xinyang Yi, Dohyung Park, Yudong Chen, and Constantine Caramanis.
\newblock Fast algorithms for robust pca via gradient descent.
\newblock In \emph{Advances in neural information processing systems}, pages
  4152--4160, 2016.

\bibitem[Zhang et~al.(2017)Zhang, Wang, and Gu]{zhang2017nonconvex}
Xiao Zhang, Lingxiao Wang, and Quanquan Gu.
\newblock A nonconvex free lunch for low-rank plus sparse matrix recovery.
\newblock \emph{arXiv preprint arXiv:1702.06525}, 2017.

\bibitem[Zhao et~al.(2015)Zhao, Wang, and Liu]{zhao2015nonconvex}
Tuo Zhao, Zhaoran Wang, and Han Liu.
\newblock A nonconvex optimization framework for low rank matrix estimation.
\newblock In \emph{Advances in Neural Information Processing Systems}, pages
  559--567, 2015.

\bibitem[Zheng and Lafferty(2016)]{zheng2016convergence}
Qinqing Zheng and John Lafferty.
\newblock Convergence analysis for rectangular matrix completion using
  burer-monteiro factorization and gradient descent.
\newblock \emph{arXiv preprint arXiv:1605.07051}, 2016.

\end{thebibliography}

\appendix
\onecolumn
%!TEX root = main.tex

\section{Proofs for Symmetric  Positive Definite Problems}

In this section we provide the missing proofs for the symmetric matrix problems. First we prove Lemma~\ref{lem:bound} which connects difference in the matrix $\U$ and the difference in the matrix $\M$.

\begingroup
\def\thetheorem{\ref{lem:bound}}
\begin{lemma}
Given matrices $\U,\U^\star \in \R^{d\times r}$, let $\M = \U\U^\top$ and $\M^\star = \U^\star(\U^\star)^\top$, and let $\Delta$ be defined as in Definition~\ref{def:delta}, then we have $\|\Delta\Delta^\top\|_F^2 \le 2\|\M - \M^\star\|_F^2$, and $\sigstarr\|\Delta\|_F^2\le \frac{1}{2(\sqrt{2}-1)}\|\M-\M^\star\|_F^2$.
\end{lemma}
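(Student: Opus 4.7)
My plan is to derive both bounds from a single algebraic identity rooted in the first-order optimality of the Procrustes rotation $\mR$. First I would establish the key structural property: since $\mR = \argmin_{\mZ^\top\mZ=\mZ\mZ^\top=\I}\|\U - \U^\star \mZ\|_F^2$, the standard Procrustes computation (writing the SVD $(\U^\star)^\top\U = \V_1 \Sigma_0 \V_2^\top$ so that $\mR = \V_1 \V_2^\top$) shows that $(\U^\star\mR)^\top\U$ is symmetric and positive semidefinite. Consequently, both $\U^\top\Delta$ and $T := (\U^\star\mR)^\top\Delta$ are symmetric.

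Next, writing $\U = \U^\star\mR + \Delta$, I would expand $\M - \M^\star = \U\Delta^\top + \Delta\U^\top - \Delta\Delta^\top$. Taking the squared Frobenius norm and using the symmetry of $\U^\top\Delta$ to evaluate $\tr((\U\Delta^\top)^2) = \|\U^\top\Delta\|_F^2$, a direct computation gives the exact identity
$$\|\M - \M^\star\|_F^2 \;=\; 2\|\U^\top\Delta\|_F^2 \;+\; 2 q^\star \;-\; \|\Delta\Delta^\top\|_F^2,$$
where $q^\star := \tr\!\bigl(\mR^\top(\U^\star)^\top\U^\star\mR \cdot \Delta^\top\Delta\bigr) \ge \sigstarr\|\Delta\|_F^2$. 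This identity is the workhorse for both inequalities.

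For the first inequality, the identity reduces the claim $\|\Delta\Delta^\top\|_F^2 \le 2\|\M-\M^\star\|_F^2$ to showing $3\|\Delta\Delta^\top\|_F^2 \le 4\|\U^\top\Delta\|_F^2 + 4q^\star$. Writing $\U^\top\Delta = T + \Delta^\top\Delta$, expanding $\|\U^\top\Delta\|_F^2 = \|T\|_F^2 + 2\tr(T\Delta^\top\Delta) + \|\Delta^\top\Delta\|_F^2$, and invoking the PSD bound $T \succeq -\mR^\top(\U^\star)^\top\U^\star\mR$ together with Cauchy--Schwarz on $\tr(T\Delta^\top\Delta)$, the inequality reduces to a comparison of nonnegative quantities and closes with routine algebra.

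For the second inequality I would decompose $\M - \M^\star = H + F$ with $H = 2\,\mathrm{sym}(\U^\star\mR\Delta^\top)$ and $F = \Delta\Delta^\top$. A direct computation gives $\|H\|_F^2 = 2q^\star + 2\|T\|_F^2 \ge 2\sigstarr\|\Delta\|_F^2$, while $\|F\|_F \le \|\Delta\|_F^2$ (by $\sum \sigma_i^4 \le (\sum \sigma_i^2)^2$). The triangle inequality then gives $\|\M-\M^\star\|_F \ge \sqrt{2\sigstarr}\,\|\Delta\|_F - \|\Delta\|_F^2$ in the regime where the right-hand side is nonnegative; squaring and optimizing the quadratic $t(\sqrt{2}s - t)$ against $\sqrt{2(\sqrt{2}-1)}\,st$ with $s = \sqrt{\sigstarr}$ and $t = \|\Delta\|_F$ produces the sharp constant $2(\sqrt{2}-1)$. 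In the complementary regime (large $\|\Delta\|_F$) I would fall back on the identity itself, $\|\M-\M^\star\|_F^2 \ge 2\sigstarr\|\Delta\|_F^2 - \|\Delta\|_F^4$, and combine with a case analysis based on $\|\Delta\|_F/\sqrt{\sigstarr}$ to extract the same constant.

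The main obstacle is obtaining the sharp constant $2(\sqrt{2}-1)$: loose triangle-inequality estimates (e.g., substituting the first inequality into $\|\M-\M^\star\|_F^2 \ge 2q^\star - v^2$) only yield a weaker factor of $2/3$, so the case split on $\|\Delta\|_F$ relative to $\sqrt{\sigstarr}$ is essential. The delicate point is ensuring the bound transitions smoothly across the two regimes while preserving the tight factor.
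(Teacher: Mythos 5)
Your starting point matches the paper's: Procrustes optimality of $\mR$ gives $(\U^\star\mR)\trans\U \succeq 0$ and symmetric, hence $T := (\U^\star\mR)\trans\Delta$ and $\U\trans\Delta$ are symmetric, and your identity $\fnorm{\M-\M^\star}^2 = 2\fnorm{\U\trans\Delta}^2 + 2q^\star - \fnorm{\Delta\Delta\trans}^2$ is a correct rewriting of the paper's trace expansion in Lemmas~\ref{lem:aux_delta2}--\ref{lem:aux_deltalinear}. Your first inequality does go through: splitting $8\tr(T\Delta\trans\Delta) = 4\tr(T\Delta\trans\Delta) + 4\tr(T\Delta\trans\Delta)$, one half pairs with $4q^\star$ to give $4\tr\bigl((T+P)\Delta\trans\Delta\bigr)\ge 0$ (where $P=(\U^\star\mR)\trans\U^\star\mR$), and the other half pairs with $\fnorm{\Delta\Delta\trans}^2+4\fnorm{T}^2$ to give $\tr\bigl((\Delta\trans\Delta+2T)^2\bigr)\ge 0$, which is what Cauchy--Schwarz recovers; this is the paper's complete-the-square in disguise.

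The second inequality is where the gap lies. Your reverse-triangle bound $\fnorm{\M-\M^\star}\ge\fnorm{H}-\fnorm{F}\ge\sqrt{2\sigstarr}\,\fnorm{\Delta}-\fnorm{\Delta}^2$ produces the constant $2(\sqrt 2-1)$ only when $\fnorm{\Delta}\le(\sqrt 2-\sqrt{2\sqrt 2-2})\sqrt{\sigstarr}\approx 0.50\sqrt{\sigstarr}$; your fallback $\fnorm{\M-\M^\star}^2\ge 2\sigstarr\fnorm{\Delta}^2-\fnorm{\Delta}^4$ produces it only when $\fnorm{\Delta}^2\le(4-2\sqrt 2)\sigstarr$, and is vacuous (the right side is negative) once $\fnorm{\Delta}^2>2\sigstarr$. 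So the union of your two regimes stops at $\fnorm{\Delta}\approx 1.08\sqrt{\sigstarr}$, while the lemma must hold for arbitrary $\U$; e.g.\ $\U^\star=\I_r$, $\U=c\,\I_r$ with $c$ large makes both of your lower bounds negative yet the conclusion still holds. You flag this yourself as the delicate point, and indeed it does not transition smoothly. The paper avoids any case split with a single global decomposition that uses the PSD structure in addition to completing the square:
\begin{equation*}
\fnorm{\M-\M^\star}^2 - 2(\sqrt 2-1)\,q^\star
= (4-2\sqrt 2)\,\tr\!\bigl((\U^\star\mR)\trans\U\,\Delta\trans\Delta\bigr)
  + \tr\!\bigl((\Delta\trans\Delta+\sqrt 2\,T)^2\bigr),
\end{equation*}
both summands being nonnegative, whence $\fnorm{\M-\M^\star}^2\ge 2(\sqrt 2-1)\sigstarr\fnorm{\Delta}^2$ unconditionally. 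To repair your argument, drop the $H+F$ triangle inequality and instead split $4\tr(T\Delta\trans\Delta)$ as $(4-2\sqrt 2)\tr(T\Delta\trans\Delta)+2\sqrt 2\,\tr(T\Delta\trans\Delta)$: apply the PSD bound to the first part (pairing with $(4-2\sqrt 2)q^\star$) and Cauchy--Schwarz to the second (pairing with $\fnorm{\Delta\Delta\trans}^2+2\fnorm{T}^2$ to form $(\fnorm{\Delta\Delta\trans}-\sqrt 2\fnorm{T})^2$), which closes the proof with no regime analysis.
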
 
\addtocounter{theorem}{-1}
\endgroup

\begin{proof}
Recall in Definition~\ref{def:delta}, $\Delta = \U - \U^\star\mR_\U$ where
\begin{equation*}
\mR_\U = \argmin_{\mR^\top \mR = \mR\mR^\top = \I}\fnorm{\U-\U^\star\mR}^2.
\end{equation*}
We first prove following claim, which will used in many places across this proof:
\begin{equation}\label{eq:claim_PSD}
\U\trans \U^\star \mR_\U \text{~is a symmetric PSD matrix.}
\end{equation}
This because by expanding the Frobenius norm, and letting the SVD of $\U^\star{}\trans\U$ be $\A\D\B\trans$, we have:
\begin{align*}
&\argmin_{\mR:\mR\mR\trans=\mR\trans\mR = \I}\fnorm{\U - \U^\star\mR}^2
= \argmin_{\mR:\mR\mR\trans=\mR\trans\mR = \I}-\la\U,  \U^\star\mR\ra \\
= &\argmin_{\mR:\mR\mR\trans=\mR\trans\mR = \I}-\tr(\U\trans\U^\star\mR)
= \argmin_{\mR:\mR\mR\trans=\mR\trans\mR = \I}-\tr(\D\A\trans\mR\B)
\end{align*}
Since $\A, \B, \mR \in \R^{r\times r}$ are all orthonormal matrix, we know $\A\trans\mR\B$ is also orthonormal matrix. Moreover for any orthonormal matrix $\T$, we have:
\begin{equation*}
\tr(\D\T) = \sum_i \D_{ii}\T_{ii} \le \sum_i \D_{ii}
\end{equation*}
The last inequality is because $\D_{ii}$ is singular value thus non-negative, and $\T$ is orthonormal, thus $\T_{ii} \le 1$. This means the maximum of $\tr(\D\T)$ is achieved when $\T = \I$, i.e., the minimum of $-\tr(\D\A\trans\mR\B)$ is achieved when $\mR = \A\B\trans$. Therefore, $\U\trans \U^\star \mR_\U = \B \D\A\trans \A\B\trans = \B\D\B\trans$ is symmetric PSD matrix.

With Eq.\eqref{eq:claim_PSD}, the remaining of proof directly follows from the results by substituting $(\U, \Y)$ in Lemma \ref{lem:aux_delta2} and \ref{lem:aux_deltalinear} with  $(\U^\star \mR_\U, \U)$.
\end{proof}

Now we are ready to prove the main lemma. 

\begingroup
\def\thetheorem{\ref{lem:main}}
\begin{lemma}[Main]
For the objective \eqref{eq:symmetricnonconvexobj}, let $\Delta$ be defined as in Definition~\ref{def:delta} and $\M = \U\U\trans$. Then, for any $\U\in \R^{d\times r}$, we have
\begin{align}
\quad\Delta : \hess f(\U) :\Delta\nonumber
=&  \Delta\Delta\trans:\H:\Delta\Delta\trans - 3(\M - \M^\star):\H:(\M - \M^\star)\nonumber\\ &+ 4\la \grad f(\U), \Delta\ra
+ [\Delta : \hess Q(\U) : \Delta - 4 \la \grad Q(\U), \Delta \ra] \nn
\end{align}
\end{lemma}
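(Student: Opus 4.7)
The plan is first to peel off the regularizer. By linearity of gradient and Hessian in the objective, writing $f = g + Q$ with $g(\U) = \tfrac{1}{2}(\U\U\trans - \M^\star):\H:(\U\U\trans - \M^\star)$, the quantities $\Delta:\hess Q(\U):\Delta$ and $\la\grad Q(\U),\Delta\ra$ appear identically on both sides of the claimed identity, so it suffices to prove
\[
\Delta:\hess g(\U):\Delta \;=\; \Delta\Delta\trans:\H:\Delta\Delta\trans \;-\; 3(\M-\M^\star):\H:(\M-\M^\star) \;+\; 4\la\grad g(\U),\Delta\ra.
\]

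Next I would extract the first and second directional derivatives of $g$ by a direct Taylor expansion. Setting $A = \M - \M^\star$, $B = \U\Delta\trans + \Delta\U\trans$, and $C = \Delta\Delta\trans$, one has $(\U+t\Delta)(\U+t\Delta)\trans - \M^\star = A + tB + t^2 C$. Substituting this into $g$ and matching the coefficients of $t$ and $t^2$ (using that $\H$ induces a symmetric bilinear form, so $X:\H:Y = Y:\H:X$) yields $\la\grad g(\U),\Delta\ra = A:\H:B$ and $\Delta:\hess g(\U):\Delta = B:\H:B + 2\,A:\H:C$.

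The one non-routine step is an algebraic identity among $A$, $B$, $C$. Since $\M^\star = \U^\star(\U^\star)\trans$ is invariant under right-multiplying $\U^\star$ by the orthogonal $\mR_\U$ from Definition~\ref{def:delta}, we may align $\U^\star$ to $\U$ and write $\U - \Delta = \U^\star\mR_\U$; then
\[
A \;=\; \U\U\trans - (\U - \Delta)(\U - \Delta)\trans \;=\; \U\Delta\trans + \Delta\U\trans - \Delta\Delta\trans \;=\; B - C,
\]
so $B = A + C$. Plugging this into the Hessian expression gives $\Delta:\hess g(\U):\Delta = (A+C):\H:(A+C) + 2A:\H:C = A:\H:A + 4\,A:\H:C + C:\H:C$, while the claimed right-hand side simplifies to $C:\H:C - 3A:\H:A + 4A:\H:(A+C) = A:\H:A + 4\,A:\H:C + C:\H:C$. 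The two expressions agree, finishing the proof.

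The main (and essentially only) obstacle is spotting the identity $B = A + C$; once it is in hand, the rest is linearity of differentiation and a single line of symmetric-bilinear-form algebra. This is also conceptually the reason a single direction $\Delta$ suffices as the ``direction of improvement'' in the framework: the identity lets the first-order term $A:\H:B$ be rewritten in the same vocabulary as the second-order terms $A:\H:A$, $A:\H:C$, $C:\H:C$, so the subtraction on the right-hand side of~\eqref{eq:main} cancels cleanly and exposes $\Delta\Delta\trans:\H:\Delta\Delta\trans - 3(\M-\M^\star):\H:(\M-\M^\star)$ as the quantity one must subsequently control.
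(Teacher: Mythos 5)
Your proposal is correct and follows essentially the same route as the paper: both use the key identity $\M-\M^\star+\Delta\Delta\trans = \U\Delta\trans+\Delta\U\trans$ (your $B=A+C$), compute $\la\grad g(\U),\Delta\ra = (\M-\M^\star):\H:(\U\Delta\trans+\Delta\U\trans)$ and $\Delta:\hess g(\U):\Delta = (\U\Delta\trans+\Delta\U\trans):\H:(\U\Delta\trans+\Delta\U\trans) + 2(\M-\M^\star):\H:\Delta\Delta\trans$, and close the identity with a few lines of bilinear-form algebra. Peeling off the regularizer first is a clean organizational choice but not a different argument.
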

\addtocounter{theorem}{-1}
\endgroup

\begin{proof}
Recall the objective function is:
\begin{equation*}
f(\U) = \frac{1}{2}(\U\U\trans - \M^\star):\H:(\U\U\trans - \M^\star) + Q(\U)
\end{equation*}
and let $\M = \U\U\trans$. Calculating gradient and Hessian, we have for any $\mZ \in \R^{d\times r}$:
\begin{align}
\la \grad f(\U), \mZ \ra =& (\M - \M^\star):\H:(\U\mZ\trans + \mZ\U\trans) + \la \grad Q(\U), \mZ \ra  \label{eq:sym_grad}\\
\mZ : \hess f(\U) : \mZ =& (\U\mZ\trans + \mZ\U\trans):\H:(\U\mZ\trans + \mZ\U\trans) + 2(\M - \M^\star):\H:\mZ\mZ\trans + \mZ : \hess Q(\U):\mZ
\nn
\end{align}
Let $\mZ = \Delta = \U - \U^\star \mR$ as in Definition \ref{def:delta} and note $\M - \M^\star + \Delta\Delta\trans = \U\Delta\trans + \Delta\U\trans$, then
% \begin{equation*}
% \min_{\mR\trans \mR = \mR\mR\trans = \I} \norm{\U - \U^\star \mR}_F^2
% \end{equation*}
% By Eq.\eqref{eq:claim_PSD}, we have $\U\trans\U^\star \mR$ to be a PSD matrix, and:
\begin{align*}
\Delta : \hess f(\U) :\Delta
= & (\U\Delta\trans + \Delta\U\trans): \H : (\U\Delta\trans + \Delta\U\trans)
+ 2(\M - \M^\star):\H:\Delta\Delta\trans  + \Delta : \hess Q(\U) :\Delta\\
= &(\M - \M^\star + \Delta\Delta\trans): \H : (\M - \M^\star + \Delta\Delta\trans)
+ 2(\M - \M^\star):\H:\Delta\Delta\trans
+ \Delta : \hess Q(\U) : \Delta\\
= & \Delta\Delta\trans:\H:\Delta\Delta\trans - 3(\M - \M^\star):\H:(\M - \M^\star) + 4 (\M - \M^\star):\H:(\M - \M^\star + \Delta\Delta\trans) \\
 &+ \Delta : \hess Q(\U) : \Delta\\
= & \Delta\Delta\trans:\H:\Delta\Delta\trans - 3(\M - \M^\star):\H:(\M - \M^\star) + 4\la \grad f(\U), \Delta\ra \\
&+ [\Delta : \hess Q(\U) : \Delta - 4 \la \grad Q(\U), \Delta \ra]
\end{align*}
where in last line, we use the calculation of gradient $\grad f(\U)$ in Eq.\ref{eq:sym_grad}. This finishes the proof.
\end{proof}

In the subsequent subsections we will prove the guarantees for matrix completion and robust PCA. The proof of matrix sensing is already given in Section~\ref{sec:symmetric}.

\subsection{Matrix Completion}
% Recall the objective function Eq.\eqref{eq:completion-symmetric} is:
% \begin{equation*}
% f(\U) =  \frac{1}{2p}\|\M^\star - \U\U^\top\|_\Omega^2 + Q(\U).
% \end{equation*}
% Fitting into our framework, we have:
% \begin{equation}
% \M:\H:\M = \frac{1}{p}\norm{\M}^2_{\Omega} \quad \text{and} \quad Q(\U) = \lambda \sum_{i=1}^d (\norm{\e_i\trans \U} - \alpha)^4_{+}  \label{eq:fit_framework_mc}
% \end{equation}
% Recall we assume true matrix $\M^\star$ is $\mu$-incoherent, that is, let $\X\D\X\trans$ be the SVD of $\M^\star$, we have:
% \begin{equation*}
% \max_i\norm{\e_i\trans\X}^2 \le \mu \cdot \frac{r}{d}
% \end{equation*}

For matrix completion, the crucial component of the proof is the interaction between regularizer and the Hessian. We first state the properties (gradient and Hessian) of the regularizer $Q$ here:

\begin{lemma}\label{lem:calc_reg_mc}
The gradient and the hessian of regularization $Q(\U) = \lambda \sum_{i=1}^d (\norm{\e_i\trans \U} - \alpha)^4_{+}$ is:
\begin{align}
\la \grad Q(\U), \mZ \ra =& 4\lambda \sum_{i=1}^d(\norm{\e_i\trans \U} - \alpha)^3_{+}\frac{\e_i\trans \U\mZ\trans\e_i}{\norm{\e_i\U}} \label{eq:mc_reg1}\\
\mZ:\hess Q(\U):\mZ = & 4\lambda \sum_{i=1}^d(\norm{\e_i\trans \U} - \alpha)^3_{+} \frac{\norm{\e_i\trans \U}^2\norm{\e_i\trans\mZ}^2
- (\e_i\trans \U\mZ\trans\e_i)}{\norm{\e_i\trans \U}^3} \nn\\
&+ 12\lambda \sum_{i=1}^d(\norm{\e_i\trans \U} - \alpha)^2_{+} \left(\frac{\e_i\trans \U\mZ\trans\e_i}{\norm{\e_i\U}}\right)^2
\label{eq:mc_reg2}
\end{align}
\end{lemma}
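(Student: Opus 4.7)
The statement is a routine chain-rule exercise, and the main task is to organize the application of directional derivatives cleanly. The plan is to handle each summand of $Q$ separately. Fix $i \in [d]$, let $g_i(\U) = \norm{\e_i\trans \U}$, and write the $i$-th summand as $h(g_i(\U))$ with $h(t) = \lambda(t-\alpha)_+^4$; the scalar $h$ is $C^2$ on $\R$ with $h'(t) = 4\lambda(t-\alpha)_+^3$ and $h''(t) = 12\lambda(t-\alpha)_+^2$, so the only possible nonsmoothness comes from $g_i$.

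First I would compute the first and second directional derivatives of $g_i$ in a direction $\mZ$. Starting from $g_i^2(\U) = \e_i\trans\U\U\trans\e_i$, one reads off $D(g_i^2)[\mZ] = 2\e_i\trans\U\mZ\trans\e_i$ and $D^2(g_i^2)[\mZ,\mZ] = 2\norm{\e_i\trans\mZ}^2$. One application of the chain rule for $\sqrt{\cdot}$ then gives $Dg_i[\mZ] = \frac{\e_i\trans\U\mZ\trans\e_i}{\norm{\e_i\trans\U}}$ and $D^2g_i[\mZ,\mZ] = \frac{\norm{\e_i\trans\U}^2\norm{\e_i\trans\mZ}^2 - (\e_i\trans\U\mZ\trans\e_i)^2}{\norm{\e_i\trans\U}^3}$. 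Now a second use of the chain rule, applied to the composition $h\circ g_i$, yields $\la\grad(h\circ g_i)(\U),\mZ\ra = h'(g_i(\U))\,Dg_i[\mZ]$ and $\mZ:\hess(h\circ g_i)(\U):\mZ = h''(g_i(\U))(Dg_i[\mZ])^2 + h'(g_i(\U))\,D^2 g_i[\mZ,\mZ]$. Substituting the closed forms of $h',h''$ and of the derivatives of $g_i$ reproduces exactly Eq.~\eqref{eq:mc_reg1} and Eq.~\eqref{eq:mc_reg2} for a single index $i$; summing over $i\in[d]$ finishes the computation.

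The only point worth a separate remark is the behavior at points where $\norm{\e_i\trans\U} = 0$, since there $g_i$ itself is not differentiable. This is harmless: because $\alpha>0$, every such point lies in the open set $\{g_i<\alpha\}$ on which $h\circ g_i$ vanishes identically together with all of its derivatives; the stated formulas, interpreted with the convention that the factors $(\norm{\e_i\trans\U}-\alpha)_+^k$ for $k\ge 1$ kill the surrounding expression, thus extend consistently to these points. Apart from this small bookkeeping remark, I do not expect any substantive obstacle; the whole lemma is purely a calculation.
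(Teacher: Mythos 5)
Your proof is correct and is exactly the ``direct calculation using linear algebra and calculus'' that the paper invokes without writing out; the two-stage chain rule through $g_i$ and $h$, together with the remark about the set $\{g_i < \alpha\}$ absorbing the non-differentiability of $g_i$ at zero, is the right way to organize it. Incidentally, your derivation correctly yields $(\e_i\trans \U\mZ\trans\e_i)^2$ in the numerator of the first Hessian term, which exposes a typo in the paper's displayed Eq.~\eqref{eq:mc_reg2} (where the square is missing); the corrected form is the one actually used later in the proof of Lemma~\ref{lem:reg_mc}.
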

\begin{proof}
This Lemma follows from direct calculation using linear algebra and calculus.
\end{proof}

In the first step of our framework, we hope to show that the regularizer forces the matrix $\U$ to not have large rows. This is formalized and proved below (the Lemma is similar to Lemma 4.7 in \cite{ge2016matrix}, but we get a stronger guarantee here):

\begingroup
\def\thetheorem{\ref{lem:incoherence}}
\begin{lemma}
There exists an absolute constant $c$, when sample rate $p \ge \Omega(\frac{\mu r}{d} \log d)$, $\alpha^2 = \Theta(\frac{\mu r\sigstarl}{d})$ and $\lambda = \Theta(\frac{d}{\mu r \cn^\star})$, we have for any points $\U$ with $\fnorm{\grad f (\U)} \le \epsilon$ for polynomially small $\epsilon$, with probability at least $1-1/\poly(d)$:
\begin{align*}
\max_i\norm{\e_{i}\trans \U}^2 \le O\left(\frac{(\mu r)^{1.5} \cn^\star \sigstarl}{d}\right)
\end{align*}
\end{lemma}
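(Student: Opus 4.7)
The plan is to adapt the argument of Lemma~4.7 in \cite{ge2016matrix}, which treats the exact critical point case $\grad f(\U)=0$, to the approximate setting $\fnorm{\grad f(\U)}\le\epsilon$. Let $i^\star = \argmax_i \norm{\e_i^\top\U}$ and $\beta := \norm{\e_{i^\star}^\top\U}$, and pick the direction $\mZ = \e_{i^\star}\e_{i^\star}^\top\U \in \R^{d\times r}$, which has only its $i^\star$-th row nonzero. Two observations drive the argument. First, by Eq.\eqref{eq:mc_reg1} every summand with $i\ne i^\star$ in $\la\grad Q(\U), \mZ\ra$ vanishes, leaving
\[
\la\grad Q(\U), \mZ\ra = 4\lambda(\beta-\alpha)_+^3\,\beta \;\ge\; \tfrac{\lambda}{2}\beta^4 \quad \text{whenever } \beta\ge 2\alpha.
\]
This fourth-power lower bound in $\beta$ is what drives the incoherence estimate. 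Second, $\fnorm{\mZ}=\beta$, so the hypothesis forces $|\la\grad f(\U),\mZ\ra|\le \epsilon\beta$, which is negligible for polynomially small $\epsilon$.

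The remaining work is to upper bound the data-fidelity contribution $\la\grad f_{\text{data}}(\U),\mZ\ra = \tfrac{2}{p}\,\e_{i^\star}^\top(\U\U^\top-\M^\star)_\Omega\U\U^\top\e_{i^\star}$. The strategy has three ingredients: (i) establish an a priori operator-norm bound on $\U$ polynomial in $\sigstarl$, by testing the small-gradient hypothesis against $\mZ'=\U$ itself and concentrating the resulting sampled quadratic form against its full-observation mean; (ii) center the data gradient at its expectation $2\e_{i^\star}^\top(\U\U^\top - \M^\star)\U\U^\top\e_{i^\star}$ via a matrix-Bernstein-type inequality that holds uniformly over matrices $\U$ of bounded operator norm and bounded maximum row norm (which requires $p\gtrsim \mu r\log d/d$, matching the hypothesis); (iii) bound the centered expectation using the operator-norm bound on $\U$, the $\mu$-incoherence of $\M^\star$ (giving $|\M^\star_{ij}|\le \mu r\sigstarl/d$), and $\beta$ itself. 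The resulting upper bound scales as $C\beta^2\cdot\poly(\mu r,\cn^\star)\cdot\sigstarl$.

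Combining produces an inequality of the form $\tfrac{\lambda}{2}\beta^4 \le C'\beta^2\cdot\poly(\mu r,\cn^\star)\cdot\sigstarl + \epsilon\beta$. Dividing through by $\beta^2$ and substituting $\lambda = \Theta(d/(\mu r\cn^\star))$ isolates $\beta^2$ and yields the claimed $\beta^2 \le O((\mu r)^{1.5}\cn^\star\sigstarl/d)$. For cleanest presentation this is best framed as a contradiction: if $\beta^2$ exceeded the threshold, the regularizer gradient along $\mZ$ would dominate all other terms, forcing $\fnorm{\grad f(\U)}>\epsilon$ and contradicting the hypothesis.

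The main obstacle is the uniform concentration in (ii): controlling the sampled bilinear form $\tfrac{1}{p}\e_{i^\star}^\top(\U\U^\top-\M^\star)_\Omega\U\U^\top\e_{i^\star}$ against its mean uniformly over a set $\mathcal{B}$ of matrices whose row-norm bound is precisely what we are trying to prove. This is handled by a bootstrap / contradiction structure over $\mathcal{B}$; the a priori operator-norm bound in (i) is itself similarly bootstrapped via matrix Bernstein. Compared to the zero-gradient argument in \cite{ge2016matrix}, the main new wrinkle is carrying the additive $\epsilon\beta$ slack through the algebra and verifying it remains negligible against the dominant $\lambda\beta^4$ term.
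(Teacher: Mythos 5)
Your choice of test direction $\mZ = \e_{i^\star}\e_{i^\star}\trans\U$, the regularizer lower bound $\la\grad Q(\U),\mZ\ra \ge \tfrac{\lambda}{2}\beta^4$ for $\beta := \norm{\e_{i^\star}\trans\U}\ge 2\alpha$, and the slack $|\la\grad f(\U),\mZ\ra|\le\epsilon\beta$ all match the paper. Where you diverge is the data-fidelity term. You propose to center $\tfrac{2}{p}\e_{i^\star}\trans(\U\U\trans-\M^\star)_\Omega\U\U\trans\e_{i^\star}$ at its full-observation mean via a matrix-Bernstein bound uniform over $\U$, plus a bootstrapped a priori operator-norm bound, and you correctly flag that uniform concentration as the main obstacle. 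The paper avoids that obstacle entirely. It splits the sampled bilinear form into $\tfrac{2}{p}\norm{\e_{i^\star}\trans(\U\U\trans)_\Omega}^2 - \tfrac{2}{p}\la\e_{i^\star}\trans(\M^\star)_\Omega,\e_{i^\star}\trans(\U\U\trans)_\Omega\ra$. The first piece is a sum of squares, hence nonnegative for \emph{every} realization of $\Omega$ and $\U$, so it is simply dropped (it can only help the lower bound on $\la\grad f(\U),\mZ\ra$). The cross-term is then Cauchy--Schwarzed: the factor $\tfrac{1}{\sqrt{p}}\norm{\e_{i^\star}\trans(\M^\star)_\Omega}$ concentrates because $\M^\star$ is a \emph{fixed} matrix (Lemma~\ref{lem:T_mc}), and the factor $\tfrac{1}{\sqrt{p}}\norm{\e_{i^\star}\trans(\U\U\trans)_\Omega}$ is bounded pointwise in $\U$ once one knows $|\Omega_{i^\star}| = O(pd)$ with high probability (Lemma~\ref{lem:row_mc}), since every nonzero entry in row $i^\star$ of $(\U\U\trans)_\Omega$ has magnitude at most $\norm{\U\U\trans}_\infty\le\beta^2$. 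So no centering, no uniform-over-$\U$ concentration, and no bootstrap are required; the $\epsilon\beta$ slack is carried through exactly as you describe. Your route might ultimately be made to work, but it attacks a harder problem than the one at hand: the favorable sign of the quadratic-in-$\U$ piece and the fact that the only stochastic object left in the cross-term is the deterministic matrix $\M^\star$ together make the argument elementary and dimension-free in $\U$.
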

% \begin{lemma}
% There exists an absolute constant $c$, when sample rate $p \ge \Omega(\frac{\mu r}{d} \log d)$, we have for any points $\U$ with $\fnorm{\grad f (\U)} \le \epsilon$ for polynomially small $\epsilon$, with probability at least $1-\poly(d)$:
% \begin{equation*}
% \max_{i}\norm{\e_{i}\trans \U}^2 \le c \cdot \max\left\{ \alpha^2, \frac{\sqrt{\mu r} \cdot \sigstarl }{\lambda}\right\}
% \end{equation*}
% \end{lemma}
\addtocounter{theorem}{-1}
\endgroup
% where $(\frac{\epsilon}{\lambda})^{\frac{2}{3}} \le \frac{\sqrt{\mu r} \cdot \sigstarl }{\lambda} $

\begin{proof}
Recall the calculation of gradient:
\begin{align*}
\grad f(\U) = & \frac{2}{p}(\M -\M^\star)_\Omega \U + \grad Q(\U)
\end{align*}
where by Lemma \ref{lem:calc_reg_mc}, the gradient of regularizer is:
\begin{equation*}
\grad Q(\U) = 4\lambda \sum_{i=1}^d(\norm{\e_i\trans \U} - \alpha)^3_{+}\frac{\e_i\e_i\trans \U}{\norm{\e_i\U}^2} 
\end{equation*}

Let $i^\star = \argmax_i \norm{\e_i\trans \U}$ be the row index with maximum 2-norm.
If $\norm{\e_{i^\star}\trans \U} < 2\alpha$, by the choice of $\alpha$ in Lemma \ref{lem:incoherence} we immediately prove the lemma.
In case of $\norm{\e_{i^\star}\trans \U} \ge 2\alpha$,  consider gradient along $\e_{i^\star}\e_{i^\star}\trans \U$ direction. Since $\fnorm{\grad f(\U)} \le \epsilon$, we have
$\la \grad f(\U), \e_{i^\star}\e_{i^\star}\trans \U\ra  =\la\e_{i^\star}\trans \grad f(\U), \e_{i^\star}\trans \U\ra \le \epsilon \norm{\e_{i^\star}\trans \U}$. Therefore, with $1-1/\poly(d)$ probability, following holds: 
\begin{align*}
\epsilon \norm{\e_{i^\star}\trans \U}\ge &\la\e_{i^\star}\trans \grad f(\U), \e_{i^\star}\trans \U\ra
= \la\e_{i^\star}\trans [\frac{2}{p}(\U\U\trans -\M^\star)_\Omega \U + \grad Q(\U)], \e_{i^\star}\trans \U\ra \\
\ge &4\lambda (\norm{\e_{i^\star}\trans \U} - \alpha)^3_{+}\norm{\e_{i^\star}\trans \U}
- \frac{2}{p}\la \e_{i^\star}\trans(\M^\star)_\Omega, \e_{i^\star}\trans(\U\U\trans)_\Omega\ra \\
\ge &\frac{\lambda}{2}\norm{\e_{i^\star}\trans \U}^4 - 2\frac{1}{\sqrt{p}}\norm{\e_{i^\star}\trans(\M^\star)_\Omega} \cdot \frac{1}{\sqrt{p}}\norm{\e_{i^\star}\trans(\U\U\trans)_\Omega}\\
\ge &\frac{\lambda}{2}\norm{\e_{i^\star}\trans \U}^4 -  2\sqrt{1+0.01}\norm{\e_{i^\star}\trans\M^\star} \cdot O(\sqrt{d})\norm{\U\U\trans}_\infty \\
\ge &\frac{\lambda}{2}\norm{\e_{i^\star}\trans \U}^4 - O(\sqrt{\mu r} \sigstarl) \norm{\e_{i^\star}\trans \U}^2
\end{align*}
where second inequality use the fact $\la \e_{i^\star}\trans (\U\U\trans)_\Omega \U, \e_{i^\star}\trans \U\ra  = \norm{\e_{i^\star}\trans (\U\U\trans)_\Omega}^2 \ge 0 $; third inequality is by Cauchy-Swartz; second last inequality is by Lemma \ref{lem:T_mc} and our choice  $p \ge \Omega(\frac{\mu r}{d} \log d)$ with large enough constant, we have
$\frac{1}{\sqrt{p}}\norm{\e_{i^\star}\trans(\M^\star)_\Omega} \le  \sqrt{1+0.01} \norm{\e_{i^\star}\trans\M^\star}$, and by Lemma \ref{lem:row_mc} we have $\frac{1}{\sqrt{p}}\norm{\e_{i^\star}\trans(\U\U\trans)_\Omega}
\le O(\sqrt{d})\norm{\U\U\trans}_\infty $; the last inequality is because $\norm{\e_{i^\star}\trans\M^\star}
\le \sqrt{\frac{\mu r}{d}} \sigstarl$ as $\M^\star$ is $\mu$-incoherent.

Rearrange terms, we have:
\begin{equation*}
\norm{\e_{i^\star}\trans \U}^3 \le O(\frac{\sqrt{\mu r} \sigstarl}{\lambda})\norm{\e_{i^\star}\trans \U} + \frac{2\epsilon}{\lambda}
\end{equation*}
By choosing $\epsilon$ small enough to satisfy $(\frac{\epsilon}{\lambda})^{\frac{2}{3}} \le \frac{\sqrt{\mu r} \cdot \sigstarl }{\lambda}$, this gives:
\begin{equation*}
\max_{i}\norm{\e_{i}\trans \U}^2 \le c \cdot \max\left\{ \alpha^2, \frac{\sqrt{\mu r} \cdot \sigstarl }{\lambda}\right\}
\end{equation*}
Finally, substituting our choice of $\alpha^2$ and $\lambda$, we finished the proof.
\end{proof}

In the second step, we need to prove that the sum of Hessian $\mathcal{H}$ related terms in Equation \eqref{eq:main} is significantly negative when $\U$ and  $\U^\star$ are not close.

\begingroup
\def\thetheorem{\ref{lem:step2_mc}}
\begin{lemma}
When sample rate $p \ge \Omega(\frac{\mu^3 r^4 (\cn^\star)^4 \log d}{d})$, by choosing $\alpha^2 = \Theta(\frac{\mu r\sigstarl}{d})$ and $\lambda = \Theta(\frac{d}{\mu r \cn^\star})$ with probability at least $1-1/\poly(d)$, for all $\U$ with $\fnorm{\grad f (\U)} \le \epsilon$ for polynomially small $\epsilon$, we have
\begin{equation*}
\Delta\Delta\trans:\H:\Delta\Delta\trans - 3(\M - \M^\star):\H:(\M - \M^\star) \le -0.3 \sigstarr \fnorm{\Delta}^2
\end{equation*}
% where $\M = \U\U\trans$ and $\Delta$ as in Definition \ref{def:delta}.
\end{lemma}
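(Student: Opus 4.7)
The plan is to unpack the Hessian $\H$ explicitly. For matrix completion $f(\M) = \frac{1}{2p}\|\M-\M^\star\|_\Omega^2$, so $\H$ acts as the scaled sampling operator and $\M:\H:\N = \frac{1}{p}\la \M_\Omega, \N_\Omega\ra$. Hence the inequality to prove reduces to
\begin{equation*}
\frac{1}{p}\|\Delta\Delta^\top\|_\Omega^2 - \frac{3}{p}\|\M - \M^\star\|_\Omega^2 \le -0.3\,\sigstarr\|\Delta\|_F^2.
\end{equation*}
In the identity-Hessian case (matrix sensing) the analogous quantity with Frobenius norms is at most $-\|\M-\M^\star\|_F^2$ by Lemma~\ref{lem:bound}, so the task is to transfer that argument to sampled norms with only a small multiplicative distortion.

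The first step is to establish a \emph{restricted isometry property} for the sampling operator uniformly over all $\U$ with $\fnorm{\grad f(\U)} \le \epsilon$. Concretely, I would show that with probability $1-1/\poly(d)$, simultaneously for every such $\U$,
\begin{equation*}
\tfrac{1}{p}\|\Delta\Delta^\top\|_\Omega^2 \le (1+\delta_1)\|\Delta\Delta^\top\|_F^2, \qquad \tfrac{1}{p}\|\M-\M^\star\|_\Omega^2 \ge (1-\delta_2)\|\M-\M^\star\|_F^2,
\end{equation*}
for fixed small constants $\delta_1, \delta_2$. Both are RIP-type statements for rank-$2r$ matrices. The row-norm bound of Lemma~\ref{lem:incoherence}, $\max_i \|\e_i^\top \U\|^2 \le O((\mu r)^{1.5}\cn^\star \sigstarl/d)$, together with $\mu$-incoherence of $\U^\star$, gives uniform control on $\max_i\|\e_i^\top(\M-\M^\star)\|$, $\|\M-\M^\star\|_\infty$, and the analogous quantities for $\Delta\Delta^\top$. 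These are exactly the parameters that enter standard matrix-Bernstein deviation bounds for sampling operators.

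Granted the two one-sided RIP bounds, I would finish by chaining:
\begin{equation*}
\tfrac{1}{p}\|\Delta\Delta^\top\|_\Omega^2 - \tfrac{3}{p}\|\M-\M^\star\|_\Omega^2 \le (1+\delta_1)\|\Delta\Delta^\top\|_F^2 - 3(1-\delta_2)\|\M-\M^\star\|_F^2.
\end{equation*}
Using $\|\Delta\Delta^\top\|_F^2 \le 2\|\M-\M^\star\|_F^2$ from Lemma~\ref{lem:bound} upper-bounds the right side by $-(1 - 2\delta_1 - 3\delta_2)\|\M-\M^\star\|_F^2$. Choosing $\delta_1, \delta_2 \le 1/20$ keeps the coefficient above $3/4$, and combining with $\sigstarr\|\Delta\|_F^2 \le \frac{1}{2(\sqrt{2}-1)}\|\M-\M^\star\|_F^2$ from the same lemma yields the claimed $-0.3\sigstarr\|\Delta\|_F^2$ bound.

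The main obstacle is making the RIP \emph{uniform} over all feasible $\U$ rather than pointwise. A vanilla matrix-Bernstein bound for a single fixed rank-$2r$ matrix would only need $p \gtrsim \mu r \log d / d$, but here $\Delta$ depends on $\U$, so we must either (i) cover the manifold of rank-$2r$ incoherent matrices of bounded norm with an $\epsilon$-net and union-bound, or (ii) decompose $\|\Delta\Delta^\top\|_\Omega^2$ via an SVD of $\Delta$ and apply matrix-Bernstein row by row in the spirit of Lemma~4.3--4.5 of \citet{ge2016matrix}. Either route produces the sample rate $p \ge \Omega(\mu^3 r^4 (\cn^\star)^4 \log d / d)$: the $(\mu r)^{1.5}\cn^\star$ row-norm bound from Lemma~\ref{lem:incoherence} gets squared inside $\Delta\Delta^\top$, then multiplied by a factor of $r$ from the rank and additional factors from the covering/union-bound step.
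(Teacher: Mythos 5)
Your plan hinges on a uniform \emph{multiplicative} restricted-isometry inequality
$\tfrac{1}{p}\norm{\Delta\Delta^\top}_\Omega^2 \le (1+\delta_1)\fnorm{\Delta\Delta^\top}^2$
holding simultaneously over all feasible $\U$. That inequality is false in the regime where $\Delta$ is small, and the paper opens this proof by flagging exactly that obstruction. The concentration tools available for entry-wise sampling (Lemmas~\ref{lem:Delta_mc} and \ref{lem:global_mc}) are \emph{additive}: they control $\tfrac{1}{p}\norm{\A}_\Omega^2 - \fnorm{\A}^2$ by quantities involving $\norm{\A}_\infty$ and $\sqrt{dr\log d / p}\,\fnorm{\A}\norm{\A}_\infty$. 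Such an error cannot be absorbed as a constant fraction of $\fnorm{\A}^2$ once $\fnorm{\A}\to 0$: as $\U\to\U^\star\mR$ one has $\fnorm{\Delta\Delta^\top}\to 0$, while the incoherence bound only caps $\norm{\Delta\Delta^\top}_\infty \le O\bigl((\mu r)^{1.5}\cn^\star\sigstarl/d\bigr)$ and does not shrink with $\fnorm{\Delta}$, so the relative deviation $\tfrac{1}{p}\norm{\Delta\Delta^\top}_\Omega^2 / \fnorm{\Delta\Delta^\top}^2$ is unbounded. Concretely, $\Delta\Delta^\top$ may concentrate its mass on a handful of entries, and whether those entries fall in $\Omega$ produces jumps of relative size $\Theta(1/p)$; no covering or chaining argument repairs a statement that is already false for a fixed $\U$ near $\U^\star\mR$.

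The way around this, and what the paper actually does, is a case split on $\fnorm{\Delta}$. When $\fnorm{\Delta}^2 \le \sigstarr/4$, one does not seek a multiplicative bound on $\Delta\Delta^\top:\H:\Delta\Delta^\top$ but an \emph{absolute} one: Lemma~\ref{lem:Delta_mc} together with the row-incoherence bound from Lemma~\ref{lem:incoherence} gives $\tfrac{1}{p}\norm{\Delta\Delta^\top}_\Omega^2 \le \fnorm{\Delta}^4 + O\bigl(\sqrt{d/p}\cdot(\mu r)^{1.5}\cn^\star\sigstarl/d\bigr)\fnorm{\Delta}^2 \le \tfrac{\sigstarr}{2}\fnorm{\Delta}^2$, which is then dominated by the genuinely negative contribution of the tangent-space term $\U^\star\Delta^\top$ via the multiplicative lower bound of Lemma~\ref{lem:T_mc}; the paper also bypasses a direct lower bound on $\tfrac{1}{p}\norm{\M-\M^\star}_\Omega^2$ in this regime by expanding $\M-\M^\star = \U^\star\Delta^\top + \Delta\U^{\star\top} + \Delta\Delta^\top$ and canceling the dangerous $\Delta\Delta^\top$ cross terms. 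Only when $\fnorm{\Delta}^2 \ge \sigstarr/4$ does the paper invoke the uniform additive deviation bound of Lemma~\ref{lem:global_mc} for both quantities, and there the additive errors are swallowed by $\sigstarr\fnorm{\Delta}^2$ precisely because $\fnorm{\Delta}$ is bounded away from zero. Your chain of inequalities does essentially match the paper's argument in that large-$\Delta$ regime; it is the small-$\Delta$ regime where the multiplicative RIP premise breaks and a genuinely different estimate is needed.
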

\addtocounter{theorem}{-1}
\endgroup

\begin{proof}
The key problem here is for the matrix $\Delta$, it captures the difference between $\U$ and $\U^\star$ and could have norms concentrate in very few columns. Note that when $\Delta$ is not incoherent, Hessian will still perserve norm for matrices like $\Delta\U^\top$ Lemma~\ref{lem:T_mc}, but not necessarily perserve norm for matrices like $\Delta\Delta^\top$. Therefore, we use different concentration lemmas in different regimes (divided according to whether $\Delta$ is small or large).  

First by our choice of $\alpha, \lambda$ and Lemma \ref{lem:incoherence}, we know with $1-1/\poly(d)$ probabilty, the maximum 2-norm of any row of $\U$ will be small:
\begin{align*}
\max_i \norm{\e_{i}\trans \U}^2 \le O\left(\frac{(\mu r)^{1.5} \cn^\star \sigstarl}{d}\right)
\end{align*}

\noindent\textbf{Case 1:} $\fnorm{\Delta}^2 \le \sigstarr/4$.

In this case, $\Delta$ is small, and $\Delta\Delta^\top$ is even smaller. Although $\H$ does not perserve norm for $\Delta\Delta^\top$ very well, it will only contribute a very small factor to overall summation. 
Specifically, by our choice of $p$, we will have Lemma \ref{lem:T_mc} holds with small constant $\delta$ and thus:
\begin{equation*}
\frac{1}{p}\norm{\U^\star\Delta\trans}^2_{\Omega} \ge (1 -  \delta)\fnorm{\U^\star\Delta\trans}^2 \ge (1-\delta)\sigstarr\fnorm{\Delta}^2
\end{equation*}
On the other hand,  by Lemma \ref{lem:Delta_mc}, we know:
\begin{align*}
\frac{1}{p}\norm{\Delta\Delta\trans}^2_{\Omega} \le &\fnorm{\Delta}^4 +  O(\sqrt{\frac{d}{p}} \cdot \frac{(\mu r)^{1.5} \cn^\star \sigstarl}{d})\fnorm{\Delta}^2
\le \fnorm{\Delta}^4 + \frac{\sigstarr}{4}\fnorm{\Delta}^2
\le  \frac{\sigstarr}{2}\fnorm{\Delta}^2
\end{align*}
% \jccomment{remove $(1+\delta)$, use symmetric}
this gives the summation:
\begin{align*}
&\Delta\Delta\trans:\H:\Delta\Delta\trans - 3(\M - \M^\star):\H:(\M - \M^\star) \\
=& \Delta\Delta\trans:\H:\Delta\Delta\trans - 3(\U^\star\Delta\trans + \Delta\U^\star{}\trans + \Delta\Delta\trans):\H:(\U^\star\Delta\trans + \Delta\U^\star{}\trans+\Delta\Delta\trans) \\
\le& -12 (\U^\star\Delta\trans:\H:\Delta\Delta\trans + \U^\star\Delta\trans:\H:\U^\star\Delta\trans)\\
% &-(\U^\star\Delta\trans:\H:\Delta\Delta\trans + \U^\star\Delta\trans:\H:\U^\star\Delta\trans) \\
\le& -\frac{12}{p} (\norm{\U^\star\Delta\trans}^2_{\Omega} - \norm{\U^\star\Delta\trans}_{\Omega}\norm{\Delta\Delta\trans}_{\Omega})
= -\frac{12}{p} \norm{\U^\star\Delta\trans}_{\Omega}(\norm{\U^\star\Delta\trans}_{\Omega} - \norm{\Delta\Delta\trans}_{\Omega}) \\
\le& -12\sqrt{1-\delta}(\sqrt{1-\delta} - \sqrt{2/3})\sigstarr\fnorm{\Delta}^2
\le-1.2\sigstarr\fnorm{\Delta}^2
\end{align*}
The last inequality is by choosing $p \ge \Omega(\frac{\mu^3 r^4 (\cn^\star)^4 \log d}{d})$ with large enough constant factor, we have small $\delta$.

~

\noindent\textbf{Case 2:} $\fnorm{\Delta}^2 \ge \sigstarr/4$.

In this case $\Delta$ is large, by Lemma \ref{lem:global_mc} with high probability, our choice of $p$ gives:
\begin{align*}
\frac{1}{p}\norm{\Delta\Delta\trans}^2_{\Omega} \le& \fnorm{\Delta\Delta\trans}^2 + O\left(\frac{dr\log d}{p}\norm{\Delta\Delta\trans}^2_{\infty}  + \sqrt{\frac{dr\log d}{p}} \fnorm{\Delta\Delta\trans}\norm{\Delta\Delta\trans}_{\infty}\right) \\
\le& \fnorm{\Delta\Delta\trans}^2 +O\left(\frac{dr\log d}{p} \cdot \frac{(\mu r)^{3} (\cn^\star \sigstarl)^2}{d^2} 
+ \sqrt{\frac{dr\log d}{p} \cdot \frac{(\mu r)^{3} (\cn^\star \sigstarl)^2}{d^2}} \fnorm{\Delta}^2 \right) \\
\le& \fnorm{\Delta\Delta\trans}^2 + \frac{(\sigstarr)^2}{80} + \frac{\sigstarr}{20}\fnorm{\Delta}^2 \le \fnorm{\Delta\Delta\trans}^2 +0.1\sigstarr\fnorm{\Delta}^2
\end{align*}
Again by Lemma \ref{lem:global_mc} with high probability
\begin{align*}
\frac{1}{p}\norm{\M - \M^\star}^2_{\Omega} \ge& \fnorm{\M-\M^\star}^2 - O\left(\frac{dr\log d}{p}\norm{\M-\M^\star}^2_{\infty}  + \sqrt{\frac{dr\log d}{p}} \fnorm{\M-\M^\star}\norm{\M-\M^\star}_{\infty}\right) \\
\ge& \fnorm{\M-\M^\star}^2 -O\left(\frac{dr\log d}{p} \cdot \frac{(\mu r)^{3} (\cn^\star \sigstarl)^2}{d^2} 
+ \sqrt{\frac{dr\log d}{p} \cdot \frac{(\mu r)^{3} (\cn^\star \sigstarl)^2}{d^2}} \fnorm{\M-\M^\star}\right) \\
\ge& \fnorm{\M-\M^\star}^2 - \frac{(\sigstarr)^2}{80} - \frac{\sigstarr}{20}\fnorm{\M-\M^\star}
\ge 0.95\fnorm{\M-\M^\star}^2 -0.1\sigstarr\fnorm{\Delta}^2
\end{align*}
This gives:
\begin{align*}
&\Delta\Delta\trans:\H:\Delta\Delta\trans - 3(\M - \M^\star):\H:(\M - \M^\star) \\
\le& \fnorm{\Delta\Delta\trans}^2 +0.1\sigstarr\fnorm{\Delta}^2 - 3(0.95\fnorm{\M-\M^\star}^2 -0.1\sigstarr\fnorm{\Delta}^2) \\
\le& -0.85\fnorm{\M-\M^\star}^2 + 0.4\sigstarr\fnorm{\Delta}^2
\le -0.3\sigstarr\fnorm{\Delta}^2
\end{align*}
where the last step is by Lemma \ref{lem:bound}. This finishes the proof.
\end{proof}

Finally, as in step 3 of our framework, we need to bound the contribution from the regularizer to Equation \eqref{eq:main}.

\begingroup
\def\thetheorem{\ref{lem:reg_mc}}
\begin{lemma}
By choosing $\alpha^2 = \Theta(\frac{\mu r\sigstarl}{d}) $ and $\lambda \alpha^2 \le  O(\sigstarr)$, we have:
\begin{equation*}
\frac{1}{4}[\Delta : \hess Q(\U) : \Delta - 4 \la \grad Q(\U), \Delta \ra] \le 0.1\sigstarr \fnorm{\Delta}^2 
\end{equation*}
\end{lemma}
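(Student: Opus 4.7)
The plan is direct algebraic manipulation of the expression $E := \Delta : \hess Q(\U) : \Delta - 4 \la \grad Q(\U), \Delta \ra$ using the explicit gradient and Hessian formulas of Lemma~\ref{lem:calc_reg_mc}. I would introduce the shorthand $u_i = \e_i^\top \U$, $y_i = \|u_i\|$, $a_i = (y_i - \alpha)_+$, $\delta_i = \e_i^\top \Delta$, $w_i = \e_i^\top \U^\star\mR$, and $v_i = \la u_i, w_i\ra$. The first key input is $\|w_i\| \le \alpha$, which follows from the $\mu$-incoherence of $\M^\star = \U^\star(\U^\star)^\top$ together with the choice $\alpha^2 = \Theta(\mu r \sigstarl/d)$; this also gives $a_i \le \|\delta_i\|$ whenever $a_i > 0$ (via $\|\delta_i\| \ge \|u_i\| - \|w_i\| \ge y_i - \alpha$), so in particular $\sum_i a_i^2 \le \|\Delta\|_F^2$, and $|v_i| \le \alpha y_i$ by Cauchy--Schwarz.

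Using the decomposition $u_i = \delta_i + w_i$, I would derive two small but crucial identities:
\begin{equation*}
\la u_i, \delta_i \ra = y_i^2 - v_i, \qquad y_i^2 \|\delta_i\|^2 - \la u_i, \delta_i \ra^2 = y_i^2 \|w_i\|^2 - v_i^2.
\end{equation*}
Substituting these into the formulas of Lemma~\ref{lem:calc_reg_mc} for $\grad Q$ and $\hess Q$ and collecting like terms, $E$ becomes a polynomial in $a_i, y_i, v_i$ which I would group by powers of $v_i$. The $v_i^0$ piece contributes the structurally important sum $4\lambda \sum_i a_i^2 y_i (4\alpha - y_i)$ (from the dominant gradient and Hessian pieces cancelling against each other), which expands via $y_i = a_i + \alpha$ to $-4\lambda \sum_i a_i^4 + 8 \lambda \alpha \sum_i a_i^3 + 12 \lambda \alpha^2 \sum_i a_i^2$. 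The $v_i^1$ and $v_i^2$ pieces can each be bounded in magnitude using $|v_i| \le \alpha y_i$; after expanding $y_i = a_i + \alpha$, they contribute at most $C_1 \lambda \alpha \sum a_i^3 + C_2 \lambda \alpha^2 \sum a_i^2$ for explicit absolute constants $C_1, C_2$.

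The final step is to absorb the remaining $a_i^4$ and $\alpha a_i^3$ terms into $\alpha^2 a_i^2$. The combined bound takes the form $E \le -4\lambda\sum a_i^4 + C\lambda\alpha \sum a_i^3 + C' \lambda \alpha^2 \sum a_i^2$; applying the pointwise AM--GM inequality $C \alpha a^3 \le 4 a^4 + (C^2/16) \alpha^2 a^2$ eliminates the $a^4$ term entirely, leaving $E \le O(\lambda \alpha^2) \sum a_i^2 \le O(\lambda \alpha^2) \|\Delta\|_F^2$ by the bound $\sum a_i^2 \le \|\Delta\|_F^2$ established above. The hypothesis $\lambda \alpha^2 \le O(\sigstarr)$ with a sufficiently small absolute constant then yields $\frac{1}{4} E \le 0.1 \sigstarr \|\Delta\|_F^2$.

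The main obstacle is resisting the naive bound $|\la u_i, \delta_i\ra| \le y_i \|\delta_i\|$, which would introduce uncontrolled $\|\delta_i\|^4$ terms that cannot be dominated without a row-wise bound on $\Delta$. The route through the substitution $u_i = \delta_i + w_i$ is essential because it exposes the exact cancellation of the dominant $y_i^4$ and $y_i^2 v_i$ pieces between the gradient and the Hessian contributions, after which the only positive residue depends on $\|w_i\|$, i.e., on $\alpha$ rather than on $\|\delta_i\|$. Once this cancellation is in place, the final AM--GM step is routine and the constants can be tracked to hit the target $0.1\sigstarr \|\Delta\|_F^2$.
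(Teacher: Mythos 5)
Your argument is correct and takes a genuinely different route from the paper's proof. The paper writes $\tfrac14 E = A_1 + A_2 + A_3$ (two Hessian summands, one gradient summand), requires the stronger condition $\alpha > C\,\max_i\|\e_i^\top\U^\star\|$ for a large absolute constant $C$ (so that $\e_i^\top\U$ and $\e_i^\top\Delta$ are multiplicatively close whenever $a_i>0$), proves $A_1 + 0.1A_3 \le 0$ outright, and then handles $A_2 + 0.9A_3$ by a case split on whether $\|\e_i^\top\U\| \ge 9\alpha$: the large-row case is non-positive, and the remaining annulus $I$ contributes $O(\lambda|I|\alpha^4)$, which is compared to $\sigstarr|I|\alpha^2 \le \sigstarr\fnorm{\Delta}^2$. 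Your route instead substitutes $u_i = \delta_i + w_i$, exploits the exact identities $\la u_i,\delta_i\ra = y_i^2 - v_i$ and $y_i^2\|\delta_i\|^2 - \la u_i,\delta_i\ra^2 = y_i^2\|w_i\|^2 - v_i^2$, and turns $E$ into an explicit polynomial in $(a_i, y_i, v_i)$ that you bound termwise using only $\|w_i\| \le \alpha$ and $a_i \le \|\delta_i\|$, killing the dangerous $a_i^4$ term by one pointwise AM--GM. This buys two concrete things: there is no case split, and you only need $\alpha$ to dominate the incoherence scale $\max_i\|\e_i^\top\U^\star\|$ by a factor of $1$ rather than by a large constant $C$. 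The cost is a more delicate bookkeeping of the polynomial expansion (you also need the residual $v_i^0$ term $4\lambda\sum a_i^3\|w_i\|^2/y_i$, which you correctly absorb via $\|w_i\|\le\alpha$, $a_i\le y_i$). Both proofs are sound; yours is arguably tighter in the constants it asks of $\alpha$.

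One small heads-up: the displayed formula for $\mZ:\hess Q(\U):\mZ$ in Lemma~\ref{lem:calc_reg_mc} has a typo (the numerator should read $(\e_i^\top\U\mZ^\top\e_i)^2$, not $(\e_i^\top\U\mZ^\top\e_i)$); your expansion implicitly uses the corrected version, as does the paper's own proof of Lemma~\ref{lem:reg_mc}, so this does not affect the validity of your argument.
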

\addtocounter{theorem}{-1}
\endgroup

% \begin{lemma}\label{lem:reg_mc}
% By choosing $\alpha^2 = \Theta(\frac{\mu r\sigstarl}{d}) $ and $\lambda \alpha^2 \le  O(\sigstarr)$, we have:
% \begin{equation*}
% \frac{1}{4}[\Delta : \hess Q(\U) : \Delta - 4 \la \grad Q(\U), \Delta \ra] \le c_2 \lambda |I| \alpha^4 \le 0.1\sigstarr \fnorm{\Delta}^2 
% \end{equation*}
% where set $I = \{i| \alpha \le \norm{\e_i\trans\U}  \le 9\alpha\}$.
% \end{lemma}

\begin{proof}
By Lemma \ref{lem:calc_reg_mc}, the contribution from the regularizer to Equation \eqref{eq:main} can be calculated as follows:

\begin{align*}
\frac{1}{4}[\Delta : \hess Q(\U) : \Delta - 4 \la \grad Q(\U), \Delta \ra]
= &\lambda \sum_{i=1}^d(\norm{\e_i\trans \U} - \alpha)^3_{+} \frac{\norm{\e_i\trans \U}^2\norm{\e_i\trans\Delta}^2
- (\e_i\trans \U\Delta\trans\e_i)^2}{\norm{\e_i\trans \U}^3} \\
&+ 3\lambda \sum_{i=1}^d(\norm{\e_i\trans \U} - \alpha)^2_{+} \left(\frac{\e_i\trans \U\Delta\trans\e_i}{\norm{\e_i\trans\U}}\right)^2\\
&- 4\lambda \sum_{i=1}^d(\norm{\e_i\trans \U} - \alpha)^3_{+}\frac{\e_i\trans \U\Delta\trans\e_i}{\norm{\e_i\trans\U}}
\end{align*}
Denote three terms in RHS to be $A_1, A_2, A_3$. Since $\norm{\e_i\trans \U - \e_i\trans\Delta} = \norm{\e_i\trans \U^\star\mR} =\norm{\e_i\trans \U^\star}
\le \sqrt{\mu \frac{r}{d} \sigstarl}$. By choosing $\alpha > C \sqrt{\mu \frac{r}{d} \sigstarl}$ for some large constant $C$. Thus, when $\norm{\e_i\trans \U} - \alpha>0$, we have $\e_i\trans \U \approx \e_i\trans \Delta$, strictly speaking:
\begin{equation*}
\e_i\trans \U\Delta\trans\e_i  = \e_i\trans \U(\U - \U^\star \mR)\trans\e_i \ge \norm{\e_i\trans\U}^2 - \norm{\e_i\trans \U}\norm{\e_i\trans \U^\star}
\ge (1-\frac{1}{C})\norm{\e_i\trans\U}^2
\end{equation*}
and 
\begin{equation*}
\norm{\e_i\trans \U}\norm{\e_i\trans\Delta} \le \norm{\e_i\trans \U}(\norm{\e_i\trans \U} + \norm{\e_i\trans \U^\star})
\le (1+\frac{1}{C})\norm{\e_i\trans\U}^2
\end{equation*}
Now we bound the summation $A_1 + A_2 + A_3$ by seperately bounding $A_1 + 0.1A_3$ and $A_2 + 0.9A_3$. First, we have:
\begin{equation*}
A_1 + 0.1 A_3 \le \lambda \sum_{i=1}^d(\norm{\e_i\trans \U} - \alpha)^3_{+}\norm{\e_i\trans \U} \left((1+\frac{1}{C})^2 - (1-\frac{1}{C})^2-0.4(1-\frac{1}{C})\right)<0
\end{equation*}
Then, the remaining part is:
\begin{align*}
A_2 + 0.9 A_3 = 3\lambda\sum_{i=1}^d(\norm{\e_i\trans \U} - \alpha)^2_{+} \frac{\e_i\trans \U\Delta\trans\e_i}{\norm{\e_i\trans\U}}\left[\frac{\e_i\trans \U\Delta\trans\e_i}{\norm{\e_i\trans\U}} - 1.2(\norm{\e_i\trans \U} - \alpha)_{+}  \right]
\end{align*}
We further denote $i$-th summand in RHS as $A^{(i)}_2 + 0.9 A^{(i)}_3$, and 
 decompose this term as $A_2 + 0.9 A_3 = \sum_{i=1}^d (A^{(i)}_2 + 0.9 A^{(i)}_3)$.

~

\noindent Case 1: for $i$ such that $\norm{\e_i\trans\U} \ge 9\alpha$, and $C\ge100$, we have:
\begin{align*}
A^{(i)}_2 + 0.9 A^{(i)}_3 \le 3\lambda(\norm{\e_i\trans \U} - \alpha)^2_{+} \frac{\e_i\trans \U\Delta\trans\e_i}{\norm{\e_i\trans\U}}\left[(1+\frac{1}{C})\norm{\e_i\trans \U} - 1.2(\norm{\e_i\trans \U} - \alpha)_{+}  \right] \le 0
\end{align*}
This is because in the above product, we have $(\norm{\e_i\trans \U} - \alpha)^2_{+} >0$, $\frac{\e_i\trans \U\Delta\trans\e_i}{\norm{\e_i\trans\U}} \ge (1-\frac{1}{C})\norm{\e_i\trans\U}\ge 0$ and $\left[(1+\frac{1}{C})\norm{\e_i\trans \U} - 1.2(\norm{\e_i\trans \U} - \alpha)_{+}  \right] \le 0$.

~

\noindent Case 2: for $i$ such that $\alpha<\norm{\e_i\trans\U} < 9\alpha$, we call this set $I = \{i~|~\alpha<\norm{\e_i\trans\U} < 9\alpha\} $:
\begin{align*}
\sum_{i\in I} A^{(i)}_2 + 0.9 A^{(i)}_3 
\le 3\cdot 10^4 \times \lambda |I| \alpha^4 
\end{align*}

In sum, this proves there exists some large constant $c_2$ so the regularization term:
\begin{align*}
\frac{1}{4}[\Delta : \hess Q(\U) : \Delta - 4 \la \grad Q(\U), \Delta \ra] \le c_2 \lambda |I| \alpha^4
\end{align*}
Finally, by the property of set $I$:
\begin{equation*}
\sigstarr \fnorm{\Delta}^2 \ge \sigstarr \sum_{i\in I} \norm{\e_i\trans \Delta}^2 \ge \sigstarr |I| \alpha^2
\end{equation*}
Therefore, as long as $\lambda \alpha^2 \le  \sigstarr/c_3$ for some large absolute constant $c_3$ (which is satisfied by our choice of $\lambda$), we have
\begin{equation*}
[\Delta : \hess Q(\U) : \Delta - 4 \la \grad Q(\U), \Delta \ra]  \le 0.1\sigstarr \fnorm{\Delta}^2 
\end{equation*}
\end{proof}

Combining these lemmas, we are now ready to prove the main theorem for symmetric matrix completion.

\begingroup
\def\thetheorem{\ref{thm:main_mc}}
\begin{theorem}
When sample rate $p \ge \Omega(\frac{\mu^3 r^4 (\cn^\star)^4 \log d}{d})$, by choosing $\alpha^2 = \Theta(\frac{\mu r\sigstarl}{d})$ and $\lambda = \Theta(\frac{d}{\mu r \cn^\star})$. 
Then with probability at least $1-1/\poly(d)$, for matrix completion objective \eqref{eq:completion-symmetric} we have 1) all local minima satisfy $\U\U^\top = \M^\star$ 2) the function is $(\epsilon, \Omega(\sigstarr),O( \frac{\epsilon}{\sigstarr}))$-strict saddle
for polynomially small $\epsilon$. 
% Then with probability at least $1-1/\poly(d)$, matrix completion objective function satisfies
% 1) all local minima are global minima; 2) $(\epsilon, 0.1\sigstarr, \frac{40\epsilon}{\sigstarr})$-strict saddle
% for $\epsilon^2 \le \frac{(\mu r)^{2.5} \cn^\star \sigstarl{}^3}{d}$.
\end{theorem}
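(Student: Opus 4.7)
The plan is to plug the three preparatory lemmas (Lemma~\ref{lem:incoherence}, Lemma~\ref{lem:step2_mc}, Lemma~\ref{lem:reg_mc}) into the main identity of Lemma~\ref{lem:main} and then read off both the no-spurious-local-min conclusion and the strict-saddle conclusion from the resulting inequality. Throughout the argument I condition on the $1-1/\poly(d)$ probability event on which Lemmas~\ref{lem:incoherence}, \ref{lem:step2_mc}, and the concentration inequalities for $\|\cdot\|_\Omega$ all hold simultaneously; a union bound (valid since the sample rate $p$ is chosen with a large enough constant factor) preserves this high-probability guarantee.

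Fix a polynomially small $\epsilon>0$ and any candidate point $\U\in\R^{d\times r}$ with $\fnorm{\grad f(\U)}\le\epsilon$. Let $\Delta$ be the aligned difference from Definition~\ref{def:delta}. First I would invoke Lemma~\ref{lem:incoherence} to conclude $\max_i\|\e_i\trans\U\|^2 = O((\mu r)^{1.5}\cn^\star\sigstarl/d)$, which is precisely the row-norm bound needed as a hypothesis for Lemma~\ref{lem:step2_mc}. Then I apply Lemma~\ref{lem:main} to expand
\begin{align*}
\Delta:\hess f(\U):\Delta =\;& \bigl[\Delta\Delta\trans:\H:\Delta\Delta\trans - 3(\M-\M^\star):\H:(\M-\M^\star)\bigr] \\
 &+ 4\la\grad f(\U),\Delta\ra + \bigl[\Delta:\hess Q(\U):\Delta - 4\la\grad Q(\U),\Delta\ra\bigr].
\end{align*}
The first bracket is at most $-c_1\sigstarr\fnorm{\Delta}^2$ by Lemma~\ref{lem:step2_mc}, the gradient term is at most $4\epsilon\fnorm{\Delta}$ by Cauchy--Schwarz, and the regularizer bracket is at most $c_2\sigstarr\fnorm{\Delta}^2$ by Lemma~\ref{lem:reg_mc}, where the parameter choices $\alpha^2=\Theta(\mu r\sigstarl/d)$ and $\lambda=\Theta(d/(\mu r\cn^\star))$ are tuned precisely so that $c_2<c_1$. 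Combining these gives the master inequality
\[
\Delta:\hess f(\U):\Delta \;\le\; -c_3\sigstarr\fnorm{\Delta}^2 + 4\epsilon\fnorm{\Delta}
\]
for some absolute constant $c_3>0$.

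From the master inequality the two conclusions fall out immediately. For the strict-saddle property, if $\fnorm{\Delta}\ge 8\epsilon/(c_3\sigstarr)$ then the right-hand side is at most $-\tfrac{c_3}{2}\sigstarr\fnorm{\Delta}^2$, so the unit vector $\Delta/\fnorm{\Delta}$ witnesses $\lambda_{\min}(\hess f(\U))\le -\tfrac{c_3}{2}\sigstarr=-\Omega(\sigstarr)$; otherwise the point is $O(\epsilon/\sigstarr)$-close (in $\fnorm{\cdot}$) to the set of global optima, and by Lemma~\ref{lem:bound} this translates into the required closeness in the parameterization. For the local/global claim, specialize to $\epsilon=0$: any stationary point with $\fnorm{\Delta}>0$ has strictly negative curvature along $\Delta$, hence cannot be a local minimum, so every local minimum satisfies $\Delta=0$ and thus $\U\U\trans=\M^\star$ by Lemma~\ref{lem:bound}.

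The main obstacle is bookkeeping the constants and the coupling between the auxiliary lemmas: Lemma~\ref{lem:step2_mc} is valid only after the row-norm bound from Lemma~\ref{lem:incoherence} is in force, and Lemma~\ref{lem:reg_mc} must give back a constant strictly smaller than the one produced by Lemma~\ref{lem:step2_mc}. The choices $\alpha^2=\Theta(\mu r\sigstarl/d)$ and $\lambda=\Theta(d/(\mu r\cn^\star))$ make $\lambda\alpha^2=\Theta(\sigstarr)$, which is exactly the scaling that keeps the regularizer large enough to enforce incoherence yet small enough not to overwhelm the negative Hessian contribution. The sample complexity $p\ge\Omega(\mu^3 r^4(\cn^\star)^4\log d/d)$ is determined by the concentration statements inside Lemma~\ref{lem:step2_mc} (specifically the two regimes $\fnorm{\Delta}^2\lessgtr\sigstarr/4$), so verifying that all the high-probability events hold on a single event of probability $1-1/\poly(d)$ via a union bound is the only delicate remaining step.
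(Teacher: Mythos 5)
Your proof follows essentially the same route as the paper: condition on the high-probability event, use Lemma~\ref{lem:incoherence} to establish the row-norm bound feeding into Lemma~\ref{lem:step2_mc}, bound the three pieces of Lemma~\ref{lem:main}'s identity via Lemmas~\ref{lem:step2_mc} and \ref{lem:reg_mc} plus Cauchy--Schwarz on the gradient term, then read off the strict-saddle constants and specialize to $\epsilon=0$ for the global-optimality claim. The only cosmetic wrinkle is that you cite Lemma~\ref{lem:bound} to translate $\fnorm{\Delta}$-smallness into closeness to the optimum, whereas $\Delta$-smallness is already the required statement in the $\U$-parametrization; this does not affect the argument.
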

\addtocounter{theorem}{-1}
\endgroup

\begin{proof}
By Lemma \ref{lem:step2_mc}, we know
\begin{equation*}
\Delta\Delta\trans:\H:\Delta\Delta\trans - 3(\M - \M^\star):\H:(\M - \M^\star) \le -0.3 \sigstarr \fnorm{\Delta}^2
\end{equation*}

On the other hand, by Lemma \ref{lem:reg_mc}, we have the regularization term:
\begin{equation*}
[\Delta : \hess Q(\U) : \Delta - 4 \la \grad Q(\U), \Delta \ra]  \le 0.1\sigstarr \fnorm{\Delta}^2 
\end{equation*}

This means for point $\U$ with small gradient satisfying $\fnorm{\grad f(\U)} \le \epsilon$:
\begin{equation*}
\Delta : \hess f(\U) :\Delta \le -0.2\sigstarr \fnorm{\Delta}^2 + 4\epsilon \fnorm{\Delta}
\end{equation*}
That is, if $\U$ is close to $\U^\star$ (i.e. if $\fnorm{\Delta} \ge \frac{40\epsilon}{\sigstarr}$),
we have $\Delta : \hess f(\U) :\Delta \le -0.1\sigstarr \fnorm{\Delta}^2$. This proves 
$(\epsilon, 0.1\sigstarr, \frac{40\epsilon}{\sigstarr})$-strict saddle property. Take $\epsilon =0$, we know all stationary points with $\fnorm{\Delta} \neq 0$ are saddle points. This means all local minima are global minima (satisfying $\U\U\trans = \M^\star$), which finishes the proof.
% This means if $\Delta \neq 0$, it is always strict saddle point. Clearly when $\Delta = 0$, it is global minima.
\end{proof}

\subsection{Robust PCA}
% The formulation $\M = \U\U\trans$:
% \begin{equation}
% f(\U) = \min_{\S\in \mathcal{S}_{\gamma\alpha}}\norm{\M + \S - \M^\star - \S^\star}_F^2 \label{eq:rpca}
% \end{equation}
% Where true matrix $\S^\star$ has at most $\alpha$ fraction of non-zero entries each column and each row, and $\M^\star$ is rank $r$ and $\mu$-incoherent, that is:
% \begin{equation*}
% \max_i \norm{\e_i\trans \X^\star}^2 \le \mu \frac{r}{d}
% \end{equation*}
% Where $\X^\star \in \R^{d\times r}$ is the left singular vectors of $\M^\star$. We also denote sparse set 
% \begin{equation*}
% \mathcal{S}_{\gamma\alpha} = \left\{ \S \in \R^{d\times d} ~|~ \S \text{~has at most $\gamma \alpha$ fraction of non-zero entries each column/row, and~} \norm{\S}_\infty \le 3\frac{\mu r \sigstarl}{d}\right\}.
% \end{equation*}

% Denote $\sigstarl, \sigstarr, \cn^\star$ to be the top/min singular value and the condition number of $\M^\star$. Without loss the generality, we can assume $\norm{\S^\star}_\infty \le 2 \frac{\mu r \sigstarl}{d}$

For robust PCA, the first crucial step is to analyze the matrix factorization problem when target matrix is not necessarily low rank (that happens if we fix $\S$).

\begingroup
\def\thetheorem{\ref{lem:mf_strictsaddle}}
\begin{lemma}
Let $\A \in \R^{d\times d}$ be a symmetric PSD matrix, and matrix factorization objective to be:
\begin{equation*}
f(\U) = \fnorm{\U\U\trans - \A}^2
\end{equation*}
where $\sigma_r(\A)\ge 15\sigma_{r+1}(\A)$.
then 1) all local minima satisfies $\U\U\trans = \proj_r(\A)$ (best rank-$r$ approximation),
2) objective is $(\epsilon, \Omega(\sigstarr), O(\frac{\epsilon}{\sigstarr}))$-strict saddle.
\end{lemma}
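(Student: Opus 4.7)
My plan is to apply the framework of Lemma~\ref{lem:main} with the residual $\E := \A - \proj_r(\A)$ playing the role of a ``perturbation''. Let $\M^\star := \proj_r(\A)$, so $\E$ is symmetric PSD with $\|\E\| = \sigma_{r+1}(\A) \le \sigstarr/15$, and crucially $\E\M^\star = 0$ since the two are supported on orthogonal eigenspaces of $\A$. Write $\M^\star = \U^\star (\U^\star)^\top$, $\M = \U\U^\top$, and define $\Delta = \U - \U^\star\mR$ as in Definition~\ref{def:delta}. Then $f(\U) = \fnorm{\M - \M^\star - \E}^2$ fits the form~\eqref{eq:symmetricnonconvexobj} with $\H = 2\I$ and ``regularizer-like'' term $Q(\U) = -2\langle \M, \E\rangle + \text{const}$ arising from $\E$.

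Next, directly expanding $\Delta : \hess f(\U):\Delta$ via the identity $\U\Delta^\top + \Delta\U^\top = (\M-\M^\star) + \Delta\Delta^\top$ (a short calculation mirroring Lemma~\ref{lem:main}) gives
\begin{equation*}
\Delta : \hess f(\U) : \Delta - 4\langle \grad f(\U), \Delta\rangle = -6\fnorm{\M-\M^\star}^2 + 2\fnorm{\Delta\Delta^\top}^2 + 4\langle \E,\Delta\Delta^\top\rangle + 8\langle \E,\M-\M^\star\rangle.
\end{equation*}
The first two terms form the clean matrix-factorization part; the last two encode the effect of $\E$ and must be dominated.

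The key step is to bound both $\E$-terms by $\|\E\|\,\fnorm{\Delta}^2$. Since $\E \succeq 0$, $\langle \E,\Delta\Delta^\top\rangle = \tr(\Delta^\top\E\Delta) \le \|\E\|\,\fnorm{\Delta}^2$. The delicate estimate is $\langle \E,\M-\M^\star\rangle$: the naive bound $\|\E\|\,\fnorm{\U}^2$ is useless since $\fnorm{\U}$ can be arbitrarily large. Instead I would use subspace structure --- let $\projII$ be orthogonal projection onto $\text{range}(\M^\star)$; since $\projII\U^\star = \U^\star$ we have $\projT\U = \projT\Delta$, and since $\E = \projT\A\projT$,
\begin{equation*}
\langle \E,\M-\M^\star\rangle = \langle \E,\projT\M\projT\rangle = \tr\bigl((\projT\Delta)^\top\E(\projT\Delta)\bigr) \le \|\E\|\,\fnorm{\projT\Delta}^2 \le \|\E\|\,\fnorm{\Delta}^2.
\end{equation*}

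Combining with Lemma~\ref{lem:bound} (which gives $-6\fnorm{\M-\M^\star}^2 + 2\fnorm{\Delta\Delta^\top}^2 \le -2\fnorm{\M-\M^\star}^2 \le -4(\sqrt{2}-1)\sigstarr\fnorm{\Delta}^2$) and the gap $\|\E\| \le \sigstarr/15$, I obtain at any $\U$ with $\fnorm{\grad f(\U)} \le \epsilon$,
\begin{equation*}
\Delta : \hess f(\U) : \Delta \le \bigl(-4(\sqrt{2}-1) + 12/15\bigr)\sigstarr\fnorm{\Delta}^2 + 4\epsilon\fnorm{\Delta} \le -\Omega(\sigstarr)\fnorm{\Delta}^2 + 4\epsilon\fnorm{\Delta}.
\end{equation*}
This is $\le -\Omega(\sigstarr)\fnorm{\Delta}^2$ once $\fnorm{\Delta} \ge C\epsilon/\sigstarr$, yielding the claimed $(\epsilon,\Omega(\sigstarr),O(\epsilon/\sigstarr))$-strict saddle property; setting $\epsilon = 0$ forces $\Delta = 0$ at every critical point, so $\U\U^\top = \M^\star = \proj_r(\A)$ at every local minimum. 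The main obstacle is controlling $\langle \E,\M-\M^\star\rangle$: any bound scaling with $\fnorm{\U}^2$ would leave an uncontrollable residue. The subspace observation $\projT\U = \projT\Delta$, combined with $\E$ being supported on $\text{range}(\projT)$, replaces the bad $\fnorm{\U}^2$ bound by $\fnorm{\Delta}^2$, and the gap constant $15$ is tuned just large enough that $12\|\E\|/\sigstarr < 4(\sqrt{2}-1)$.
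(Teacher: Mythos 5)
Your proposal is correct and takes essentially the same route as the paper. Both you and the paper treat the tail $\E = \A - \proj_r(\A)$ as a perturbation term $Q(\U)$ inside the main-lemma decomposition, and both rely on the same key observation \--- that range$(\E)$ is orthogonal to range$(\U^\star)$, so $\projT\U = \projT\Delta$ (in the paper's phrasing, $\X\trans\U = \X\trans(\U-\U^\star\mR)$ where $\X\D\X\trans$ is the SVD of the residual) \--- which is exactly what lets the cross term $\la\E, \M-\M^\star\ra$ be bounded by $\norm{\E}\fnorm{\Delta}^2$ rather than by $\norm{\E}\fnorm{\U}^2$. The only cosmetic differences are normalization factors (your $12\norm{\E}$ vs.\ the paper's $6\norm{\S}$, stemming from a factor-of-two convention in $\H$) and that you bound $\la\E,\Delta\Delta\trans\ra$ and $\la\E,\M-\M^\star\ra$ separately, while the paper first collapses all residual contributions into a single $6\la\U\U\trans,\S\ra$ term before bounding.
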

\addtocounter{theorem}{-1}
\endgroup

\begin{proof}
Denote $\M^\star = \proj_r(\A)$ to be the top $r$ part and $\S = \A - \M^\star$ to be the remaining part.
In our framework, we can also view this remaining part as regularization term. That is:
\begin{equation*}
\M:\H:\M = \fnorm{\M}^2 \quad \text{and} \quad Q(\U) = \la \M^\star-\U\U\trans, \S\ra + \frac{1}{2}\fnorm{\S}^2
\end{equation*}
Moreover, since the eigenspace of $\M^\star$ is perpendicular to $\S$, we have:
\begin{align*}
[\Delta : \hess Q(\U) : \Delta - 4 \la \grad Q(\U), \Delta \ra] = -2 \la\Delta\Delta\trans, \S\ra + 4\la \U\Delta + \Delta\U, \S\ra
= 6\la \U\U\trans, \S\ra \le 6\norm{\S}\fnorm{\Delta}^2
\end{align*}
The last step is because suppose $\X\D\X\trans$ is the SVD of $\S$, then $\la \U\U\trans, \S\ra \le \norm{\D}\fnorm{\X\trans \U}^2
= \norm{\D}\fnorm{\X\trans (\U-\U^\star)}^2 \le \norm{\S}\fnorm{\Delta}^2$.

Therefore, for point $\U$ with small gradient satisfying $\fnorm{\grad f(\U)} \le \epsilon$:
\begin{align*}
\Delta : \hess f(\U) :\Delta
=&  \Delta\Delta\trans:\H:\Delta\Delta\trans - 3(\M - \M^\star):\H:(\M - \M^\star) +4\la \grad f(\U), \Delta\ra \\
&+ [\Delta : \hess Q(\U) : \Delta - 4 \la \grad Q(\U), \Delta \ra]\\
\le & \fnorm{\Delta\Delta\trans}^2 - 3\fnorm{\M-\M^\star}^2
+ 4\epsilon\fnorm{\Delta} + 6\norm{\S}\fnorm{\Delta}^2\\
\le &-\fnorm{\M-\M^\star}^2 + 4\epsilon\fnorm{\Delta} + 6\norm{\S}\fnorm{\Delta}^2
\le -0.4\sigstarr\fnorm{\Delta}^2+ 4\epsilon\fnorm{\Delta}
\end{align*}
The second last inequality is due to Lemma \ref{lem:aux_delta2} that $\fnorm{\Delta\Delta\trans}^2 \le 2 \fnorm{\M-\M^\star}^2$, and last inequality is due to Lemma \ref{lem:aux_deltalinear} and $\norm{\S} = \lambda_{r+1}(\A) \le \lambda_{r}(\A)/15 $. This means if $\U$ is close to $\U^\star$, that is, if $\fnorm{\Delta} \ge \frac{20\epsilon}{\sigstarr}$,
we have $\Delta : \hess f(\U) :\Delta \le -0.2\sigstarr \fnorm{\Delta}^2$. This proves 
$(\epsilon, 0.2 \sigstarr, \frac{20\epsilon}{\sigstarr})$-strict saddle property. Take $\epsilon =0$, we know all stationary points with $\fnorm{\Delta} \neq 0$ are saddle points. This means all local minima are global minima (satisfying $\U\U\trans = \M^\star$), which finishes the proof.
\end{proof}

Next we need to show that if $\U$ is close to the best rank-$r$ approximation of $\M^\star+\S^\star - \S$, then it also must be close to the true $\U^\star$. The proofs of this lemma for symmetric robust PCA is almost directly followed by the arguments for asymmetric versions. Therefore we do not repeat the proofs here.

\begingroup
\def\thetheorem{\ref{lem:mf_to_rpca}}
\begin{lemma}
There is an absolute constant $c$, assume $\gamma > c$, and $\gamma\alpha\cdot\mu r\cdot (\cn^\star)^5 \le \frac{1}{c}$. Let $\U^\dag (\U^\dag)^\top$ be the best rank $r$-approximation of $\M^\star+\S^\star -\S$, where $\S$ is the minimizer as in Eq.\eqref{eq:rpca-symmetric}. Assume
$\min_{\mR^\top \mR = \mR\mR^\top = \I}\fnorm{\U-\U^\dagger\mR} \le \epsilon$.
% $\fnorm{\U - \U^\dagger} \le \epsilon$, 
% Let $\Delta$ be the difference between $\U$ and $\U^\star$
Let $\Delta$ be defined as in Definition \ref{def:delta}, then $\fnorm{\Delta} \le O(\epsilon\sqrt{\cn^\star})$ for polynomially small $\epsilon$.
\end{lemma}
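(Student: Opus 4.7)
The plan is to prove the matrix-level bound $\fnorm{\U\U\trans - \M^\star} = O(\epsilon\sqrt{\sigstarl})$, from which Lemma~\ref{lem:bound} gives $\sigstarr\fnorm{\Delta}^2 \le O(\fnorm{\U\U\trans - \M^\star}^2)$ and hence the desired $\fnorm{\Delta} \le O(\epsilon\sqrt{\cn^\star})$. I would split through the intermediate matrix $\U^\dag(\U^\dag)\trans$ via the triangle inequality and bound the two resulting pieces separately.

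The first piece, $\fnorm{\U\U\trans - \U^\dag(\U^\dag)\trans}$, is controlled by the hypothesis together with a bound on $\norm{\U^\dag}$. Since $\U^\dag(\U^\dag)\trans$ is the best rank-$r$ approximation of $\M^\star + (\S^\star - \S)$, and both $\S^\star, \S \in \mathcal{S}_{\gamma\alpha}$ have entries bounded by $O(\mu r \sigstarl / d)$, the standard sparse-matrix spectral bound gives $\norm{\S^\star - \S} = O(\gamma\alpha\mu r\sigstarl) \ll \sigstarl$ under the parameter assumption. Weyl's inequality then yields $\norm{\U^\dag}^2 \le 2\sigstarl$, and combined with $\fnorm{\U - \U^\dag\mR} \le \epsilon$ this gives $\fnorm{\U\U\trans - \U^\dag(\U^\dag)\trans} = O(\epsilon\sqrt{\sigstarl})$.

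The core step is to bound the second piece $\fnorm{\U^\dag(\U^\dag)\trans - \M^\star}$. Let $\S_\Delta := \S - \S^\star$, $\M^\dag_\Delta := \M^\star - \U^\dag(\U^\dag)\trans$, and $\mat{P} := \U^\dag(\U^\dag)\trans - \U\U\trans$ (so $\fnorm{\mat{P}} = O(\epsilon\sqrt{\sigstarl})$ by the first piece). Optimality of $\S$ over $\mathcal{S}_{\gamma\alpha}$ (comparing to $\S^\star \in \mathcal{S}_\alpha \subseteq \mathcal{S}_{\gamma\alpha}$), together with the best rank-$r$ property of $\U^\dag(\U^\dag)\trans$ (comparing to the rank-$r$ matrix $\M^\star$), yields by direct expansion the coupled inequalities
\[
\fnorm{\S_\Delta}^2 \le 2\la \S_\Delta, \M^\dag_\Delta\ra + 2\la \S_\Delta, \mat{P}\ra, \qquad \fnorm{\M^\dag_\Delta}^2 \le 2\la \S_\Delta, \M^\dag_\Delta\ra.
\]
To close these I would bound the cross inner product using that $\S_\Delta$ has at most $(\gamma+1)\alpha$-fraction nonzero entries per row and column, while $\M^\dag_\Delta$ has rank at most $2r$ and is $O(\mu)$-incoherent. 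Writing $\M^\dag_\Delta = \sum_l \sigma_l \u_l \v_l\trans$ and bounding each $|\la \S_\Delta, \u_l \v_l\trans\ra|$ by $\sqrt{(\gamma+1)\alpha\mu}\cdot\fnorm{\S_\Delta}$ (a column-wise count using incoherence of $\u_l,\v_l$ and row/column sparsity of $\S_\Delta$), followed by Cauchy--Schwarz on $\sum_l \sigma_l$, gives $|\la \S_\Delta, \M^\dag_\Delta\ra| \le O(\sqrt{\gamma\alpha\mu r})\,\fnorm{\S_\Delta}\,\fnorm{\M^\dag_\Delta}$. Plugging this into the two displayed inequalities and eliminating $\fnorm{\M^\dag_\Delta}$ contracts to $\fnorm{\S_\Delta} \le O(\fnorm{\mat{P}}) = O(\epsilon\sqrt{\sigstarl})$ and then $\fnorm{\M^\dag_\Delta} = O(\epsilon\sqrt{\sigstarl})$, provided $\gamma\alpha\mu r$ is sufficiently small, which is implied by the assumption.

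The main obstacle will be establishing the $O(\mu)$-incoherence of $\U^\dag$ needed for the cross-term bound, since $\U^\dag$ is only implicitly defined as the top-$r$ eigenvectors of $\M^\star + (\S^\star - \S)$. I would use an entrywise (or $\ell_{2,\infty}$-type) eigenvector perturbation analysis, inheriting incoherence from $\M^\star$ at the cost of large powers of $\cn^\star$. This is where the full strength of the assumption $\gamma\alpha\mu r(\cn^\star)^5 \le 1/c$ (rather than merely $\gamma\alpha\mu r \le 1/c$, which would suffice for the contraction itself) is used: the $(\cn^\star)^5$ factor arises when translating the spectral-norm perturbation bound on $\S^\star - \S$ into $\ell_{2,\infty}$-level control of the top-$r$ eigenspace under the spectral-gap condition $\sigstarr \gg \norm{\S^\star - \S}$.
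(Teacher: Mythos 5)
Your proposal is correct, and the overall skeleton (triangle inequality through $\U^\dag(\U^\dag)\trans$, then a contraction driven by a sparse-times-incoherent cross-term bound, then Lemma~\ref{lem:bound} to pass from $\|\M - \M^\star\|_F$ to $\|\Delta\|_F$) matches the paper at a high level. But the mechanism you use to close the contraction is genuinely different from the paper's, and cleaner.

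The paper (through Lemma~\ref{lem:mf_to_rpca_asym}) bounds $\|\S - \S^\star\|_F$ via Lemma~\ref{lem:Sopt_rpca_asym}, which is a support-level argument: it analyzes the explicit structure of the minimizer $\S$ using Lemma~\ref{lem:maxflow_asym} (the soft-thresholding / max-flow characterization of the optimal entries), then controls $\|\M - \M^\star\|^2_{\Omega^\star\cup\Omega}$ via Lemma~\ref{lem:entries_bound_rpca_asym}. The rank-$r$ side is handled by the Weyl-type perturbation Lemma~\ref{lem:claim1_rpca_asym}. You instead derive the two coupled inequalities
\[
\fnorm{\S_\Delta}^2 \le 2\la \S_\Delta, \M^\dag_\Delta\ra + 2\la \S_\Delta, \mat{P}\ra, \qquad \fnorm{\M^\dag_\Delta}^2 \le 2\la \S_\Delta, \M^\dag_\Delta\ra,
\]
purely from first-principles quadratic optimality comparisons (minimizer $\S$ vs.\ the feasible competitor $\S^\star$, and best rank-$r$ approximation vs.\ the rank-$r$ competitor $\M^\star$), and then close with a single sparse-incoherent cross-term estimate. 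I checked that both inequalities are correct as stated (your second inequality is essentially Lemma~\ref{lem:claim1_rpca_asym} in disguise, obtained directly rather than via Weyl), and that the elimination contracts whenever $\gamma\alpha\mu r(\cn^\star)^5$ is below a small constant, which matches the hypothesis. Your route replaces the two combinatorial lemmas (\ref{lem:maxflow_asym}, \ref{lem:Sopt_rpca_asym}) with two-line variational comparisons, at the cost of needing the cross-term $|\la\S_\Delta,\M^\dag_\Delta\ra|$ directly (which, as you note, still requires the $\ell_{2,\infty}$-incoherence of $\U^\dag$ --- this is exactly the paper's Lemma~\ref{lem:incoherence_rpca_asym}, and is the one piece of machinery you share). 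One small bookkeeping note: not all of the $(\cn^\star)^5$ comes from the eigenspace perturbation; $(\cn^\star)^4$ comes from Lemma~\ref{lem:incoherence_rpca_asym} and an extra $\cn^\star$ comes from converting $\fnorm{\Delta^\dag}$ into $\fnorm{\M^\dag_\Delta}/\sqrt{\sigstarr}$ via Lemma~\ref{lem:aux_deltalinear} when you prove the cross-term bound --- but the total exponent works out to $5$ either way, consistent with the stated assumption.
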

\addtocounter{theorem}{-1}
\endgroup

\begin{proof}
The proof follows from the same argument as the proof of Lemma \ref{lem:mf_to_rpca_asym}.
\end{proof}

Combining these two lemmas, it is not hard to show to main result for Robust PCA.

\begingroup
\def\thetheorem{\ref{thm:rpca-symmetric-main}}
\begin{theorem}
There is an absolute constant $c$, if $\gamma > c$, and $\gamma\alpha\cdot\mu r\cdot (\cn^\star)^5 \le \frac{1}{c}$ holds, for objective function Eq.\eqref{eq:rpca-symmetric} we have 1) all local minima satisfies $\U\U\trans = \M^\star$; 2) objective function is 
$(\epsilon, \Omega(\sigstarr), O(\frac{\epsilon \sqrt{\cn^\star}}{\sigstarr}))$-pseudo strict saddle for polynomially small $\epsilon$.% \le 0.01\sqrt{\sigstarl}\sigstarr \min\{1, (\cn^\star)^2\sqrt{\frac{\mu r}{d}}\}$.
\end{theorem}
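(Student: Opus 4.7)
The plan is to leverage the fact that, once $\S$ is fixed to the minimizer in the definition of $f$, the residual optimization over $\U$ becomes the pure matrix factorization problem studied in Lemma~\ref{lem:mf_strictsaddle}. Concretely, for each $\U$ let $\S(\U)\in\mathcal{S}_{\gamma\alpha}$ denote the minimizer of $\frac12\|\U\U\trans+\S-\M_o\|_F^2$, and define the upper-bounding function
\begin{equation*}
g_{\U}(\U') := \tfrac{1}{2}\|\U'\U'{}\trans + \S(\U) - \M_o\|_F^2.
\end{equation*}
By construction, $g_{\U}(\U')\ge f(\U')$ for every $\U'$ and $g_{\U}(\U)=f(\U)$; moreover $g_{\U}$ is the matrix factorization objective with target $\A(\U):=\M^\star+\S^\star-\S(\U)$, since $\M_o=\M^\star+\S^\star$. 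This is the precise structure required by Definition~\ref{def:pseudo_strict_saddle}, and it lets us transfer gradient and Hessian information about $f$ at $\U$ to information about $g_{\U}$ at $\U$.

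Next I would verify that Lemma~\ref{lem:mf_strictsaddle} applies to $g_{\U}$, i.e.\ that the spectral gap condition $\sigma_r(\A(\U))\ge 15\sigma_{r+1}(\A(\U))$ holds. Since $\sigma_r(\M^\star)=\sigstarr$ and $\sigma_{r+1}(\M^\star)=0$, by Weyl's inequality it is enough to show $\|\S^\star-\S(\U)\|\le \sigstarr/c'$ for some absolute constant. Both $\S^\star$ and $\S(\U)$ lie in $\mathcal{S}_{\gamma\alpha}$, whose members have at most $\gamma\alpha$ fraction of nonzeros per row and column and entries bounded by $O(\mu r\sigstarl/d)$; standard spectral bounds for such sparse matrices give $\|\S^\star-\S(\U)\|=O(\gamma\alpha\cdot d\cdot \mu r\sigstarl/d)=O(\gamma\alpha\,\mu r\sigstarl)$, and the hypothesis $\gamma\alpha\,\mu r(\cn^\star)^5\le 1/c$ is designed to make this at most $\sigstarr/15$. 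Thus the spectral gap holds, the best rank-$r$ approximation $\U^\dagger(\U^\dagger)\trans:=\proj_r(\A(\U))$ is well-defined, and Lemma~\ref{lem:mf_strictsaddle} gives that $g_{\U}$ is $(\epsilon,\Omega(\sigstarr),O(\epsilon/\sigstarr))$-strict saddle in its argument.

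With these ingredients in hand the pseudo strict saddle property follows by case analysis at any $\U$. If $\|\nabla f(\U)\|\ge \epsilon$ we are in case 1 of Definition~\ref{def:pseudo_strict_saddle}. Otherwise, since $\nabla f(\U)=\nabla g_{\U}(\U)$ (by the envelope argument, as $\S(\U)$ is chosen optimally), Lemma~\ref{lem:mf_strictsaddle} applied to $g_{\U}$ yields either $\lambda_{\min}(\nabla^2 g_{\U}(\U))\le -\Omega(\sigstarr)$, giving case 2 with $g_{\x}=g_{\U}$, or $\U$ is $O(\epsilon/\sigstarr)$-close (up to rotation) to $\U^\dagger$. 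In the last subcase Lemma~\ref{lem:mf_to_rpca} upgrades this to $\fnorm{\Delta}\le O(\epsilon\sqrt{\cn^\star}/\sigstarr)$, giving case 3. Finally, to prove that every local minimum of $f$ satisfies $\U\U\trans=\M^\star$, note that a local minimum of $f$ is automatically a local minimum of $g_{\U}$ (since $g_{\U}\ge f$ with equality at $\U$), hence by Lemma~\ref{lem:mf_strictsaddle} satisfies $\U\U\trans=\proj_r(\A(\U))=\U^\dagger(\U^\dagger)\trans$; taking $\epsilon\to 0$ in Lemma~\ref{lem:mf_to_rpca} yields $\fnorm{\Delta}=0$, hence $\U\U\trans=\M^\star$.

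The main obstacle I anticipate is the spectral-norm control of $\S^\star-\S(\U)$. The optimizer $\S(\U)$ depends on $\U$ in a nontrivial (hard-thresholding-like) way, so one cannot simply invoke a black-box bound; one has to exploit the structure of the projection onto $\mathcal{S}_{\gamma\alpha}$ (namely, that it keeps at most a $\gamma\alpha$ fraction of entries per row and column while respecting the $\ell_\infty$ cap) together with the incoherence of $\M^\star$ to show that $\S(\U)$ stays close to $\S^\star$ uniformly over the relevant $\U$. This is exactly where the quantitative parameter constraint $\gamma\alpha\,\mu r(\cn^\star)^5\le 1/c$ is needed and where I expect the bulk of the technical work to lie; the rest of the argument is a clean reduction to the matrix-factorization landscape.
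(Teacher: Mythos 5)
Your proposal is correct and follows essentially the same route as the paper's proof: fix the optimal sparse estimate $\S(\U)$ to produce the upper-bounding quadratic $g_\U$ (the paper calls it $f_\U$), verify the spectral-gap condition $\sigma_r \ge 15\sigma_{r+1}$ for the shifted target $\M^\star+\S^\star-\S(\U)$ via the sparse spectral-norm bound (Lemma~\ref{lem:spectral_infty_rpca_asym}), apply Lemma~\ref{lem:mf_strictsaddle} to $g_\U$, and then translate proximity to $\U^\dagger$ into proximity to $\U^\star$ via Lemma~\ref{lem:mf_to_rpca}. One small remark: the ``obstacle'' you flag at the end --- that $\S(\U)$ depends on $\U$ in a hard-to-track way --- is actually a non-issue in this step, since the bound on $\|\S^\star-\S(\U)\|$ uses only that both matrices belong to $\mathcal{S}_{\gamma\alpha}$ (bounded entries, bounded per-row/column sparsity) and never needs to trace the dependence of $\S(\U)$ on $\U$; that dependence is instead the crux of Lemma~\ref{lem:mf_to_rpca}, which you correctly invoke as a black box and which is indeed where the $(\cn^\star)^5$ in the hypothesis is consumed.
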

\addtocounter{theorem}{-1}
\endgroup

\begin{proof}
Recall objective function:
\begin{equation*}
f(\U) = \frac{1}{2}\min_{\S\in \mathcal{S}_{\gamma\alpha}}\norm{\U\U\trans + \S - \M^\star - \S^\star}_F^2 
\end{equation*}
% Let $\S_{\U}$ denote the argmin of above objective function for $\U$. 
Consider point $\U$ with small gradient satisfying $\fnorm{\grad f(\U)} \le \epsilon$.
Let 
\begin{equation*}
\S_\U = \argmin_{\S\in \mathcal{S}_{\gamma\alpha}}\norm{\U\U\trans + \S - \M^\star - \S^\star}_F^2 
\end{equation*}
and function 
$f_{\U}(\tilde{\U}) = \norm{\tilde{\U}\tilde{\U}\trans + \S_{\U} - \M^\star - \S^\star}_F^2$, then, we know for all $\tilde{\U}$,
we have $f_{\U}(\tilde{\U}) \ge f(\tilde{\U})$ and $f_{\U}(\U) = f(\U)$. 
Since $f_{\U}(\tilde{\U})$ is matrix factorization objective where by Lemma \ref{lem:spectral_infty_rpca_asym}:
\begin{align*}
&\norm{ \S^\star - \S_{\U}} \le 2 \gamma \alpha \cdot 2\frac{\mu r \sigstarl}{d}  \le 0.01\sigstarr \\
&\sigma_{r}(\M^\star + \S^\star - \S_{\U}) \ge \sigstarr - \norm{ \S^\star - \S_{\U}} \ge 0.99 \sigstarr\\
&\sigma_{r+1}(\M^\star + \S^\star - \S_{\U}) \le \norm{ \S^\star - \S_{\U}} \le 0.01 \sigstarr
\end{align*}
This gives $\sigma_{r}(\M^\star + \S^\star - \S_{\U}) \ge 15 \sigma_{r+1}(\M^\star + \S^\star - \S_{\U})$. 
Given $\fnorm{\grad f_{\U}(\U)} = \fnorm{\grad f(\U)} \le \epsilon$, by Lemma \ref{lem:mf_strictsaddle_asym}, we know either $\lambda_{\min}(\hess f_{\U}(\U)) \le -0.2 \sigstarr$ or $\min_{\mR^\top \mR = \mR\mR^\top = \I}\fnorm{\U-\U^\dagger\mR} \le \frac{20\epsilon}{\sigstarr}$ where  $\U^\dagger
(\U^\dagger)\trans $ is the best rank $r$-approximation of $\M^\star+\S^\star -\S_\U$.  
By Lemma \ref{lem:mf_to_rpca_asym}, we immediately have $\fnorm{\Delta}  \le \frac{10^3\epsilon\sqrt{\cn^\star}}{\sigstarr}$, which proves 
$(\epsilon, \Omega(\sigstarr), O(\frac{\epsilon \sqrt{\cn^\star}}{\sigstarr}))$-pseudo strict saddle. By taking $\epsilon = 0$, we proved all local minima satisfies $\U\U\trans = \M^\star$.
\end{proof}

%!TEX root = main.tex

\section{Proofs for Asymmetric Problems}

In this section we give proofs for the asymmetric settings. In particular, we first prove the main lemma, which gives the crucial reduction from asymmetric case to symmetric case.

\begingroup
\def\thetheorem{\ref{lem:asymmetricmain}}
\begin{lemma}
For the objective \eqref{eq:a}, let $\Delta, \N$ be defined as in Definition~\ref{def:asymmetricquantities}. Then, for any $\W\in \R^{(d_1 + d_2)\times r}$, we have
% Suppose $\U,\V$ satisfies the first order condition of Equation \eqref{eq:a}. We have
\begin{align*}
\Delta : \hess f(\W) :\Delta
\le &\Delta\Delta\trans: \H :\Delta\Delta\trans - 3(\N - \N^\star):\H:(\N - \N^\star)  \\
&+ 4\la \grad f(\W), \Delta\ra + [\Delta : \hess Q(\W) : \Delta - 4 \la \grad Q(\W), \Delta \ra]
\end{align*}
Further, if $\H_0$ satisfies $\M:\H_0:\M \in (1\pm \delta)\|\M\|_F^2$ for some matrix $\M = \U\V^\top$, let $\W$ and $\N$ be defined as in \eqref{eq:defwn}, then $\N:\H:\N \in (1\pm 2\delta)\|\N\|_F^2$.
\end{lemma}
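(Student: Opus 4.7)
The plan is to reduce the asymmetric objective~\eqref{eq:a} to the symmetric PSD form of Eq.~\eqref{eq:symmetricnonconvexobj} and apply Lemma~\ref{lem:main} directly, then argue that the leftover discrepancy has the correct sign. The key algebraic observation is that the parameterization in Definition~\ref{def:asymmetricquantities} forces $(\U^\star)\trans \U^\star = (\V^\star)\trans \V^\star = \D^\star$, so $\N^\star : \G : \N^\star = \fnorm{(\U^\star)\trans \U^\star - (\V^\star)\trans \V^\star}^2 = 0$. Using this, I would expand
\begin{equation*}
\tfrac{1}{2}\N : \G : \N = \tfrac{1}{2}(\N - \N^\star) : \G : (\N - \N^\star) + \N : \G : \N^\star
\end{equation*}
and fold the cross term into an effective regularizer $\tilde Q(\W) := Q(\W) + \N : \G : \N^\star$. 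The objective then reads
\begin{equation*}
f(\W) = \tfrac{1}{2}(\N - \N^\star) : \H : (\N - \N^\star) + \tilde Q(\W), \qquad \H = 4\H_1 + \G,
\end{equation*}
which is exactly the symmetric form of~\eqref{eq:symmetricnonconvexobj} with variable $\W$ and optimum $\W^\star$, so Lemma~\ref{lem:main} yields an \emph{equality} for $\Delta : \hess f(\W) : \Delta$ in terms of $\H$, $\Delta\Delta\trans$, $\grad f$, and the bracket on $\tilde Q$.

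Next I would separate $k(\W) := \N : \G : \N^\star$ from $Q$ inside the bracket $\bigl[\Delta : \hess \tilde Q : \Delta - 4\la\grad \tilde Q, \Delta\ra\bigr]$. Since $k(\W) = \tr\bigl(\W\trans \G(\N^\star) \W\bigr)$ is a quadratic form with symmetric kernel, a direct differentiation combined with the identity $\W\Delta\trans + \Delta\W\trans = \N - \N^\star + \Delta\Delta\trans$ gives
\begin{equation*}
\Delta : \hess k : \Delta - 4\la\grad k, \Delta\ra = -2\,\Delta\Delta\trans : \G : \N^\star - 4\,(\N - \N^\star) : \G : \N^\star.
\end{equation*}
So it suffices to prove $\Delta\Delta\trans : \G : \N^\star \ge 0$ and $(\N - \N^\star) : \G : \N^\star = \N : \G : \N^\star \ge 0$, in which case both extra terms are $\le 0$ and the claimed inequality on $\Delta : \hess f(\W) : \Delta$ follows.

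For these two non-negativities I would use the block decomposition $\Delta = (\Delta_\U\trans, \Delta_\V\trans)\trans$. Because $\G$ acts with $+1$ on the two diagonal blocks and $-1$ on the off-diagonal block, a short calculation produces the perfect-square identity $\Delta\Delta\trans : \G : \N^\star = \fnorm{\Delta_\U\trans \U^\star - \Delta_\V\trans \V^\star}^2$. An analogous expansion gives
\begin{equation*}
\N : \G : \N^\star = \fnorm{\U\trans \U^\star}^2 - 2\la \U\V\trans, \U^\star (\V^\star)\trans \ra + \fnorm{\V\trans \V^\star}^2,
\end{equation*}
and a trace rewrite plus Cauchy--Schwarz bounds the cross term in absolute value by $\fnorm{\U\trans \U^\star}\cdot \fnorm{\V\trans \V^\star}$, yielding $\N : \G : \N^\star \ge (\fnorm{\U\trans\U^\star} - \fnorm{\V\trans\V^\star})^2 \ge 0$.

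For the norm-preservation statement, a direct expansion gives $\fnorm{\U\trans \U - \V\trans \V}^2 = \fnorm{\N}^2 - 4\fnorm{\M}^2$, hence $\N : \H : \N = 4\,\M : \H_0 : \M + \fnorm{\N}^2 - 4\fnorm{\M}^2$ and
\begin{equation*}
\bigl|\N : \H : \N - \fnorm{\N}^2\bigr| = 4\bigl|\M : \H_0 : \M - \fnorm{\M}^2\bigr| \le 4\delta\fnorm{\M}^2 \le 2\delta\fnorm{\N}^2,
\end{equation*}
where the last step uses $2\fnorm{\M}^2 \le \fnorm{\N}^2$ (immediate from $\fnorm{\N}^2 = \fnorm{\U\U\trans}^2 + 2\fnorm{\M}^2 + \fnorm{\V\V\trans}^2$). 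The main conceptual hurdle is isolating the $\N : \G : \N^\star$ cross term and showing its contribution is sign-correct; the two perfect-square identities above are the main content and explain geometrically why the specific regularizer $\tfrac{1}{8}\fnorm{\U\trans\U - \V\trans\V}^2$ is exactly what is needed to reduce the asymmetric case to the symmetric one.
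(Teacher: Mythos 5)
Your proof is correct and takes a genuinely different route from the paper. The paper recomputes the gradient and Hessian of the asymmetric objective from scratch, manipulates the algebra until an extra term $-6\,\N^\star\!:\!\G\!:\!\N$ appears alongside the four terms in the claimed bound, and then asserts without calculation that this extra term is ``clearly'' non-positive. You instead notice that because $\N^\star\!:\!\G\!:\!\N^\star=0$, the asymmetric objective~\eqref{eq:a} can be rewritten \emph{exactly} as the symmetric form~\eqref{eq:symmetricnonconvexobj} with Hessian $\H=4\H_1+\G$ and effective regularizer $\tilde Q = Q + \N\!:\!\G\!:\!\N^\star$; this lets you reuse Lemma~\ref{lem:main} verbatim rather than re-deriving its algebra, which is cleaner and more modular. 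You then isolate the cross-term contribution explicitly and prove it is non-positive via perfect-square identities, supplying the justification the paper omits. One small simplification you could make: your two non-negativities $\Delta\Delta\trans\!:\!\G\!:\!\N^\star\ge 0$ and $\N\!:\!\G\!:\!\N^\star\ge 0$ are in fact the same quantity, equal to $\fnorm{\U\trans\U^\star - \V\trans\V^\star}^2$ (expanding $\Delta\Delta\trans = \N - \W(\W^\star\mR)\trans - (\W^\star\mR)\W\trans + \N^\star$ and using $\N^\star\!:\!\G\!:\!(\W^\star\mR)\W\trans=0$, the same orthogonality the paper uses for $\N^\star\!:\!\G\!:\!\W^\star\W\trans$, plus $\N^\star\!:\!\G\!:\!\N^\star=0$), so a single perfect square suffices. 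Your handling of the norm-preservation claim is also correct and essentially reproduces the paper's Lemma~\ref{lem:sym_to_asym}.
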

\addtocounter{theorem}{-1}
\endgroup

\begin{proof}
Recall the objective function is ($\N = \W\W\trans$):
\begin{equation*}
f(\W) = \frac{1}{2}\left[(\N - \N^\star):4\H_1:(\N - \N^\star) + \N:\G:\N \right]+ Q(\W)
\end{equation*}
Calculating gradient and Hessian, we have for any $\mZ \in \R^{d\times r}$:
\begin{align*}
\la \grad f(\W), \mZ \ra =& (\N - \N^\star):4\H_1:(\W\mZ\trans + \mZ\W\trans) + \N:\G:(\W\mZ\trans + \mZ\W\trans) +  \la \grad Q(\W), \mZ \ra \\
\mZ : \hess f(\W) : \mZ =& (\W\mZ\trans + \mZ\W\trans):(4\H_1 + \G):(\W\mZ\trans + \mZ\W\trans) + 2(\N - \N^\star):4\H_1:\mZ\mZ\trans \\
&+ 2\N:\G:\Delta\Delta\trans + \mZ : \hess Q(\W):\mZ
\end{align*}
Let $\mZ = \Delta = \W - \W^\star \mR$ as in Definition \ref{def:asymmetricquantities}, and note $\N - \N^\star + \Delta\Delta\trans = \W\Delta\trans + \Delta\W\trans$ and $\N^\star:\G:\N^\star = \N^\star:\G:\W^\star\W\trans = 0$ due to $\U^\star{}\trans \U^\star = \V^\star{}\trans \V^\star$. Let $\H = 4\H_1 + \G$, then
\begin{align}
\langle\grad f(\W), \Delta\rangle
 =& (\N - \N^\star):\H:(\W\Delta\trans + \Delta\W\trans)
 + \N^\star:\G:(\W\Delta\trans + \Delta\W\trans)  + \la \grad Q(\W), \Delta \ra\nn \\
 =& (\N - \N^\star):\H:(\W\Delta\trans + \Delta\W\trans)
 + 2\N^\star:\G:\N + \la \grad Q(\W), \Delta \ra \label{eq:cond_1_asym}
\end{align}
% The first order condition immediately gives:
% \begin{equation}
% (\N - \N^\star):(\H + \frac{1}{4}\G):(\W\Delta\trans + \Delta\W\trans) = - \frac{1}{4}\N^\star:\G:(\W\Delta\trans + \Delta\W\trans)
% = - \frac{1}{2}\N^\star:\G:\N 
% \end{equation}
Where the last equality is use the fact $\N^\star:\G:\W^\star\W\trans = \N^\star:\G:\W\W^\star{}\trans  = 0$.
For Hessian along $\Delta$ direction: 
\begin{align}
\Delta : \hess f(\W) :\Delta
= & (\W\Delta\trans + \Delta\W\trans): \H : (\W\Delta\trans + \Delta\W\trans)
+ 2(\N - \N^\star):4\H_1:\Delta\Delta\trans \nn\\
& + 2\N:\G:\Delta\Delta\trans + \Delta : \hess Q(\W):\Delta \label{eq:asym_hessian}
\end{align}
For first term of Eq.\eqref{eq:asym_hessian}: since $\Delta\Delta\trans + (\N - \N^\star) = \W\Delta\trans + \Delta\W\trans$ and $(a+b)^2 = a^2 + 2b(a+b) - b^2$ and Eq.\eqref{eq:cond_1_asym}, we have:
\begin{align*}
&(\W\Delta\trans + \Delta\W\trans): \H : (\W\Delta\trans + \Delta\W\trans) \\
=&\Delta\Delta\trans: \H :\Delta\Delta\trans + 2 (\N - \N^\star):\H:(\W\Delta\trans + \Delta\W\trans) - (\N - \N^\star):\H:(\N - \N^\star) \\
=&\Delta\Delta\trans: \H :\Delta\Delta\trans - (\N - \N^\star):\H:(\N - \N^\star) 
+ 2\langle\grad f(\W), \Delta\rangle- 4\N^\star:\G:\N - 2\la \grad Q(\W), \Delta \ra
\end{align*}
For the sum of second and third terms of Eq.\eqref{eq:asym_hessian}:
\begin{align*}
&2(\N - \N^\star):4\H_1:\Delta\Delta\trans + 2\N:\G:\Delta\Delta\trans \\
= &2(\N - \N^\star):\H :\Delta\Delta\trans + 2\N^\star:\G:\N \\
= & - 2(\N - \N^\star):\H:(\N - \N^\star) + 2 (\N - \N^\star):\H:(\W\Delta\trans + \Delta\W\trans) + 2\N^\star:\G:\N \\
= & - 2(\N - \N^\star):\H:(\N - \N^\star) + 2\langle\grad f(\W), \Delta\rangle- 2\N^\star:\G:\N - 2\la \grad Q(\W), \Delta \ra
\end{align*}
In sum, we have:
\begin{align*}
\Delta : \hess f(\W) :\Delta 
= &\Delta\Delta\trans: \H :\Delta\Delta\trans - 3(\N - \N^\star):\H:(\N - \N^\star) - 6\N^\star:\G:\N \\
&+ 4\la \grad f(\W), \Delta\ra + [\Delta : \hess Q(\W) : \Delta - 4 \la \grad Q(\W), \Delta \ra]
\end{align*}
cleary the third term is always non-positive.
this gives:
\begin{align*}
\Delta : \hess f(\W) :\Delta 
\le &\Delta\Delta\trans: \H :\Delta\Delta\trans - 3(\N - \N^\star):\H:(\N - \N^\star) + 4\la \grad f(\W), \Delta\ra \nn \\
 &+ [\Delta : \hess Q(\W) : \Delta - 4 \la \grad Q(\W), \Delta \ra]
\end{align*}

The remaining claims directly follows from Lemma \ref{lem:sym_to_asym}.
\end{proof}

% \jccomment{main lemma}
% Let $\M^\star = \U^\star \V^\star {}\trans$, and $\M = \U\V\trans$, and objective function:
% \begin{equation}
% f(\U, \V) = \frac{1}{2}[(\M - \M^\star):4\H_0:(\M - \M^\star) + \norm{\U\trans \U - \V\trans \V}_F^2] + Q(\U, \V) 
% \end{equation}
% Let
% \begin{equation*}
% \W = \begin{pmatrix}\U \\ \V\end{pmatrix}, \qquad
% \W^\star = \begin{pmatrix}\U^\star \\ \V^\star\end{pmatrix}, \qquad
% \N = \W\W\trans, \qquad
% \N^\star = \W^\star \W^\star {}\trans
% \end{equation*}
% Define Hessian $\H_1, \G$ satisfying for any $\W$ that:
% \begin{align*}
%  \N:\H_1:\N &= \M:\H_0:\M \\
%  \N:\G:\N &= \tr\left(\N\begin{pmatrix}\I & 0 \\ 0 & -\I\end{pmatrix}\N \begin{pmatrix}\I & 0 \\ 0 & -\I\end{pmatrix}\right) = \norm{\U\trans \U - \V\trans \V}_F^2
% \end{align*}
% We can rewrite objective function as:
% \begin{equation*}
% f(\W) = \frac{1}{2}\left[(\N - \N^\star):4\H_1:(\N - \N^\star) + \N:\G:\N \right]+ Q(\W)
% \end{equation*}
% Calculating gradient and Hessian, we have for any $\mZ \in \R^{(d_1 + d_2)\times r}$:

\begin{lemma}\label{lem:sym_to_asym}
Let  $\A = \begin{pmatrix}\A_0 & \A_c \\ \A_c\trans & \A_1  \end{pmatrix} \in \R^{(d_1+d_2)\times(d_1+d_2)}$ be a symmetric matrix, if for $\H_0$ we have:
\begin{equation*}
(1-\delta)\fnorm{\A_c}^2\le\A_c:\H_0:\A_c\le(1+\delta)\fnorm{\A_c}^2
\end{equation*}
Then, we have:
\begin{equation*}
(1-2\delta)\fnorm{\A}^2 \le \A:\H: \A \le (1+2\delta)\fnorm{\A}^2
\end{equation*}
\end{lemma}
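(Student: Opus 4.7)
The plan is to work directly from the definitions of $\H_1$ and $\G$ as linear operators on symmetric $(d_1+d_2)\times(d_1+d_2)$ matrices, extend them from $\N=\W\W^\top$ to an arbitrary symmetric block matrix $\A$, and then add up the two quadratic forms exactly. First I would pin down the natural extensions: since $\N:\H_1:\N = \M:\H_0:\M$ only touches the off-diagonal block $\M = \U\V^\top$, the consistent extension has $\H_1$ act as $\frac{1}{2}\H_0$ on the off-diagonal blocks and as zero on the diagonal blocks, giving $\A:\H_1:\A = \A_c:\H_0:\A_c$ for any symmetric $\A$. Similarly, the identity $\|\U^\top\U - \V^\top\V\|_F^2 = \|\U\U^\top\|_F^2 - 2\|\U\V^\top\|_F^2 + \|\V\V^\top\|_F^2$ (using $\tr(\U^\top\U\V^\top\V)=\|\U\V^\top\|_F^2$) shows that $\G$ extends to the sign-flip operator $\G(\Y) = \mathrm{diag}(\Y_{00},\Y_{11}) - \mathrm{offdiag}(\Y)$, so that $\A:\G:\A = \|\A_0\|_F^2 - 2\|\A_c\|_F^2 + \|\A_1\|_F^2$.

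With those two formulas in hand, the rest is bookkeeping. By hypothesis $\A_c:\H_0:\A_c \in (1\pm\delta)\|\A_c\|_F^2$, so
\begin{align*}
\A:\H:\A &= 4\A:\H_1:\A + \A:\G:\A \\
&\in 4(1\pm\delta)\|\A_c\|_F^2 + \|\A_0\|_F^2 - 2\|\A_c\|_F^2 + \|\A_1\|_F^2 \\
&= \bigl(\|\A_0\|_F^2 + 2\|\A_c\|_F^2 + \|\A_1\|_F^2\bigr) \pm 4\delta\|\A_c\|_F^2 \\
&= \|\A\|_F^2 \pm 4\delta\|\A_c\|_F^2,
\end{align*}
using the block decomposition $\|\A\|_F^2 = \|\A_0\|_F^2 + 2\|\A_c\|_F^2 + \|\A_1\|_F^2$. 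Finally $2\|\A_c\|_F^2 \le \|\A\|_F^2$, so the error term is at most $2\delta\|\A\|_F^2$, yielding the claimed two-sided bound.

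The only subtle step is the first one: justifying that $\H_1$ and $\G$ really do extend to the operators I described. The definitions in the paper only specify $\N:\H_1:\N$ and $\N:\G:\N$ for matrices of the form $\N=\W\W^\top$, which certainly does not uniquely pin down the quadratic form on all symmetric $\A$. I expect the intended reading is that $\H_1$ is defined canonically by its action on matrix ``blocks'' (treating the $d_1\times d_2$ off-diagonal block the same way as $\H_0$ treats $\M$), and similarly for $\G$. Once one fixes these canonical extensions, the algebra above is immediate; without them, the statement of the lemma would need to be restricted to $\A$ of the form $\W\W^\top$, but the same calculation still goes through because such an $\A$ still has the required block structure with $\A_c = \U\V^\top$, $\A_0 = \U\U^\top$, $\A_1 = \V\V^\top$, and the identity $\|\A\|_F^2 = \|\A_0\|_F^2 + 2\|\A_c\|_F^2 + \|\A_1\|_F^2$ is unchanged.
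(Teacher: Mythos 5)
Your proof is correct and takes essentially the same route as the paper, which simply states the block identity $\A:\H:\A = 4\,\A_c:\H_0:\A_c + \bigl(\|\A_0\|_F^2 + \|\A_1\|_F^2 - 2\|\A_c\|_F^2\bigr)$ ``by calculation'' and lets the bound follow; you have just made the calculation and the final step $2\|\A_c\|_F^2 \le \|\A\|_F^2$ explicit.
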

\begin{proof}
By calculation,
\begin{equation*}
\A:\H: \A =  4\A_c:\H_0:\A_c + \left(\norm{\A_1}_F^2 + \norm{\A_2}_F^2 - 2\norm{\A_c}_F^2) \right)
\end{equation*}
The lemma easily follows.
\end{proof}

In the remainder of this section we prove the main theorems for matrix completion and robust PCA.

\subsection{Matrix Completion}
Across this section, we denote
\begin{equation*}
Q_1(\U) = \lambda_1 \sum_{i=1}^{d_1} (\norm{\e_i\trans \U} - \alpha_1)^4_{+} 
\quad \text{and} \quad
Q_2(\V) = \lambda_2 \sum_{j=1}^{d_2} (\norm{\e_j\trans \V} - \alpha_2)^4_{+} 
\end{equation*}
and clearly $Q(\W) = Q_1(\U) + Q_2(\V)$. We always denote $d=\max\{d_1, d_2\}$

We proceed in three steps analogous to the symmetric setting. First we show the regularizer again implies rows of $\U,\V$ cannot be too large (similar to Lemma~\ref{lem:incoherence}).

\begin{lemma} \label{lem:incoherence_asym}
Let $d = \max\{d_1, d_2\}$, there is an absolute constant $c$, when sample rate $p \ge \Omega(\frac{\mu r \log d}{\min\{d_1, d_2\}} )$, and $\alpha_1^2 = \Theta(\frac{\mu r\sigstarl}{d_1}), \alpha_2^2 = \Theta(\frac{\mu r\sigstarl}{d_2})$, $\lambda_1 = \Theta(\frac{d_1}{\mu r \cn^\star}), \lambda_2 = \Theta(\frac{d_2}{\mu r \cn^\star})$, we have for any points $\W$ with $\fnorm{\grad f (\W)} \le \epsilon$ with polynomially small $\epsilon$. with probability at least $1-1/\poly(d)$:
\begin{align*}
\max_i\norm{\e_{i}\trans \U}^2 \le O\left(\frac{\mu^2 r^{2.5} (\cn^\star)^2 \sigstarl}{d_1}\right)
\quad \text{and} \quad 
\max_j\norm{\e_{j}\trans \V}^2 \le O\left(\frac{\mu^2 r^{2.5} (\cn^\star)^2 \sigstarl}{d_2}\right)
\end{align*}
% \begin{equation*}
% \sqrt{\frac{d_2}{d_1}}\max_{j}\norm{\e_{j}\trans \V} \le \max_{i}\norm{\e_{i}\trans \U}^2 \le c \max\left\{\alpha_1^2, \frac{\sqrt{\mu r} \cdot \sigstarl }{\lambda_1}, \frac{d_1\sigstarl\sqrt{r}}{\lambda_1^2}\right\}
% \end{equation*}
% or 
% \begin{equation*}
% \sqrt{\frac{d_1}{d_2}}\max_{i}\norm{\e_{i}\trans \U} \le \max_{j}\norm{\e_{j}\trans \V}^2 \le c \max\left\{\alpha_2^2, \frac{\sqrt{\mu r} \cdot \sigstarl }{\lambda_2}, \frac{d_2\sigstarl\sqrt{r}}{\lambda_2^2}\right\}
% \end{equation*}
\end{lemma}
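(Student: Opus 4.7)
The plan is to adapt the symmetric Lemma~\ref{lem:incoherence} row by row, arguing identically for the $\U$ and $\V$ blocks. Consider the $\U$-side and let $i^\star = \argmax_i \|\e_i^\top \U\|$. If $\|\e_{i^\star}^\top \U\| < 2\alpha_1$ the conclusion is immediate, so we assume otherwise. Testing the stationarity condition $\fnorm{\grad f(\W)} \le \epsilon$ in the direction $(\e_{i^\star}\e_{i^\star}^\top \U, 0)$ on the $\U$-block yields
\begin{equation*}
\epsilon \, \|\e_{i^\star}^\top \U\| \;\ge\; \underbrace{\tfrac{1}{p}\la \e_{i^\star}^\top (\U\V^\top - \M^\star)_\Omega,\; \e_{i^\star}^\top (\U\V^\top)_\Omega \ra}_{\text{(a) data loss}} \;+\; \underbrace{\tfrac{1}{2}\,\e_{i^\star}^\top\U (\U^\top\U - \V^\top\V) \U^\top \e_{i^\star}}_{\text{(b) balance reg.}} \;+\; \underbrace{4\lambda_1 (\|\e_{i^\star}^\top\U\| - \alpha_1)_+^3\, \|\e_{i^\star}^\top\U\|}_{\text{(c) row reg.}}.
\end{equation*}
Piece (c) contributes the dominant positive $\tfrac{\lambda_1}{2}\|\e_{i^\star}^\top\U\|^4$ exactly as in the symmetric proof. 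For piece (a), split into the nonnegative term $\tfrac{1}{p}\|\e_{i^\star}^\top(\U\V^\top)_\Omega\|^2$ (which we drop) and a signed cross-term. Cauchy--Schwarz combined with the incoherence of $\M^\star$ and asymmetric analogues of Lemmas~\ref{lem:T_mc} and~\ref{lem:row_mc} bounds the cross-term by $O\!\left(\sqrt{\mu r d_2/d_1}\,\sigstarl\, \|\e_{i^\star}^\top\U\| \cdot \max_j \|\e_j^\top \V\|\right)$ with probability $1 - 1/\poly(d)$ under the sample rate $p \ge \Omega(\mu r \log d / \min\{d_1, d_2\})$.

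The genuinely new difficulty is piece (b): since $\U^\top\U - \V^\top\V$ is not sign-definite, it can be negative and work against (c). The plan is to establish an auxiliary a priori bound $\|\U^\top\U - \V^\top\V\| = O(\sigstarl)$ at any point with $\fnorm{\grad f(\W)} \le \epsilon$. The key is to test the joint gradient along the antisymmetric direction $(\U B, -\V B)$ with $B := \U^\top\U - \V^\top\V$: the balance regularizer contributes the coercive term $\tfrac{1}{2}\tr((\U^\top\U + \V^\top\V) B^2)$; the data-loss contribution collapses to $\tfrac{1}{p}\tr(B\,[\U^\top \mathbf{E}\V - (\U^\top\mathbf{E}\V)^\top])$ with $\mathbf{E} := (\U\V^\top - \M^\star)_\Omega$, which is small after Cauchy--Schwarz and a crude a priori bound on $\|\U\|,\|\V\|$ extracted from the objective value; and the row regularizers contribute lower-order terms under the chosen $\lambda_i, \alpha_i$. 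This gives $\|B\| = O(\sigstarl)$, so that (b) is bounded in absolute value by $O(\sigstarl) \|\e_{i^\star}^\top\U\|^2$. I expect this a priori step to be the main technical hurdle, since it must simultaneously control $\|\U\|$ and $\|\V\|$ before the row-norm bound is available.

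Combining the three bounds yields, with $R_\U := \max_i \|\e_i^\top\U\|$ and $R_\V := \max_j \|\e_j^\top\V\|$, the coupled system
\begin{equation*}
\tfrac{\lambda_1}{2} R_\U^4 \;\le\; C \sqrt{\tfrac{\mu r d_2}{d_1}}\,\sigstarl\, R_\U R_\V \;+\; C \sigstarl R_\U^2 \;+\; \epsilon R_\U,
\qquad
\tfrac{\lambda_2}{2} R_\V^4 \;\le\; C \sqrt{\tfrac{\mu r d_1}{d_2}}\,\sigstarl\, R_\U R_\V \;+\; C \sigstarl R_\V^2 \;+\; \epsilon R_\V,
\end{equation*}
the second line arising from the identical argument on the $\V$-block with the direction $(0, \e_{j^\star}\e_{j^\star}^\top \V)$ for $j^\star = \argmax_j \|\e_j^\top\V\|$. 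Plugging in $\lambda_i = \Theta(d_i/(\mu r \cn^\star))$, $\alpha_i^2 = \Theta(\mu r \sigstarl / d_i)$, absorbing the $\epsilon R$ terms using $\epsilon$ polynomially small, and solving the coupled system delivers $R_\U^2 \le O(\mu^2 r^{2.5} (\cn^\star)^2 \sigstarl / d_1)$ and the symmetric inequality for $R_\V$, as claimed.
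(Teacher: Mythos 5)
The plan correctly mirrors the symmetric Lemma~\ref{lem:incoherence} for pieces (a) and (c), and the coupled system you set up at the end has the right structure. The genuine divergence from the paper, and the gap, is in how you handle piece (b), the balance-regularizer term $\e_{i^\star}^\top\U(\U^\top\U - \V^\top\V)\U^\top\e_{i^\star}$.

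You propose to establish an \emph{a priori} spectral-norm bound $\|\U^\top\U - \V^\top\V\| = O(\sigstarl)$ by testing the gradient against the antisymmetric direction $(\U B, -\V B)$, $B := \U^\top\U - \V^\top\V$. One nice thing you did not fully exploit: the data-loss contribution in this direction is not merely ``small after Cauchy--Schwarz'' --- it vanishes identically, because it equals $\tfrac{4}{p}\bigl[\tr(B\,\U^\top\mathbf{E}\V) - \tr(B\,\V^\top\mathbf{E}^\top\U)\bigr]$ and the two traces are equal by symmetry of $B$ and cyclicity. But even with this, the rest of the step does not close. First, you need to convert ``$\tr((\U^\top\U + \V^\top\V)B^2)$ is small'' into ``$\|B\|$ is small''; the available inequality is only $\tr((\U^\top\U + \V^\top\V)B^2) \ge \|B\|^3$ (since $|v^\top Bv| \le v^\top(\U^\top\U+\V^\top\V)v$ on eigenvectors $v$ of $B$), which gives a cube-root degradation. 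Second, and more seriously, the row-regularizer contribution $\la\grad Q_1(\U),\U B\ra - \la\grad Q_2(\V),\V B\ra$ is a signed sum of terms of size $\lambda_i(\|\e_i^\top\U\|-\alpha_1)_+^3\|\e_i^\top\U\|\|B\|$, which you cannot control without already having the row bound you are trying to prove --- a circularity you flag yourself but do not resolve. As a result, the claimed $\|B\|=O(\sigstarl)$ is unsupported.

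The paper avoids this entirely by testing $\la\grad f(\W),\W\ra$ instead: there the balance term contributes $+\fnorm{\U^\top\U-\V^\top\V}^2$ directly, the row regularizer contributes a term that is \emph{nonnegative} (so it can just be dropped), and Cauchy--Schwarz plus Lemmas~\ref{lem:T_mc}, \ref{lem:row_mc} yield
\begin{align*}
\fnorm{\U^\top\U - \V^\top\V}^2 \le O(d_1\sigstarl\sqrt{r})\,R_\U^2 + O(\epsilon d)\,R_\U,
\end{align*}
i.e.\ a Frobenius-norm bound that is allowed to depend on $R_\U$. Plugged into the row-direction gradient test, the resulting $\fnorm{B}R_\U^2$ term becomes $O(\sqrt{d_1\sqrt{r}\sigstarl})\,R_\U^3$, which is dominated by the regularizer's $\lambda_1 R_\U^4/2$ precisely at the threshold $R_\U^2 = \Theta(\mu^2 r^{2.5}(\cn^\star)^2\sigstarl/d_1)$ --- this is where the extra $\mu r (\cn^\star)$ factor over the symmetric case comes from. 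So the paper does not need (and does not prove) the stronger $\|B\|=O(\sigstarl)$; if your a priori bound were available the lemma's constants could even be improved, which is itself a hint that it is not easily obtained from stationarity alone. To repair your proof, replace the a priori $\|B\|$ bound with the paper's $\la\grad f(\W),\W\ra$ test and carry $\fnorm{B}$ through the coupled system.
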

\begin{proof}
In this proof, by symmetry, W.L.O.G, we can assume
$\sqrt{d_1}\max_{i}\norm{\e_{i}\trans \U} \ge \sqrt{d_2}\max_{j}\norm{\e_{j}\trans \V} $.
We know gradient can be calculated as:
\begin{align*}
\grad f(\W) = & \frac{4}{p}
\begin{pmatrix}
(\M -\M^\star)_\Omega \V \\
(\M -\M^\star)_{\Omega}\trans \U
\end{pmatrix}
+
\begin{pmatrix}
 \U(\U\trans\U -\V\trans\V) \\
 \V(\V\trans\V - \U\trans\U)
\end{pmatrix}
+
 \grad Q(\W)
\end{align*}
% \jccomment{Need to check constant, also fix $i^\star$ definition here}
Where:
\begin{equation*}
\grad Q(\W) = 4\lambda_1 \sum_{i=1}^{d_1}(\norm{\e_i\trans \W} - \alpha_1)^3_{+}\frac{\e_i\e_i\trans \W}{\norm{\e_i\W}^2} 
+ 4\lambda_2 \sum_{i=d_1+1}^{d_2}(\norm{\e_i\trans \W} - \alpha_2)^3_{+}\frac{\e_i\e_i\trans \W}{\norm{\e_i\W}^2} 
\end{equation*}
Clearly, we have $\la\grad Q(\W), \W \ra \ge 0$, therefore, for any points $\W$ with small gradient $\fnorm{\grad f (\W)} \le \epsilon$, we have:
\begin{align*}
\epsilon\fnorm{\W} \ge& \la \grad f(\W), \W\ra \\
=& \fnorm{\U\trans\U - \V\trans\V}^2 + \frac{4}{p} \la (\M -\M^\star)_\Omega, \M \ra + \la\grad Q(\W), \W \ra \\
\ge &\fnorm{\U\trans\U - \V\trans\V}^2 - \frac{4}{p}\la (\M^\star)_\Omega, (\M)_\Omega \ra \\
\ge &\fnorm{\U\trans\U - \V\trans\V}^2 - 4\cdot \frac{1}{\sqrt{p}}\norm{\M^\star}_{\Omega} \cdot \frac{1}{\sqrt{p}}\norm{\M}_{\Omega} \\
\ge &\fnorm{\U\trans\U - \V\trans\V}^2 - O(\sqrt{d_1 d_2}) \fnorm{\M^\star}\norm{\M}_{\infty}
\end{align*}
where last inequality is by Lemma \ref{lem:T_mc} and Lemma \ref{lem:row_mc}.
Let $i^\star = \argmax_i \norm{\e_i\trans \U}$, and $j^\star = \argmax_j \norm{\e_j\trans \V}$.
By assumption, we know $\sqrt{d_1}\norm{\e_{i^\star}\trans \U} \ge \sqrt{d_2}\norm{\e_{j^\star}\trans \V}$
and due to $\fnorm{\M^\star} \le \sqrt{r}\sigstarl$ and $\norm{\M}_\infty \le \norm{\e_{i^\star}\trans \U}\norm{\e_{j^\star}\trans \V}$, this gives:
\begin{equation}\label{eq:reg_F_asym_mc}
\fnorm{\U\trans\U - \V\trans\V}^2 \le O(d_1\sigstarl\sqrt{r} ) \norm{\e_{i^\star}\trans \U}^2
+ O(\epsilon d)\norm{\e_{i^\star}\trans \U}
\end{equation}

In case $\norm{\e_{i^\star}\trans \U} \ge 2\alpha_i$,  consider $\la\e_{i^\star}\trans \grad f(\W), \e_{i^\star}\trans \W\ra$:
\begin{align*}
\epsilon \norm{\e_{i^\star}\trans \U}\le &\la\e_{i^\star}\trans \grad f(\W), \e_{i^\star}\trans \W\ra\\
= &\la\e_{i^\star}\trans \left[\frac{4}{p}
(\M -\M^\star)_\Omega \V 
+
 \U(\U\trans\U -\V\trans\V) 
+ \grad Q_1(\U)\right], \e_{i^\star}\trans \U\ra \\
\ge &4\lambda_1 (\norm{\e_{i^\star}\trans \U} - \alpha_1)^3_{+}\norm{\e_{i^\star}\trans \U}
- \frac{4}{p}\la \e_{i^\star}\trans(\M^\star)_\Omega, \e_{i^\star}\trans(\M)_\Omega\ra 
- \fnorm{\U\trans\U - \V\trans\V}\norm{\e_{i^\star}\trans \U}^2\\
\ge &\frac{\lambda_1}{2}\norm{\e_{i^\star}\trans \U}^4 - 4\frac{1}{\sqrt{p}}\norm{\e_{i^\star}\trans(\M^\star)_\Omega} \cdot \frac{1}{\sqrt{p}}\norm{\e_{i^\star}\trans(\M)_\Omega} -\fnorm{\U\trans\U - \V\trans\V}\norm{\e_{i^\star}\trans \U}^2\\
\ge &\frac{\lambda_1}{2}\norm{\e_{i^\star}\trans \U}^4 -  O(1)\norm{\e_{i^\star}\trans\M^\star} \cdot \sqrt{d_2}\norm{\M}_\infty - \fnorm{\U\trans\U - \V\trans\V}\norm{\e_{i^\star}\trans \U}^2\\
\ge &\frac{\lambda_1}{2}\norm{\e_{i^\star}\trans \U}^4 - \sqrt{\mu r} \sigstarl \norm{\e_{i^\star}\trans \U}^2 - \fnorm{\U\trans\U - \V\trans\V}\norm{\e_{i^\star}\trans \U}^2
\end{align*}
where second last inequality is by Lemma \ref{lem:T_mc} and Lemma \ref{lem:row_mc}. Substitute in Eq.\eqref{eq:reg_F_asym_mc}, we have:
\begin{equation*}
\lambda_1 \norm{\e_{i^\star}\trans \U}^3 \le
O(\sqrt{\mu r} \sigstarl) \norm{\e_{i^\star}\trans \U}
+ O(\sqrt{d\sigstarl}\cdot r^{\frac{1}{4}} ) \norm{\e_{i^\star}\trans \U}^2
+ \epsilon + O(\sqrt{\epsilon d}) \norm{\e_{i^\star}\trans \U}^{1.5}
\end{equation*}
by choosing $\epsilon$ to be polynomially small, we have:
\begin{equation*}
\sqrt{\frac{d_2}{d_1}}\max_{j}\norm{\e_{j}\trans \V} \le \max_{i}\norm{\e_{i}\trans \U}^2 \le c \max\left\{\alpha_1^2, \frac{\sqrt{\mu r} \cdot \sigstarl }{\lambda_1}, \frac{d_1\sigstarl\sqrt{r}}{\lambda_1^2}\right\}
\end{equation*}
Finally, substituting our choice of $\alpha^2$ and $\lambda$, we finished the proof.
% This means stationary point must satisfy:
% \begin{equation*}
% \norm{\e_{i^\star}\trans \U}^2 \le \max\{4 \alpha^2, 2\frac{\mu r \sigstarl \log d}{\lambda}\}
% \end{equation*}
\end{proof}

Next we show the Hessian $\mathcal{H}$ related terms in Eq.\eqref{eq:main_asym} is negative when $\W\ne \W^\star$. This is analogous to Lemma~\ref{lem:step2_mc}.

% \jccomment{analog to which lemma}

\begin{lemma} \label{lem:step2_mc_asym}
Let $d = \max\{d_1, d_2\}$, when sample rate $p \ge \Omega(\frac{\mu^4 r^6 (\cn^\star)^6 \log d}{\min\{d_1, d_2\}})$, by choosing $\alpha_1^2 = \Theta(\frac{\mu r\sigstarl}{d_1}), \alpha_2^2 = \Theta(\frac{\mu r\sigstarl}{d_2})$ and $\lambda_1 = \Theta(\frac{d_1}{\mu r \cn^\star}), \lambda_2 = \Theta(\frac{d_2}{\mu r \cn^\star})$. Then with probability at least $1-1/\poly(d)$, for all $\W$ with $\fnorm{\grad f (\W)} \le \epsilon$ for polynomially small $\epsilon$:
\begin{equation*}
\Delta\Delta\trans:\H:\Delta\Delta\trans - 3(\M - \M^\star):\H:(\M - \M^\star) \le -0.3 \sigstarr \fnorm{\Delta}^2
\end{equation*}
\end{lemma}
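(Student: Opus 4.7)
The approach mirrors the symmetric proof of Lemma \ref{lem:step2_mc}, lifted to the asymmetric setting via the reduction of Lemma \ref{lem:asymmetricmain}, so throughout the argument I work with $\W$, $\N=\W\W^\top$, $\N^\star=\W^\star\W^{\star\top}$ (using the notation of Definition \ref{def:asymmetricquantities}) and interpret the inequality with $(\N - \N^\star)$ in place of $(\M - \M^\star)$. The starting point is Lemma \ref{lem:incoherence_asym}, which, under the chosen parameters and with probability $1 - 1/\poly(d)$, gives row-norm control $\|\e_i^\top \U\|^2 \le \tlO(\mu^2 r^{2.5}(\cn^\star)^2 \sigstarl / d_1)$ and the analogous bound for $\V$. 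Combined with the incoherence of $\M^\star$ (which yields incoherence of $\W^\star$ since $\U^\star = \X^\star(\D^\star)^{1/2}$, $\V^\star = \Y^\star(\D^\star)^{1/2}$), this produces the key entrywise bound
\begin{equation*}
\max\bigl\{\|\N\|_\infty,\ \|\N^\star\|_\infty,\ \|\Delta\Delta^\top\|_\infty\bigr\}\ \le\ O\!\Big(\frac{\mu^2 r^{2.5}(\cn^\star)^2 \sigstarl}{\min\{d_1,d_2\}}\Big),
\end{equation*}
which is what allows the matrix-completion concentration inequalities to be applied.

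Next, by Lemma \ref{lem:sym_to_asym}, for any symmetric block matrix $\A=\begin{pmatrix}\A_0 & \A_c \\ \A_c^\top & \A_1\end{pmatrix}$ we have $\A:\H:\A = 4\A_c:\H_0:\A_c + \fnorm{\A_0}^2 + \fnorm{\A_1}^2 - 2\fnorm{\A_c}^2$. Applying this to both $\Delta\Delta^\top$ and $\N-\N^\star$, the quantity of interest decomposes into a deterministic diagonal-block Frobenius-norm piece (which is easily controlled by $\fnorm{\Delta\Delta^\top}^2 \le 2\fnorm{\N-\N^\star}^2$ from the asymmetric analog of Lemma \ref{lem:bound}) and an off-diagonal piece involving $\H_0 = \frac{1}{p}\|\cdot\|_\Omega^2$ applied to $\Delta_\U\Delta_\V^\top$ and to $\M-\M^\star$.

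I then split into two regimes exactly as in the symmetric proof. In Case 1, when $\fnorm{\Delta}^2 \le \sigstarr/4$, the dominant contribution comes from the cross term $\W^\star\Delta^\top + \Delta\W^{\star\top}$ inside $\N-\N^\star$, whose off-diagonal block $\U^\star\Delta_\V^\top + \Delta_\U\V^{\star\top}$ has rank at most $2r$ and satisfies $\fnorm{\cdot}^2 \ge \sigstarr \fnorm{\Delta}^2$; Lemma \ref{lem:T_mc} yields $\frac{1}{p}\|\cdot\|_\Omega^2 \ge (1-\delta)\fnorm{\cdot}^2$, while Lemma \ref{lem:Delta_mc} combined with the row-norm bound bounds $\frac{1}{p}\|\Delta_\U\Delta_\V^\top\|_\Omega^2 \le \fnorm{\Delta}^4 + (\sigstarr/4)\fnorm{\Delta}^2$. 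In Case 2, when $\fnorm{\Delta}^2 \ge \sigstarr/4$, I apply the global bound of Lemma \ref{lem:global_mc} to both $\Delta_\U\Delta_\V^\top$ and $\M-\M^\star$, using the $\|\cdot\|_\infty$ bounds from the first step; the chosen $p \gtrsim \mu^4 r^6(\cn^\star)^6 \log d / \min\{d_1,d_2\}$ makes all error terms at most $\sigstarr \fnorm{\Delta}^2/20$. In either case, combining the off-diagonal $\H_0$-part with the diagonal Frobenius contribution from Lemma \ref{lem:sym_to_asym} and applying Lemma \ref{lem:bound} yields the target bound $-0.3\,\sigstarr\fnorm{\Delta}^2$.

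\textbf{Main obstacle.} The principal technical difficulty is bookkeeping across the asymmetric block structure: the row-norm bounds scale as $1/d_1$ and $1/d_2$ separately, the concentration lemmas are phrased on $d\times d$ matrices with a single parameter, and $\min\{d_1,d_2\}$ versus $\max\{d_1,d_2\}$ appear in different places. In particular, one must verify that the off-diagonal block of $\Delta\Delta^\top$, namely $\Delta_\U\Delta_\V^\top$ rather than $\Delta\Delta^\top$ itself, still inherits entrywise control tight enough for Lemma \ref{lem:global_mc} to deliver the required $\sigstarr/20$ accuracy — this is exactly what forces the sample complexity $p \gtrsim \mu^4 r^6(\cn^\star)^6 \log d/\min\{d_1,d_2\}$, considerably stronger than its symmetric counterpart in Lemma \ref{lem:step2_mc}.
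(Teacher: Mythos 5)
Your overall route is the same as the paper's: start from the row-norm bounds of Lemma~\ref{lem:incoherence_asym}, use Lemma~\ref{lem:sym_to_asym} to decompose $\H = 4\H_1 + \G$ into an off-diagonal $\H_0$ piece plus diagonal Frobenius pieces, split into a small-$\Delta$ and a large-$\Delta$ regime, and invoke Lemma~\ref{lem:T_mc} with Lemma~\ref{lem:Delta_mc} in the first regime and Lemma~\ref{lem:global_mc} in the second, finishing via Lemma~\ref{lem:bound}. (You also correctly note that the lemma statement's ``$\M-\M^\star$'' is really $\N-\N^\star$; the paper's own proof works with $\N-\N^\star$.) The paper chooses the threshold $\fnorm{\Delta}^2\le\sigstarr/40$ rather than $\sigstarr/4$, but that is just a constant adjustment.

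However, there is one concrete step that fails as stated. In Case~1 you claim that the \emph{off-diagonal block alone} of the cross term, $\U^\star\Delta_\V^\top + \Delta_\U\V^{\star\top}$, has Frobenius norm squared $\ge \sigstarr\fnorm{\Delta}^2$. This is not true: the two summands can partially cancel. For instance take $r=1$, $d_1=d_2=2$, $\U^\star=\V^\star=(1,0)^\top$, $\U=(1.5,0)^\top$, $\V=(0.5,\sqrt{2})^\top$ (note $\U^\top\U=\V^\top\V$ so the balance regularizer is satisfied, and $\mR=1$); then $\fnorm{\Delta}^2 = 2.5$ while $\fnorm{\U^\star\Delta_\V^\top + \Delta_\U\V^{\star\top}}^2 = 2 < \sigstarr\fnorm{\Delta}^2$. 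The correct quantity to lower-bound is the full $\W^\star\Delta^\top:\H:\W^\star\Delta^\top$. Here the $\G$ part of $\H$ (equivalently, the diagonal-block contributions in Lemma~\ref{lem:sym_to_asym}) is essential: $\fnorm{\W^\star\Delta^\top}^2 = \fnorm{\U^\star\Delta_\U^\top}^2 + \fnorm{\U^\star\Delta_\V^\top}^2 + \fnorm{\V^\star\Delta_\U^\top}^2 + \fnorm{\V^\star\Delta_\V^\top}^2 \ge 2\sigstarr\fnorm{\Delta}^2$ with no cancellation, and Lemma~\ref{lem:sym_to_asym} together with Lemma~\ref{lem:T_mc} gives $\W^\star\Delta^\top:\H:\W^\star\Delta^\top\ge(1-2\delta)\fnorm{\W^\star\Delta^\top}^2$. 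This is precisely where the regularizer $\frac14\fnorm{\U^\top\U - \V^\top\V}^2$ earns its keep in the asymmetric reduction — if you drop the diagonal blocks and argue only through $\H_0$ on the symmetrized off-diagonal block, you lose the lower bound. Once you replace your off-diagonal-block claim with the bound on the full $\W^\star\Delta^\top$ (as the paper does), the rest of your sketch goes through.
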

\begin{proof}
Again the idea is similar, we divide into cases according to the norm of $\Delta$ and use different concentration inequalities.
By our choice of $\alpha, \lambda$ and Lemma \ref{lem:incoherence_asym}, we known when $\epsilon$ is polynomially small, 
% $\epsilon \le \frac{(\mu r)^{2.5} \cn^\star \sigstarl{}^3}{d}$, 
with high probability:
\begin{align*}
\max_i\norm{\e_{i}\trans \U}^2 \le O\left(\frac{\mu^2 r^{2.5} (\cn^\star)^2 \sigstarl}{d_1}\right)
\quad \text{and} \quad 
\max_j\norm{\e_{j}\trans \V}^2 \le O\left(\frac{\mu^2 r^{2.5} (\cn^\star)^2 \sigstarl}{d_2}\right)
\end{align*}
In this proof, we denote $\Delta = (\Delta_\U\trans, \Delta_\V\trans)\trans$, clearly, we have $\fnorm{\Delta_\U} \le \fnorm{\Delta}$ and  $\fnorm{\Delta_\V} \le \fnorm{\Delta}$.

\noindent\textbf{Case 1:} $\fnorm{\Delta}^2 \le \sigstarr/40$.
By Lemma \ref{lem:T_mc} and Lemma \ref{lem:sym_to_asym}, we know:
\begin{equation*}
\W^\star\Delta\trans: \H:\W^\star\Delta\trans \ge (1 -  2\delta)\fnorm{\W^\star\Delta\trans}^2 \ge (1-2\delta)\sigstarr\fnorm{\Delta}^2
\end{equation*}
On the other hand, by Lemma \ref{lem:Delta_mc} and our choice of $p$, we have:
\begin{align*}
\frac{1}{p}\norm{\Delta_\U\Delta_\V\trans}^2_{\Omega} \le &(1+\delta)\fnorm{\Delta_\U}^2\fnorm{\Delta_\V}^2 +  O(\sqrt{\frac{d}{p}} \cdot \frac{\mu^2 r^{2.5} (\cn^\star)^2 \sigstarl}{\sqrt{d_1 d_2}})\fnorm{\Delta_\U}\fnorm{\Delta_\V}\\
\le &(1+\delta)\fnorm{\Delta}^4 + \frac{\sigstarr}{4}\fnorm{\Delta}^2
\le \sigstarr(\frac{2}{9} + \frac{\delta}{60})\fnorm{\Delta}^2 \le \frac{\sigstarr}{20}\fnorm{\Delta}^2
\end{align*}
Thus by $\fnorm{\Delta_\U}^2  \le \fnorm{\Delta}^2 \le\sigstarr/40$ and $\fnorm{\Delta_\V}^2  \le \fnorm{\Delta}^2 \le\sigstarr/40$,
\begin{align*}
\Delta\Delta\trans: \H:\Delta\Delta\trans = \frac{4}{p}\norm{\Delta_\U\Delta_\V\trans}^2_{\Omega}
+ \left(\norm{\Delta_\U\Delta_\U\trans}_F^2 + \norm{\Delta_\V\Delta_\V\trans}_F^2 - 2\norm{\Delta_\U\Delta_\V\trans}_F^2) \right)
\le \frac{1}{4}\sigstarr\fnorm{\Delta}^2
\end{align*}
This gives:
\begin{align*}
&\Delta\Delta\trans:\H:\Delta\Delta\trans - 3(\N - \N^\star):\H:(\N - \N^\star) \\
=& \Delta\Delta\trans:\H:\Delta\Delta\trans - 3(\W^\star\Delta\trans + \Delta\W^\star{}\trans + \Delta\Delta\trans):\H:(\W^\star\Delta\trans + \Delta\W^\star{}\trans+\Delta\Delta\trans) \\
\le& -12 (\W^\star\Delta\trans:\H:\Delta\Delta\trans + \W^\star\Delta\trans:\H:\W^\star\Delta\trans)\\
% &-(\U^\star\Delta\trans:\H:\Delta\Delta\trans + \U^\star\Delta\trans:\H:\U^\star\Delta\trans) \\
\le &-\frac{12}{p} \sqrt{\W^\star\Delta\trans: \H:\W^\star\Delta\trans}(\sqrt{\W^\star\Delta\trans: \H:\W^\star\Delta\trans} - \sqrt{\Delta\Delta\trans: \H:\Delta\Delta\trans})
 \\
\le& -12\sqrt{1-2\delta}(\sqrt{1-2\delta} - \sqrt{1/4})\sigstarr\fnorm{\Delta}^2
\le-1.2\sigstarr\fnorm{\Delta}^2
\end{align*}
The last inequality is by choosing large enough $p$, we have small $\delta$.

~

\noindent\textbf{Case 2:} $\fnorm{\Delta}^2 \ge \sigstarr/40$, by Lemma \ref{lem:global_mc} with high probability, our choice of $p$ gives:
\begin{align*}
\frac{1}{p}\norm{\Delta_\U\Delta_\V\trans}^2_{\Omega} \le& \fnorm{\Delta_\U\Delta_\V\trans}^2 + O\left(\frac{dr\log d}{p}\norm{\Delta_\U\Delta_\V\trans}^2_{\infty}  + \sqrt{\frac{dr\log d}{p}} \fnorm{\Delta_\U\Delta_\V\trans}\norm{\Delta_\U\Delta_\V\trans}_{\infty}\right) \\
\le& \fnorm{\Delta_\U\Delta_\V\trans}^2 +O\left(\frac{dr\log d}{p} \cdot \frac{\mu^4 r^5 (\cn^\star)^4 (\sigstarl)^2}{d_1 d_2} 
+ \sqrt{\frac{dr\log d}{p} \cdot \frac{\mu^4 r^5 (\cn^\star)^4 (\sigstarl)^2}{d_1 d_2}} \fnorm{\Delta}^2 \right) \\
\le& \fnorm{\Delta_\U\Delta_\V\trans}^2 + \frac{(\sigstarr)^2}{1000} + \frac{\sigstarr}{1000}\fnorm{\Delta}^2 \le \fnorm{\Delta_\U\Delta_\V\trans}^2 +0.01\sigstarr\fnorm{\Delta}^2
\end{align*}
Again by Lemma \ref{lem:global_mc} with high probability
\begin{align*}
\frac{1}{p}\norm{\M - \M^\star}^2_{\Omega} \ge& \fnorm{\M-\M^\star}^2 - O\left(\frac{dr\log d}{p}\norm{\M-\M^\star}^2_{\infty}  + \sqrt{\frac{dr\log d}{p}} \fnorm{\M-\M^\star}\norm{\M-\M^\star}_{\infty}\right) \\
\ge& \fnorm{\M-\M^\star}^2 -O\left(\frac{dr\log d}{p} \cdot \frac{\mu^4 r^5 (\cn^\star)^4 (\sigstarl)^2}{d_1 d_2} 
+ \sqrt{\frac{dr\log d}{p} \cdot \frac{\mu^4 r^5 (\cn^\star)^4 (\sigstarl)^2}{d_1 d_2}}  \fnorm{\M-\M^\star}\right) \\
\ge& \fnorm{\M-\M^\star}^2 - \frac{(\sigstarr)^2}{1000} - \frac{\sigstarr}{1000}\fnorm{\M-\M^\star}
\ge 0.99\fnorm{\M-\M^\star}^2 -0.01\sigstarr\fnorm{\Delta}^2
\end{align*}
Then by simple calculation, this gives:
\begin{align*}
&\Delta\Delta\trans:\H:\Delta\Delta\trans - 3(\N - \N^\star):\H:(\N - \N^\star) \\
\le& \fnorm{\Delta\Delta\trans}^2 +0.04\sigstarr\fnorm{\Delta}^2 - 3(0.98\fnorm{\N-\N^\star}^2 -0.04\sigstarr\fnorm{\Delta}^2) \\
\le& -0.94\fnorm{\N-\N^\star}^2 + 0.12\sigstarr\fnorm{\Delta}^2
\le -0.3\sigstarr\fnorm{\Delta}^2
\end{align*}
where the last step is by Lemma \ref{lem:bound}. This finishes the proof.
\end{proof}

Finally we bound the contribution from regularizer (analogous to Lemma \ref{lem:reg_mc}). In fact since our regularizers are very similar we can directly use the same calculation.

\begin{lemma}\label{lem:reg_mc_asym}
By choosing $\alpha_1^2 = \Theta(\frac{\mu r\sigstarl}{d_1}) $, $\alpha_1^2 = \Theta(\frac{\mu r\sigstarl}{d_1}) $, $\lambda_1 \alpha_1^2 \le  O(\sigstarr)$ and $\lambda_2 \alpha_2^2 \le  O(\sigstarr)$, we have:
\begin{equation*}
\frac{1}{4}[\Delta : \hess Q(\W) : \Delta - 4 \la \grad Q(\W), \Delta \ra] \le 0.1\sigstarr \fnorm{\Delta}^2 
\end{equation*}
% There is an absolute constant $c_1, c_2$, as long as $\alpha_1^2 = O(\frac{\mu r\sigstarl}{d_1}), \alpha_2^2 = O(\frac{\mu r\sigstarl}{d_2})$ and $\lambda_1 = O(\frac{d_1}{\mu r \cn^\star}), \lambda_2 = O(\frac{d_2}{\mu r \cn^\star})$, we have:
% where set $I_1 = \{i| \alpha_1 \le \norm{\e_i\trans\U}  \le 9\alpha_1\}$, $I_2 = \{j| \alpha_2 \le \norm{\e_j\trans\V}  \le 9\alpha_2\}$.
\end{lemma}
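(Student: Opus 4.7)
The plan is to exploit the fact that the asymmetric regularizer splits additively as $Q(\W) = Q_1(\U) + Q_2(\V)$, where $Q_1$ depends only on the rows of $\U$ and $Q_2$ only on the rows of $\V$. Writing $\Delta = (\Delta_\U^\top, \Delta_\V^\top)^\top$ with $\Delta = \W - \W^\star \mR$ as in Definition~\ref{def:asymmetricquantities}, both $\grad Q(\W)$ and $\hess Q(\W)$ are block-diagonal across the $\U$/$\V$ split, so the quantity $\Delta : \hess Q(\W) : \Delta - 4\la \grad Q(\W),\Delta\ra$ separates cleanly into two decoupled terms, one in $\U,\Delta_\U$ and one in $\V,\Delta_\V$. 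This reduces the problem to controlling each block independently.

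Next I would reapply the proof of Lemma~\ref{lem:reg_mc} to each block, reading $(\U,\Delta_\U,\alpha_1,\lambda_1,d_1)$ in place of $(\U,\Delta,\alpha,\lambda,d)$ and analogously for $\V$. The only place the symmetric proof uses properties of the aligning rotation is through the row-norm estimate $\norm{\e_i\trans \U^\star \mR} = \norm{\e_i\trans \U^\star} \le \sqrt{\mu r\sigstarl/d_1}$, which holds for \emph{any} orthogonal $\mR$, in particular the joint minimizer from Definition~\ref{def:asymmetricquantities}. With $\alpha_1^2 = \Theta(\mu r\sigstarl/d_1)$ chosen larger than this bound by a sufficiently large constant (and similarly $\alpha_2^2 = \Theta(\mu r\sigstarl/d_2)$), the three-way case split in the proof of Lemma~\ref{lem:reg_mc} (namely $\norm{\e_i\trans \U}<2\alpha_1$ gives a trivial bound, $\norm{\e_i\trans \U}\ge 9\alpha_1$ gives a non-positive contribution, and the intermediate band is controlled by $c_2\lambda_1|I_1|\alpha_1^4$) goes through verbatim for each block.

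Finally, as in the symmetric case, one uses $\sigstarr \fnorm{\Delta_\U}^2 \ge \sigstarr |I_1|\alpha_1^2$ together with $\lambda_1\alpha_1^2 \le O(\sigstarr)$ — guaranteed by the choice $\lambda_1 = \Theta(d_1/(\mu r \cn^\star))$ — to convert the $\U$-block bound into $\tfrac{1}{2}\cdot 0.1\sigstarr \fnorm{\Delta_\U}^2$, and symmetrically for the $\V$-block. Summing and invoking $\fnorm{\Delta}^2 = \fnorm{\Delta_\U}^2 + \fnorm{\Delta_\V}^2$ yields the claimed bound $\tfrac{1}{4}[\Delta:\hess Q(\W):\Delta - 4\la\grad Q(\W),\Delta\ra] \le 0.1\sigstarr \fnorm{\Delta}^2$.

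I do not anticipate any genuine obstacle here: the asymmetric regularizer is literally a direct sum of two symmetric regularizers acting on disjoint blocks, and the joint aligning rotation preserves row norms in exactly the way the symmetric argument needs. The only mild point worth double-checking is that $\mR$, though chosen to minimize $\fnorm{\W - \W^\star \mR}$ rather than the $\U$- or $\V$-block alone, remains orthogonal — which is immediate — so no tighter per-block alignment is required.
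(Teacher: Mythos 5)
Your proof is correct and matches the paper's approach: the paper's own proof simply notes that $Q(\W)=Q_1(\U)+Q_2(\V)$ decouples into two blocks, applies the Lemma~\ref{lem:reg_mc} calculation to each block to get $\frac{1}{4}[\Delta_\U:\hess Q_1(\U):\Delta_\U - 4\la\grad Q_1(\U),\Delta_\U\ra]\le 0.1\sigstarr\fnorm{\Delta_\U}^2$ and the analogous bound for $\V$, and sums. Your additional observation that the joint aligning rotation $\mR$ still preserves the needed row-norm bound $\norm{\e_i\trans\U^\star\mR}=\norm{\e_i\trans\U^\star}$ is a correct and slightly more careful justification of the "by same calculation" step the paper takes for granted.
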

\begin{proof}
By same calculation as the proof of Lemma \ref{lem:reg_mc}, we can show:
\begin{align*}
\frac{1}{4}[\Delta_\U : \hess Q_1(\U) : \Delta_\U - 4 \la \grad Q_1(\U), \Delta_\U \ra] \le 0.1\sigstarr \fnorm{\Delta_\U}^2 \\
\frac{1}{4}[\Delta_\V : \hess Q_2(\V) : \Delta_\V - 4 \la \grad Q_2(\V), \Delta_\V \ra] \le 0.1\sigstarr \fnorm{\Delta_\V}^2 
\end{align*}
Given $Q(\W) = Q_1(\U) + Q_2(\V)$, the lemma follows.
\end{proof}

Combining three lemmas, our main result for asymmetric matrix completion easily follows.
\begingroup
\def\thetheorem{\ref{thm:main_mc_asym}}
\begin{theorem}
Let $d = \max\{d_1, d_2\}$, when sample rate $p \ge \Omega(\frac{\mu^4 r^6 (\cn^\star)^6 \log d}{\min\{d_1, d_2\}})$, choose $\alpha_1^2 = \Theta(\frac{\mu r\sigstarl}{d_1}), \alpha_2^2 = \Theta(\frac{\mu r\sigstarl}{d_2})$ and $\lambda_1 = \Theta(\frac{d_1}{\mu r \cn^\star}), \lambda_2 = \Theta(\frac{d_2}{\mu r \cn^\star})$. With probability at least $1-1/\poly(d)$, for Objective Function \eqref{eq:completion-asymmetric} we have
1) all local minima satisfy $\U\V^\top = \M^\star$ 2) The objective is $(\epsilon, \Omega(\sigstarr), O(\frac{\epsilon}{\sigstarr}))$-strict saddle for polynomially small $\epsilon$.
\end{theorem}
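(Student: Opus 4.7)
The plan is to mirror the symmetric matrix completion proof (Theorem~\ref{thm:main_mc}), but route everything through the asymmetric reduction of Lemma~\ref{lem:asymmetricmain}. That is, we lift the problem to a single $(d_1+d_2)\times r$ matrix $\W=(\U^\top,\V^\top)^\top$, define $\N=\W\W^\top$ and $\N^\star=\W^\star\W^{\star\top}$ as in Definition~\ref{def:asymmetricquantities}, and let $\Delta$ be the rotationally-aligned difference of $\W$ and $\W^\star$. The objective \eqref{eq:completion-asymmetric} fits the framework of \eqref{eq:a} with $\H_0 = \frac{1}{p}\proj_\Omega$ (the sampling operator), $\G$ induced by the balancing term, and $Q(\W)=Q_1(\U)+Q_2(\V)$ being the row-norm regularizer. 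Lemma~\ref{lem:asymmetricmain} then gives, at any point $\W$ with $\|\nabla f(\W)\|_F\le\epsilon$,
\begin{align*}
\Delta:\hess f(\W):\Delta \;\le\;& \Delta\Delta^\top:\H:\Delta\Delta^\top - 3(\N-\N^\star):\H:(\N-\N^\star) \\
&+ 4\epsilon\|\Delta\|_F + [\Delta:\hess Q(\W):\Delta - 4\langle\nabla Q(\W),\Delta\rangle],
\end{align*}
where $\H = 4\H_1+\G$ approximately preserves the Frobenius norm of $\N$ whenever $\H_0$ preserves the norm of the off-diagonal block $\M=\U\V^\top$.

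The three-step program is then exactly the one described after Lemma~\ref{lem:main}. First, I would invoke Lemma~\ref{lem:incoherence_asym} to argue that, under the stated choice of $\lambda_1,\lambda_2,\alpha_1,\alpha_2$ and sample rate $p\ge\Omega(\mu^4 r^6(\kappa^\star)^6\log d/\min\{d_1,d_2\})$, any $\W$ with small gradient automatically satisfies the row-norm bounds $\|\e_i^\top\U\|^2\le O(\mu^2 r^{2.5}(\kappa^\star)^2\sigma_1^\star/d_1)$ and the symmetric bound for $\V$; this is the incoherence set $\mathcal{B}$. Second, on this set, Lemma~\ref{lem:step2_mc_asym} gives the core norm-preservation estimate
\[
\Delta\Delta^\top:\H:\Delta\Delta^\top - 3(\N-\N^\star):\H:(\N-\N^\star) \;\le\; -0.3\,\sigma_r^\star\,\|\Delta\|_F^2,
\]
by splitting into the small-$\|\Delta\|_F$ regime (where Lemma~\ref{lem:T_mc}, Lemma~\ref{lem:Delta_mc} and Lemma~\ref{lem:asymmetricmain}'s lift of RIP give the bound) and the large-$\|\Delta\|_F$ regime (where the uniform concentration Lemma~\ref{lem:global_mc} plus Lemma~\ref{lem:bound} close the gap). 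Third, Lemma~\ref{lem:reg_mc_asym} bounds the regularizer contribution by $0.4\,\sigma_r^\star\|\Delta\|_F^2$ (the factor $4$ is absorbed), by applying the symmetric regularizer calculation of Lemma~\ref{lem:reg_mc} separately to $Q_1(\U),\Delta_\U$ and $Q_2(\V),\Delta_\V$ and using $\|\Delta_\U\|_F^2+\|\Delta_\V\|_F^2=\|\Delta\|_F^2$.

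Putting the three bounds together yields
\[
\Delta:\hess f(\W):\Delta \;\le\; -0.2\,\sigma_r^\star\,\|\Delta\|_F^2 + 4\epsilon\|\Delta\|_F,
\]
so whenever $\|\Delta\|_F \ge 40\epsilon/\sigma_r^\star$ we get a strictly negative curvature of order $-0.1\sigma_r^\star\|\Delta\|_F^2$. This proves the $(\epsilon,\Omega(\sigma_r^\star),O(\epsilon/\sigma_r^\star))$-strict saddle property. Setting $\epsilon=0$ forces $\Delta=0$ at every second-order stationary point, hence $\N=\N^\star$, which in particular implies $\U\V^\top=\M^\star$ since $\M^\star$ is the off-diagonal block of $\N^\star$.

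The main obstacle, and the reason the asymmetric theorem needs a stronger sample complexity than the symmetric one, is the interaction between the balancing regularizer $\tfrac{1}{8}\|\U^\top\U-\V^\top\V\|_F^2$ and the row-norm regularizer inside the incoherence step: a priori, suppressing $\|\e_i^\top\U\|$ is useless if $\V$ could still have a heavy row, so the gradient argument must propagate control from one factor to the other via the balancing term. This is precisely where Lemma~\ref{lem:incoherence_asym} does the heavy lifting, and the worse polynomial dependence on $r,\kappa^\star$ in the sample rate comes from slack in that propagation. Everything else is a direct translation of the symmetric proof through the lift $\M\mapsto\N$.
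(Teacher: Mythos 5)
Your proposal matches the paper's proof exactly: the paper simply combines Lemma~\ref{lem:incoherence_asym}, Lemma~\ref{lem:step2_mc_asym}, and Lemma~\ref{lem:reg_mc_asym} through the asymmetric reduction of Lemma~\ref{lem:asymmetricmain}, following the same three-step template as the symmetric Theorem~\ref{thm:main_mc}. One small arithmetic slip: you quote the regularizer contribution as $0.4\,\sigma_r^\star\|\Delta\|_F^2$ and then conclude $-0.3+0.4$ gives $-0.2\,\sigma_r^\star\|\Delta\|_F^2$, which does not add up; the intended bound (and what the paper uses in Theorem~\ref{thm:main_mc}) is $0.1\,\sigma_r^\star\|\Delta\|_F^2$, and in any case the absolute constant in Lemma~\ref{lem:reg_mc_asym} can be tuned by enlarging $c_3$, so the argument is unaffected.
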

\addtocounter{theorem}{-1}
\endgroup

\begin{proof}
Same argument as the proof Theorem \ref{thm:main_mc} by combining Lemma \ref{lem:incoherence_asym}, \ref{lem:step2_mc_asym} and \ref{lem:reg_mc_asym}.
\end{proof}

\subsection{Robust PCA}
For robust PCA, again a crucial step is to analyze the matrix factorization problem. We prove the following Lemma (analogous to Lemma~\ref{lem:mf_strictsaddle}).

\begin{lemma}
\label{lem:mf_strictsaddle_asym}
Let matrix factorization objective to be ($\U \in \R^{d_1 \times r},\V\in\R^{d_2\times r}$):
\begin{equation*}
f(\W) = 2\fnorm{\U\V\trans - \A}^2 + \frac{1}{2}\fnorm{\U\trans\U - \V\trans \V}^2
\end{equation*}
and $\sigma_r(\A)\ge 30\sigma_{r+1}(\A)$.
then 1) all local minima satisfies $\U\V\trans$ is the top-$r$ SVD of matrix $\A$; 
2) objective is $(\epsilon, 0.2 \sigstarr, \frac{20\epsilon}{\sigstarr})$-strict saddle
\end{lemma}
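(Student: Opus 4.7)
The strategy is to reduce the problem to a direct application of the asymmetric framework (Lemma~\ref{lem:asymmetricmain}) after rewriting $f$ in the canonical form $2(\M-\M^\star):\H_0:(\M-\M^\star) + \tfrac12\|\U^\top\U-\V^\top\V\|_F^2 + Q_0(\U,\V)$. First I would decompose $\A = \M^\star + \S$ where $\M^\star = \proj_r(\A)$ and $\S = \A - \proj_r(\A)$, set $\U^\star = \X^\star(\D^\star)^{1/2}$, $\V^\star = \Y^\star(\D^\star)^{1/2}$ from the top-$r$ SVD of $\A$, and record the crucial orthogonality $(\U^\star)^\top \S = 0$ and $\S\V^\star = 0$ (the trailing singular vectors are orthogonal to $\X^\star, \Y^\star$). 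Expanding $2\|\U\V^\top-\A\|_F^2 = 2\|\U\V^\top-\M^\star\|_F^2 - 4\la \U\V^\top-\M^\star,\S\ra + 2\|\S\|_F^2$ and using $\la\M^\star,\S\ra = 0$ lets me identify $\H_0 = \I$ and $Q_0(\U,\V) = -4\la \U\V^\top,\S\ra + \text{const}$.

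Next I would apply Lemma~\ref{lem:asymmetricmain}. Because $\H_0 = \I$ preserves norms exactly (i.e.\ $\delta = 0$), the resulting operator $\H = 4\H_1 + \G$ satisfies $\N:\H:\N = \|\N\|_F^2$ on all symmetric matrices in the block form of $\N$. The main-term portion of the bound then becomes
\begin{align*}
\Delta\Delta^\top:\H:\Delta\Delta^\top - 3(\N-\N^\star):\H:(\N-\N^\star) = \|\Delta\Delta^\top\|_F^2 - 3\|\N-\N^\star\|_F^2 \le -\|\N-\N^\star\|_F^2,
\end{align*}
where the inequality uses $\|\Delta\Delta^\top\|_F^2 \le 2\|\N-\N^\star\|_F^2$ from Lemma~\ref{lem:bound}. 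Applying the second part of Lemma~\ref{lem:bound} with $\sigma_r(\N^\star) = 2\sigstarr$ (since $(\W^\star)^\top\W^\star = 2\D^\star$) yields $\|\N-\N^\star\|_F^2 \ge 4(\sqrt 2 - 1)\sigstarr \|\Delta\|_F^2$, giving a main-term contribution of at most $-4(\sqrt 2-1)\sigstarr\|\Delta\|_F^2 \approx -1.65\,\sigstarr\|\Delta\|_F^2$.

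The key nontrivial step is controlling the regularizer-like term $[\Delta:\hess Q_0(\W):\Delta - 4\la\grad Q_0(\W),\Delta\ra]$. Writing $\Delta = (\Delta_\U^\top, \Delta_\V^\top)^\top$ and $\U = \U^\star \mR + \Delta_\U$, $\V = \V^\star\mR + \Delta_\V$, a direct computation gives $\la \grad Q_0,\Delta\ra = -4\la \U\Delta_\V^\top + \Delta_\U \V^\top, \S\ra$ and $\Delta:\hess Q_0:\Delta = -8\la \Delta_\U\Delta_\V^\top, \S\ra$. Crucially, expanding $\U\Delta_\V^\top + \Delta_\U\V^\top = \U^\star\mR\Delta_\V^\top + \Delta_\U\mR^\top(\V^\star)^\top + 2\Delta_\U\Delta_\V^\top$ and invoking the orthogonality $(\U^\star)^\top\S = 0$, $\S\V^\star = 0$ kills the two terms linear in $\Delta$, leaving $\la \U\Delta_\V^\top + \Delta_\U\V^\top, \S\ra = 2\la \Delta_\U\Delta_\V^\top,\S\ra$. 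Hence the whole bracket equals $24\la \Delta_\U\Delta_\V^\top,\S\ra$, which I can bound by $24\|\S\|\cdot\|\Delta_\U\Delta_\V^\top\|_* \le 24\|\S\|\|\Delta_\U\|_F\|\Delta_\V\|_F \le 12\|\S\|\|\Delta\|_F^2 \le 0.4\sigstarr\|\Delta\|_F^2$ using $\|\S\| = \sigma_{r+1}(\A) \le \sigstarr/30$. This cancellation of the $O(\|\S\|\|\Delta\|_F)$ linear terms is the one place where the argument could have gone wrong, and it is where the factor $30$ (rather than $15$ as in the symmetric case) comes in, since the asymmetric $Q_0$ coefficient is twice the symmetric one.

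Combining the three contributions and adding the $4\la\grad f(\W),\Delta\ra \le 4\epsilon \|\Delta\|_F$ piece, I obtain, whenever $\fnorm{\grad f(\W)} \le \epsilon$,
\begin{align*}
\Delta : \hess f(\W) :\Delta \;\le\; -\bigl(4(\sqrt 2-1) - 0.4\bigr)\sigstarr\|\Delta\|_F^2 + 4\epsilon\|\Delta\|_F \;\le\; -1.25\,\sigstarr\|\Delta\|_F^2 + 4\epsilon\|\Delta\|_F.
\end{align*}
For $\|\Delta\|_F \ge 20\epsilon/\sigstarr$ this gives $\Delta:\hess f(\W):\Delta \le -0.2\sigstarr\|\Delta\|_F^2$, establishing both strict-saddle claims: $\lambda_{\min}(\hess f(\W)) \le -0.2\sigstarr$ in that regime, and setting $\epsilon = 0$ forces $\Delta = 0$ at every stationary point with nontrivial $\Delta$, which in turn gives $\N = \N^\star$ and therefore $\U\V^\top = \M^\star = \proj_r(\A)$ at every local minimum. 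The main obstacle is the linear-term cancellation in the $Q_0$ contribution; once that is secured, everything else is a direct bookkeeping exercise mirroring the symmetric proof of Lemma~\ref{lem:mf_strictsaddle}.
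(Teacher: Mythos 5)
Your proposal is correct and follows essentially the same route as the paper's proof: decompose $\A = \proj_r(\A) + \S$, fold the residual $\S$ into the regularizer $Q_0(\W) = -4\la\U\V^\top,\S\ra + \text{const}$, invoke Lemma~\ref{lem:asymmetricmain} with $\H_0 = \I$ (so $\delta = 0$), and use the orthogonality $(\U^\star)^\top\S = 0$, $\S\V^\star = 0$ to collapse the bracket $[\Delta:\hess Q_0:\Delta - 4\la\grad Q_0,\Delta\ra]$ to $24\la\Delta_\U\Delta_\V^\top,\S\ra = 24\la\U\V^\top,\S\ra$, bounded by $12\|\S\|\fnorm{\Delta}^2$. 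The only cosmetic difference is that you track $\sigma_r(\N^\star) = 2\sigstarr$ exactly, giving a slightly sharper main-term coefficient ($\approx -1.66\sigstarr$) than the paper's conservative $\sigma_r(\N^\star) \ge \sigstarr$ bound, which is why the paper's choice of $30$ looks more necessary than it actually is; both versions reach the same $(\epsilon, 0.2\sigstarr, 20\epsilon/\sigstarr)$ conclusion.
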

\begin{proof}
Denote $\M^\star = \proj_r(\A)$ to be the top-$r$ SVD of $\A$, and $\S = \A - \M^\star$ to be the remaining part.
In our framework, we can also view this remaining part as regularization term. That is:
\begin{equation*}
\M:\H_0:\M = \fnorm{\M}^2 \quad \text{and} \quad Q(\W) = 4\la \M^\star-\U\V\trans, \S\ra + 2\fnorm{\S}^2
\end{equation*}
Moreover, since the eigenspace of $\M^\star$ is perpendicular to $\S$, we have:
\begin{align*}
[\Delta : \hess Q(\W) : \Delta - 4 \la \grad Q(\W), \Delta \ra] =& -8 \la\Delta_\U\Delta_\V\trans, \S\ra + 16\la \U\Delta_\V\trans + \Delta_\U\V\trans, \S\ra\\
=& 24\la \U\V\trans, \S\ra \le 24\norm{\S}\fnorm{\Delta_\U}\fnorm{\Delta_\V}
\le 12\norm{\S}\fnorm{\Delta}^2
\end{align*}
The last step is because suppose $\X\D\Y\trans$ is the SVD of $\S$, then 
\begin{equation*}
\la \U\V\trans, \S\ra \le \norm{\D}\fnorm{\X\trans \U}\fnorm{\Y\trans \V}
= \norm{\D}\fnorm{\X\trans (\U-\U^\star)}\fnorm{\Y\trans (\V-\V^\star)} \le \norm{\S}\fnorm{\Delta_\U}\fnorm{\Delta_\V}
\end{equation*}
Using Lemma \ref{lem:sym_to_asym}, the remaining argument is the same as Lemma \ref{lem:mf_strictsaddle}.
% Therefore, for point $\U$ with small gradient satisfying $\fnorm{\grad f(\U)} \le \epsilon$:
% \begin{align*}
% \Delta : \hess f(\U) :\Delta
% =&  \Delta\Delta\trans:\H:\Delta\Delta\trans - 3(\M - \M^\star):\H:(\M - \M^\star) +4\la \grad f(\U), \Delta\ra \\
% &+ [\Delta : \hess Q(\U) : \Delta - 4 \la \grad Q(\U), \Delta \ra]\\
% \le & \fnorm{\Delta\Delta\trans}^2 - 3\fnorm{\M-\M^\star}^2
% + 4\epsilon\fnorm{\Delta} + 6\norm{\S}\fnorm{\Delta}^2\\
% \le &-\fnorm{\M-\M^\star}^2 + 4\epsilon\fnorm{\Delta} + 12\norm{\S}\fnorm{\Delta}^2
% \le -0.4\sigstarr\fnorm{\Delta}^2+ 4\epsilon\fnorm{\Delta}
% \end{align*}
% The second last inequality is due to Lemma \ref{lem:aux_delta2} that $\fnorm{\Delta\Delta\trans}^2 \le 2 \fnorm{\M-\M^\star}^2$, and last inequality is due to Lemma \ref{lem:aux_deltalinear} and $\norm{\S} = \lambda_{r+1}(\A) \le \lambda_{r}(\A)/15 $. This means if $\U$ is close to $\U^\star$, that is, if $\fnorm{\Delta} \ge \frac{20\epsilon}{\sigstarr}$,
% we have $\Delta : \hess f(\U) :\Delta \le -0.2\sigstarr \fnorm{\Delta}^2$. This proves 
% $(\epsilon, 0.2 \sigstarr, \frac{20\epsilon}{\sigstarr})$-strict saddle property. Take $\epsilon =0$, we know stationary points which are not strict saddle, are global minima, which finishes the proof.
\end{proof}

Next we prove when $\W$ is close to the optimal solution of the matrix factorization problem, it must also be close to the true $\W^\star$. The proof of this lemma uses several crucial properties in choice of $\S$ and the sparse set $\mathcal{S}_{\gamma \alpha}$. This will require several supporting lemmas which we prove after the main theorem. Our proof is inspired by \citet{yi2016fast}.

\begin{lemma}\label{lem:mf_to_rpca_asym}
There is an absolute constant $c$, assume $\gamma > c$, and $\gamma\alpha\cdot\mu r\cdot (\cn^\star)^5 \le \frac{1}{c}$. Let $\X^\dagger\D^\dagger \Y^\dagger{}\trans$ be the best rank $r$-approximation of $\M^\star+\S^\star -\S_\W$, where $\S_\W = \argmin_{\S\in \mathcal{S}_{\gamma\alpha}}\norm{\U\V\trans + \S - \M^\star - \S^\star}_F^2$. Let $\U^\dagger = \X^\dag (\D^\dag)^{\frac{1}{2}}$, $\V^\dag = \Y^\dag (\D^\dag)^{\frac{1}{2}}$.  Assume
$\min_{\mR^\top \mR = \mR\mR^\top = \I}\fnorm{\W-\W^\dagger\mR} \le \epsilon$.
% $\fnorm{\U - \U^\dagger} \le \epsilon$, 
% Let $\Delta$ be the difference between $\U$ and $\U^\star$
Let $\Delta$ be defined as in Definition \ref{def:asymmetricquantities}, then $\fnorm{\Delta} \le O(\epsilon\sqrt{\cn^\star})$ for polynomially small $\epsilon$.
\end{lemma}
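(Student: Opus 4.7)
The overall strategy is to first establish matrix-level closeness $\|\M - \M^\star\|_F \le O(\epsilon\sqrt{\sigma_1^\star})$ where $\M = \U\V^\top$, and then invoke the asymmetric version of Lemma \ref{lem:bound} to convert this to the factor-level bound $\|\Delta\|_F \le O(\epsilon\sqrt{\kappa^\star})$. The $\sqrt{\kappa^\star}$ loss is exactly the cost of moving from Frobenius closeness of $\M$ to Frobenius closeness of factors, via $\sigma_r^\star\|\Delta\|_F^2 \lesssim \|\N-\N^\star\|_F^2$. To reach the matrix-level bound, we use triangle $\|\M-\M^\star\|_F \le \|\M-\M^\dagger\|_F + \|\M^\dagger - \M^\star\|_F$ and bound each piece separately.

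For the first piece, write $\U = \U^\dagger\mR + E_\U$, $\V = \V^\dagger\mR + E_\V$ with $\|E_\U\|_F^2 + \|E_\V\|_F^2 \le \epsilon^2$; since $\W^\dagger$ arises from a balanced SVD, $\|\U^\dagger\|,\|\V^\dagger\| = \sqrt{\|\M^\dagger\|}\le O(\sqrt{\sigma_1^\star})$ (using Weyl plus Lemma \ref{lem:spectral_infty_rpca_asym} to bound $\|\M^\dagger\|$ close to $\sigma_1^\star$). Expanding $\U\V^\top - \U^\dagger\V^{\dagger\top}$ then yields $\|\M-\M^\dagger\|_F \le O(\epsilon\sqrt{\sigma_1^\star})$. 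For the second piece, note that $\M^\dagger = \proj_r(\M^\star+(\S^\star-\S_\W))$. By Lemma \ref{lem:spectral_infty_rpca_asym}, since $\S^\star,\S_\W$ are both $O(\gamma\alpha)$-sparse per row/column with entries of magnitude $O(\mu r\sigma_1^\star/\sqrt{d_1d_2})$, the standard sparsity-to-spectral inequality gives $\|\S^\star-\S_\W\| \le O(\gamma\alpha\mu r\sigma_1^\star) \le \sigma_r^\star/100$ by the assumption $\gamma\alpha\mu r(\kappa^\star)^5 \le 1/c$. This creates a spectral gap at rank $r$, so standard SVD perturbation (Davis--Kahan/Wedin) gives $\|\M^\dagger - \M^\star\| \le O(\|\S^\star-\S_\W\|)$. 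To upgrade to the $\epsilon$-dependent Frobenius bound, we use the optimality of $\S_\W$: since $\S^\star \in \mathcal{S}_{\gamma\alpha}$ is feasible,
\[
\|\U\V^\top - \M^\star - (\S^\star-\S_\W)\|_F \le \|\U\V^\top - \M^\star\|_F,
\]
which rearranges to $\|\S^\star-\S_\W\|_F \le 2\|(\M-\M^\star)_{\Omega^*}\|_F$ where $\Omega^* = \mathrm{supp}(\S^\star-\S_\W)$ is $O(\gamma\alpha)$-sparse per row/column. Combining this with the Step-1 bound and using incoherence of $\M^\star$ restricted to $\Omega^*$ (so that $\|\M^\star_{\Omega^*}\|_F$ is controlled by $\sqrt{\gamma\alpha\mu r\sigma_1^\star} \cdot \|\U^\star\|_F$ type quantities) transfers the $\epsilon\sqrt{\sigma_1^\star}$ scale through, giving $\|\M^\dagger - \M^\star\|_F \le O(\epsilon\sqrt{\sigma_1^\star})$.

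Once $\|\M-\M^\star\|_F \le O(\epsilon\sqrt{\sigma_1^\star})$ is established, because both $\W^\star$ and $\W^\dagger$ satisfy the balance $\U^\top\U=\V^\top\V$ and $\W$ is $\epsilon$-close to $\W^\dagger$, we also get $\|\U^\top\U - \V^\top\V\|_F \le O(\epsilon\sqrt{\sigma_1^\star})$, so $\|\N-\N^\star\|_F \le O(\epsilon\sqrt{\sigma_1^\star})$ using the block structure of $\N-\N^\star$. Applying the asymmetric form of Lemma \ref{lem:bound} (the analog with $\W,\W^\star,\N,\N^\star$) then yields $\sigma_r^\star\|\Delta\|_F^2 \lesssim \|\N-\N^\star\|_F^2 = O(\epsilon^2\sigma_1^\star)$, hence $\|\Delta\|_F \le O(\epsilon\sqrt{\kappa^\star})$. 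The principal obstacle is the third step above: naively bounding $\|\M^\dagger-\M^\star\|_F$ via $\|\S^\star-\S_\W\|$ gives a bound of order $\sqrt{r}\gamma\alpha\mu r\sigma_1^\star$ which is independent of $\epsilon$ and too weak; the trick, following \citet{yi2016fast}, is to exploit the optimality of the projection $\S_\W$ together with the row/column sparsity of its support to reduce the bound to the restricted Frobenius norm $\|(\M-\M^\star)_{\Omega^*}\|_F$, which inherits the $\epsilon\sqrt{\sigma_1^\star}$ scale from Step 1 via incoherence.
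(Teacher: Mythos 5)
Your overall strategy is in the right spirit of the paper's proof and of \citet{yi2016fast}: a triangle inequality around $\M^\dagger$, the optimality of $\S_\W$ over $\mathcal{S}_{\gamma\alpha}$ to control $\|\S_\W - \S^\star\|_F$ by a sparsely-restricted norm of $\M - \M^\star$, incoherence to bound that restricted norm, and a self-improvement to close the loop. The first-order optimality derivation $\|\S^\star - \S_\W\|_F \le 2\|(\M - \M^\star)_{\Omega^*}\|_F$ (from $\|\M - \M^\star + (\S_\W - \S^\star)\|_F \le \|\M - \M^\star\|_F$ and Cauchy--Schwarz on the support) is a clean alternative to the paper's Lemma~\ref{lem:Sopt_rpca_asym}, which instead uses a swapping argument (Lemma~\ref{lem:maxflow_asym}) and splits $\Omega^\star \cup \Omega$ into $\Omega$ and $\Omega^\star \setminus \Omega$.

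However, there are two genuine gaps. First, your description of the final step as ``inherits the $\epsilon\sqrt{\sigma_1^\star}$ scale from Step 1 via incoherence'' misstates the structure of the argument. The bound $\|(\M - \M^\star)_{\Omega^*}\|_F \le O(\epsilon\sqrt{\sigma_1^\star})$ does not come from Step 1 at all; what one actually proves (the paper's Lemma~\ref{lem:entries_bound_rpca_asym}) is a \emph{contraction}, $\|(\M - \M^\star)_{\Omega^*}\|_F^2 \lesssim \gamma\alpha\mu r(\kappa^\star)^4\sigma_1^\star\,\fnorm{\Delta}^2 \lesssim c\,\fnorm{\N - \N^\star}^2$ with $c$ a small constant under the stated hypothesis, and this feeds back into the triangle inequality to close the bootstrap. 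You also need to first establish that the rows of $\U,\V$ are small (via incoherence of $\M^\dagger$ from Lemma~\ref{lem:incoherence_rpca_asym} and closeness of $\W$ to $\W^\dagger$); ``incoherence of $\M^\star$'' alone does not control $(\M-\M^\star)_{\Omega^*}$.

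Second, and more seriously, the $\M$-to-$\N$ conversion you sketch is not justified. Knowing $\|\M - \M^\star\|_F$ is small and $\|\U^\top\U - \V^\top\V\|_F$ is small does \emph{not} directly bound the diagonal blocks $\|\U\U^\top - \U^\star\U^{\star\top}\|_F$ and $\|\V\V^\top - \V^\star\V^{\star\top}\|_F$ of $\N - \N^\star$; you would need a quantitative perturbation statement for approximately balanced factorizations. The paper avoids this entirely by working at the $\N$ level throughout: it writes $\fnorm{\N - \N^\star} \le \fnorm{\N - \N^\dagger} + \fnorm{\N^\dagger - \N^\star}$, bounds $\fnorm{\N - \N^\dagger} \le (\norm{\W} + \norm{\W^\dagger})\fnorm{\W - \W^\dagger}$ directly, and converts the second piece via Lemma~\ref{lem:N_to_M_asym} (applicable because $\W^\dagger$ and $\W^\star$ are \emph{exactly} balanced SVD factorizations), obtaining $\fnorm{\N^\dagger - \N^\star} \le 2\fnorm{\M^\dagger - \M^\star} \le 4\fnorm{\S_\W - \S^\star}$. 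You should restructure the argument to go through $\N^\dagger$ and invoke Lemma~\ref{lem:N_to_M_asym} rather than attempting a lossy $\M \to \N$ lift at the end.
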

\begin{proof}
By assumption, we have $\fnorm{\W - \W^\dagger} \le \epsilon$, we also have $\W$ in the neighborhood of $\W^\star$.

First, we know
\begin{align*}
\fnorm{\N^\dagger - \N} = &\fnorm{\W^\dagger\W^\dagger {}\trans - \W\W\trans}
\le (\norm{\W^\dagger} + \norm{\W})\fnorm{\W -\W^\dagger}\\
\le &(2\norm{\W^\dagger} + \fnorm{\W -\W^\dagger})\fnorm{\W -\W^\dagger}
\le 3 \epsilon \sqrt{\sigstarl}
\end{align*}

% Let $\M = \U\V\trans$. We know following:
% \begin{align*}
% \fnorm{\U^\dagger\V^\dagger {}\trans - \U\V\trans}
% \le& \norm{\U^\dagger}\fnorm{\V - \V^\dagger} + \norm{\V}\fnorm{\U -\U^\dagger} \\
% \le& \norm{\U^\dagger}\fnorm{\V - \V^\dagger} + (\norm{\V^\dagger} +\fnorm{\V - \V^\dagger}) \fnorm{\U -\U^\dagger}
% \le \frac{60 \epsilon \sqrt{\cn^\star}}{\sqrt{\sigstarr}}
% \end{align*}
Where the last step is due to  $\norm{\U^\dagger}  = \norm{\V^\dagger} \le \sqrt{\norm{\M^\star + \S^\star - \S_{\W}}}
 \le \sqrt{\norm{\M^\star} +\norm{\S^\star - \S_{\W}}} \le\sqrt{ 1.01 \sigstarl}$,
  $\norm{\W^\dagger} \le \norm{\U^\dagger} + \norm{\V^\dagger}$ 
 and $\fnorm{\U -\U^\dagger} \le \frac{20\epsilon}{\sigstarr} \le 0.5 \sqrt{\sigstarl}$ 
 by our choice of $\epsilon$. Then by Lemma \ref{lem:claim1_rpca_asym} and Lemma \ref{lem:N_to_M_asym} we have:
\begin{equation*}
\fnorm{\N^\dagger - \N^\star}  \le 2 \fnorm{\M^\dagger - \M^\star} \le 4 \fnorm{\S_{\W} - \S^\star}
\end{equation*}
By triangle inequality, this gives:
\begin{equation} \label{eq:rpca_claim1}
\fnorm{\N - \N^\star} \le \fnorm{\N^\dagger - \N} + \fnorm{\N^\dagger - \N^\star}  \le 4 \fnorm{\S_{\W} - \S^\star} + 3 \epsilon \sqrt{\sigstarl}
\end{equation}

On the other hand, by Lemma \ref{lem:incoherence_rpca_asym}, we know matrix $\M^\dagger$ is $4\mu (\cn^\star)^4$-incoherent. Thus for any $i\in[d_1]$:
\begin{align*}
\norm{\e_i\trans \U} \le \norm{\e_i\trans(\U -\U^\dagger)} + \norm{\e_i\trans\U^\dagger}
\le \fnorm{\W - \W^\dagger} + 2(\cn^\star)^2 \sqrt{\frac{1.01\mu r \sigstarl}{d_1}}
\le 3(\cn^\star)^2 \sqrt{\frac{\mu r \sigstarl}{d_1}}
\end{align*}
By symmetry, we also have for any $j\in [d_2]$, $\norm{\e_j\trans \V} \le 3(\cn^\star)^2 \sqrt{\frac{\mu r \sigstarl}{d_1}}$. Then, by Lemma \ref{lem:Sopt_rpca_asym}:
\begin{align*}
\fnorm{\S_\W - \S^\star}^2 
\le 2\norm{\M -\M^\star}^2_{\Omega^\star\cup \Omega} + \frac{8}{\gamma - 1}\fnorm{\M -\M^\star}^2
\end{align*}
by Lemma \ref{lem:entries_bound_rpca_asym}:
\begin{equation*}
\norm{\M - \M^\star}^2_\Omega \le 36 \gamma\alpha \mu (\cn^\star)^4 r \norm{\M^\star}(\fnorm{\Delta_\U}^2 + \fnorm{\Delta_\V}^2)
\le 0.04 \sigstarr \fnorm{\Delta}^2
\end{equation*}
Similarly, we also have $\norm{\M - \M^\star}^2_{\Omega^\star} \le 0.04 \sigstarr \fnorm{\Delta}^2$
Clearly, we have $\fnorm{\M - \M^\star} \le \fnorm{\N - \N^\star}$.
By Lemma \ref{lem:aux_deltalinear}, we also have $\sigstarr \fnorm{\Delta}^2 \le \frac{1}{2(\sqrt{2}-1)} \fnorm{\N - \N^\star}^2$. Given our choice of $\gamma$, this gives:
\begin{align} \label{eq:rpca_claim2}
\fnorm{\S_\W - \S^\star}^2 
\le 2\norm{\M -\M^\star}^2_{\Omega^\star\cup \Omega} + \frac{8}{\gamma - 1}\fnorm{\M -\M^\star}^2
\le \frac{1}{25}\fnorm{\N - \N^\star}^2
\end{align}

Finally, combineing Eq.\eqref{eq:rpca_claim1} and Eq.\eqref{eq:rpca_claim2}, we have:
\begin{equation*}
\fnorm{\N^\star - \N} - \frac{60 \epsilon \sqrt{\cn^\star}}{\sqrt{\sigstarr}} \le \frac{4}{5} \fnorm{\N^\star - \N}
\end{equation*}
By Lemma \ref{lem:aux_deltalinear}, we know:
\begin{equation*}
\fnorm{\Delta} \le \frac{1}{\sqrt{\sigstarr}}\sqrt{\frac{1}{2(\sqrt{2}-1)}}\fnorm{\N^\star - \N} \le \sqrt{\frac{1}{2(\sqrt{2}-1)}} \cdot 5 \cdot 3 \epsilon \sqrt{\cn^\star} \le 20 \epsilon \sqrt{\cn^\star}
\end{equation*}
This finishes the proof.
\end{proof}

Now we are ready to prove the main theorem:

\begingroup
\def\thetheorem{\ref{thm:main_rpca_asym}}
\begin{theorem}
There is an absolute constant $c$, if $\gamma > c$, and $\gamma\alpha\cdot\mu r\cdot (\cn^\star)^5 \le \frac{1}{c}$ holds, for objective function Eq.\eqref{eq:rpca-nonconvex} we have 1) all local minima satisfies $\U\V\trans = \M^\star$; 2) objective function is 
$(\epsilon, \Omega(\sigstarr), O(\frac{\epsilon \sqrt{\cn^\star}}{\sigstarr}))$-pseudo strict saddle for polynomially small $\epsilon$.% \le 0.01\sqrt{\sigstarl}\sigstarr \min\{1, (\cn^\star)^2\sqrt{\frac{\mu r}{d}}\}$.
\end{theorem}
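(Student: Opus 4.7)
The plan is to mirror the symmetric argument (Theorem \ref{thm:rpca-symmetric-main}) using the pseudo strict saddle formulation of Definition \ref{def:pseudo_strict_saddle}, with Lemma \ref{lem:mf_strictsaddle_asym} and Lemma \ref{lem:mf_to_rpca_asym} doing the heavy lifting. Fix any point $\W = (\U\trans, \V\trans)\trans$ with $\|\nabla f(\W)\|_F \le \epsilon$ and let
\[
\S_\W := \argmin_{\S \in \mathcal{S}_{\gamma\alpha}} \tfrac{1}{2}\|\U\V\trans + \S - \M_o\|_F^2,
\]
which is well-defined because $\mathcal{S}_{\gamma\alpha}$ is a polytope-like set on which the quadratic can be minimized. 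Define the ``frozen-$\S$'' upper envelope
\[
g_\W(\tilde\W) \,:=\, 2\|\tilde\U\tilde\V\trans + \S_\W - \M_o\|_F^2 + \tfrac{1}{2}\|\tilde\U\trans\tilde\U - \tilde\V\trans\tilde\V\|_F^2.
\]
By construction $g_\W(\tilde\W) \ge f(\tilde\W)$ for all $\tilde\W$ and $g_\W(\W) = f(\W)$, so $g_\W$ is the candidate upper-bounding smooth function required by Definition \ref{def:pseudo_strict_saddle}, and $\nabla g_\W(\W) = \nabla f(\W)$.

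Next, observe that $g_\W$ is exactly the asymmetric matrix factorization objective of Lemma \ref{lem:mf_strictsaddle_asym} with $\A = \M^\star + \S^\star - \S_\W$. The assumption $\gamma\alpha\cdot\mu r\cdot(\cn^\star)^5 \le 1/c$ together with Lemma \ref{lem:spectral_infty_rpca_asym} (used the same way as in the symmetric proof) gives $\|\S^\star - \S_\W\| \le 0.01\sigstarr$, hence
\[
\sigma_r(\A) \ge 0.99\sigstarr, \qquad \sigma_{r+1}(\A) \le 0.01\sigstarr,
\]
so the gap condition $\sigma_r(\A) \ge 30\,\sigma_{r+1}(\A)$ of Lemma \ref{lem:mf_strictsaddle_asym} holds. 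Applying the $(\epsilon, 0.2\sigstarr, 20\epsilon/\sigstarr)$-strict saddle conclusion of Lemma \ref{lem:mf_strictsaddle_asym} at $\W$ (whose gradient under $g_\W$ is the same $\nabla f(\W)$), one of two things happens: either $\lambda_{\min}(\nabla^2 g_\W(\W)) \le -0.2\sigstarr$, giving case~2 of Definition \ref{def:pseudo_strict_saddle} with $\gamma = \Omega(\sigstarr)$; or $\W$ is within Frobenius distance $20\epsilon/\sigstarr$ (modulo orthogonal rotation) of $\W^\dagger$, the factor pair of the best rank-$r$ approximation of $\A$.

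In the second case, we invoke Lemma \ref{lem:mf_to_rpca_asym}, whose hypotheses are precisely satisfied by our choice of $\gamma$ and $\alpha$: it converts closeness to $\W^\dagger$ into closeness to $\W^\star$, yielding $\|\Delta\|_F \le O(\epsilon\sqrt{\cn^\star}/\sigstarr)$, i.e.\ case~3 of Definition \ref{def:pseudo_strict_saddle} with $\zeta = O(\epsilon\sqrt{\cn^\star}/\sigstarr)$. Combining the two possible outcomes yields the $(\epsilon, \Omega(\sigstarr), O(\epsilon\sqrt{\cn^\star}/\sigstarr))$-pseudo strict saddle property, which is claim~2). For claim~1), take $\epsilon = 0$: any local minimum $\W$ must satisfy $\nabla f(\W) = 0$ and cannot have $\lambda_{\min}(\nabla^2 g_\W(\W)) < 0$ (otherwise we could decrease $g_\W$ and hence $f$ in a neighborhood of $\W$, since $g_\W \ge f$ with equality at $\W$), so Lemma \ref{lem:mf_strictsaddle_asym} forces $\U\V\trans = \M^\star$.

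The main technical obstacle is verifying that the ``subgradient-like'' upper envelope $g_\W$ faithfully transfers the strict-saddle certificate from $g_\W$ to $f$; the key point is that $g_\W \ge f$ with equality at $\W$ implies that a descent direction of $g_\W$ at $\W$ is also a descent direction of $f$ at $\W$, so any negative curvature of $\nabla^2 g_\W(\W)$ refutes $\W$ being a local minimum of $f$. This is exactly what Definition \ref{def:pseudo_strict_saddle} is engineered to exploit, so the argument goes through once the spectral separation of $\A$ is guaranteed and Lemma \ref{lem:mf_to_rpca_asym} is applied.
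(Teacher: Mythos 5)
Your proposal follows the same route as the paper: freeze $\S$ at the optimizer $\S_\W$, observe that the resulting function $g_\W$ (which the paper calls $f_\W$) is the asymmetric matrix-factorization objective for $\A = \M^\star + \S^\star - \S_\W$, verify the spectral gap via Lemma~\ref{lem:spectral_infty_rpca_asym}, apply Lemma~\ref{lem:mf_strictsaddle_asym} to $g_\W$, and convert closeness to $\W^\dagger$ into closeness to $\W^\star$ with Lemma~\ref{lem:mf_to_rpca_asym}. One small slip in your last sentence: with $\epsilon = 0$, Lemma~\ref{lem:mf_strictsaddle_asym} only forces $\U\V\trans = \proj_r(\A) = \M^\dagger$, not $\U\V\trans = \M^\star$ directly; you still need one more application of Lemma~\ref{lem:mf_to_rpca_asym} (with $\epsilon = 0$, giving $\|\Delta\|_F = 0$) to reach $\U\V\trans = \M^\star$, exactly as in your pseudo-strict-saddle case.
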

\addtocounter{theorem}{-1}
\endgroup

\begin{proof}
Recall scaled version (multiplied by 4) of objective function Eq.\eqref{eq:rpca-nonconvex} is:
\begin{equation*}
f(\W) = 2\min_{\S\in \mathcal{S}_{\gamma\alpha}}\norm{\U\V\trans + \S - \M^\star - \S^\star}_F^2 
+ \frac{1}{2}\fnorm{\U\trans\U - \V\trans \V}^2
\end{equation*}
% Let $\S_{\U}$ denote the argmin of above objective function for $\U$. 
Consider point $\W$ with small gradient satisfying $\fnorm{\grad f(\W)} \le \epsilon$.
Let 
\begin{equation*}
\S_\W = \argmin_{\S\in \mathcal{S}_{\gamma\alpha}} \norm{\U\V\trans + \S - \M^\star - \S^\star}_F^2 
\end{equation*}
and function 
$f_{\W}(\tilde{\W}) = 2\norm{\tilde{\U}\tilde{\V}\trans + \S_{\W} - \M^\star - \S^\star}_F^2+ \frac{1}{2}\fnorm{\tilde{\U}\trans\tilde{\U} - \tilde{\V}\trans \tilde{\V}}^2$, then, we know for all $\tilde{\W}$,
we have $f_{\W}(\tilde{\W}) \ge f(\tilde{\W})$ and $f_{\W}(\W) = f(\W)$. 
Since $f_{\W}(\tilde{\W})$ is matrix factorization objective where by Lemma \ref{lem:spectral_infty_rpca_asym}:
\begin{align*}
&\norm{ \S^\star - \S_{\W}} \le 2 \gamma \alpha \cdot 2\frac{\mu r \sigstarl}{d}  \le 0.01\sigstarr \\
&\sigma_{r}(\M^\star + \S^\star - \S_{\W}) \ge \sigstarr - \norm{ \S^\star - \S_{\W}} \ge 0.99 \sigstarr\\
&\sigma_{r+1}(\M^\star + \S^\star - \S_{\W}) \le \norm{ \S^\star - \S_{\W}} \le 0.01 \sigstarr
\end{align*}
This gives $\sigma_{r}(\M^\star + \S^\star - \S_{\W}) \ge 15 \sigma_{r+1}(\M^\star + \S^\star - \S_{\W})$. 
Given $\fnorm{\grad f_{\W}(\W)} = \fnorm{\grad f(\W)} \le \epsilon$, by Lemma \ref{lem:mf_strictsaddle_asym}, we know either $\lambda_{\min}(\hess f_{\W}(\W)) \le -0.2 \sigstarr$ or $\min_{\mR^\top \mR = \mR\mR^\top = \I}\fnorm{\W-\W^\dagger\mR} \le \frac{20\epsilon}{\sigstarr}$ where  $\U^\dagger = \X^\dag (\D^\dag)^{\frac{1}{2}}$, $\V^\dag = \Y^\dag (\D^\dag)^{\frac{1}{2}}$ and $\X^\dagger\D^\dagger \Y^\dagger{}\trans$ is the best rank $r$-approximation of $\M^\star+\S^\star -\S_\W$.  
By Lemma \ref{lem:mf_to_rpca_asym}, we immediately have $\fnorm{\Delta}  \le \frac{10^3\epsilon\sqrt{\cn^\star}}{\sigstarr}$, which finishes the proof.
\end{proof}

\subsection{Supporting Lemmas for robust PCA}

In the proof of Lemma~\ref{lem:mf_to_rpca_asym}, we used several supporting lemmas. We now prove them one by one. The first is a classical result from matrix perturbations.

\begin{lemma}\label{lem:claim1_rpca_asym}
Let $\M^\dagger$ be the top-$r$ SVD of matrix $\M^\star + \S \in \R^{d_1 \times d_2}$ where $\M^\star$ is rank $r$.
Then we have:
\begin{equation*}
\fnorm{\M^\dagger - \M^\star} \le 2\fnorm{\S}
\end{equation*}
\end{lemma}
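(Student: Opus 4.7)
The plan is to exploit the optimality of $\M^\dagger$ as the best rank-$r$ approximation of $\M^\star+\S$ (Eckart–Young in Frobenius norm), combined with the fact that $\M^\star$ itself is a rank-$r$ competitor.

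First, I would record the variational characterization: since $\M^\dagger$ is the rank-$r$ matrix closest to $\M^\star+\S$ in Frobenius norm, for every rank-$r$ matrix $\mathbf{N}$ we have
\[
\fnorm{\M^\dagger - (\M^\star+\S)} \;\le\; \fnorm{\mathbf{N} - (\M^\star+\S)}.
\]
Plugging in the particular competitor $\mathbf{N} = \M^\star$, which is rank $r$ by assumption, yields
\[
\fnorm{\M^\dagger - (\M^\star+\S)} \;\le\; \fnorm{\M^\star - (\M^\star+\S)} \;=\; \fnorm{\S}.
\]

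Then I would finish with a single triangle inequality:
\[
\fnorm{\M^\dagger - \M^\star} \;\le\; \fnorm{\M^\dagger - (\M^\star+\S)} + \fnorm{(\M^\star+\S) - \M^\star} \;\le\; \fnorm{\S} + \fnorm{\S} \;=\; 2\fnorm{\S}.
\]

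There is no real obstacle here; the only nontrivial ingredient is invoking the Eckart–Young–Mirsky theorem, which is standard. The proof is essentially a two-line argument.
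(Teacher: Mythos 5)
Your proof is correct, and it reaches the same intermediate bound $\fnorm{\M^\dagger - (\M^\star+\S)} \le \fnorm{\S}$ that the paper needs, but by a genuinely different route. The paper writes $\fnorm{\M^\star+\S-\M^\dagger}^2 = \sum_{i\ge r+1}\sigma_i^2(\M^\star+\S)$ (since $\M^\dagger$ is the top-$r$ SVD, the residual is exactly the tail of the spectrum) and then invokes Weyl's inequality, $\sigma_i(\M^\star+\S) \le \sigma_{r+1}(\M^\star) + \sigma_{i-r}(\S) = \sigma_{i-r}(\S)$ because $\M^\star$ has rank $r$, to bound that tail by $\fnorm{\S}^2$. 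You instead invoke the variational (Eckart--Young--Mirsky) characterization directly and plug in the rank-$r$ competitor $\M^\star$. Your argument is shorter and avoids any manipulation of individual singular values; the paper's argument is slightly more structural in that it pins down exactly which singular values of $\M^\star+\S$ are being discarded, a viewpoint that recurs elsewhere in its robust PCA analysis (e.g.\ the spectral gap condition $\sigma_r \ge 15\sigma_{r+1}$). Both are valid; there is no gap in your version.
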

\begin{proof}
By triangle inequality:
\begin{align*}
\fnorm{\M^\star - \M^\dagger} \le& \fnorm{\M^\star + \S - \M^\dagger} + \fnorm{\S} 
% \le& \fnorm{\M - \M^\dagger} + \fnorm{\M^\star + \S - \M^\dagger} + \fnorm{\S}
\end{align*}
For the second term, by the fact $\M^\star$ is rank $r$, the definition of $\M^\dagger$ and Weyl's inequality:
\begin{align*}
\fnorm{\M^\star + \S - \M^\dagger}^2 =& \sum_{i={r+1}}^d \sigma_i^2(\M^\star + \S) 
\le \sum_{i={r+1}}^d (\sigma_{r+1}(\M^\star) + \sigma_{i-r}(\S))^2
= \sum_{i={r+1}}^d \sigma^2_{i-r}(\S) \le \fnorm{\S}^2
\end{align*}
This finishes the proof.
\end{proof}

% \begin{lemma}\label{lem:claim1_rpca_asym}
% Let $\M^\dagger$ be the top-$r$ SVD of matrix $\M^\star + \S \in \R^{d_1 \times d_2}$ where $\M^\star$ is rank $r$, and $\fnorm{\M - \M^\dagger} \le \zeta$.
% Then we have:
% \begin{equation*}
% \fnorm{\M - \M^\star} \le 2\fnorm{\S} + \zeta
% \end{equation*}
% \end{lemma}
% \begin{proof}
% By triangle inequality:
% \begin{align*}
% \fnorm{\M^\star - \M} \le& \fnorm{\M^\star + \S - \M} + \fnorm{\S} \\
% \le& \fnorm{\M - \M^\dagger} + \fnorm{\M^\star + \S - \M^\dagger} + \fnorm{\S}
% \end{align*}
% For the second term, by the fact $\M^\star$ is rank $r$, the definition of $\M^\dagger$ and Weyl's inequality:
% \begin{align*}
% &\fnorm{\M^\star + \S - \M^\dagger {}\trans}^2 = \sum_{i={r+1}}^d \sigma_i^2(\M^\star + \S) \\
% \le& \sum_{i={r+1}}^d (\sigma_{r+1}(\M^\star) + \sigma_{i-r}(\S))^2
% = \sum_{i={r+1}}^d \sigma^2_{i-r}(\S) \le \fnorm{\S}^2
% \end{align*}
% This finishes the proof.
% \end{proof}

Next we show how to bound the norm of matrix $\M-\M^\star$ restricted to a sparse set $\Omega$.
% , this will then be used in bound of the previous lemma.
\begin{lemma}\label{lem:entries_bound_rpca_asym}
Let $\M^\star = \U^\star \V^\star {}\trans$ are both $\mu$-incoherent matrix, 
$\M = \U\V\trans$ satisfies $\max_i\norm{\e_i\trans \U}^2 \le \frac{\mu r \norm{\M^\star}}{d_1}$
and  $\max_j\norm{\e_j\trans \V}^2 \le \frac{\mu r \norm{\M^\star}}{d_2}$
and $\Omega$ has at most $\alpha$ fraction of non-zero entries in each row/column. Then:
\begin{equation*}
\norm{\M - \M^\star}^2_\Omega \le 4 \alpha \mu r \norm{\M^\star}(\fnorm{\Delta_\U}^2 + \fnorm{\Delta_\V}^2)
\end{equation*}
where $\Delta_\U = \U - \U^\star$, $\Delta_\V = \V - \V^\star$.
\end{lemma}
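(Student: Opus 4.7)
The plan is to expand $\M - \M^\star$ as a sum of two ``one-factor-perturbed'' matrices, control each entry via Cauchy-Schwarz, and then sum over $\Omega$ using the row/column sparsity of $\Omega$ together with the row-norm bounds on $\U^\star$ and $\V$.

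First, I would write the identity
\begin{equation*}
\M - \M^\star = \U\V^\top - \U^\star \V^{\star\top} = \Delta_\U \V^\top + \U^\star \Delta_\V^\top,
\end{equation*}
where $\Delta_\U = \U - \U^\star$ and $\Delta_\V = \V - \V^\star$. For any fixed $(i,j)$, applying $(a+b)^2 \le 2a^2+2b^2$ and Cauchy-Schwarz yields
\begin{equation*}
[\M - \M^\star]_{ij}^2 \le 2\norm{\e_i^\top \Delta_\U}^2\norm{\e_j^\top \V}^2 + 2\norm{\e_i^\top \U^\star}^2 \norm{\e_j^\top \Delta_\V}^2.
\end{equation*}

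Next I would sum over $(i,j)\in\Omega$ separately for the two pieces. For the first piece, holding $i$ fixed, the set $\{j:(i,j)\in\Omega\}$ has size at most $\alpha d_2$, and by assumption $\norm{\e_j^\top\V}^2 \le \mu r\norm{\M^\star}/d_2$; hence $\sum_{j:(i,j)\in\Omega}\norm{\e_j^\top\V}^2 \le \alpha\mu r\norm{\M^\star}$, and summing over $i$ gives
\begin{equation*}
\sum_{(i,j)\in\Omega} \norm{\e_i^\top\Delta_\U}^2\norm{\e_j^\top\V}^2 \le \alpha\mu r\norm{\M^\star}\fnorm{\Delta_\U}^2.
\end{equation*}
For the second piece I swap the roles of rows and columns: holding $j$ fixed, $\{i:(i,j)\in\Omega\}$ has size at most $\alpha d_1$, and incoherence of $\M^\star$ (via the SVD $\M^\star = \X^\star \D^\star \Y^{\star\top}$ with $\U^\star = \X^\star (\D^\star)^{1/2}$) gives $\norm{\e_i^\top\U^\star}^2 \le \mu r\norm{\M^\star}/d_1$, yielding $\sum_{(i,j)\in\Omega}\norm{\e_i^\top\U^\star}^2\norm{\e_j^\top\Delta_\V}^2 \le \alpha\mu r\norm{\M^\star}\fnorm{\Delta_\V}^2$.

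Combining the two bounds gives $\norm{\M-\M^\star}_\Omega^2 \le 2\alpha\mu r\norm{\M^\star}(\fnorm{\Delta_\U}^2+\fnorm{\Delta_\V}^2)$, which is even stronger than the stated bound by a factor of two, so the claimed constant $4$ is comfortable. There is no substantive obstacle here: the only thing to get right is choosing the asymmetric decomposition $\Delta_\U \V^\top + \U^\star \Delta_\V^\top$ (rather than the fully symmetric expansion with a $\Delta_\U\Delta_\V^\top$ cross term) so that one factor in each product always has a uniform row-norm bound, which lets the $\alpha$-sparsity of $\Omega$ convert column sums over rows/rows sums over columns cleanly into a global $\alpha\mu r\norm{\M^\star}$ factor.
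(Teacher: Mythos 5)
Your proof is correct and takes essentially the same approach as the paper. The paper uses the mirror-image decomposition $\M - \M^\star = \U\Delta_\V\trans + \Delta_\U\V^\star{}\trans$, so the uniformly row-norm-bounded factors are $\U$ (by hypothesis) and $\V^\star$ (by incoherence) rather than your $\V$ and $\U^\star$; the subsequent entrywise Cauchy--Schwarz and $\alpha$-sparsity counting are identical, and both arguments in fact yield the constant $2$, which is slack relative to the $4$ stated in the lemma.
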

\begin{proof}

Then for any $(i, j)$, since $\M, \M^\star$ are both $\mu$-incoherent, we have:
\begin{align*}
|(\M -\M^\star)_{(i,j)}|  =& |(\U\Delta_\V\trans+\Delta_\U\V^\star{}\trans)_{(i,j)}|
\le \norm{\e_i\trans \U}\norm{\e_j\trans \Delta_\V} + \norm{\e_i\trans \Delta_\U}\norm{\e_j\trans \V^\star}\\
\le & \sqrt{\norm{\M^\star}\frac{\mu r}{d_1}}\norm{\e_j\trans \Delta_\V}  + \sqrt{\norm{\M^\star}\frac{\mu r}{d_2}}\norm{\e_i\trans \Delta_\U}
\end{align*}
Therefore, 
\begin{align*}
\norm{\M^\star - \M}^2_\Omega \le& 2\sum_{(i, j)\in \Omega} \norm{\M^\star}\frac{\mu r}{d_1}\norm{\e_j\trans \Delta_\V}^2  + \norm{\M^\star}\frac{\mu r}{d_2}\norm{\e_i\trans \Delta_\U}^2 \\
\le& 4 \alpha \mu r \norm{\M^\star}(\fnorm{\Delta_\U}^2 + \fnorm{\Delta_\V}^2)
\end{align*}
\end{proof}

Next, we show for any fixed sparse estimator $\S$, if $\M^\star$ is incoherent, the top-$r$ SVD of  $\M^\star + \S^\star - \S$
will also be incoherent. Thus, sparse matrix will not interfere incoherence in this sense.
% We do not need the incoherence regularizer for robust PCA, because it naturally follows from the (near) optimal solution of the matrix factorization problem:
\begin{lemma}\label{lem:incoherence_rpca_asym}
For any $\S \in \mathcal{S}_{\gamma\alpha}$, let $\M^\dagger$ be the top-$r$ SVD of $\M^\star + \S^\star - \S$, and the SVD of $\M^\dagger$ to be $\X\D\Y\trans$,
% (which is also the top $r$ SVD of $\M^\star + \S^\star - \S$, 
if $\gamma\alpha\cdot\mu r\cdot \cn^\star \le \frac{1}{1000}$, then we have:
\begin{equation*}
\max_i\norm{\e_i\trans\X}^2 \le 4\frac{\mu r (\cn^\star)^4}{d_1} 
\quad \text{and} \quad
\max_j\norm{\e_j\trans\Y}^2 \le 4\frac{\mu r (\cn^\star)^4}{d_2} 
\end{equation*}
where condition number $\cn^\star = \sigma_1(\M^\star) / \sigma_r(\M^\star)$.
\end{lemma}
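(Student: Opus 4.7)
The plan is to treat the perturbation $\E := \S^\star - \S$ as a small sparse noise and exploit the SVD identity $(\M^\star + \E)\Y = \X\D$ to transfer the incoherence of $\M^\star$ directly onto $\X$ (and symmetrically onto $\Y$).

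First I would bound the key norms of $\E$. Since $\S^\star \in \mathcal{S}_\alpha \subset \mathcal{S}_{\gamma\alpha}$ and $\S \in \mathcal{S}_{\gamma\alpha}$, the matrix $\E$ has at most $2\gamma\alpha d_2$ non-zero entries in every row, at most $2\gamma\alpha d_1$ non-zero entries in every column, and $\|\E\|_\infty \le 4\mu r \sigstarl/\sqrt{d_1 d_2}$. The standard bound $\|\E\| \le \sqrt{\|\E\|_{1,\infty}\|\E\|_{\infty,1}}$ (which follows by writing $\|\E v\|^2 = \sum_i (\sum_j E_{ij} v_j)^2$ and applying Cauchy--Schwarz row-wise and then summing column-wise) then gives
\[
\|\E\| \;\le\; 2\gamma\alpha \sqrt{d_1 d_2}\cdot \|\E\|_\infty \;\le\; 8\gamma\alpha\mu r\sigstarl.
\]
Under the hypothesis $\gamma\alpha\mu r \cn^\star \le 1/1000$, this yields $\|\E\| \le \sigstarr/125$, so by Weyl's inequality $\sigma_r(\M^\star + \E) \ge \sigstarr - \|\E\| \ge 0.99 \sigstarr$, and hence $\|\D^{-1}\| \le 1/(0.99\sigstarr)$.

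Next I would use the SVD structure. Because $\X\D\Y\trans$ is the top-$r$ SVD of $\M^\star + \E$ and the remaining singular directions are orthogonal to $\Y$, we have $(\M^\star + \E)\Y = \X\D$, i.e.\ $\X = (\M^\star + \E)\Y\D^{-1}$. Taking a row and using $\|\Y\| = 1$:
\[
\|\e_i\trans \X\| \;\le\; \|\D^{-1}\|\cdot\bigl(\|\e_i\trans \M^\star\| + \|\e_i\trans \E\|\bigr).
\]
The incoherence of $\M^\star$ gives $\|\e_i\trans \M^\star\| \le \sqrt{\mu r/d_1}\,\sigstarl$. For the perturbation term, since $\e_i\trans \E$ has at most $2\gamma\alpha d_2$ non-zero entries each bounded by $4\mu r\sigstarl/\sqrt{d_1 d_2}$,
\[
\|\e_i\trans \E\|^2 \;\le\; 2\gamma\alpha d_2 \cdot \frac{16\mu^2 r^2 (\sigstarl)^2}{d_1 d_2} \;=\; \frac{32\gamma\alpha \mu^2 r^2(\sigstarl)^2}{d_1}.
\]
Plugging in $\gamma\alpha\mu r \le 1/(1000\cn^\star)$ makes the perturbation term negligible compared to the first, so squaring the row bound gives $\|\e_i\trans \X\|^2 \le O((\cn^\star)^2\mu r/d_1)$, which is subsumed by the claimed $4\mu r (\cn^\star)^4/d_1$ since $\cn^\star \ge 1$. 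The bound for $\Y$ follows by the same argument applied to the identity $\Y = (\M^\star + \E)\trans\X\D^{-1}$, swapping the roles of $d_1$ and $d_2$.

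The main obstacle is getting the spectral-norm bound on $\E$ from its combined row/column sparsity and $\ell_\infty$ bound; this is where the hypothesis $\gamma\alpha\mu r\cn^\star \le 1/1000$ is used, and it guarantees both that $\sigma_r(\M^\star+\E)$ stays close to $\sigstarr$ (needed to control $\D^{-1}$) and that the $\|\e_i\trans \E\|$ contribution cannot dominate the incoherent part $\|\e_i\trans \M^\star\|$. Everything else reduces to Weyl's inequality and the triangle inequality.
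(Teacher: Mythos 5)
Your proof is correct, and it takes a genuinely cleaner route than the paper. The paper works with the Gram matrix: it uses $(\M^\star + \S^\star - \S)(\M^\star + \S^\star - \S)^\top \X = \X\D^2$, then bounds $\sigma_r^2(\D)\,\|\e_i^\top \X\| \le \|\e_i^\top(\M^\star+\E)\|\cdot\|\M^\star+\E\|$, which puts $\sigma_r^2$ in the denominator and $\sigma_1^\star$ twice in the numerator and so produces $(\cn^\star)^2$ in the row-norm bound, hence $(\cn^\star)^4$ after squaring. You instead use the first-order SVD identity $\X = (\M^\star + \E)\Y\D^{-1}$ with $\|\Y\|=1$, which only pays a single $\|\D^{-1}\|\approx 1/\sigma_r^\star$ and a single $\sigma_1^\star$, giving $\|\e_i^\top\X\|^2 = O((\cn^\star)^2\mu r/d_1)$ — a strictly tighter bound that is subsumed by the stated $4\mu r(\cn^\star)^4/d_1$. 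Both proofs rely on the same two ingredients you isolate: (i) the spectral norm bound $\|\E\|\lesssim \gamma\alpha\sqrt{d_1d_2}\,\|\E\|_\infty$ from dual sparsity (the paper packages this as Lemma~\ref{lem:spectral_infty_rpca_asym}, which is exactly your Cauchy--Schwarz argument), used via Weyl to keep $\sigma_r(\M^\star+\E)$ away from zero, and (ii) the row-wise bound $\|\e_i^\top\E\|$ from sparsity plus the $\ell_\infty$ cap, which makes the perturbation negligible relative to the incoherent row of $\M^\star$. So your argument proves a sharper statement with the same hypotheses; the only thing to flag is that your notation $\|\E\|_{1,\infty},\|\E\|_{\infty,1}$ for the max row/column $\ell_1$ sums is nonstandard, though the inequality $\|\E\|\le\sqrt{\|\E\|_1\|\E\|_\infty}$ you invoke is the standard one.
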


\begin{proof}
Since $\X \D\Y\trans$ is the top $r$ SVD of $\M^\star + \S^\star - \S$, we have:
\begin{align*}
(\M^\star + \S^\star - \S)(\M^\star + \S^\star - \S)\trans\X = \X \D^2
\end{align*}
Therefore, for any $i \in [d]$, because $\S, \S^\star$ has at most $\gamma\alpha$ fraction non-zero entries in each row, and $\norm{\S^\star}_\infty \le 2 \frac{\mu r \sigstarl}{\sqrt{d_1 d_2}}$ and $\S \in \mathcal{S}_{\gamma\alpha}$, we have:
\begin{align*}
\sigma^2_r(\D)\norm{\e_i\trans \X} \le &\norm{\e_i\trans \X \D^2}
= \norm{\e_i\trans(\M^\star + \S^\star - \S)(\M^\star + \S^\star - \S)\trans\X} \\
\le & (\norm{\e_i\trans\M^\star} + \norm{\e_i\trans (\S^\star - \S)}) \norm{\M^\star + \S^\star - \S}\\
\le & \left[\sqrt{\frac{\mu r}{d_1}}\sigstarl + \sqrt{\gamma\alpha d_2}\right] (\norm{\S^\star}_\infty + \norm{\S}_\infty)\norm{\D}\\
\le & \sqrt{\frac{\mu r}{d_1}}\sigstarl \left(1+ 5\sqrt{\gamma\alpha\cdot\mu r}\right)\norm{\D}
\le 1.2 \sqrt{\frac{\mu r}{d_1}}\sigstarl\cdot \norm{\D}
\end{align*}
On the other hand, by Lemma \ref{lem:spectral_infty_rpca_asym} and Weyl's inequality, we also have
\begin{align*}
\norm{\S^\star - \S} \le&  \norm{\S^\star} + \norm{\S} \le \gamma \alpha \sqrt{d_1 d_2} (\norm{\S^\star}_\infty + \norm{\S}_\infty)  \le 0.1 \sigstarr\\
\norm{\D} \le & \norm{\M} + \norm{\S^\star - \S} \le 1.1 \sigstarl \\
\sigma_r(\D) \ge& \sigma_r(\M^\star) - \norm{\S^\star - \S} \ge 0.9 \sigstarr
% \ge \sigstarr - \norm{\S^\star} - \norm{\S}\\
% \ge &\sigstarr - \gamma \alpha(\norm{\S^\star}_\infty + \norm{\S}_\infty)
% \ge \sigstarr - 5\gamma\alpha\cdot\frac{\mu r}{d} \sigstarl \ge \frac{19}{20}\sigstarr
\end{align*}
Therefore, in sum, we have:
\begin{align*}
\norm{\e_i\trans \X} \le  1.2 \sqrt{\frac{\mu r}{d_1}}\frac{\sigstarl\cdot \norm{\D}}{\sigma^2_r(\D)} \le 2\sqrt{\frac{\mu r}{d_1}} (\cn^\star)^2
\end{align*}
By symmetry, we can also prove it for $\norm{\e_j\trans \Y}$ with any $j\in [d_2]$.
\end{proof}

% \begin{lemma}
% If $\M^\dagger$ is $\mu$-incoherent, let $\U^\dagger \V^\dagger{}\trans$ be the even split of $\M^\dagger$ and $\M = \U\V\trans$ satisfies $\fnorm{\Delta_\U} \le \zeta$ and $\fnorm{\Delta_\V} \le \zeta$. 
% \jccomment{More detail about even split and align of $\U$, $\U^\dagger$}
% Then, we have:
% \begin{equation*}
% fdjsk
% \end{equation*}
% \end{lemma}
% \begin{proof}
% For any $i \in [d_1]$, we have:
% \begin{equation*}
% \norm{\e_i\trans\U} \le \norm{\e_i\trans\Delta_\U} + \norm{\e_i\trans \U^\dagger}
% \le \zeta + \sqrt{\frac{\mu r \sigstarl}{d_1}}
% \end{equation*}
% \end{proof}

We also need to upper bound the spectral norm of sparse matrix $\S$.

\begin{lemma}\label{lem:spectral_infty_rpca_asym}
For any sparse matrix $\S$ that can only has at most $\alpha$ fraction non-zero entries in each row/column, we have:
\begin{equation*}
\norm{\S} \le \alpha\sqrt{d_1 d_2}\norm{\S}_\infty
\end{equation*}
\end{lemma}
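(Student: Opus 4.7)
The plan is to work directly from the variational characterization $\norm{\S} = \sup_{\norm{\x}=\norm{\y}=1} \x^\top \S \y$. Let $\Omega \subseteq [d_1]\times[d_2]$ denote the support of $\S$. By hypothesis, each row-slice $\{j : (i,j)\in\Omega\}$ has size at most $\alpha d_2$, and each column-slice $\{i : (i,j)\in\Omega\}$ has size at most $\alpha d_1$. For arbitrary unit vectors $\x \in \R^{d_1}$ and $\y \in \R^{d_2}$, I would first replace the entries of $\S$ by their $\ell_\infty$ bound to get
\begin{equation*}
|\x^\top \S \y| \le \sum_{(i,j)\in\Omega} |x_i|\,|\S_{ij}|\,|y_j| \le \norm{\S}_\infty \sum_{(i,j)\in\Omega} |x_i y_j|.
\end{equation*}

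Next I would apply Cauchy--Schwarz to the double sum, producing
\begin{equation*}
\sum_{(i,j)\in\Omega} |x_i y_j| \le \sqrt{\sum_{(i,j)\in\Omega} x_i^2}\;\sqrt{\sum_{(i,j)\in\Omega} y_j^2}.
\end{equation*}
The two factors are bounded by a simple counting argument that exploits the two sparsity constraints separately: exchanging the order of summation,
\begin{equation*}
\sum_{(i,j)\in\Omega} x_i^2 = \sum_{i=1}^{d_1} x_i^2 \cdot |\{j : (i,j)\in\Omega\}| \le \alpha d_2 \sum_{i=1}^{d_1} x_i^2 = \alpha d_2,
\end{equation*}
using $\norm{\x}=1$ and the row-sparsity bound. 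The symmetric calculation using column-sparsity gives $\sum_{(i,j)\in\Omega} y_j^2 \le \alpha d_1$.

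Combining these estimates yields $|\x^\top \S \y| \le \norm{\S}_\infty \sqrt{\alpha d_2 \cdot \alpha d_1} = \alpha\sqrt{d_1 d_2}\,\norm{\S}_\infty$, and taking the supremum over unit $\x, \y$ finishes the proof. I do not anticipate any real obstacle: the result is elementary and the only structural input is that the two sparsity constraints (one per row, one per column) can each be used to bound one of the Cauchy--Schwarz factors. An alternative, essentially equivalent route would be to invoke the Riesz--Thorin-style interpolation bound $\norm{\S}_{2\to 2} \le \sqrt{\norm{\S}_{1\to 1}\,\norm{\S}_{\infty\to\infty}}$ together with the observations that each row $\ell_1$-norm is at most $\alpha d_2 \norm{\S}_\infty$ and each column $\ell_1$-norm is at most $\alpha d_1 \norm{\S}_\infty$, but the direct Cauchy--Schwarz argument above is self-contained.
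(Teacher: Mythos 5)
Your proof is correct and takes essentially the same approach as the paper: both start from the variational characterization $\norm{\S}=\sup_{\norm{\x}=\norm{\y}=1}\x^\top\S\y$, bound each term by $\norm{\S}_\infty|x_iy_j|$, and then split the sum over $\Omega$ using row- and column-sparsity. The only cosmetic difference is that the paper bounds $\sum_{(i,j)\in\Omega}|x_iy_j|$ via a weighted AM--GM with $\beta=\sqrt{d_1/d_2}$, whereas you use Cauchy--Schwarz, which amounts to choosing that optimal weight automatically; the resulting bound is identical.
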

\begin{proof}
Let $\Omega$ be the support of matrix $\S$, and $\beta = \sqrt{\frac{d_1}{d_2}}$, we have:
\begin{align*}
\norm{\S} = \sup_{(\x, \y): \norm{\x} = 1, \norm{\y} =1 } \x\trans\S\y
= \sum_{(i, j) \in \Omega} x_i (\S)_{ij} y_j
\le \frac{1}{2}\sum_{(i, j) \in \Omega}\norm{\S}_\infty (\beta x_i^2 + \frac{1}{\beta} y_j^2)
\le \alpha \sqrt{d_1 d_2}\norm{\S}_\infty
\end{align*}

\end{proof}

Then, we show a crucial property of the optimal $\S\in \mathcal{S}_{\gamma\alpha}$:

\begin{lemma}\label{lem:maxflow_asym}
For any matrix $\A$, let $\Sopt = \argmin_{\S\in \mathcal{S}_{\gamma\alpha}}\fnorm{\A - \S}^2$, and $\Omega$ be the support of $\S$, then for any $(i, j) \in [d_1]\times[d_2] - \Omega$, we have:
\begin{equation*}
|\A_{(i,j)}| \le |\A^{(\gamma \alpha d_1)}_{(i, \cdot)}| + |\A^{(\gamma \alpha d_2)}_{(\cdot, j)}|
\end{equation*}
where $\A^{(k)}_{(i, \cdot)}$ is the $k$-th largest element (in terms of absolute value) in $i$-th row of $\A$ and $\A^{(k)}_{(\cdot, j)}$ the $k$-th largest element (in terms of absolute value) in $j$-th column of $\A$.
\end{lemma}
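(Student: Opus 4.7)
The plan is to argue by contradiction via an exchange argument on the support of $\Sopt$. Suppose that some pair $(i^*, j^*) \notin \Omega$ satisfies $|\A_{(i^*, j^*)}| > |\A^{(\gamma\alpha d_1)}_{(i^*, \cdot)}| + |\A^{(\gamma\alpha d_2)}_{(\cdot, j^*)}|$, and abbreviate $c := |\A_{(i^*, j^*)}|$, $a := |\A^{(\gamma\alpha d_1)}_{(i^*, \cdot)}|$, $b := |\A^{(\gamma\alpha d_2)}_{(\cdot, j^*)}|$, so $c > a + b$. The goal is to produce a feasible $\S' \in \mathcal{S}_{\gamma\alpha}$ with $\fnorm{\A - \S'}^2 < \fnorm{\A - \Sopt}^2$, contradicting the optimality of $\Sopt$.

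First I would record a counting fact: whenever a row (resp.\ column) of $\Omega$ is saturated at its sparsity quota, at least one chosen entry in that row (resp.\ column) must have absolute value at most the quota-th largest entry of that row (resp.\ column). Indeed, strictly fewer than (quota) many entries of a row can exceed its quota-th largest value, so any quota-sized selection includes one entry with value $\le$ this threshold. Applied to row $i^*$ and column $j^*$ in the saturated cases, this produces indices $j'$ with $|\A_{(i^*, j')}| \le a$ and $i'$ with $|\A_{(i', j^*)}| \le b$.

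Next I would case-split on which of row $i^*$ and column $j^*$ are saturated in $\Omega$. If both have slack, simply add $(i^*, j^*)$ to the support; if exactly one side is saturated, swap its weakest chosen entry for $(i^*, j^*)$; if both are saturated, perform a double swap that removes $(i^*, j')$ and $(i', j^*)$ and adds $(i^*, j^*)$. Since $(i^*, j^*) \notin \Omega$, the three cells $(i^*, j^*)$, $(i^*, j')$, $(i', j^*)$ are distinct, and the modified support preserves or decreases the count in every row and column, so $\S'$ remains in $\mathcal{S}_{\gamma\alpha}$ after the $\ell_\infty$-clipping is reapplied to the new entries.

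The main obstacle is verifying strictly positive Frobenius improvement in the double-swap case, which is complicated by the $\ell_\infty$-cap $\tau := 2\mu r \sigstarl/\sqrt{d_1 d_2}$ inside $\mathcal{S}_{\gamma\alpha}$. Define the per-entry error reduction from including a support cell of magnitude $x$ as $g(x) = x^2$ when $x \le \tau$ and $g(x) = 2\tau x - \tau^2$ when $x > \tau$. A short case analysis across the four possible regimes of $(u,v)$ relative to $\tau$ shows that $g$ is strictly increasing and superadditive on $[0, \infty)$, namely $g(u + v) \ge g(u) + g(v)$ for all $u, v \ge 0$. The net Frobenius improvement of the double swap equals $g(c) - g(a) - g(b)$, and the hypothesis $c > a + b$ combined with these two properties gives $g(c) > g(a + b) \ge g(a) + g(b)$. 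Hence $\S'$ strictly improves on $\Sopt$, yielding the desired contradiction and completing the proof.
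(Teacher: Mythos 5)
Your proof is correct and follows the same exchange-argument strategy as the paper: case-split on saturation of row $i^*$ and column $j^*$, remove the weakest supported entry in each saturated side, add $(i^*,j^*)$, and show a strict Frobenius improvement using monotonicity plus superadditivity of a per-entry gain function. Worth noting, though, is that your gain function is actually the correct one, whereas the paper's $q(x) = \bigl(x - \max\{0, x-\tau\}\bigr)^2 = \min(x,\tau)^2$ is \emph{not} superadditive (e.g.\ $q(\tau)+q(\tau)=2\tau^2 > \tau^2 = q(2\tau)$), so the paper's stated argument fails as written. The intended quantity is the per-entry reduction in residual when a cell is added to the support under the $\ell_\infty$-cap, which is $x^2 - \max\{0, x-\tau\}^2$; this equals $x^2$ for $x\le\tau$ and $2\tau x - \tau^2$ for $x>\tau$, which is exactly your $g$, and your case analysis verifying superadditivity of $g$ is correct. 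One small imprecision: the net improvement of the double swap is $\ge g(c) - g(a) - g(b)$ rather than exactly equal to it (since the removed entries may have magnitude strictly less than the quota thresholds $a,b$), but that only helps your inequality. In short: same approach, and your version silently repairs a typo in the paper's definition of the gain function.
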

\begin{proof}
Assume the contradiction that in optimal solution $\S$ there is a pair $(i,j)\in [d_1]\times [d_2]-\Omega$ such that 
\begin{equation*}
|\A_{(i,j)}| > |\A^{(\gamma \alpha d_1)}_{(i, \cdot)}| + |\A^{(\gamma \alpha d_2)}_{(\cdot, j)}|
\end{equation*}

If row $i$ has exactly $\gamma \alpha d_1$ elements in $\Omega$, let $e_1 = (i,j')$ be the smallest entry in row $i$ ($j' = \arg\min_{z:(i,z)\in \Omega} |\A_{(i,z)}|$), clearly $\A_{(i,j')} \le \A^{(\gamma\alpha d_1)}_{(i,\cdot)}$. If row $i$ has fewer elements we just let $e_1$ be empty. Similarly, if column $j$ has exactly $\gamma \alpha d_2$ elements in $\Omega$, let $e_2 = (i',j)$ be the smallest entry in the column ($i'=\arg\min_{z:(z,j)\in \Omega} |\A_{(z,j)}|$), we also have $\A_{(i',j)} \le \A^{(\gamma\alpha d_2)}_{(\cdot,j)}$.

Now we can add $(i,j)$ to $\Omega$, and remove $e_1$ and $e_2$. Call the resulting matrix $\S'$. This clearly does not violate the support constraint. Let $q(x) = (x - \max\{0,x-2\frac{\mu r\sigma_1^\star}{\sqrt{d_1d_2}}\})^2$, 
% (if entry $(i,j)$ is selected to the support of $\S$, the amount of improvement in the objective function is $q(|\A_{i,j}|)$),
this function is monotone for $x>0$ and satisfies $q(x)+q(y) \le q(x+y)$. Now the difference we get from changing $\S$ to $\S'$ is 
\begin{equation*}
\|\A-\S'\|_F^2 = \|\A-\S\|_F^2 - q(|\A_{(i,j)}|)+q(|\A_{e_1}|)+q(|\A_{e_2}|) <\|\A-\S\|_F^2.
\end{equation*}

This contradicts with the fact that $\S$ was the optimal. Therefore there cannot be such an entry $(i,j)$.
\end{proof}

Finally, by using above lemmas, we can show how to bound the difference between optimal $\S$ and true sparse matrix $\S^\star$, 
which is a key step in Lemma \ref{lem:mf_to_rpca_asym}.

\begin{lemma}\label{lem:Sopt_rpca_asym}
For any matrix $\A \in \R^{d_1 \times d_2}$, let $\S^\star \in \mathcal{S}_{\alpha}$
 % with $\norm{\S^\star}_\infty \le 2 \frac{\mu r \sigstarl}{\sqrt{d_1 d_2}}$ 
and 
\begin{equation*}
\Sopt = \argmin_{\S\in \mathcal{S}_{\gamma\alpha}}\norm{\S - \S^\star + \A}_F^2
\end{equation*}
Let $\Omega$ be the support of $\Snew$ and $\Omega^\star$ be the support of $\S^\star$, we will have:
\begin{align*}
\fnorm{\Snew - \S^\star}^2 
\le 2\norm{\A}^2_{\Omega^\star\cup \Omega} + \frac{8}{\gamma - 1}\fnorm{\A}^2
\end{align*}
\end{lemma}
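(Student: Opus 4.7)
The plan is to split the error according to the support $\Omega$ of $\Snew$:
$\fnorm{\Snew - \S^\star}^2 = \sum_{(i,j)\in\Omega}|\Snew_{(i,j)} - \S^\star_{(i,j)}|^2 + \sum_{(i,j)\in\Omega^\star\setminus\Omega}|\S^\star_{(i,j)}|^2$
(entries outside $\Omega\cup\Omega^\star$ contribute zero on both sides), and bound the two sums separately, exploiting that $\Snew$ solves $\min_{\S\in\mathcal{S}_{\gamma\alpha}}\fnorm{\S - \B}^2$ where $\B := \S^\star - \A$. For the first sum I would use that, conditional on the optimal support, the inner problem decouples entry-wise, so $\Snew_{(i,j)} = P_{[-b,b]}(\B_{(i,j)})$ where $b = 2\mu r \sigstarl/\sqrt{d_1 d_2}$ is the infinity-norm cap defining $\mathcal{S}_{\gamma\alpha}$. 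Because $|\S^\star_{(i,j)}|\le b$ forces $P_{[-b,b]}(\S^\star_{(i,j)}) = \S^\star_{(i,j)}$, the $1$-Lipschitz property of the clip yields $|\Snew_{(i,j)} - \S^\star_{(i,j)}| \le |\B_{(i,j)} - \S^\star_{(i,j)}| = |\A_{(i,j)}|$, and summation gives $\sum_{(i,j)\in\Omega}|\Snew - \S^\star|^2 \le \fnorm{\A_\Omega}^2$.

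For the second sum, $\Snew_{(i,j)} = 0$, and I would invoke Lemma \ref{lem:maxflow_asym} applied to $\B$: for any $(i,j)\notin\Omega$, $|\B_{(i,j)}| \le |\B^{(k_r)}_{(i,\cdot)}| + |\B^{(k_c)}_{(\cdot,j)}|$. Combined with $|\S^\star|^2 \le 2|\B|^2 + 2|\A|^2$ this produces $|\S^\star_{(i,j)}|^2 \le 4|\B^{(k_r)}_{(i,\cdot)}|^2 + 4|\B^{(k_c)}_{(\cdot,j)}|^2 + 2|\A_{(i,j)}|^2$. The essential step will be a counting argument that controls the row/column truncation thresholds: since $\S^\star$ has at most $\alpha d_2$ nonzeros in row $i$, among the top $k_r = \gamma\alpha d_2$ absolute values of $\B$ in that row, at least $(\gamma-1)\alpha d_2$ lie at positions where $\S^\star = 0$ and hence $\B = -\A$; each such $|\A|$ is at least $|\B^{(k_r)}_{(i,\cdot)}|$, so $(\gamma-1)\alpha d_2\cdot|\B^{(k_r)}_{(i,\cdot)}|^2 \le \sum_j \A_{(i,j)}^2$. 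Because $\Omega^\star$ contributes at most $\alpha d_2$ indices to row $i$, aggregating the row-truncation contribution over $(i,j)\in\Omega^\star\setminus\Omega$ telescopes to $\fnorm{\A}^2/(\gamma-1)$; the column side is symmetric.

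Combining the row and column bounds gives $\sum_{(i,j)\in\Omega^\star\setminus\Omega}|\S^\star|^2 \le 2\fnorm{\A_{\Omega^\star\setminus\Omega}}^2 + 8\fnorm{\A}^2/(\gamma-1)$, and adding to the $\Omega$-bound yields $\fnorm{\Snew - \S^\star}^2 \le \fnorm{\A_\Omega}^2 + 2\fnorm{\A_{\Omega^\star\setminus\Omega}}^2 + 8\fnorm{\A}^2/(\gamma-1) \le 2\fnorm{\A_{\Omega\cup\Omega^\star}}^2 + 8\fnorm{\A}^2/(\gamma-1)$, as required. The hard part will be the counting argument: one has to play the two sparsity budgets (the tight $\alpha$ for $\S^\star$ against the looser $\gamma\alpha$ for $\Snew$) against each other so that precisely a $(\gamma-1)$ denominator emerges, and to reconcile the row/column index conventions in Lemma \ref{lem:maxflow_asym}. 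The $\Omega$-part via the $1$-Lipschitz clip is the easy half.
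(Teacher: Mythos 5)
Your proposal matches the paper's proof: both split the error over $\Omega$ and $\Omega^\star\setminus\Omega$, use the entrywise clipping observation on $\Omega$, and on $\Omega^\star\setminus\Omega$ combine the triangle inequality with Lemma~\ref{lem:maxflow_asym} plus the same counting argument (at least $(\gamma-1)\alpha d_2$ of the top $\gamma\alpha d_2$ row entries of $\S^\star-\A$ coincide with $\A$), which yields the $\frac{8}{\gamma-1}\fnorm{\A}^2$ term. Your sketch in fact spells out more explicitly the step the paper abbreviates when bounding $|(\S^\star-\A)^{(\gamma\alpha d_2)}|$ by $|\A^{((\gamma-1)\alpha d_2)}|$, but the route and constants are the same.
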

\begin{proof}
Clearly, the support of $\Snew - \S^\star$ must be a subset of $\Omega \cup \Omega^\star$. Therefore, we have:
\begin{equation*}
\fnorm{\Snew - \S^\star}^2 = \norm{\Snew - \S^\star}^2_\Omega + \norm{\Snew - \S^\star}^2_{\Omega^\star - \Omega}
\end{equation*}
For the first term, since $\Snew$ is defined as minimizer over $\mathcal{S}_{\gamma\alpha}$, we know for $(i, j) \in \Omega$:
\begin{equation*}
(\Snew)_{(i,j)} = \max\left\{\min\left\{(\S^\star- \A)_{(i, j)},  ~2\frac{\mu r \sigstarl}{\sqrt{d_1 d_2}} \right\}, ~-2\frac{\mu r \sigstarl}{\sqrt{d_1 d_2}}\right\}
\end{equation*}
By assumption we know $\S^\star \in \mathcal{S}_{\alpha}$ thus $\norm{\S^\star}_\infty \le 2 \frac{\mu r \sigstarl}{\sqrt{d_1 d_2}}$, this gives
$|(\Snew - \S^\star)_{(i,j)}| \le |(\A)_{(i, j)}|$ thus:
\begin{equation*}
\norm{\Snew - \S^\star}^2_\Omega \le \norm{\A}^2_\Omega
\end{equation*}
For the second term, by triangle inequality, we have:
\begin{align*}
\norm{\Snew - \S^\star}_{\Omega^\star - \Omega} = \norm{\S^\star}_{\Omega^\star - \Omega}
\le \norm{\S^\star - \A}_{\Omega^\star - \Omega} + \norm{\A}_{\Omega^\star - \Omega}
\end{align*}
By Lemma \ref{lem:maxflow_asym}, we have for any $(i, j) \in \Omega^\star - \Omega$:
\begin{align*}
|(\S^\star - \A)_{(i, j)}|^2 \le &
2|(\S^\star - \A)^{(\gamma \alpha d_1)}_{(i, \cdot)}|^2 + 2|(\S^\star - \A)^{(\gamma \alpha d_2)}_{(\cdot, j)}|^2 \\
\le & 2|(\A)^{((\gamma-1) \alpha)}_{(i, \cdot)}|^2 + 2|(\A)^{((\gamma-1) \alpha)}_{(\cdot, j)}|^2 \\
\le & \frac{2}{(\gamma-1) \alpha}\left(\norm{\e_i\trans \A}^2 + \norm{ \A\e_j}^2\right)
\end{align*}
where the second inequality used the fact that $\S^\star$ has at most $\alpha$ non-zero entries each row/column. Then, we have:
\begin{align*}
\norm{\S^\star - \A}^2_{\Omega^\star - \Omega} \le& \sum_{(i,j) \in\Omega^\star - \Omega }\frac{2}{(\gamma-1) \alpha}\left(\norm{\e_i\trans \A}^2 + \norm{ \A\e_j}^2\right) \\
\le& \frac{4}{\gamma - 1}\fnorm{\A}^2
\end{align*}
Therefore, in conclusion, we have:
\begin{align*}
\fnorm{\Snew - \S^\star}^2 \le &\norm{\Snew - \S^\star}^2_\Omega + \norm{\Snew - \S^\star}^2_{\Omega^\star - \Omega}\\
\le &\norm{\A}^2_\Omega + 2\norm{\A}^2_{\Omega^\star - \Omega}
+ 2\norm{\S^\star - \A}^2_{\Omega^\star - \Omega}\\
\le & 2\norm{\A}^2_{\Omega^\star\cup \Omega} + \frac{8}{\gamma - 1}\fnorm{\A}^2
\end{align*}
\end{proof}
%!TEX root = main.tex

\section{Matrix Sensing with Noise}
\label{sec:noise}

In this section we demonstrate how to handle noise using our framework. The key idea here is to consider the noise as a perturbation to the original objective function and use $Q(\U)$ (originally the regularizer) to also capture the noise.

\subsection{Symmetric case}
Here, we assume in each observation, instead of observing the exact value $b_i = \la \A_i, \M^\star\ra$ we observe $b_i = \la \A_i,\M^\star\ra + n_i$. Here $n_i$ is i.i.d $\mathcal{N}(0,\sigma^2)$. Recall the objective function in Section~\ref{sec:symmetric}, we now have:
\begin{equation}
f(\M) = \frac{1}{m}\sum_{i=1}^m (\la \M-\M^\star,\A_i\ra+n_{i})^2
\end{equation}

Define $Q(\U) = f(\U\U^\top)- \frac{1}{2}(\U\U^\top - \M^\star):\H:(\U\U^\top - \M^\star)$ to be the perturbation, we can write out the non-convex objective
\begin{equation}
\min_{\U\in\R^{d\times r}} \frac{1}{2}(\U\U^\top - \M^\star):\H:(\U\U^\top - \M^\star) + Q(\U).\label{eq:noisy-symmetric}
\end{equation}

Now we can use the same framework. Again, since for matrix sensing we have the RIP property, we do not need the first step to restrict to special low rank matrices. Using our approach we can get

\begin{theorem}
For objective Equation \eqref{eq:noisy-symmetric}, suppose the sensing matrices $\{\A_i\}$'s satisfy $(2r,1/10)$-RIP, with high probability all points satisfy first and second order optimality condition must satisfy
$$
\norm{\U\U^\top - \M^\star}_F\leq O(\sigma\sqrt{\frac{dr\log m}{m}}).
$$
\end{theorem}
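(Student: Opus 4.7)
The plan is to apply Lemma~\ref{lem:main} in exactly the same way as the noiseless symmetric matrix sensing proof (Theorem~\ref{thm:sensing-symmetric-main}), but now treating the noise as a perturbation encoded entirely in $Q(\U)$. First I would unpack $Q$. Expanding
\[
f(\U\U^\top) = \frac{1}{m}\sum_i \la\A_i,\M-\M^\star\ra^2 + \frac{2}{m}\sum_i n_i\la\A_i,\M-\M^\star\ra + \frac{1}{m}\sum_i n_i^2,
\]
the quadratic part matches $\tfrac12(\M-\M^\star){:}\H{:}(\M-\M^\star)$ for the noiseless Hessian $\H$ (which satisfies $(2r,1/10)$-RIP by assumption), so $Q(\U) = \la\N,\U\U^\top\ra + \mathrm{const}$, where $\N := \tfrac{2}{m}\sum_{i=1}^m n_i\A_i$ (symmetrized). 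A direct computation then gives $\grad Q(\U)=2\N\U$ and $\mZ{:}\hess Q(\U){:}\mZ = 2\la\N,\mZ\mZ^\top\ra$ for any $\mZ$.

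Second, I apply Lemma~\ref{lem:main} at any point $\U$ satisfying the first- and second-order conditions, so $\la\grad f(\U),\Delta\ra=0$ and $\Delta{:}\hess f(\U){:}\Delta\ge 0$. Using the identity $\U\Delta^\top+\Delta\U^\top = (\M-\M^\star)+\Delta\Delta^\top$, the regularizer contribution simplifies to
\[
\Delta{:}\hess Q(\U){:}\Delta - 4\la\grad Q(\U),\Delta\ra = -2\la\N,\Delta\Delta^\top\ra - 4\la\N,\M-\M^\star\ra.
\]
The RIP-based calculation from the proof of Theorem~\ref{thm:sensing-symmetric-main} still bounds the leading terms by $\Delta\Delta^\top{:}\H{:}\Delta\Delta^\top - 3(\M-\M^\star){:}\H{:}(\M-\M^\star) \le -c_1\|\M-\M^\star\|_F^2$ for a constant $c_1>0$, via Lemma~\ref{lem:bound}. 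Combining,
\[
c_1\|\M-\M^\star\|_F^2 \le 2|\la\N,\Delta\Delta^\top\ra| + 4|\la\N,\M-\M^\star\ra|.
\]

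Third, both $\Delta\Delta^\top$ and $\M-\M^\star$ have rank at most $2r$, so $|\la\N,\X\ra|\le \|\N\|\cdot\|\X\|_*\le \sqrt{2r}\,\|\N\|\,\|\X\|_F$. Combined with $\|\Delta\Delta^\top\|_F\le\sqrt{2}\|\M-\M^\star\|_F$ from Lemma~\ref{lem:bound}, this yields $\|\M-\M^\star\|_F \le C\sqrt{r}\,\|\N\|$ for an absolute constant $C$.

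Finally, I would bound $\|\N\|$. Conditioning on $\{n_i\}$, $\N$ is a $d\times d$ Gaussian matrix whose entries have variance $\tfrac{4}{m^2}\sum_i n_i^2$; by $\chi^2$-concentration $\sum_i n_i^2 \le 2m\sigma^2$ with high probability, and standard Gaussian matrix spectral bounds give $\|\N\|\le O(\sigma\sqrt{d\log m/m})$ with high probability. This delivers the advertised bound $\|\U\U^\top-\M^\star\|_F \le O(\sigma\sqrt{dr\log m/m})$. The most delicate step is the last one: getting the right dependence on $d$, $r$, and $\log m$ in the spectral/rank-restricted control of $\N$. Everything else is a mechanical application of Lemma~\ref{lem:main} plus the same RIP manipulation already used in the noiseless case.
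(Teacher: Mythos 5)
Your decomposition of $Q$, the computation $\grad Q(\U)=2\N\U$ and $\mZ{:}\hess Q(\U){:}\mZ = 2\la\N,\mZ\mZ^\top\ra$, the application of Lemma~\ref{lem:main}, and the reduction (at a second-order critical point) to
\begin{equation*}
\frac{1}{2}\fnorm{\M-\M^\star}^2 \le 2\bigl|\la\N,\Delta\Delta^\top\ra\bigr| + 4\bigl|\la\N,\M-\M^\star\ra\bigr|,
\qquad \fnorm{\M-\M^\star} \le O\bigl(\sqrt{r}\,\norm{\N}\bigr),
\end{equation*}
are all correct and match the paper's proof up to algebra. The divergence is in how you control the noise term, and that is where there is a real gap.

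You assert that, conditioned on $\{n_i\}$, the matrix $\N = \tfrac{2}{m}\sum_i n_i\A_i$ is a Gaussian matrix whose entries have variance $\tfrac{4}{m^2}\sum_i n_i^2$. That statement is only true if the $\A_i$ themselves are drawn with i.i.d.\ Gaussian entries. The theorem you are proving assumes only that the $\A_i$ satisfy $(2r,1/10)$-RIP; they are otherwise arbitrary fixed matrices, and then $\N$ has no reason to have independent Gaussian entries, so ``standard Gaussian matrix spectral bounds'' do not apply. The paper handles this by proving Lemma~\ref{lem:sensing_noise}, which bounds $\sup\{\,|\tfrac1m\sum_i n_i\la\A_i,\M\ra| : \rank(\M)\le r,\ \fnorm{\M}=1\,\}$ using only RIP: for each fixed $\M$ the sum is Gaussian in the $n_i$ with variance controlled by RIP, and a union bound over a $(1/m)$-net of rank-$r$ unit-Frobenius matrices (this net choice is the source of the $\log m$) plus a discretization-error estimate (using $\norm{\mathbf{n}}=O(\sigma\sqrt m)$ and RIP applied to $\M-\M'$) finishes the argument. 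Your spectral-norm route can actually be repaired under RIP alone: for any fixed unit vectors $\x,\y$, the scalar $\x^\top\N\y = \tfrac{2}{m}\sum_i n_i\la\A_i,\x\y^\top\ra$ is Gaussian in $\{n_i\}$ with variance at most $4(1+\delta)\sigma^2/m$ by RIP applied to the rank-one matrix $\x\y^\top$, and a constant-resolution $\epsilon$-net on the sphere then yields $\norm{\N} = O(\sigma\sqrt{d/m})$ with high probability. Fed into your step three this recovers, and in fact slightly improves (dropping the $\log m$), the claimed bound. But as written, the Gaussian-measurement assumption goes beyond what the theorem grants and is a genuine gap.
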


\begin{proof}
Using the same proof as Theorem~\ref{thm:sensing-symmetric-main}, we know
$$
\Delta\Delta\trans:\H:\Delta\Delta\trans - 3(\N - \N^\star):\H:(\N - \N^\star) \ge -0.5 \|\N-\N^\star\|_F^2.
$$

We then bound the contribution from $Q$.
\begin{align*}
Q(\U)=&-\frac{2}{m}\sum_{i=1}^m (\la \M-\M^\star,\A_i\ra n_{i}) +\frac{1}{m}\sum_{i=1}^m(n_i)^2\\
\la \grad Q(\U), \Delta \ra=&-\frac{2}{m}\sum_{i=1}^m(\la \U\Delta\trans + \Delta\U\trans,\A_i\ra n_i)\\
\Delta : \hess Q(\U) :\Delta=&-\frac{4}{m}\sum_{i=1}^m (\la \Delta\Delta\trans,\A_i\ra n_i)
\end{align*}

Therefore,
%According to the definition of non-convex optimization problem \eqref{eq:symmetricnonconvexobj} and the result from main lemma \ref{main}, we have:
\begin{align*}
&[\Delta : \hess Q(\U) : \Delta - 4 \la \grad Q(\U), \Delta \ra]\\
\leq & -\frac{4}{m}\sum_{i=1}^m (\la \Delta\Delta\trans,\A_i\ra n_i)+\frac{8}{m}\sum_{i=1}^m(\la \U\Delta\trans + \Delta\U\trans,\A_i\ra n_i)\\
=&\frac{4}{m}\sum_{i=1}^m (\la \M - \M^\star,\A_i\ra \cdot n_i)+\frac{4}{m}\sum_{i=1}^m(\la \U\Delta\trans + \Delta\U\trans,\A_i\ra \cdot n_i).
\end{align*}
Here the last step follows from $\M  - \M^\star +\Delta\Delta\trans=\U\Delta\trans + \Delta\U\trans$. Intuitively, $n_i$ is random and should not have large correlation with any fixed vector. We formalize this in Lemma~\ref{lem:sensing_noise}. Using this lemma, we know
\begin{align*}
|\frac{4}{m}\sum_{i=1}^m (\la \M - \M^\star,\A_i\ra n_i)|& \leq 4\sigma\sqrt{\frac{dr\log m}{m}}\norm{\M - \M^\star}_F\\
\\
|\frac{4}{m}\sum_{i=1}^m(\la \U\Delta\trans + \Delta\U\trans,\A_i\ra n_i| & \leq 4\sigma\sqrt{\frac{dr\log m}{m}}\norm{\U\Delta\trans + \Delta\U\trans}_F\\
& \le 4(1+\sqrt{2})\sigma\sqrt{\frac{dr}{m}}\norm{\M - \M^\star}_F
\end{align*}
Here the last inequality follows from $\norm{\U\Delta\trans + \Delta\U\trans}_F \leq \norm{\M  - \M^\star}_F + \norm{\Delta\Delta\trans}_F \leq (1+\sqrt{2})\norm{\M  - \M^\star}_F$ (by Lemma~\ref{lem:aux_delta2}). Now using the main Lemma~\ref{lem:main}, we know
\begin{align*}
\Delta : \hess f(\U) :\Delta
\leq &-\frac{1}{2}\norm{\M - \M^\star}_F^2\\
&+(8+4\sqrt{2})\sigma\sqrt{\frac{dr\log m}{m}}\norm{\M - \M^\star}_F.
\end{align*}
If the current point satisfy the second order optimality condition we must have
\begin{align*}
\norm{\M - \M^\star}_F \leq O(\sigma\sqrt{\frac{dr\log m}{m}}).
\end{align*}
\end{proof}

Note that this bound matches the intuitive bound from the VC-dimension of rank-$r$ matrices.

%As long as $\norm{\M - \M^\star}_F \geq (8+4\sqrt{2})\sigma\sqrt{\frac{dr}{m}}$, the strict saddle conditions are still satisfied. However, the noise will inevitably cause some fluctuations and we may be stuck on local minima instead of the exact global minima. 

\subsection{Asymmetric case}

For the asymmetric case, the proof is again almost identical. We use the same noise model where the observation $b_i = \la \A_i, \M^\star\ra + n_i$ where $n_i\sim N(0,\sigma^2)$. We also use the same notations as in Definition~\ref{def:asymmetricquantities}.
Let $Q(\W) = Q(\U,\V) = \frac{2}{m} \sum_{i=1}^m[(\la \M-\M^\star,\A_i\ra+n_{i})^2 - (\la \M-\M^\star,\A_i\ra)^2]$, we have the objective function
%Basically we can follow almost the same line of proof in the symmetric case, except that here the noise only comes from the \emph{off-diagonal} blocks of \N. Therefore if there are noises in diagonal blocks of \N, then this problem will be transfered into symmetric case with the previous result. Recall the asymmetric objective function in \eqref{eq:asymmetricnonconvexobj} with Gaussian noises where $n_i$ is i.i.d $\mathcal{N}(0,\sigma^2)$, we now have:
\begin{align}
f(\U, \V) =& \frac{1}{2}(\N-\N^\star):\H:(\N-\N^\star) + Q(\W).\label{eq:noisy-asymmetric}
\end{align}

Again by bounding the gradient and Hessian for $Q(\W)$ we get the following

\begin{theorem}
For objective Equation \eqref{eq:noisy-asymmetric}, suppose the sensing matrices $\{\A_i\}$'s satisfy $(2r,1/20)$-RIP, let $d=d_1+d_2$, with high probability all points satisfy first and second order optimality condition must satisfy
$$
\norm{\U\V^\top - \M^\star}_F\leq O(\sigma\sqrt{\frac{dr\log m}{m}}).
$$
\end{theorem}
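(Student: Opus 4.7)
The strategy will be to mirror the symmetric noisy argument given just above, lifting everything to the padded matrix $\N = \W\W\trans$ via the asymmetric reduction Lemma~\ref{lem:asymmetricmain}. I will set
\begin{equation*}
Q(\W) = \frac{4}{m}\sum_{i=1}^m n_i\la\M-\M^\star,\A_i\ra + \frac{2}{m}\sum_{i=1}^m n_i^2,
\end{equation*}
which captures exactly the deviation between the noisy sensing objective and the clean quadratic $2(\M-\M^\star):\H_0:(\M-\M^\star)$ where $\M:\H_0:\M = \frac{1}{m}\sum_i\la\A_i,\M\ra^2$. Since $\H_0$ is $(2r,\tfrac{1}{20})$-RIP, Lemma~\ref{lem:asymmetricmain} will give both (i) that $\H = 4\H_1 + \G$ is $(2r,\tfrac{1}{10})$-RIP and (ii) the upper bound
\begin{equation*}
\Delta:\hess f(\W):\Delta \le \Delta\Delta\trans:\H:\Delta\Delta\trans - 3(\N-\N^\star):\H:(\N-\N^\star) + 4\la\grad f(\W),\Delta\ra + \bigl[\Delta:\hess Q(\W):\Delta - 4\la\grad Q(\W),\Delta\ra\bigr].
\end{equation*}

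The first two terms on the right I will handle exactly as in the proof of Theorem~\ref{thm:sensing-main}: RIP on $\H$ combined with Lemma~\ref{lem:bound} yields $\Delta\Delta\trans:\H:\Delta\Delta\trans - 3(\N-\N^\star):\H:(\N-\N^\star) \le -\tfrac{1}{2}\fnorm{\N-\N^\star}^2$. For the bracketed noise term, a direct bilinear calculation gives $\la\grad Q,\Delta\ra = \frac{4}{m}\sum_i n_i\la\A_i,\Delta_\U\V\trans + \U\Delta_\V\trans\ra$ and $\Delta:\hess Q:\Delta = \frac{8}{m}\sum_i n_i\la\A_i,\Delta_\U\Delta_\V\trans\ra$, where $\Delta = (\Delta_\U\trans,\,\Delta_\V\trans)\trans$. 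Using the asymmetric identity $\Delta_\U\V\trans + \U\Delta_\V\trans = \M-\M^\star + \Delta_\U\Delta_\V\trans$ (verified by expanding $\U = \U^\star + \Delta_\U$, $\V = \V^\star + \Delta_\V$ after the aligning rotation), the two pieces combine to
\begin{equation*}
\Delta:\hess Q:\Delta - 4\la\grad Q,\Delta\ra = -\frac{8}{m}\sum_{i=1}^m n_i\la\A_i,\Delta_\U\Delta_\V\trans + 2(\M-\M^\star)\ra.
\end{equation*}

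At this point I will invoke the same Gaussian noise-concentration bound used in the symmetric noisy proof: with high probability, $|\tfrac{1}{m}\sum_i n_i\la\A_i,\B\ra| \le O(\sigma\sqrt{dr\log m/m})\fnorm{\B}$ uniformly over all rank-$2r$ matrices $\B$. Both $\Delta_\U\Delta_\V\trans$ (rank at most $r$) and $\M-\M^\star$ (rank at most $2r$) qualify, and elementary estimates give $\fnorm{\Delta_\U\Delta_\V\trans} \le \tfrac{1}{\sqrt{2}}\fnorm{\Delta\Delta\trans} \le \fnorm{\N-\N^\star}$ (from the block structure of $\Delta\Delta\trans$ together with Lemma~\ref{lem:bound}) and $\fnorm{\M-\M^\star} \le \fnorm{\N-\N^\star}$ (since $\M-\M^\star$ is an off-diagonal block of $\N-\N^\star$). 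This will yield $|\Delta:\hess Q:\Delta - 4\la\grad Q,\Delta\ra| \le O(\sigma\sqrt{dr\log m/m})\fnorm{\N-\N^\star}$. At any point satisfying first- and second-order optimality we have $\la\grad f,\Delta\ra = 0$ and $\Delta:\hess f:\Delta \ge 0$, so assembling the estimates gives $\tfrac{1}{2}\fnorm{\N-\N^\star}^2 \le O(\sigma\sqrt{dr\log m/m})\fnorm{\N-\N^\star}$, hence $\fnorm{\U\V\trans - \M^\star} \le \fnorm{\N-\N^\star} \le O(\sigma\sqrt{dr\log m/m})$. The only delicate step is the uniform noise-concentration bound, but since this is inherited verbatim from the symmetric proof, no new probabilistic work is required; the asymmetric reduction of Lemma~\ref{lem:asymmetricmain} cleanly absorbs everything else.
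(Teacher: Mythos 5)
Your proof is correct and mirrors the paper's own argument: both define the noise contribution as a "regularizer" $Q(\W)$ that isolates the linear-in-$n_i$ deviation from the clean quadratic, invoke Lemma~\ref{lem:asymmetricmain} for the RIP lift to $\H = 4\H_1 + \G$ and the direction-of-improvement inequality, and then bound the bracketed noise term via the uniform Gaussian concentration of Lemma~\ref{lem:sensing_noise}. The only stylistic difference is that the paper packages the off-diagonal sensing matrices into $(d_1+d_2)\times(d_1+d_2)$ padded matrices $\B_i$ and applies the concentration bound to $\N-\N^\star$ and $\W\Delta^\top+\Delta\W^\top$, whereas you work directly with the original $\A_i$ and the $d_1\times d_2$ matrices $\M-\M^\star$ and $\Delta_\U\Delta_\V^\top$ — these are equivalent up to constants (the paper's displayed constants for $Q$ and its derivatives are off by a factor of $2$ anyway), and your version is arguably slightly cleaner since it avoids introducing $\B_i$.
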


\begin{proof}
Again using the same proof as Theorem~\ref{thm:sensing-main}, we know
$$
\Delta\Delta\trans:\H:\Delta\Delta\trans - 3(\M - \M^\star):\H:(\M - \M^\star) \ge -0.5 \|\M-\M^\star\|_F^2.
$$
We then bound the contribution from $Q$.
\begin{align*}
Q(\W)=&-\frac{8}{m}\sum_{i=1}^m (\la \M-\M^\star,\A_i\ra n_{i}) +\frac{4}{m}\sum_{i=1}^m(n_i)^2\\
\la \grad Q(\W), \Delta \ra=&-\frac{8}{m}\sum_{i=1}^m(\la \U\Delta_V\trans + \Delta_U\V\trans,\A_i\ra n_i)\\
\Delta : \hess Q(\W) :\Delta=&-\frac{16}{m}\sum_{i=1}^m (\la \Delta_U\Delta_V\trans,\A_i\ra n_i)
\end{align*}

Let $\B_i$ be the $(d_1+d_2)\times (d_1+d_2)$ matrix whose diagonal blocks are 0, and off diagonal blocks are equal to $\A_i$ and $\A_i^\top$ respectively, we have
\begin{align*}
&[\Delta : \hess Q(\W) : \Delta - 4 \la \grad Q(\W), \Delta \ra]\\
\leq& -\frac{16}{m}\sum_{i=1}^m (\la \Delta\Delta\trans,\B_i\ra n_i)+\frac{32}{m}\sum_{i=1}^m(\la \W\Delta\trans + \Delta\W\trans,\B_i\ra n_i)\\
=&\frac{16}{m}\sum_{i=1}^m (\la \N - \N^\star,\B_i\ra n_i)+\frac{16}{m}\sum_{i=1}^m(\la \W\Delta\trans + \Delta\W\trans,\B_i\ra n_i).
\end{align*}
Now we can use Lemma~\ref{lem:sensing_noise} again to bounding the noise terms: 
\begin{align*}
|\frac{8}{m}\sum_{i=1}^m (\la \N - \N^\star,\B_i\ra n_i)|& \leq 8\sigma\sqrt{\frac{dr\log m}{m}}\norm{\M - \M^\star}_F\\
\\
|\frac{8}{m}\sum_{i=1}^m(\la \W\Delta\trans + \Delta\W\trans,\B_i\ra n_i| & \leq 8\sigma\sqrt{\frac{dr\log m}{m}}\norm{\U\Delta_V\trans +\Delta_U\V\trans}_F\\
& \le 8\sigma\sqrt{\frac{dr\log m}{m}}\norm{\W\Delta\trans +\Delta\W\trans}_F\\
& \leq 8(1+\sqrt{2})\sigma\sqrt{\frac{dr}{m}}\norm{\N-\N^\star}).
\end{align*}

Therefore the Hessian at $\Delta$ direction is equal to:
\begin{align*}
\Delta : \hess f(\W) :\Delta
\leq &-\frac{1}{2}\norm{\N - \N^\star}_F^2
+(16+8\sqrt{2})\sigma\sqrt{\frac{dr\log m}{m}}\norm{\N-\N^\star}_F.
\end{align*}
When the point satisfies the second order optimality condition we have
\begin{align*}
\norm{\N-\N^\star}_F \leq O(\sigma\sqrt{\frac{dr\log m}{m}}).
\end{align*}
In particular, $\M-\M^\star$ is a submatrix of $\N-\N^\star$, therefore $\norm{\M-\M^\star}_F\le O(\sigma\sqrt{\frac{dr\log m}{m}})$.
\end{proof}
%!TEX root = main.tex

\section{Proof Sketch for Running Time}
\label{sec:runtimesketch}
In this section we sketch the proof for Corollary~\ref{cor:runtimeformal}. 

% \begin{corollary}[Corollary~\ref{cor:runtimeformal} resated]
% Let $R$ be the Frobenius norm of the initial points $\U, \V$, a saddle-avoiding local search algorithm can find a point where $\|\Delta\|_F \le \epsilon$ for  matrix sensing \eqref{eq:sensing-symmetric}\eqref{eq:sensing-asymmetric}, matrix completion \eqref{eq:completion-symmetric}\eqref{eq:completion-asymmetric} in $\mbox{poly}(R, 1/\epsilon,d,\sigstarl,1/\sigstarr)$ iterations. For robust PCA, alternating between a saddle-avoiding local search algorithm and finding optimal $\S$ finds a point where $\|\Delta\|_F \le \epsilon$ in $\mbox{poly}(R, 1/\epsilon,d,\sigstarl,1/\sigstarr)$ iterations.
% \end{corollary}

\begingroup
\def\thetheorem{\ref{cor:runtimeformal}}
\begin{corollary}
Let $R$ be the Frobenius norm of the initial points $\U_0, \V_0$, a saddle-avoiding local search algorithm can find a point $\epsilon$-close to global optimal for  matrix sensing \eqref{eq:sensing-symmetric}\eqref{eq:sensing-asymmetric}, matrix completion \eqref{eq:completion-symmetric}\eqref{eq:completion-asymmetric} in $\mbox{poly}(R, 1/\epsilon,d,\sigstarl,1/\sigstarr)$ iterations. For robust PCA \eqref{eq:rpca-symmetric}\eqref{eq:rpca-nonconvex}, alternating between a saddle-avoiding local search algorithm and computing optimal $\S\in\mathcal{S}_{\gamma\alpha}$ will find a point $\epsilon$-close to global optimal in $\mbox{poly}(R, 1/\epsilon,d,\sigstarl,1/\sigstarr)$ iterations.
\end{corollary}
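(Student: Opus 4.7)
The plan combines three ingredients: (i) the strict-saddle or pseudo-strict-saddle properties already established in Theorems~\ref{thm:sensing-main}, \ref{thm:main_mc_asym}, \ref{thm:rpca-symmetric-main}, \ref{thm:main_rpca_asym} and their symmetric counterparts; (ii) local smoothness and Hessian-Lipschitz bounds on the iterates, obtained by showing the algorithm's iterates stay in a bounded sublevel set; and (iii) the generic black-box guarantee of a saddle-avoiding algorithm in Definition~\ref{def:saddle-avoid}.

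First I would establish boundedness. Every objective here has a dominating quartic growth term (either the regularizer $\frac{1}{8}\|\U^\top \U - \V^\top \V\|_F^2$ together with $\|\U\V^\top\|_F^2$, or the incoherence penalty $Q(\U,\V)$), so $f(\W) \to \infty$ as $\|\W\|_F \to \infty$. Because a saddle-avoiding algorithm is monotone in function value, all iterates lie in $\{\W : f(\W) \le f(\W_0)\}$, which is contained in a Frobenius ball of radius $R' = \poly(R, \sigstarl, d)$. Restricted to that ball, direct differentiation of the gradients and Hessians computed throughout the paper gives an $l$-smoothness and $\rho$-Hessian-Lipschitz bound with $l, \rho = \poly(R, d, \sigstarl)$.

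For matrix sensing and matrix completion, I would pick $\epsilon'$ small enough that $\sqrt{\rho\epsilon'}$ is below the constant times $\sigstarr$ appearing in the strict-saddle conclusion and that $\epsilon'/\sigstarr \le \epsilon$. Running a saddle-avoiding algorithm with target accuracy $\epsilon'$ then terminates in $\poly(1/\epsilon', d, l, \rho) = \poly(R, 1/\epsilon, d, \sigstarl, 1/\sigstarr)$ iterations at some $\W$ with $\|\grad f(\W)\| \le \epsilon'$ and $\lambda_{\min}(\hess f(\W)) \ge -\sqrt{\rho\epsilon'}$. Neither of the first two alternatives in Definition~\ref{def:strict_saddle} can hold, so $\W$ must lie within $O(\epsilon'/\sigstarr) \le \epsilon$ of the optimal solution, which is exactly the claim.

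Robust PCA is the only case presenting a genuine obstacle, because $f$ is not twice differentiable in $\W$. I would handle it by an alternating scheme: at outer iteration $t$, first compute $\S_t = \argmin_{\S \in \mathcal{S}_{\gamma\alpha}} \|\U_t \V_t^\top + \S - \M_o\|_F^2$ in polynomial time (the constraint set is a row/column cardinality polytope with entrywise bounds, so the minimization reduces to a max-flow instance in the bipartite row/column constraint graph), then invoke a saddle-avoiding inner loop on the smooth surrogate $f_{\S_t}(\W) := \tfrac12\|\U\V^\top + \S_t - \M_o\|_F^2 + \tfrac{1}{8}\|\U^\top\U - \V^\top\V\|_F^2$. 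Since $f_{\S_t}(\W) \ge f(\W)$ everywhere with equality at $\W_t$, any progress in $f_{\S_t}$ is progress in $f$. When the inner loop returns a point $\W$ with $\|\grad f_{\S_t}(\W)\| \le \epsilon'$ and $\lambda_{\min}(\hess f_{\S_t}(\W)) \ge -\sqrt{\rho\epsilon'}$, the function $g_\W := f_{\S_t}$ is precisely the quadratic witness required by Definition~\ref{def:pseudo_strict_saddle}, and Theorem~\ref{thm:main_rpca_asym} forces $\W$ to be $O(\epsilon'\sqrt{\cn^\star}/\sigstarr) \le \epsilon$-close to the true solution. The key quantitative check is that this alternating outer loop terminates in polynomial time: since each inner step yields strict decrease of at least $\delta = \poly(\epsilon, d, \sigstarl, 1/\sigstarr)$ in $f$ and $f \ge 0$, the total iteration count is bounded by $f(\W_0)/\delta = \poly(R, 1/\epsilon, d, \sigstarl, 1/\sigstarr)$, and each $\S$-update is polynomial as above.
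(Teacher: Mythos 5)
Your argument follows the same three-ingredient structure as the paper's proof sketch in Section~\ref{sec:runtimesketch} (strict-saddle property, local smoothness and Hessian-Lipschitz bounds on a bounded region, black-box guarantee of Definition~\ref{def:saddle-avoid}), and the matrix sensing and matrix completion cases go through correctly. But the robust PCA step has a genuine gap. You claim that when the inner loop on the fixed-$\S_t$ surrogate $f_{\S_t}$ returns a point $\W$ with $\|\grad f_{\S_t}(\W)\| \le \epsilon'$ and $\lambda_{\min}(\hess f_{\S_t}(\W)) \ge -\sqrt{\rho\epsilon'}$, then $g_\W := f_{\S_t}$ is the witness required by Definition~\ref{def:pseudo_strict_saddle}. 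A valid witness at $\W$ must satisfy $g_\W(\W) = f(\W)$, which requires $\S_t$ to be the minimizer of $\|\U\V^\top + \S - \M_o\|_F^2$ \emph{for $\W$}, not for the starting point $\W_t$. Once the inner loop moves off $\W_t$, $\S_t$ is no longer optimal, so $f_{\S_t}(\W) > f(\W)$ in general, and by the same token $\grad f_{\S_t}(\W) \neq \grad f(\W)$ (the envelope identity $\grad f = \grad f_{\S_\W}$ used in the proof of Theorem~\ref{thm:rpca-symmetric-main} holds only at the optimizing $\S$). So you cannot invoke Theorem~\ref{thm:main_rpca_asym} at $\W$: the returned point might be close to $\argmin f_{\S_t}$ yet far from $\W^\star$, because $\S_t$ was tuned to a point that was itself far from optimal. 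The paper's proof re-solves for the optimal $\S$ at every outer iteration \emph{before} testing the pseudo-strict-saddle conditions, then decreases $f$ by $\delta$ and repeats; your version can be patched by inserting exactly that re-check (recompute $\S_{t+1}$ and test $\grad f_{\S_{t+1}}$, $\hess f_{\S_{t+1}}$ at $\W$; if not yet benign, loop), but as written the termination argument does not establish the conclusion.

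On boundedness you take a somewhat different, and in one respect cleaner, route than the paper. You argue coercivity (quartic growth of the regularizers) so that sublevel sets are compact, then invoke monotonicity of the algorithm in function value. The paper instead states an unproved lemma (citing calculations à la \citet{jin2017escape}) that the iterates of specific saddle-avoiding algorithms stay in a ball of radius $2R$, based on the gradient pointing inward at large norm. Your coercivity observation is a nice, self-contained way to get a radius bound, but note that Definition~\ref{def:saddle-avoid} only guarantees $f(\y) \le f(\x) - \delta$ at the end of a run, not monotonicity of the intermediate iterates; perturbed gradient descent, in particular, may transiently increase $f$. The fix is standard (the perturbations are bounded, so the increase is bounded and the effective sublevel set is only slightly larger), but as stated the monotonicity claim is an unjustified assumption.
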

\addtocounter{theorem}{-1}
\endgroup

The full proof require some additional analysis depending on the particular algorithm used, and is highly dependent on the detailed proofs of the guarantees, so we only give a proof sketch here.

Our geometric results show that for small enough $\epsilon'$, the objective functions are $(\epsilon', \gamma, C\epsilon')$ strict-saddle where $\gamma$ and $C$ may depend polynomially on $(\sigstarl, \sigstarr)$. Choose $\epsilon' = \epsilon/C$, we know for each point, either it has a gradient at least $\epsilon'$, or the Hessian has an eigenvalue smaller than $-\gamma$, or $\|\Delta\|_F \le \epsilon$. In the first two cases, by Definition~\ref{def:saddle-avoid} we know saddle-avoiding algorithm can decrease the function value by an inverse polynomial factor in polynomial time. By the radius of the initial solution, the difference in function value between the original solution and optimal solution is bounded by $\mbox{poly}(R,\sigstarl,d)$, so after a polynomial number of iterations we can no longer decrease function value and must be in the third case (where $\|\Delta\|_F \le \epsilon$).

\paragraph{Smoothness and Hessian Lipschitz} The objective funcitons we work with are mostly polynomials thus both smooth and Hessian Lipschitz. The regularizers we add also tried to make sure at least both smoothness and Hessians Lipschitz are satisfied. However, the objective functions 
% (e.g. \eqref{eq:sensing-symmetric}) 
are still not very smooth or Hessian-Lipschitz especially in the region when then the norm of $(\U,\V)$ is very large. This is because the polynomials are of degree more than 2 and in general the smoothness and Hessian-Lipschitzness parameters ($l,\rho$) depend on the norm of the current point $(\U,\V)$. It is not hard to show that when the solution is constrained into a ball of radius $R$, the parameters $l,\rho$ are all $\mbox{poly}(R)$. Therefore to complete the proof we need to show that the intermediate steps of the algorithms cannot escape from a large ball. In fact, for all the known algorithms, on our objective functions the following is true

\begin{lemma}
For current saddle avoiding algorithms (including cubic regularization \citep{nesterov2006cubic}, perturbed gradient descent \citep{jin2017escape} 
% and \citep{carmon2016accelerated,agarwal2016finding}
) There exists a radius $R$ that is polynomial in problem parameters, such that if initially $\fnorm{\U}+ \fnorm{\V} = R_0 \le R$, then with high probability all the iterations will have $\fnorm{\U}+ \fnorm{\V} \le 2R$.
\end{lemma}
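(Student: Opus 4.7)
The plan is to prove this via a sublevel-set argument: show that the objective grows with $\|\W\|_F$ when $\|\W\|_F$ is large, and combine with the (approximate) monotone-descent property of the saddle-avoiding algorithms. Since $\fnorm{\U} + \fnorm{\V} \le \sqrt{2}\fnorm{\W}$, it suffices to bound $\fnorm{\W}$.

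First I would establish coercivity of the objective. For each of the problems (matrix sensing, matrix completion, robust PCA), the objective has the form
\[
f(\W) = \tfrac{1}{2}(\N - \N^\star):\H_{\text{full}}:(\N - \N^\star) + Q(\W),
\]
where $\H_{\text{full}} = 4\H_1 + \G$ from Lemma~\ref{lem:asymmetricmain}. Because $\G$ penalizes $\|\U\trans\U - \V\trans\V\|_F^2$ and $\H_1$ captures the data term, one can show there exist polynomial constants $c_1, c_2 > 0$ (depending on $\sigstarl, d, \mu r$, etc.) such that whenever $\fnorm{\W} \ge c_1$, we have $f(\W) \ge c_2 \fnorm{\W}^4$. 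For sensing this follows from the RIP property, for completion from the incoherence regularizer $Q$ which adds positive quartic tail terms, and for robust PCA from the fact that minimizing over $\S \in \mathcal{S}_{\gamma\alpha}$ can reduce the data term by at most an amount controlled by $\|\S\|_\infty$ and the sparsity budget. In particular, $f(\W) \to \infty$ as $\fnorm{\W} \to \infty$, and all sublevel sets $\{f \le v\}$ are contained in a ball of radius $O(v^{1/4}/c_2^{1/4} + c_1)$.

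Next I would bound the initial function value. Since $\fnorm{\U_0} + \fnorm{\V_0} = R_0$, and the objective is a polynomial of degree $4$ in $\W$, we get $f(\W_0) \le \mathrm{poly}(R_0, \sigstarl, d, \mu r) =: V_0$. Define $R$ so that $\{f \le V_0 + 1\} \subseteq \{\fnorm{\W} \le R\}$; by the coercivity bound $R$ is polynomial in all problem parameters and in $R_0$.

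The crucial step is invariance along the trajectory. For pure descent methods like cubic regularization, every iteration satisfies $f(\W_{t+1}) \le f(\W_t)$, so trivially $\fnorm{\W_t} \le R$ for all $t$. For perturbed gradient descent~\citep{jin2017escape}, iterations are of the form $\W_{t+1} = \W_t - \eta \grad f(\W_t) + \xi_t$ where $\xi_t$ is zero except at occasional perturbation steps, and each perturbation satisfies $\|\xi_t\| \le r_{\text{pert}}$ for some polynomially small radius $r_{\text{pert}}$. The gradient step is a descent step when $\eta$ is small enough relative to the local smoothness constant $\ell$, which itself is $\mathrm{poly}(\fnorm{\W})$. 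A single perturbation of size $r_{\text{pert}}$ can only increase $f$ by $O(\ell r_{\text{pert}}^2 + \|\grad f\| r_{\text{pert}})$, which is polynomially small; moreover the analysis of~\citep{jin2017escape} guarantees that w.h.p. the function value decreases by an $\Omega(\mathcal{F})$ amount after each perturbation, which dominates the small increase. Hence with high probability $f(\W_t) \le V_0 + 1$ for all $t$, and consequently $\fnorm{\W_t} \le R$, so $\fnorm{\U_t} + \fnorm{\V_t} \le 2R$.

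The main obstacle is handling the perturbation step of perturbed gradient descent carefully: a priori the perturbation is chosen with smoothness constant $\ell$ in mind, but $\ell$ itself depends on the current norm $\fnorm{\W_t}$, so we need a self-consistent argument (inductive over iterations) showing that as long as $\fnorm{\W_t} \le 2R$ holds so far, the smoothness/Hessian-Lipschitz constants $\ell, \rho$ remain polynomial, the step size and perturbation radius chosen based on the global bound $R$ remain valid, and the high-probability descent guarantee of~\citep{jin2017escape} continues to apply. A union bound over the polynomially many iterations needed (by Corollary~\ref{cor:runtimeformal}) then gives the overall high-probability guarantee.
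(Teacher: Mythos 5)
The paper does not actually supply a proof of this lemma; it only records the key observation that ``when $\U,\V$ are both very large, the gradient will essentially point to $0$,'' which suggests an invariant-ball argument of the form $\la \grad f(\W), \W\ra \ge c\fnorm{\W}^4$ for $\fnorm{\W}$ large, so that (discrete) gradient dynamics with appropriately chosen step size cannot escape a fixed ball. Your sublevel-set argument is a genuinely different (and in my view cleaner) route: coercivity of $f$ plus the fact that saddle-avoiding methods are (approximately) descent methods gives that the iterates stay in the sublevel set $\{f\le V_0 + \text{slack}\}$, which is bounded. For matrix sensing and robust PCA the quartic coercivity is transparent via the identity $4\fnorm{\U\V\trans}^2 + \fnorm{\U\trans\U - \V\trans\V}^2 = \fnorm{\W\W\trans}^2 \ge \fnorm{\W}^4/r$ (plus RIP or the bounded $\S$ budget); for matrix completion you correctly note that you additionally need the row-regularizer $Q$, since the data term alone vanishes on rows with no observations. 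The two approaches buy similar things, but yours avoids tracking the sign of $\la\grad f,\W\ra$ and the step-size/norm coupling that the gradient-inward argument requires.

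One imprecision to fix: for perturbed gradient descent, you write that ``the function value decreases by an $\Omega(\mathcal{F})$ amount after each perturbation, which dominates the small increase.'' That decrease is only guaranteed after a full escape episode of $T_{\text{thres}}$ iterations, not immediately after a single perturbed step, so you cannot pair each perturbation with an immediate compensating decrease. A more robust version of your argument, which still gives the conclusion you want, is: perturbations happen only when $\|\grad f\|$ is already small, each perturbation increases $f$ by at most $O(\ell\, r_{\text{pert}}^2 + \epsilon\, r_{\text{pert}})$, there are at most $\text{poly}$ perturbations over the whole run, and all other steps are non-increasing. Therefore $f(\W_t)\le f(\W_0) + \text{poly}\cdot o(1)$ for all $t$ with high probability, and coercivity gives the ball containment. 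Your self-consistency/induction over iterations to justify the choice of $\ell,\rho$ inside the ball of radius $2R$ is the right way to close the loop.
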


The proof of this lemma is mostly calculations (and observing the fact that when $\U,\V$ are both very large, the gradient will essentially point to $0$), as an example this is done for matrix factorization in \citep{jin2017escape}. We omit the proof in this paper.

Note that our geometric results for matrix sensing does not depend on the dimension. If we can prove a bound on $R$ that is independent of the dimension $d$, by recent result in \citep{jin2017escape}, we can get algorithms whose number of iterations depend only on $\log d$ for matrix sensing.

\paragraph{Handling Robust-PCA} For robust PCA, the objective function is only pseudo strict-saddle (see Definition~\ref{def:pseudo_strict_saddle}). 

In order to turn the geometric property to an algorithm, the first observation is that the optimal $\S$ for $\U,\V$ can be found in polynomial time: The problem of finding the optimal $\S$ can be formulated as a weighted bipartite matching problem where one part corresponds to the rows, the other part corresponds to the columns, and the value corresponds to the improvement in objective function when we add $(i,j)$ into the support. According to the definition of $\mathcal{S}$, each row/column can be matched a limited number of times. This problem can be solved by converting it to max-flow, and standard analysis shows that there exists an optimal integral solution.

Next we view the robust PCA objective function of form $f(\U,\V) = \min_{\S\in\mathcal{S}_{\gamma\alpha}} g(\U,\V; \S)$.
% function on both $(\U,\V)$ and $\S$ (which we denote as $f(\U,\V; \S)$, and let $f(\U,\V) = \min_{\S\in\mathcal{S}} f(\U,\V; \S)$). 
We show that alternating between saddle-avoiding local search and optimizing $\S$ over $\mathcal{S}_{\gamma\alpha}$ will allow us to get the desired guarantee. For a point $\U,\V$, if it is not close enough to the global optimal solution, we can fix the optimal $\S$ for $\U,\V$ and study $g(\U,\V; \S)$. First, we know for this optimal choice of $\S$, the gradient of $g(\U,\V; \S)$ over $(\U, \V)$ is the same as gradient of $f(\U, \V)$.
Then, by Theorem~\ref{thm:rpca-symmetric-main} / Theorem~\ref{thm:main_rpca_asym}, we know either the gradient of $g(\U,\V; \S)$  is large or the Hessian of $g(\U,\V; \S)$ has an eigenvalue at most $-\Omega(\sigstarr)$. By the guarantee of saddle-avoiding algorithms in polynomial number of steps we can find $\U',\V'$ such that the objective function $g(\U',\V';\S)$ will decrease by a inverse polynomial. After that, replacing $\S$ with $\S'$ (optimal for $\U',\V'$) cannot increase function value, so in polynomial time we found a new point such that $f(\U',\V') \le f(\U,\V) - \delta$ where $\delta$ is at least an inverse polynomial. This procedure cannot be repeated by more than polynomial number of times (because the function value cannot decrease below the optimal value), so the algorithm finds an approximate optimal point in polynomial time.

%!TEX root = main.tex

\section{Concentrations}

In this section we summarize the concentration inequalities we use for different problems.

\subsection{Matrix Sensing}
\begin{definition}[Restrict Isometry Property] Measurement $\mathcal{A}$ ($\{\A_i\}$) satisfies $(r, \delta_r)$-Restrict Isometry Property (RIP) if for any matrix $\X$ with rank $r$, we have:
\begin{equation*}
(1-\delta_r) \fnorm{\X}^2 \le \frac{1}{m}\sum_{i=1}^m \la \A_i, \X\ra^2 \le (1+\delta_r) \fnorm{\X}^2
\end{equation*}

\end{definition}
In the case of Gaussian measurement, standard analysis shows when $m = O(\frac{dr}{\delta^2})$, we have $\mathcal{A}$ satisfying $(r, \delta)$-RIP condition with probability at least  $1-e^{\Omega (d)}$. (\cite{candes2011tight}, Theorem 2.3)

We need the follow inequality for handling noise.

\begin{lemma}\label{lem:sensing_noise}
Suppose the set of sensing matrices $\A_1,\A_2,...,\A_m$ satisfy the $(2r,\delta)$-RIP condition, let $n_1,n_2,...,n_m$ be iid. Gaussian $N(0,\sigma^2)$, then with high probability for any matrix $\M$ of rank at most $r$, we have
$$
|\frac{1}{m}\sum_{i=1}^m n_i \la \A_i,\M\ra| \le O\left(\sigma \sqrt{\frac{dr\log m}{m}}\|\M\|_F\right).
$$
\end{lemma}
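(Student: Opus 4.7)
The plan is to reduce the uniform-over-$\M$ statement to a pointwise Gaussian concentration bound via an $\epsilon$-net argument on rank-$r$ matrices. By scaling we may assume $\|\M\|_F = 1$. For a fixed such $\M$, the quantity
$$S(\M) := \frac{1}{m}\sum_{i=1}^m n_i \la \A_i,\M\ra$$
is a linear combination of the i.i.d.\ Gaussians $n_i$, hence a centered Gaussian random variable with variance
$$\Var(S(\M)) \;=\; \frac{\sigma^2}{m^2}\sum_{i=1}^m \la \A_i,\M\ra^2 \;\le\; \frac{\sigma^2(1+\delta)}{m},$$
where the last inequality uses the $(2r,\delta)$-RIP condition applied to the rank-$r$ matrix $\M$. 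The standard Gaussian tail bound then gives, for every $t \ge 0$, $\Pr[|S(\M)| \ge t] \le 2\exp\!\bigl(-\Omega(mt^2/\sigma^2)\bigr)$.

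Next I would build an $\epsilon$-net $\mathcal{N}$ (in Frobenius distance) for the set of rank-$r$ matrices in the Frobenius unit ball, with $\epsilon = 1/m^2$. A standard factorization argument (net the two orthonormal factors and the $r$ singular values separately) gives $|\mathcal{N}| \le (C/\epsilon)^{(2d+1)r}$, so $\log|\mathcal{N}| = O(dr\log m)$. Choosing $t = C'\sigma\sqrt{dr\log m/m}$ for a sufficiently large absolute constant $C'$ and union-bounding the tail estimate above over $\mathcal{N}$ yields
$$\max_{\M' \in \mathcal{N}} |S(\M')| \;\le\; O\!\left(\sigma\sqrt{\frac{dr\log m}{m}}\right)$$
with probability at least $1 - \exp(-\Omega(dr\log m))$.

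Finally, I would transfer the bound to all rank-$r$ unit-Frobenius matrices via a discretization step. For any such $\M$ pick $\M' \in \mathcal{N}$ with $\|\M - \M'\|_F \le 1/m^2$; then $\M - \M'$ has rank at most $2r$, so Cauchy--Schwarz together with $(2r,\delta)$-RIP yields
$$|S(\M) - S(\M')| \;\le\; \frac{\|\vect{n}\|_2}{m} \sqrt{\sum_{i=1}^m \la \A_i, \M - \M'\ra^2} \;\le\; \frac{\|\vect{n}\|_2 \sqrt{1+\delta}}{\sqrt{m}} \cdot \frac{1}{m^2}.$$
Since $\|\vect{n}\|_2 \le O(\sigma\sqrt{m})$ with probability $1 - e^{-\Omega(m)}$ (Gaussian norm concentration), the discretization error is $O(\sigma/m^2)$, which is dominated by the main term.

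The main technical obstacle is obtaining the right covering number for rank-$r$ matrices, but this is classical (e.g., Candès--Plan); everything else is routine Gaussian concentration plus RIP. Combining the three steps above and rescaling back by $\|\M\|_F$ gives the claimed bound for every rank-$r$ matrix $\M$ simultaneously with high probability.
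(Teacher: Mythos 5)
Your plan matches the paper's proof essentially step for step: reduce to $\|\M\|_F = 1$, bound the pointwise variance via RIP, union bound over an $\epsilon$-net of unit rank-$r$ matrices with $\log|\mathcal{N}| = O(dr\log m)$, and transfer to all rank-$r$ matrices using $\|\mathbf{n}\|_2 = O(\sigma\sqrt{m})$ together with RIP applied to the rank-$\le 2r$ discretization error $\M - \M'$. The only cosmetic differences are the net radius ($1/m^2$ versus the paper's $1/m$) and that you spell out more explicitly that $\M - \M'$ has rank at most $2r$, which is indeed the reason the $(2r,\delta)$-RIP hypothesis (rather than $(r,\delta)$-RIP) is invoked.
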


\begin{proof}
Since the LHS is linear in $\M$ we focus on matrices with $\|\M\|_F = 1$.

Let $\mathcal{X}$ be an $\epsilon$-net for rank-$r$ matrices with Frobenius norm 1. By standard constructions we know $\log |\mathcal{X}| \le dr\log (dr/\epsilon)$. We will set $\epsilon = 1/m$ so $\log (dr/\epsilon) = O(\log m)$ ($m$ is at least $dr$ for RIP condition). Now, for any matrix $\M\in \mathcal{X}$, we know $\frac{1}{m}\sum_{i=1}^m n_i \la \A_i,\M$ is just a Gaussian random variable with variance at most $\sigma^2(1+\delta)/m$. Therefore, the probability that it is larger than $\sigma \sqrt{\frac{dr\log m}{m}}$ is at most $\exp(-C'dr\log m)$. When $C$ is a large enough constant we can apply union bound, and we know for every $\M\in \mathcal{X}$, 
$$
|\frac{1}{m}\sum_{i=1}^m n_i \la \A_i,\M\ra| \le O\left(\sigma \sqrt{\frac{dr\log m}{m}}\|\M\|_F\right).
$$
On the other hand, with high probability the norm of the vector $\mathbf{n}\in \R^{m}$ is $O(\sigma\sqrt{m})$. Suppose $\M$ is not in $\mathcal{X}$, let $\M'$ be the closest matrix in $\mathcal{X}$, let $\z_i = \la \A_i, \M-\M'\ra$, then we know the norm of $\z_i$ is at most $\frac{1+\delta}{m}$ (again by RIP property). Now we know
$$|\frac{1}{m}\sum_{i=1}^m n_i \la \A_i,\M\ra|
\le |\frac{1}{m}\sum_{i=1}^m n_i \la \A_i,\M'\ra|+\la\z,\mathbf{n}\ra \le O\left(\sigma \sqrt{\frac{dr\log m}{m}}\|\M\|_F\right).
$$
\end{proof}

\subsection{Matrix Completion}
For matrix completion, we need different concentration inequalities for different kinds of matrices. The first kind of matrix lies in a tangent space and is proved in \cite{candes2012exact}.

\begin{lemma} \cite{candes2012exact} \label{lem:T_mc}
Let subspace 
\begin{equation*}
\mathcal{T} = \{\M \in \R^{d_1 \times d_2} | \M = \U^\star \X\trans + \Y \V^\star {}\trans, \text{~for some~} \X \in \R^{d_1\times r}, \Y\in \R^{d_2\times r}\}.
\end{equation*}
for any $\delta > 0$, as long as sample rate $p \ge \Omega(\frac{\mu r}{ \delta^2 d} \log d)$, we will have:
\begin{equation*}
\norm{\frac{1}{p}\proj_{\mathcal{T}}\proj_{\Omega}\proj_{\mathcal{T}} - \proj_{\mathcal{T}}} \le \delta
\end{equation*}
\end{lemma}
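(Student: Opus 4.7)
The plan is to recognize $\frac{1}{p}\proj_{\mathcal{T}}\proj_{\Omega}\proj_{\mathcal{T}} - \proj_{\mathcal{T}}$ as a centered sum of independent random operators on matrix space, then apply the (non-commutative) matrix Bernstein inequality together with the incoherence of $\mathcal{T}$. The argument is standard in the matrix completion literature (this is essentially Theorem 4.1 of Candes-Recht, as restated by Candes-Tao), and I will follow that template.

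First, I would write out $\proj_{\mathcal{T}}$ explicitly: for any $\M\in \R^{d_1\times d_2}$,
\begin{equation*}
\proj_{\mathcal{T}}(\M)=\U^\star\U^\star{}\trans\M+\M\V^\star\V^\star{}\trans-\U^\star\U^\star{}\trans\M\V^\star\V^\star{}\trans.
\end{equation*}
Using $\proj_{\Omega}(\M)=\sum_{(i,j)}\delta_{ij}\la \e_i\e_j\trans,\M\ra\e_i\e_j\trans$ with $\delta_{ij}\sim\mathrm{Bernoulli}(p)$ independent, I can rewrite
\begin{equation*}
\tfrac{1}{p}\proj_{\mathcal{T}}\proj_{\Omega}\proj_{\mathcal{T}}-\proj_{\mathcal{T}}\;=\;\sum_{(i,j)}\Bigl(\tfrac{\delta_{ij}}{p}-1\Bigr)\,\mathcal{Z}_{ij},\qquad \mathcal{Z}_{ij}(\M):=\la\proj_{\mathcal{T}}(\e_i\e_j\trans),\M\ra\,\proj_{\mathcal{T}}(\e_i\e_j\trans),
\end{equation*}
using the identity $\sum_{(i,j)}\mathcal{Z}_{ij}=\proj_{\mathcal{T}}$. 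The summands are independent, mean zero, and self-adjoint operators on $\R^{d_1\times d_2}$ equipped with the Frobenius inner product, so matrix Bernstein is the right tool.

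Next I would collect the two ingredients Bernstein needs. The operator norm bound on a single summand reduces to controlling $\fnorm{\proj_{\mathcal{T}}(\e_i\e_j\trans)}^2$, which by the explicit formula for $\proj_{\mathcal{T}}$ and the incoherence of $\U^\star,\V^\star$ (Definition~\ref{def:incoherence}) is at most $2\mu r/d$ where $d=\min\{d_1,d_2\}$; hence $\|(\delta_{ij}/p-1)\mathcal{Z}_{ij}\|\le \frac{2\mu r}{pd}$ almost surely. For the variance, using $\la\mathcal{Z}_{ij},\mathcal{Z}_{ij}\ra_{\mathrm{op}}\preceq \fnorm{\proj_{\mathcal{T}}(\e_i\e_j\trans)}^2\,\mathcal{Z}_{ij}$ and summing gives
\begin{equation*}
\Bigl\|\sum_{(i,j)}\E\bigl[(\tfrac{\delta_{ij}}{p}-1)^2\bigr]\,\mathcal{Z}_{ij}\circ\mathcal{Z}_{ij}\Bigr\|\;\le\;\tfrac{1-p}{p}\cdot\tfrac{2\mu r}{d}\,\bigl\|\textstyle\sum_{(i,j)}\mathcal{Z}_{ij}\bigr\|\;\le\;\tfrac{2\mu r}{pd},
\end{equation*}
since $\sum_{(i,j)}\mathcal{Z}_{ij}=\proj_{\mathcal{T}}$ has operator norm $1$.

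Plugging into the Bernstein inequality gives, for any $\delta\in(0,1)$, a failure probability of at most $2d^2\exp\!\bigl(-c\,\delta^2 pd/(\mu r)\bigr)$; choosing $p\ge C\mu r\log d/(\delta^2 d)$ with a sufficiently large constant $C$ drives this below $1/\poly(d)$, yielding the stated bound. The main technical step is the variance calculation---all other pieces are routine---and the key structural input is that incoherence makes every $\proj_{\mathcal{T}}(\e_i\e_j\trans)$ small in Frobenius norm, which is what allows the per-entry sampling to concentrate uniformly on the low-dimensional subspace $\mathcal{T}$.
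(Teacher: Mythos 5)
Your reconstruction is essentially the proof in \citet{candes2012exact} (their Theorem~4.1), which is exactly the source the paper cites for this lemma rather than reproving it; the decomposition of $\frac{1}{p}\proj_{\mathcal{T}}\proj_\Omega\proj_{\mathcal{T}}-\proj_{\mathcal{T}}$ into the independent, mean-zero rank-one operators $\mathcal{Z}_{ij}$, the incoherence bound $\fnorm{\proj_{\mathcal{T}}(\e_i\e_j\trans)}^2\le \mu r/d_1 + \mu r/d_2$, the variance calculation via $\mathcal{Z}_{ij}\circ\mathcal{Z}_{ij}=\fnorm{\proj_{\mathcal{T}}(\e_i\e_j\trans)}^2\mathcal{Z}_{ij}$, and the invocation of matrix Bernstein are all the standard ingredients. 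One small caution: you take $d=\min\{d_1,d_2\}$ in the sample-complexity denominator, which is the correct reading here, but the rest of this paper's appendix uses $d=\max\{d_1,d_2\}$; it would be worth flagging that the $d$ in this lemma's rate must be the smaller dimension (with the log over the larger one) to match the cited result.
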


For arbitrary low rank matrix, we use the following lemma which comes from graph theory.

%\begin{lemma} \label{lem:graph_mc}
%Suppose $\Omega \subset [d_1]\times [d_2]$ is the set of edges of a random bipartite graph with $(d_1, d_2)$ nodes, where any pair of nodes on different side is connected with probability $p$.
%Let $d = \max{d_1, d_2}$,then there exists universal constant $c_1, c_2$, for any $\delta>0$ so that if $p \ge c_1  \frac{\log d}{\delta^2 \min\{d_1, d_2\}}$, then with probability at least $1-\frac{1}{2}d^{-4}$, we have for any $\x, \y \in \R^d$:
%\begin{equation*}
%\frac{1}{p}\sum_{(i, j) \in \Omega} x_i y_j \le (1+\delta) \norm{\x}_1\norm{\y}_1 + c_2 \sqrt{\frac{d}{p}}\norm{\x}_2\norm{\y}_2
%\end{equation*}
%\end{lemma}

\begin{lemma} \label{lem:graph_mc}
Suppose $\Omega \subset [d_1]\times [d_2]$ is the set of edges of a random bipartite graph with $(d_1, d_2)$ nodes, where any pair of nodes on different side is connected with probability $p$.
Let $d = \max{d_1, d_2}$,then there exists universal constant $c_1, c_2$, for any $\delta>0$ so that if $p \ge c_1  \frac{\log d}{\min\{d_1, d_2\}}$, then with probability at least $1-d^{-4}$, we have for any $\x, \y \in \R^d$:
\begin{equation*}
\frac{1}{p}\sum_{(i, j) \in \Omega} x_i y_j \le \norm{\x}_1\norm{\y}_1 + c_2 \sqrt{\frac{d}{p}}\norm{\x}_2\norm{\y}_2
\end{equation*}
\end{lemma}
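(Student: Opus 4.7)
The plan is to interpret the bilinear sum as a quadratic form in a random matrix and to separate its mean part from its fluctuation part. Let $E \in \{0,1\}^{d_1 \times d_2}$ be the (bipartite) adjacency matrix of $\Omega$, so that $E_{ij}$ are independent $\mathrm{Bernoulli}(p)$ random variables. Writing $\x\in\R^{d_1}$ and $\y\in\R^{d_2}$ (pad with zeros if necessary), the sum we wish to control is
\begin{equation*}
\frac{1}{p}\sum_{(i,j)\in\Omega} x_i y_j \;=\; \frac{1}{p}\,\x\trans E \y \;=\; \x\trans J \y \;+\; \frac{1}{p}\,\x\trans (E - pJ)\,\y,
\end{equation*}
where $J$ is the $d_1\times d_2$ all-ones matrix. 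The first term is the ``signal'', and the second is the ``noise''.

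For the signal term, observe that $\x\trans J \y = (\sum_i x_i)(\sum_j y_j) \le |\sum_i x_i|\cdot|\sum_j y_j| \le \norm{\x}_1\norm{\y}_1$. This immediately accounts for the first term on the right-hand side of the claimed inequality. For the noise term, I would use the deterministic bound
\begin{equation*}
\left|\frac{1}{p}\,\x\trans (E - pJ)\,\y\right| \;\le\; \frac{1}{p}\,\norm{E - pJ}\cdot\norm{\x}_2\norm{\y}_2,
\end{equation*}
so it suffices to show that with probability at least $1 - d^{-4}$,
\begin{equation*}
\norm{E - pJ} \;\le\; c_2\,\sqrt{p\,d}
\end{equation*}
for some universal constant $c_2$, whenever $p \ge c_1\,\log d / \min\{d_1,d_2\}$. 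Combining these two bounds yields the statement of the lemma.

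The main (and really only) technical step is this spectral-norm estimate for the centered Erd\H{o}s--R\'enyi bipartite adjacency matrix. I would establish it by a matrix Bernstein inequality applied to the sum of independent rank-one mean-zero matrices $(E_{ij} - p)\,\e_i \e_j\trans$: the variance parameter is $\sigma^2 = p(1-p)\cdot\max\{d_1,d_2\} \le pd$, and each summand has operator norm at most $1$. Matrix Bernstein then gives, for any $t > 0$,
\begin{equation*}
\Pr\bigl[\,\norm{E - pJ} \ge t\,\bigr] \;\le\; 2d\cdot\exp\!\left(-\frac{t^2/2}{pd + t/3}\right).
\end{equation*}
Choosing $t = c_2\sqrt{pd}$ with $c_2$ a sufficiently large absolute constant, and using the lower bound $pd \ge c_1 \log d \cdot d / \min\{d_1,d_2\} \ge c_1\log d$, makes the right-hand side at most $d^{-4}$. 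The expected obstacle is merely to verify that the regime $p \ge c_1 \log d/\min\{d_1,d_2\}$ is precisely what is needed for the Bernstein bound to beat the linear term $t/3$ in the denominator; this is where the constant $c_1$ must be chosen large enough. Once the spectral bound is in hand, the decomposition above finishes the proof uniformly over all $\x, \y$, since the inequality is deterministic in $\x, \y$ given the event on $E$.
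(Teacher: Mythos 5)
Your overall decomposition is exactly the paper's: write $\frac{1}{p}\sum_{(i,j)\in\Omega} x_i y_j = \x\trans \J\y + \frac{1}{p}\x\trans(\A - p\J)\y$, bound the mean term by $\|\x\|_1\|\y\|_1$, and bound the fluctuation by $\frac{1}{p}\|\A - p\J\|\,\|\x\|_2\|\y\|_2$. The entire lemma then reduces, in both arguments, to the spectral-norm estimate $\|\A - p\J\| \le c_2\sqrt{pd}$ with probability $1 - d^{-4}$.

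The gap is in how you prove that spectral bound. Matrix Bernstein does not give $c_2\sqrt{pd}$ for a universal constant $c_2$. Working through your own tail bound: with $t = c_2\sqrt{pd}$, the exponent is
\begin{equation*}
\frac{t^2/2}{\sigma^2 + t/3} \;=\; \frac{c_2^2\,pd/2}{pd + c_2\sqrt{pd}/3} \;\longrightarrow\; \frac{c_2^2}{2}\quad\text{as } pd\to\infty,
\end{equation*}
a constant that does not grow with $d$. The dimensional prefactor $(d_1+d_2)$ in Bernstein therefore forces $c_2 \gtrsim \sqrt{\log d}$ before the right-hand side drops below $d^{-4}$; equivalently, matrix Bernstein only yields $\|\A - p\J\| \lesssim \sqrt{pd\log d}$ in this regime, which is weaker by a $\sqrt{\log d}$ factor than what the lemma asserts. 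This is a well-known limitation: Bernstein treats the matrix as a sum of rank-one terms and pays for the union bound over directions, whereas the clean $\sqrt{pd}$ bound for sparse Erd\H{o}s--R\'enyi-type matrices with $p\gtrsim \log d/\min\{d_1,d_2\}$ requires a genuinely sharper tool. The paper invokes Latala's bound on $\E\|\mZ\|$ for matrices with independent entries, together with Talagrand's concentration inequality for the tail; a Feige--Ofek/Bandeira--Van Handel style argument would also work. To repair your proof you need to replace the matrix-Bernstein step with one of these sharper spectral-norm results, or else accept a $\sqrt{\log d}$ loss that propagates into the sample-complexity bound of Lemma~\ref{lem:step2_mc}.
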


\newcommand\J{\mathbf{J}}
\begin{proof}
Let $\A$ be the adjacency matrix of the graph. Clearly $\E[\A] = p \J$ where $\J$ is the all 1's matrix. Let $\mZ = \A - \E[\A]$. The matrix $\mZ$ has independent entries with expectation 0 and variance $p(1-p)$. By random matrix theory, we know when $p \ge c_1\frac{\log d}{\min\{d_1,d_2\}}$, with probability at least $1-d^{-4}$, we have $\norm{\mZ} = \norm{\A-\E[\A]} \le c_2\sqrt{pd}$ \citep{latala2005some}\footnote{The high probability result follows directly from Talagrand's inequality.}. 
Now for any vectors $\x,\y$ simultaneously, we have
\begin{align*}
\frac{1}{p}\sum_{(i, j) \in \Omega} x_i y_j  =& \frac{1}{p} \x^\top \A\y  = \frac{1}{p} \x^\top (p\J+\mZ)\y \\
\le& \la \x,\mathbf{1}\ra\la \y,\mathbf{1}\ra + c_2 \sqrt{\frac{d}{p}}\|\x\|_2\|\y\|_2 \le \|\x\|_1\|\y\|_1 + c_2 \sqrt{\frac{d}{p}}\|\x\|_2\|\y\|_2.
\end{align*}
\end{proof}

Above lemma immediately implies following:

\begin{lemma}\label{lem:Delta_mc}
% Let set $\A \subset [d]\times[d]$, and $\bar{\Omega}$ be the random set so that every element of $\A$ is in $\bar{\Omega}$ with probability $p$. 
% Then, 
Let $d = \max\{d_1, d_2\}$. There exists universal constant $c_1, c_2$, for any $\delta>0$ so that if $p \ge c_1  \frac{\log d}{\min\{d_1, d_2\}}$, then with probability at least $1-\frac{1}{2}d^{-4}$, we have for any matrices $\X, \Y \in \R^{d\times r}$:
\begin{equation*}
\frac{1}{p}\norm{\X\Y\trans}^2_{\bar{\Omega}}  \le  \fnorm{\X}^2\fnorm{\Y}^2 + c_2 \sqrt{\frac{d}{p}}\fnorm{\X}\fnorm{\Y}\cdot \max_i\norm{\e_i\trans\X} \cdot \max_j\norm{\e_j\trans\Y}
\end{equation*}
\end{lemma}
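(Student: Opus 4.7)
The plan is to reduce the vector-valued statement to the scalar statement of Lemma~\ref{lem:graph_mc} via Cauchy--Schwarz. First I would note that the quantity of interest (I will read the subscript as $\Omega$ rather than $\bar{\Omega}$, which appears to be a typo as can be seen from the way this lemma is used in Lemma~\ref{lem:step2_mc}) satisfies
\begin{equation*}
\frac{1}{p}\norm{\X\Y\trans}_\Omega^2 = \frac{1}{p}\sum_{(i,j)\in\Omega}\langle \e_i\trans\X,\ \e_j\trans\Y\rangle^2
\le \frac{1}{p}\sum_{(i,j)\in\Omega}\norm{\e_i\trans\X}^2\,\norm{\e_j\trans\Y}^2.
\end{equation*}
Defining the nonnegative scalars $x_i := \norm{\e_i\trans\X}^2$ and $y_j := \norm{\e_j\trans\Y}^2$, the right hand side becomes exactly $\frac{1}{p}\sum_{(i,j)\in\Omega} x_i y_j$, to which I can apply Lemma~\ref{lem:graph_mc}.

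Next I would unpack the two norms appearing in the conclusion of Lemma~\ref{lem:graph_mc}. The $\ell_1$ part gives $\norm{\x}_1\norm{\y}_1 = \sum_i \norm{\e_i\trans\X}^2 \cdot \sum_j \norm{\e_j\trans\Y}^2 = \fnorm{\X}^2\fnorm{\Y}^2$. For the $\ell_2$ part I would use the standard interpolation $\norm{\x}_2^2 = \sum_i \norm{\e_i\trans\X}^4 \le \max_i\norm{\e_i\trans\X}^2 \cdot \sum_i \norm{\e_i\trans\X}^2 = \max_i\norm{\e_i\trans\X}^2 \cdot \fnorm{\X}^2$, so $\norm{\x}_2 \le \max_i\norm{\e_i\trans\X}\cdot\fnorm{\X}$, and symmetrically for $\Y$. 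Plugging in gives exactly
\begin{equation*}
\frac{1}{p}\norm{\X\Y\trans}_\Omega^2 \le \fnorm{\X}^2\fnorm{\Y}^2 + c_2\sqrt{\tfrac{d}{p}}\fnorm{\X}\fnorm{\Y}\cdot \max_i\norm{\e_i\trans\X}\cdot\max_j\norm{\e_j\trans\Y}.
\end{equation*}

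The key point is that Lemma~\ref{lem:graph_mc} provides a uniform bound holding simultaneously for \emph{all} choices of $\x,\y$ on a single high-probability event (the event $\norm{\A - \E\A}\le c_2\sqrt{pd}$), so no additional union bound or net argument is needed even though we apply it to data-dependent $\x,\y$ built from arbitrary $\X,\Y$. The only piece of careful bookkeeping is absorbing the constant $1/2$ into the failure probability $d^{-4}$, which is immediate from the $d^{-4}$ bound in Lemma~\ref{lem:graph_mc}. I do not foresee a real obstacle here; the lemma is essentially a clean corollary of Lemma~\ref{lem:graph_mc} once the Cauchy--Schwarz reduction and the $\ell_2$--to--max$\cdot\ell_2$ inequality are in place.
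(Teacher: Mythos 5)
Your proof is correct and follows essentially the same approach as the paper: Cauchy--Schwarz reduces the matrix bound to the scalar inequality of Lemma~\ref{lem:graph_mc} applied to $x_i = \norm{\e_i\trans\X}^2$ and $y_j = \norm{\e_j\trans\Y}^2$. You are also right that the reduction step is an inequality (by Cauchy--Schwarz), not the equality the paper's terse proof writes, and right that the $\bar{\Omega}$ in the lemma statement is a typo for $\Omega$ -- both are minor slips in the paper that you correctly repair.
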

\begin{proof}
% Let $\Omega$ be the random set so that every element of $[d]\times[d]$ is in $\bar{\Omega}$ with probability $p$. Over one instance of random varibale $\Omega$ and $\bar{\Omega}$ we have:
\begin{equation*}
\frac{1}{p}\norm{\X\Y\trans}^2_{\Omega} = \frac{1}{p}\sum_{(i, j) \in \Omega}\norm{\e_i\trans\X}^2\norm{\e_j\trans\Y}^2
\end{equation*}
The remaining follows from Lemma \ref{lem:graph_mc}.
\end{proof}

On the other hand, for all low-rank matrices we also have the following (which is tighter for incoherent matrices).
\begin{lemma} \cite{ge2016matrix} \label{lem:global_mc}
Let $d = \max\{d_1, d_2\}$, then with at least probability $1-e^{\Omega(d)}$ over random choice of $\Omega$, we have for any rank $2r$ matrices $\A \in \R^{d_1\times d_2}$:
\begin{equation*}
\left|\frac{1}{p} \norm{\proj_{\Omega}(\A)}^2_{\Omega}  - \fnorm{\A}^2\right| 
\le O(\frac{d r\log d}{p}\norm{\A}^2_{\infty}  + \sqrt{\frac{dr\log d}{p}} \fnorm{\A}\norm{\A}_{\infty})
\end{equation*}
% \begin{equation*}
% |\frac{1}{p} \la \proj_{\Omega}(\A), \proj_{\Omega}(\B) \ra  - \la \A, \B\ra| 
% \le O(\frac{d r\log d}{p}\norm{\A}_{\infty}\norm{\B}_{\infty}  + \sqrt{\frac{dr\log d}{p}} \fnorm{\A}\norm{\B}_{\infty})
% \end{equation*}
\end{lemma}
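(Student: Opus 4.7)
The plan is to establish pointwise Bernstein concentration for each fixed rank-$2r$ matrix $\A$, then lift to uniform concentration by an $\epsilon$-net argument over the $O(dr)$-dimensional variety of rank-$2r$ matrices.

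For the pointwise step, write $\frac{1}{p}\|\A\|_\Omega^2 - \fnorm{\A}^2 = \sum_{(i,j)} X_{ij}$ where $X_{ij} := \left(\mathbf{1}_{(i,j)\in\Omega}/p - 1\right)\A_{ij}^2$ are independent mean-zero terms, each bounded in absolute value by $\|\A\|_\infty^2/p$, with total variance at most $\|\A\|_\infty^2 \fnorm{\A}^2/p$. Bernstein's inequality then gives failure probability $\exp(-Cdr\log d)$ once the deviation exceeds a constant multiple of $\frac{dr\log d}{p}\|\A\|_\infty^2 + \sqrt{\frac{dr\log d}{p}}\fnorm{\A}\|\A\|_\infty$, which is exactly the claimed RHS.

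For the uniform step, observe that the inequality is homogeneous of degree two in $\A$, so I reduce to $\fnorm{\A}=1$. The set $\mathcal{K} = \{\A : \rank(\A) \le 2r,\; \fnorm{\A}=1\}$, parameterized by the SVD, lies in ambient dimension $O(dr)$, so a standard volume argument produces a Frobenius $\epsilon$-net $\mathcal{N}\subset \mathcal{K}$ of size $\exp(O(dr\log(1/\epsilon)))$. Taking $\epsilon = d^{-C}$ for a large constant $C$, a union bound of the Bernstein estimate over $\mathcal{N}$ succeeds with probability $1-\exp(-\Omega(d))$. For a general $\A\in\mathcal{K}$ with nearest net point $\A_0$, the bound $\fnorm{\A-\A_0}\le\epsilon$ implies $\|\A-\A_0\|_\infty\le\epsilon$ and $\|\A-\A_0\|_\Omega^2\le d_1 d_2\epsilon^2$, so Cauchy--Schwarz controls both $\bigl|\|\A\|_\Omega^2 - \|\A_0\|_\Omega^2\bigr|$ and $\bigl|\fnorm{\A}^2 - \fnorm{\A_0}^2\bigr|$ by a polynomial in $d$ times $\epsilon$, negligible for our choice of $C$.

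The main obstacle is that different net points have widely varying $\|\cdot\|_\infty$, whereas the Bernstein constants depend on this norm, so a worst-case-over-the-net bound would be too crude. This is handled cleanly by applying Bernstein at each $\A_0$ with its own $\|\A_0\|_\infty$ and $\fnorm{\A_0}$, and then observing that these norms differ from $\|\A\|_\infty$ and $\fnorm{\A}$ by at most $\epsilon$, so the final uniform bound is naturally phrased in terms of each $\A$'s own norms as claimed.
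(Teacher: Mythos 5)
The paper does not give its own proof of this lemma; it cites \citet{ge2016matrix} and notes that the proof there works verbatim in the asymmetric setting. Your proposal reconstructs the standard argument — pointwise Bernstein concentration for a fixed rank-$2r$ matrix (with $M = \norm{\A}_\infty^2/p$ and total variance $\le \norm{\A}_\infty^2\fnorm{\A}^2/p$, giving failure probability $e^{-\Omega(dr\log d)}$ at a deviation matching the claimed two-term bound), followed by a union bound over an $\exp(O(dr\log(1/\epsilon)))$-size Frobenius $\epsilon$-net of the unit-sphere slice of the rank-$2r$ variety — and this is essentially the same approach as the reference. The one thing you correctly identify and handle as the delicate point is that the Bernstein threshold depends on the matrix's own $\norm{\cdot}_\infty$, which varies across the net; your fix (apply Bernstein per net point, then transfer using $|\,\norm{\A}_\infty-\norm{\A_0}_\infty\,|\le\epsilon$ together with $\norm{\A}_\infty\ge \fnorm{\A}/\sqrt{d_1d_2}\ge 1/d$, so for $\epsilon=d^{-C}$ with $C$ large the perturbation is lower order) is the right one. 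A cosmetic quibble: you bound $\norm{\A-\A_0}_\Omega^2\le d_1d_2\epsilon^2$ via the entrywise bound, but since $\norm{\cdot}_\Omega\le\fnorm{\cdot}$ the sharper $\norm{\A-\A_0}_\Omega^2\le\epsilon^2$ is immediate; this does not affect the argument since you only need a $\poly(d)\cdot\epsilon$ bound. Overall the proposal is correct and matches the intended proof.
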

Although \citet{ge2016matrix} stated the symmetric version, and we need the asymmetric version here, the proof in \citet{ge2016matrix} works directly. In fact, they first proved the asymmetric case in the proof.

Finally, for a matrix with each entry randomly sampled independently with small probability $p$, next lemma says with high probablity, no row can have too many non-zero entries.
\begin{lemma} \label{lem:row_mc}
Let $\Omega_i$ denote the support of $\Omega$ on $i$-th row, let $d = \max\{d_1, d_2\}$. Assume $pd_2\ge \log (2d)$, then with at least probability $1-1/\poly(d)$ over random choice of $\Omega$, we have for all $i\in [d_1]$ simultaneously:
\begin{equation*}
|\Omega_i| \le O(pd_2)
\end{equation*}
\end{lemma}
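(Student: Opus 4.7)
The plan is a standard Chernoff plus union bound argument. Fix a row index $i\in[d_1]$ and observe that $|\Omega_i|=\sum_{j=1}^{d_2} X_{ij}$, where the $X_{ij}$ are i.i.d.\ Bernoulli random variables with mean $p$. In particular $\E|\Omega_i|=pd_2$, which is the quantity the lemma wants to upper bound up to a constant factor.

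To get a high-probability bound for a single row I would apply the multiplicative Chernoff inequality in the form $\Pr[|\Omega_i|\ge(1+\delta)pd_2]\le\exp\!\left(-\tfrac{\delta^2}{2+\delta}\,pd_2\right)$ (or equivalently the ``upper-tail'' form $\Pr[|\Omega_i|\ge t]\le\exp(-t/3)$ for $t\ge 6\,pd_2$). Choosing the hidden constant in $O(pd_2)$ to be some sufficiently large absolute constant $C$, say $C=10$, the exponent becomes a constant times $pd_2$. Under the hypothesis $pd_2\ge\log(2d)$, this constant times $pd_2$ is at least a constant times $\log d$, so the failure probability for a single row is at most $d^{-c}$ for a constant $c$ that can be taken arbitrarily large by enlarging $C$.

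Finally I would union bound over the $d_1\le d$ choices of the row index: the probability that \emph{some} row has $|\Omega_i|>Cpd_2$ is at most $d\cdot d^{-c}=d^{-(c-1)}$, which is $1/\poly(d)$ after choosing $C$ (hence $c$) large enough. This yields the desired simultaneous statement $|\Omega_i|\le O(pd_2)$ for every $i\in[d_1]$ with probability at least $1-1/\poly(d)$.

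There is essentially no hard step: the only thing to be careful about is matching the polynomial exponent in $1/\poly(d)$ by choosing the absolute constant inside $O(\cdot)$ appropriately, and using the assumption $pd_2\ge\log(2d)$ precisely to turn a Chernoff exponent of order $pd_2$ into one of order $\log d$ so the union bound over $d_1$ rows survives. An entirely symmetric argument (which the paper uses implicitly elsewhere) gives the analogous column bound, should it be needed.
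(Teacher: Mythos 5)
Your proof is correct and is exactly the argument the paper has in mind: the paper's own proof is the single sentence ``This follows directly from Chernoff bound and union bound,'' and you have simply written out the Chernoff exponent and the union bound over the at most $d$ rows, using the hypothesis $pd_2 \ge \log(2d)$ precisely where it is needed to make the per-row failure probability an arbitrary inverse polynomial in $d$.
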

\begin{proof}
This follows directly from Chernoff bound and union bound.
\end{proof}

%!TEX root = main.tex

\section{Auxiliary Inequalities}
In this section, we provide some frequently used lemmas regarding matrices.
Our first two lemmas lower bound $\norm{\U\U\trans - \Y\Y\trans}_F^2$
by $\norm{(\U-\Y)(\U - \Y)\trans}_F^2$ and $\fnorm{\U - \Y}^2$.
\begin{lemma} \label{lem:aux_delta2}
Let $\U$ and $\Y$ be two $d \times r$ matrices. Further let $\U\trans\Y = \Y\trans \U$ be a PSD matrix. Then,
\begin{equation*}
\norm{(\U-\Y)(\U - \Y)\trans}_F^2 \le 2\norm{\U\U\trans - \Y\Y\trans}_F^2
\end{equation*}
\end{lemma}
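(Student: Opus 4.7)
The plan is to diagonalize the identity for $\U\U^\top - \Y\Y^\top$ via the sum-and-difference substitution $\Delta := \U-\Y$ and $\Sigma := \U+\Y$, then exploit the PSD/symmetry hypothesis on $\U^\top\Y$ twice: once to interpret the ``cross'' term in the expansion as the norm of a symmetric matrix, and once as a Loewner-order comparison between $\Sigma^\top\Sigma$ and $\Delta^\top\Delta$.

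Concretely, I would first verify the algebraic identity $2(\U\U^\top - \Y\Y^\top) = \Delta\Sigma^\top + \Sigma\Delta^\top$ by direct expansion. Squaring gives
\begin{equation*}
4\|\U\U^\top - \Y\Y^\top\|_F^2 \;=\; 2\|\Delta\Sigma^\top\|_F^2 \;+\; 2\,\mathrm{tr}\bigl((\Delta\Sigma^\top)^2\bigr).
\end{equation*}
So it suffices to show that $\|\Delta\Sigma^\top\|_F^2 + \mathrm{tr}((\Delta\Sigma^\top)^2) \ge \|\Delta\Delta^\top\|_F^2$, after which the claimed bound $\|\Delta\Delta^\top\|_F^2 \le 2\|\U\U^\top - \Y\Y^\top\|_F^2$ follows immediately.

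For the cross term, I would compute $\Sigma^\top\Delta = \U^\top\U - \Y^\top\Y + \Y^\top\U - \U^\top\Y$, and use the hypothesis $\U^\top\Y = \Y^\top\U$ to conclude $\Sigma^\top\Delta = \U^\top\U - \Y^\top\Y$ is \emph{symmetric}. By the cyclic property, $\mathrm{tr}((\Delta\Sigma^\top)^2) = \mathrm{tr}((\Sigma^\top\Delta)^2) = \|\Sigma^\top\Delta\|_F^2 \ge 0$, so this term can just be dropped. For the main term, I would compute $\Sigma^\top\Sigma - \Delta^\top\Delta = 4\U^\top\Y$, which is PSD by hypothesis. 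Hence $\Sigma^\top\Sigma \succeq \Delta^\top\Delta$ in the Loewner order, and since $\Delta^\top\Delta$ is also PSD, $\mathrm{tr}(\Delta^\top\Delta(\Sigma^\top\Sigma - \Delta^\top\Delta)) \ge 0$, which rearranges to $\|\Delta\Sigma^\top\|_F^2 = \mathrm{tr}(\Delta^\top\Delta\,\Sigma^\top\Sigma) \ge \mathrm{tr}((\Delta^\top\Delta)^2) = \|\Delta\Delta^\top\|_F^2$. Chaining the inequalities finishes the proof.

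There is no real obstacle: both uses of the hypothesis are one-line observations, and the rest is bookkeeping. The only place one has to be slightly careful is the PSD-trace step ($\mathrm{tr}(\A\B)\ge 0$ when both $\A,\B\succeq 0$), which is why writing the trace in the cyclic order $\mathrm{tr}(\Delta^\top\Delta\,\Sigma^\top\Sigma)$ rather than $\mathrm{tr}(\Delta\Sigma^\top\Sigma\Delta^\top)$ matters. Note that the factor of $2$ in the conclusion is tight exactly when $\U^\top\Y = 0$ (so that the cross term vanishes and $\Sigma^\top\Sigma = \Delta^\top\Delta$); the PSD hypothesis is what rules out sign cancellations that could otherwise make $\|\U\U^\top-\Y\Y^\top\|_F^2$ arbitrarily small compared to $\|\Delta\Delta^\top\|_F^2$.
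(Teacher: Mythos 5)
Your proof is correct and takes a genuinely different route from the paper's. The paper expands $\|\U\U\trans - \Y\Y\trans\|_F^2 = \|\U\Delta\trans + \Delta\U\trans - \Delta\Delta\trans\|_F^2$ in terms of $\U\trans\Delta$ and $\Delta\trans\Delta$ and then completes a square, writing the result as $\tr\bigl(2\U\trans\Y\Delta\trans\Delta + (\tfrac{1}{\sqrt 2}\Delta\trans\Delta - \sqrt 2\,\U\trans\Delta)^2 + \tfrac12(\Delta\trans\Delta)^2\bigr)$, then discards the first two (nonnegative) terms; the symmetry of $\U\trans\Y$ enters through the fact that $\U\trans\Delta$ is symmetric, so the completed square is the trace of a square of a symmetric matrix. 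Your argument instead diagonalizes via $\Sigma=\U+\Y$, uses the clean identity $2(\U\U\trans-\Y\Y\trans)=\Delta\Sigma\trans+\Sigma\Delta\trans$, and then peels off the two hypotheses separately: symmetry of $\U\trans\Y$ makes $\Sigma\trans\Delta=\U\trans\U-\Y\trans\Y$ symmetric so the cross term $\tr((\Delta\Sigma\trans)^2)=\|\Sigma\trans\Delta\|_F^2\ge 0$, and $\U\trans\Y\succeq 0$ gives $\Sigma\trans\Sigma\succeq\Delta\trans\Delta$ which yields $\|\Delta\Sigma\trans\|_F^2=\tr(\Delta\trans\Delta\,\Sigma\trans\Sigma)\ge\tr((\Delta\trans\Delta)^2)=\|\Delta\Delta\trans\|_F^2$. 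What your route buys is conceptual transparency: the two hypotheses (symmetry, positivity) are each consumed exactly once by a one-line observation, and the mysterious constants $\tfrac{1}{\sqrt 2},\sqrt 2$ in the paper's completed square never appear, the factor $2$ instead falling out of the $\Delta\Sigma\trans+\Sigma\Delta\trans$ identity. One small nit: your closing tightness remark is not quite right. Having $\U\trans\Y=0$ makes the Loewner comparison tight but does not make the cross term $\|\U\trans\U-\Y\trans\Y\|_F^2$ vanish; equality in the lemma actually requires both $\U\trans\Y=0$ and $\U\trans\U=\Y\trans\Y$. This is only a parenthetical and does not affect the proof.
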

\begin{proof}
To prove this, we let $\Delta = \U - \Y$, and expand:
\begin{align*}
\norm{\U\U\trans - \Y\Y\trans}_F^2  =& \norm{\U\Delta\trans + \Delta\U\trans - \Delta\Delta\trans}_F^2 \\
=& \tr(2\U\trans\U\Delta\trans\Delta + (\Delta\trans\Delta)^2 + 2(\U\trans\Delta)^2 - 4\U\trans\Delta\Delta\trans \Delta) \\
=& \tr(2\U\trans (\U - \Delta) \Delta\trans\Delta + (\frac{1}{\sqrt{2}}\Delta\trans\Delta -\sqrt{2}\U\trans\Delta )^2 + \frac{1}{2}(\Delta\trans\Delta)^2) \\
\ge & \tr(2\U\trans\Y\Delta\trans\Delta + \frac{1}{2}(\Delta\trans\Delta)^2) \ge \frac{1}{2}\norm{\Delta\Delta\trans}_F^2
\end{align*}
The last inequality is due to $\U\trans \Y$ is a PSD matrix.
\end{proof}

\begin{lemma} \label{lem:aux_deltalinear}
Let $\U$ and $\Y$ be two $d \times r$ matrices. Further let $\U\trans\Y = \Y\trans \U$ be a PSD matrix. Then,
\begin{equation*}
\sigma_{\min}(\U\trans\U)\fnorm{\U - \Y}^2 \le \fnorm{(\U-\Y)\U\trans}^2 \le \frac{1}{2(\sqrt{2}-1)}\fnorm{\U\U\trans - \Y\Y\trans}^2
\end{equation*}
\end{lemma}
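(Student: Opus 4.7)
The plan is to establish the two inequalities separately. For the first bound $\sigma_{\min}(\U\trans\U)\,\fnorm{\U-\Y}^2 \le \fnorm{(\U-\Y)\U\trans}^2$, I would rewrite the right-hand side as $\tr(\U\trans\U \cdot \Delta\trans\Delta)$ with $\Delta = \U - \Y$, and then apply the standard fact that $\tr(AB) \ge \sigma_{\min}(A)\,\tr(B)$ for PSD matrices $A,B$ (which follows from $\tr(AB) = \tr(A^{1/2} B A^{1/2})$ and $A^{1/2} B A^{1/2} \succeq \sigma_{\min}(A)\,B$). This is a one-line argument.

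For the upper bound, my approach mirrors the algebraic manipulation used in the proof of Lemma~\ref{lem:aux_delta2}, but with a different grouping tailored to the constant $\frac{1}{2(\sqrt{2}-1)}$. Introduce the $r \times r$ matrices $S = \U\trans\U$, $T = \Delta\trans\Delta$, $N = \U\trans\Y$, and $P = \U\trans\Delta = S - N$. The hypothesis that $\U\trans\Y$ is PSD symmetric makes $N \succeq 0$ and $P$ symmetric. Expanding $\U\U\trans - \Y\Y\trans = \U\Delta\trans + \Delta\U\trans - \Delta\Delta\trans$ and computing $\fnorm{\cdot}^2$ (as in Lemma~\ref{lem:aux_delta2}) gives $2\tr(ST) + \tr(T^2) + 2\tr(P^2) - 4\tr(PT)$. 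Substituting $S = N+P$ and using the symmetric-matrix identity $\tr(T^2) + 2\tr(P^2) - 2\tr(PT) = \fnorm{T-P}^2 + \fnorm{P}^2$ produces the clean decomposition
\[
\fnorm{\U\U\trans - \Y\Y\trans}^2 \;=\; 2\tr(NT) + \fnorm{P}^2 + \fnorm{T - P}^2,
\]
while $\fnorm{(\U-\Y)\U\trans}^2 = \tr(ST) = \tr(NT) + \tr(PT)$.

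The heart of the proof is a tight bound on $\tr(PT)$. Writing $\tr(PT) = \fnorm{P}^2 + \tr(P(T-P))$, Cauchy--Schwarz gives $\tr(PT) \le \fnorm{P}^2 + \fnorm{P}\,\fnorm{T-P}$, and AM--GM with parameter $\gamma = \sqrt{2}+1$ then yields $\tr(PT) \le \tfrac{\sqrt{2}+1}{2}\bigl(\fnorm{P}^2 + \fnorm{T-P}^2\bigr)$. The specific choice of $\gamma$ is forced by requiring the resulting coefficients of $\fnorm{P}^2$ and $\fnorm{T-P}^2$ to coincide. Since $\tr(NT) \ge 0$ (both $N$ and $T$ are PSD) and $\sqrt{2}+1 > 1$, the pieces combine as
\[
\fnorm{(\U-\Y)\U\trans}^2 \;\le\; \tfrac{\sqrt{2}+1}{2}\bigl[\,2\tr(NT) + \fnorm{P}^2 + \fnorm{T-P}^2\,\bigr] \;=\; \tfrac{1}{2(\sqrt{2}-1)}\,\fnorm{\U\U\trans - \Y\Y\trans}^2.
\]

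The main obstacle is discovering the key identity $\fnorm{\U\U\trans - \Y\Y\trans}^2 = 2\tr(NT) + \fnorm{P}^2 + \fnorm{T-P}^2$: it cleanly isolates the PSD-nonnegative piece $\tr(NT)$ (where the hypothesis $N \succeq 0$ is essential) from two squared-norm pieces that Cauchy--Schwarz and AM--GM handle optimally. Once this identity is in hand the balance with $\gamma = \sqrt{2}+1$ falls out mechanically. Tightness can be checked on the extremal example $\U = (a,0)\trans$, $\Y = (0,b)\trans$ with $b^2 = (\sqrt{2}-1)a^2$, where both Cauchy--Schwarz and AM--GM become equalities, confirming that $\frac{1}{2(\sqrt{2}-1)}$ is the sharp constant.
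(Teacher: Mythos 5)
Your proof is correct and the constant is verified to be sharp. The route, while related to the paper's, is genuinely different in organization. The paper's proof completes the square in one shot: it regroups $2\tr(ST)+\tr(T^2)+2\tr(P^2)-4\tr(PT)$ directly into $(4-2\sqrt 2)\tr(NT)+\tr\bigl((T-\sqrt 2 P)^2\bigr)+2(\sqrt 2-1)\tr(ST)$, with the $\sqrt 2$ hard-wired into the regrouping, then drops the two non-negative summands. You instead first isolate the structural pieces -- the identity $\fnorm{\U\U\trans-\Y\Y\trans}^2 = 2\tr(NT)+\fnorm{P}^2+\fnorm{T-P}^2$ -- and then obtain the same constant from Cauchy--Schwarz plus a balanced AM--GM. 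The two "remainder'' bounds are formally equivalent (your Cauchy--Schwarz + AM--GM with $a=\sqrt 2-1$ is precisely $\tfrac{1}{\sqrt 2 - 1}\tr\bigl((T-\sqrt 2 P)^2\bigr)\ge 0$ in disguise), so the mathematical content is the same, but your version is more systematic: the optimal parameter falls out of balancing coefficients rather than being guessed, the role of the hypothesis $N=\U\trans\Y\succeq 0$ is cleanly separated from the Cauchy--Schwarz step, and your tightness check with $\U=(a,0)\trans$, $\Y=(0,b)\trans$, $b^2=(\sqrt 2-1)a^2$ gives a useful confirmation that the constant cannot be improved -- something the paper's proof leaves implicit.
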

\begin{proof}
The left inequality is basic, we only need to prove right inequality.
To prove this, we let $\Delta = \U - \Y$, and expand:
\begin{align*}
\norm{\U\U\trans - \Y\Y\trans}_F^2  =& \norm{\U\Delta\trans + \Delta\U\trans - \Delta\Delta\trans}_F^2 \\
=& \tr(2\U\trans\U\Delta\trans\Delta + (\Delta\trans\Delta)^2 + 2(\U\trans\Delta)^2 - 4\U\trans\Delta\Delta\trans \Delta) \\
=& \tr((4 - 2\sqrt{2})\U\trans (\U - \Delta) \Delta\trans\Delta + (\Delta\trans\Delta -\sqrt{2}\U\trans\Delta )^2 + 2(\sqrt{2} - 1)\U\trans\U\Delta\trans\Delta ) \\
\ge & \tr((4 - 2\sqrt{2})\U\trans\Y\Delta\trans\Delta + 2(\sqrt{2} - 1)\U\trans\U\Delta\trans\Delta) \ge 2(\sqrt{2} - 1)\fnorm{\U\Delta\trans}^2
\end{align*}
The last inequality is due to $\U\trans \Y$ is a PSD matrix.
\end{proof}

Next we show the difference between matrices formed by swapping sigular spaces of $\M_1$ and $\M_2$ can be upper bounded by the difference between $\M_1$ and $\M_2$.
\begin{lemma}\label{lem:N_to_M_asym}
Let $\M_1, \M_2 \in \R^{d_1\times d_2}$ be two arbitrary matrices whose SVDs are $\U_1 \D_1 \V_1\trans$ and $\U_2\D_2\V_2\trans$. Then we have:
\begin{equation*}
\fnorm{\U_1 \D_1 \U_1\trans - \U_2 \D_2 \U_2\trans}^2 + \fnorm{\V_1 \D_1 \V_1\trans - \V_2 \D_2 \V_2\trans}^2
 \le 2\fnorm{\M_1 - \M_2}^2
\end{equation*}
\end{lemma}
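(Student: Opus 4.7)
The plan is to reduce the inequality to a single-matrix statement via the symmetric dilation
\[
\hat\M_i \;=\; \begin{pmatrix} \U_i\D_i\U_i^\top & \M_i \\ \M_i^\top & \V_i\D_i\V_i^\top\end{pmatrix} \;=\; \W_i\,\D_i\,\W_i^\top,\qquad \W_i=\begin{pmatrix}\U_i\\ \V_i\end{pmatrix}.
\]
The block-Frobenius identity gives
\[
\fnorm{\hat\M_1-\hat\M_2}^2 \;=\; \fnorm{\U_1\D_1\U_1^\top-\U_2\D_2\U_2^\top}^2 + 2\fnorm{\M_1-\M_2}^2 + \fnorm{\V_1\D_1\V_1^\top-\V_2\D_2\V_2^\top}^2,
\]
so the lemma is \emph{equivalent} to $\fnorm{\hat\M_1-\hat\M_2}^2 \le 4\fnorm{\M_1-\M_2}^2$. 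I would state this reformulation first, since it cleanly exposes what must be proved.

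Next, I would expand both sides using $\fnorm{\hat\M_i}^2 = 4\,\tr(\D_i^2) = 4\fnorm{\M_i}^2$ (a direct computation from the block form). The squared-norm terms cancel, and the desired inequality collapses to the inner-product bound
\[
\langle \hat\M_1,\hat\M_2\rangle \;\ge\; 4\langle\M_1,\M_2\rangle.
\]
Computing $\langle\hat\M_1,\hat\M_2\rangle$ block-by-block yields
\[
\langle \U_1\D_1\U_1^\top,\U_2\D_2\U_2^\top\rangle + 2\langle\M_1,\M_2\rangle + \langle\V_1\D_1\V_1^\top,\V_2\D_2\V_2^\top\rangle \;\ge\; 4\langle\M_1,\M_2\rangle,
\]
i.e., the lemma reduces to
\[
\langle \U_1\D_1\U_1^\top,\U_2\D_2\U_2^\top\rangle + \langle\V_1\D_1\V_1^\top,\V_2\D_2\V_2^\top\rangle \;\ge\; 2\langle\M_1,\M_2\rangle.
\]

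Finally, I would set $A=\U_1^\top\U_2$ and $B=\V_1^\top\V_2$ and rewrite everything as traces. Using cyclicity one checks $\langle\U_1\D_1\U_1^\top,\U_2\D_2\U_2^\top\rangle = \tr(\D_1 A\D_2 A^\top)$, $\langle\V_1\D_1\V_1^\top,\V_2\D_2\V_2^\top\rangle = \tr(\D_1 B\D_2 B^\top)$, and $\langle\M_1,\M_2\rangle = \tr(\D_1 A\D_2 B^\top) = \tr(\D_1 B\D_2 A^\top)$. The inequality therefore becomes
\[
\tr\!\bigl(\D_1(A-B)\D_2(A-B)^\top\bigr) \;\ge\; 0,
\]
which is immediate: since $\D_1,\D_2\succeq 0$ have PSD square roots, the left side equals $\fnorm{\D_1^{1/2}(A-B)\D_2^{1/2}}^2$.

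I do not foresee a real obstacle here. The only slightly non-obvious step is guessing the right dilation $\hat\M_i$; once that ansatz is in place, the rest is a matter of tracking block inner products and finishing with a completing-the-square identity. A short remark at the end could note that equality is attained when $\U_1^\top\U_2 = \V_1^\top\V_2$, e.g.\ in the symmetric PSD case.
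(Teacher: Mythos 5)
Your proof is correct and, once the packaging is stripped away, performs the same computation as the paper: both reduce the inequality to the positivity of $\fnorm{\D_1^{1/2}\U_1^\top\U_2\D_2^{1/2} - \D_1^{1/2}\V_1^\top\V_2\D_2^{1/2}}^2$, i.e.\ a completed square in traces, while the paper's $\A,\B$ are exactly your $\D_1^{1/2}A\D_2^{1/2},\D_1^{1/2}B\D_2^{1/2}$. The one genuine difference is your opening move through the symmetric dilation $\hat\M_i$, which recasts the claim as $\fnorm{\hat\M_1-\hat\M_2}^2\le 4\fnorm{\M_1-\M_2}^2$; this is precisely the form in which the lemma is consumed later (there $\hat\M_i$ is the $\N$ of Definition~\ref{def:asymmetricquantities}, and one applies the bound as $\fnorm{\N_1-\N_2}\le 2\fnorm{\M_1-\M_2}$), so the reframing is a clarifying expository improvement rather than a different proof.
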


\begin{proof}
Expand the Frobenius Norm out, we have LHS:
\begin{align*}
&\fnorm{\U_1 \D_1 \U_1\trans - \U_2 \D_2 \U_2\trans}^2 + \fnorm{\V_1 \D_1 \V_1\trans - \V_2 \D_2 \V_2\trans}^2 \\
= &2\tr(\D^2_1 + \D_2^2 - \U_1 \D_1 \U_1\trans \U_2 \D_2 \U_2\trans - \V_1 \D_1 \V_1\trans \V_2 \D_2 \V_2\trans)
\end{align*}
On the other hand, we also have RHS:
\begin{align*}
&2\fnorm{\M_1 - \M_2}^2 = 2\fnorm{\U_1 \D_1 \V_1\trans - \U_2 \D_2 \V_2\trans}^2\\
= &2\tr(\D_1^2 + \D_2^2 - \U_1 \D_1 \V_1\trans\V_2 \D_2 \U_2\trans - \U_2 \D_2 \V_2\trans\V_1\D_1\U_1\trans)
\end{align*}

Let $\A = \D_1^{\frac{1}{2}} \U_1\trans \U_2 \D_2^{\frac{1}{2}}$ and $\B = 
\D_1^{\frac{1}{2}} \V_1\trans \V_2 \D_2^{\frac{1}{2}}$. We know to prove the lemma, we only need to show 
$\tr(\A\A\trans + \B\B\trans) \ge \tr(\A\B\trans + \B\A\trans)$. This is true because $\fnorm{\A - \B}^2 \ge 0$, which finishes the proof.
\end{proof}

\end{document}